\documentclass{article} 
\usepackage{iclr2026_conference,times}

\usepackage{amsmath,amsfonts,bm}

\def\eqref#1{equation~\ref{#1}}

\def\1{\mathbbm{1}}

\DeclareMathAlphabet{\mathsfit}{\encodingdefault}{\sfdefault}{m}{sl}
\SetMathAlphabet{\mathsfit}{bold}{\encodingdefault}{\sfdefault}{bx}{n}

\newcommand{\softmax}{\mathrm{softmax}}

\usepackage[utf8]{inputenc} 
\usepackage[T1]{fontenc}    
\usepackage{hyperref}       
\usepackage{url}            
\usepackage{booktabs}       
\usepackage{amsfonts}       
\usepackage{nicefrac}       
\usepackage{microtype}      
\usepackage{xcolor}         

\usepackage{microtype}
\usepackage{graphicx}
\usepackage{subfigure}
\usepackage{enumitem}
\usepackage{booktabs} 
\usepackage{multirow}
\usepackage{paracol}
\usepackage{colortbl}

\usepackage[ruled,linesnumbered]{algorithm2e}
\usepackage{algorithmic}

\usepackage{amsmath}
\usepackage{amssymb}
\usepackage{mathtools}
\usepackage{amsthm}

\usepackage[capitalize]{cleveref}

\theoremstyle{plain}
\newtheorem{theorem}{Theorem}[section]

\newtheorem{lemma}[theorem]{Lemma}

\theoremstyle{definition}

\newcommand{\draft}[1]{\textcolor{red}{#1}{}}
\newcolumntype{X}{>{\centering\arraybackslash}m{1cm}}
\newcolumntype{Y}{>{\centering\arraybackslash}m{1.4cm}}
\newcolumntype{Z}{>{\centering\arraybackslash}m{2cm}}
\newcommand{\sheng}[1]{\textcolor{blue}{(Sheng: #1)}{}}
\newcommand{\xy}[1]{\textcolor{green}{(xy: #1)}{}}

\def\alg{{\texttt{M2CL}}}

\title{Context Learning for Multi-Agent Discussion}

\author{Xingyuan Hua\textsuperscript{\rm 1}, Sheng Yue\textsuperscript{\rm 2}\thanks{Correspondence to: \texttt{yuesh5@mail.sysu.edu.cn}}, Xinyi Li\textsuperscript{\rm 3}, Yizhe Zhao\textsuperscript{\rm 1}, Jinrui Zhang\textsuperscript{\rm 1}, Ju Ren\textsuperscript{\rm 1,4}\\
\textsuperscript{\rm 1}Department of Computer Science and Technology, Tsinghua University, \\
\textsuperscript{2}School of Cyber Science and Technology, Sun Yat-sen University Shenzhen Campus, \\
\textsuperscript{3}College of Computer Science, Northwest University, \\
\textsuperscript{4}State Key Laboratory of Internet Architecture, Tsinghua University
}
\newcommand{\fix}{\marginpar{FIX}}
\newcommand{\new}{\marginpar{NEW}}

\iclrfinalcopy 
\begin{document}
	
\maketitle

\begin{abstract}
	Multi-Agent Discussion (MAD) has garnered increasing attention very recently, where multiple LLM instances collaboratively solve problems via structured discussion. However, we find that current MAD methods easily suffer from discussion inconsistency—LLMs fail to reach a coherent solution—due to the misalignment between their individual contexts. In this paper, we introduce a multi-LLM context learning method (\alg{}) that learns a context generator for each agent, capable of dynamically generating context instructions per discussion round via automatic information organization and refinement. Specifically, inspired by our theoretical insights on the context instruction, \alg{} trains the generators to control context coherence and output discrepancies via a carefully crafted self-adaptive mechanism. It enables LLMs to avoid premature convergence on “majority noise” and progressively reach the correct consensus. We evaluate \alg{} on challenging tasks, including academic reasoning, embodied tasks, and mobile control. The results show that the performance of \alg{} significantly surpasses existing methods by $20\%$--$50\%$, while enjoying favorable transferability and computational efficiency.\footnote{Code is available at \url{https://github.com/HansenHua/M2CL-ICLR26}.}
\end{abstract}
\section{Introduction}
\label{sec:introduction}
Large Language Models (LLMs) have demonstrated transformative impact across a large number of real-world domains, including education, healthcare, and scientific research, where an LLM instance is employed to automate the process of content generation, reasoning, or decision-making~\citep{zheng2023codegeex,wang2023air,imani2023mathprompter,yue2024momentum,zhang2024simulating,goyal2024healai,wu2025evaluating,yu2025cookiechecker}. Yet, it has been recognized that single-LLM-built systems often struggle in the problems requiring complex multi-step reasoning or multi-tool using, such as complicated proof~\citep{cobbe2021training}, large-scale code generation~\citep{wang2025codeflowbench}, and embodied agentic tasks~\citep{ahn2022can,shen2024small,yue2024context}, because its single viewpoint of a problem easily limits the ability to explore multiple reasoning paths, leverage external tools effectively, or adapt to dynamic task requirements~\citep{li2023camel,smit2024should,yue2024ollie}.

Very recently, research has shifted towards \textit{Multi-Agent Discussion} (MAD)~\citep{smit2024should}, where multiple LLM instances collaboratively solve problems via structured discussions~\citep{du2023improving,liu2024dynamic}. In a typical MAD framework, each LLM instance is pre-assigned a set of crafted \textit{contexts} that represent diverse solution perspectives of the problem to be solved. Equipped with these distinct context instructions, LLMs continue to discuss with each other for a solution consensus~\citep{park2023generative,shanahan2023role,wei2023multi,lu2024llm,liu2024dynamic}. Such `society-of-mind' paradigms are expected to improve reasoning accuracy by enhancing creativity and expanding the search space for possible solutions, and have shown great potential across various complex tasks, including software engineering~\citep{gu2023llm} and scientific discovery~\citep{sprueill2024chemreasoner}.

Albeit achieving improved performance over single-LLM settings, we find that current multi-agent collaboration approaches typically suffer from discussion inconsistency, that is, the majority of LLM instances fail to reach an agreement on a coherent solution (as showcased in \cref{fig:intro}), easily making the collaborative decision dominated by noise rather than principled reasoning. The underlying reason primarily lies in \textit{context misalignment} between LLMs. On one hand, the pre-assigned contexts or role-based instructions lack a nuanced understanding of the task; they are often rigid, incomplete or biased, which would misguide the reasoning of individual LLMs~\citep{jang2025reasoning}. On the other hand, these contexts often fall short in effective fusion of information exchanged among LLMs, and thus hardly steer the discussion towards coherent solutions. As illustrated in \cref{fig:introduction_exp}, for a multi-step mathematical proof task, one LLM agent may correctly derive an intermediate result, yet another LLM—despite receiving this result in the extended context—does not effectively incorporate it into its own reasoning chain. Since the context instruction does not explicitly enforce leveraging conclusions from other LLMs, other LLMs may redundantly re-derive the step or even give an inconsistent argument (see \cref{sec:case_study}) for full results).

This paper aims to answer: \textit{``how can we obtain the contexts for MAD that can continually guide multi-LLM discussion towards a correct consensus?''} A straightforward solution is to manually adapt instructions in contexts as the discussion progresses. It, however, is labor-intensive and requires extra expert knowledge, rendering it impractical for complex tasks or large-scale collaboration. Instead, a more reasonable approach is to develop context learning mechanisms that enable evolving the context instructions based on the intermediate discussion results. Although promising, it is highly challenging to evaluate the contribution of LLMs' contexts to the final solution and control the coherence among inter- and intra-LLM outputs.

To tackle these challenges, we propose a \textit{multi-LLM context learning} (\alg{}) method for efficient MAD, which learns a context generator for each agent, capable of dynamically generating context instructions per discussion round via automatic information organization and refinement. First, we characterize the impact of initial and evolving contexts on the discussion performance. Building upon the analytical insights, we train the generators to control context coherence and output discrepancies. To strike the right tradeoff therein, we devise a self-adaptive balancing mechanism, enabling LLMs to progressively align on correct consensus while avoiding premature convergence on ``majority noise''. Further, we develop a lightweight context initialization approach, which tends to assign LLMs with diverse initial instructions that are approximately orthogonal in the latent space, enabling sufficient coverage of complementary solution perspectives.

\begin{figure}[t]
	\centering
	\includegraphics[width=0.99\linewidth]{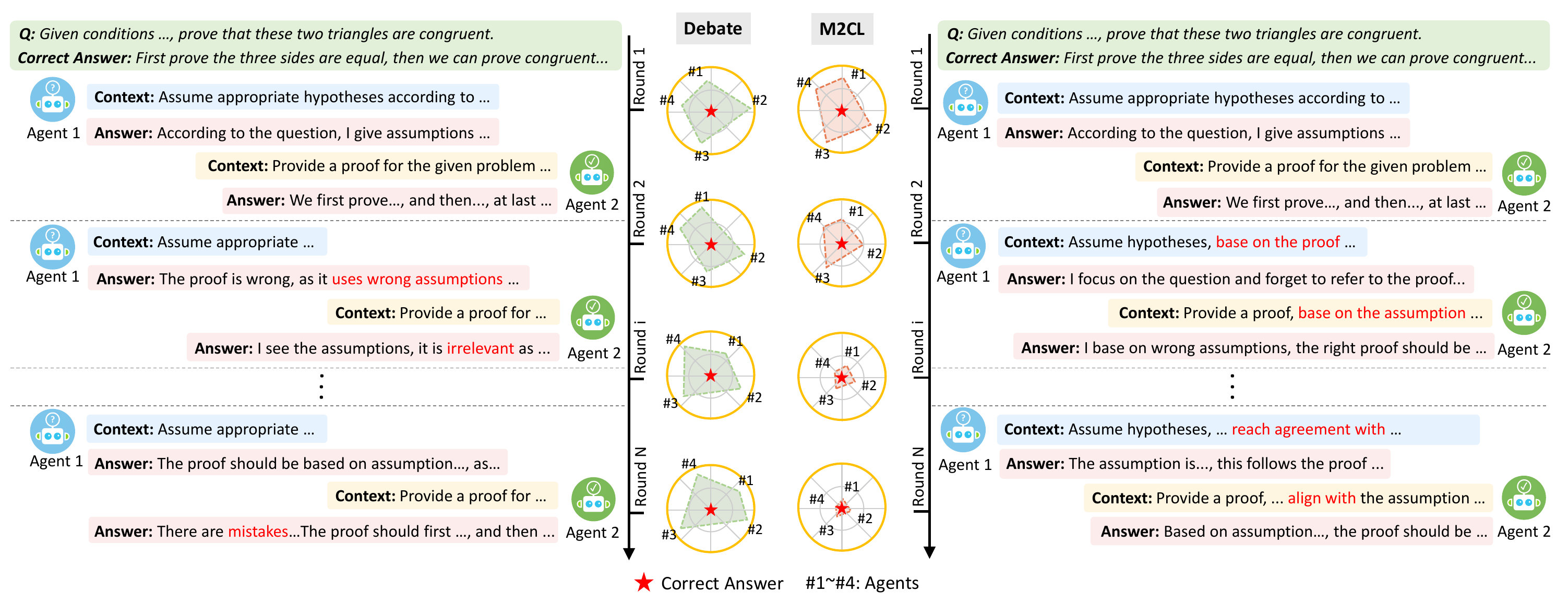}
    \vspace{-1.5em}
	\caption{An illustration of context misalignment of an existing method (\texttt{Debate}~\cite{du2023improving}) on a multi-step proof task. Pre-assigned context instructions (in the blue and yellow boxes of the left part) provide insufficient guidance on information fusion, leading to conflict in reasoning.}
    \vspace{-1em}
	\label{fig:introduction_exp}
\end{figure}

\begin{figure}[t]
	\centering
	\includegraphics[width=0.99\linewidth]{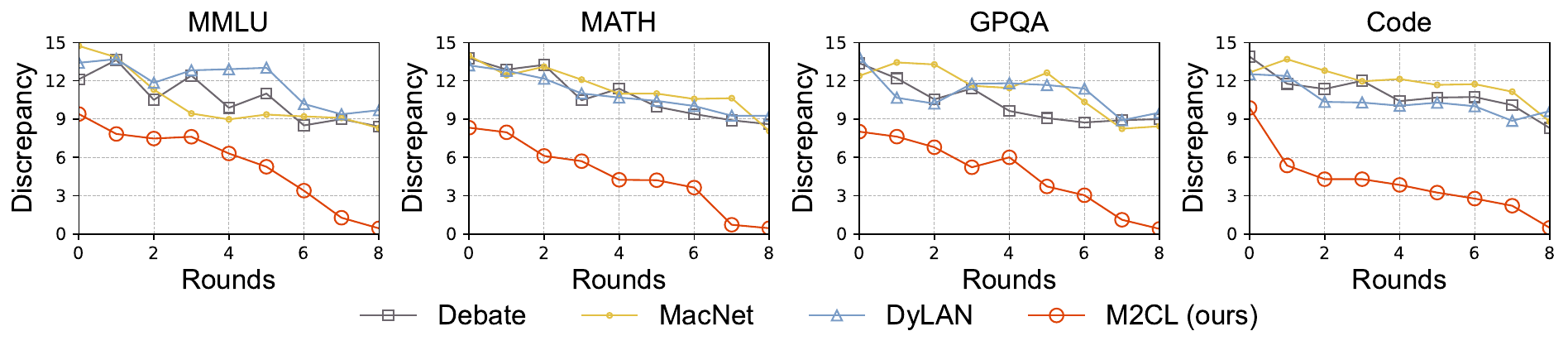}
    \vspace{-1.5em}
	\caption{The discrepancy between the answers of participating LLM instances. The discrepancy is characterized by the maximum distance between participating LLMs' output embeddings.}
    \vspace{-1em}
	\label{fig:intro}
\end{figure}

We systematically evaluate the proposed method across 9 challenging benchmarks, including LLM reasoning, embodied agentic tasks, and mobile GUI control. The results demonstrate that \alg{} significantly enhances the MAD performance under various numbers of participating LLMs, consistently outperforming existing methods by $\bm{20\%}$--$\bm{50\%}$, particularly in complex GUI control tasks. \alg{} enjoys a more favorable MAD ``scaling law'' and exhibits great efficiency, where a runtime overhead of at most $10\%$ suffices to achieve more than $20\%$ performance gains. Further, we find that the learned context generators can be migrated to different LLM architectures with consistent performance improvement. 

Our main contributions are summarized as follows:
\begin{itemize}[leftmargin=*,topsep=0pt,itemsep=0pt]
    \item We propose \alg{}, a principled multi-LLM context learning method that learns a context generator for each agent, capable of dynamically generating context instructions per discussion round.
    \item We devise a lightweight context initialization approach that can assign LLMs with diverse initial instructions, thereby enabling sufficient coverage of complementary solution perspectives.   
    \item We systematically evaluate the proposed method across a range of challenging benchmarks and corroborate the efficacy and efficiency of the proposed method.
\end{itemize}

\section{Related Work}
\label{sec:related_work}
\textbf{Multi-agent framework.} Multi-agent frameworks have been proposed to enhance the reasoning capabilities of single-LLM prompting methods. Several works utilize two LLMs to iteratively generate and evaluate to refine the final answer~\citep{raman2022planning,yao2023react,paul2023refiner,madaan2024self,yue2024federated}. However, scaling them to a larger number of LLMs to make full use of the wisdom of crowd is challenging due to the pairwise dependency of generation and evaluation. To incorporate more LLMs, several studies have explored another multi-agent framework in which each LLM has access to the history of all preceding LLMs' responses~\citep{chan2023chateval,du2023improving,liang2023encouraging,smit2024should,liu2024dynamic,zhuge2024gptswarm}. Most existing methods remain constrained by manually defined inter-LLM topologies or workflows. More recently, researches have attempted to overcome this limitation by automatically optimizing workflows~\citep{zhuge2024gptswarm,liu2024dynamic,zhang2025aflow}. However, such evolution is often one-shot. For instance, ~\citet{zhuge2024gptswarm} describe language agent systems as optimizable computational graphs and enables automatic improvements of LLM prompts and inter-LLM orchestration. ~\citet{zhang2025aflow} employs Monte Carlo Tree Search to construct complex Multi-LLM systems tailored to a specific task domain. ~\citet{liu2024dynamic} propose a multi-LLM collaboration method which constructs dynamic communication structure by scoring other LLMs. While these approaches allow for scaling the number of LLMs, designing appropriate contexts for them to collaboratively solve problems remains difficult, as single-viewpoint context instructions fall short in guidance on how to organize and refine information across agents.

\textbf{Context learning.} Context learning has recently attracted significant attention, as it allows LLMs to adjust their behavior at inference time by modifying the input context, without requiring any gradient-based training~\cite{von2023transformers,todd2024function,li2024context}. One line of work focuses on context selection, which identifies the most relevant examples or information to include in the context~\cite{zhang2022active,lu2023dynamic,xiong2024dqlore,purohit2025sample}. To eliminate manual selection, ~\citet{lu2023dynamic} leverage reinforcement learning to learn a context selection policy. ~\citet{xiong2024dqlore} query LLM to obtain knowledge and then query a retriever to obtain the final context. Beyond selection, some recent methods have proposed to generate or evolve context~\citep{zhuge2024gptswarm,li2024implicit,zhang2025g}. \citet{madaan2024self} leverage feedback generated by LLMs as extra prompt and iteratively incorporate it into revised drafts, aiming to enhance the coherence of the generated text. \citet{pandita2025prorefine} dynamically refines prompts during inference using textual feedback from prior outputs, aiming to improve contextual alignment and generate more consistent responses. Albeit with promising results, these methods are grounded in single-LLM formulations and struggle with inter-LLM inconsistency during multi-agent dscussion, where it is crucial to guide LLMs to make full use of others' intermediate results.

\section{Preliminaries}
\label{sec:preliminaries}
In this section, we provide necessary backgrounds and definitions of our investigated problem.

\textbf{Context learning.} To formally define context learning, we begin with the standard probabilistic model of an autoregressive LLM. An autoregressive model, parameterized by $\phi$, generates an output sequence $X=(x_1,x_2,\dots,x_L)$ given an input context $C$ by maximizing the conditional probability:
\begin{align}
	P_\phi(X|C)=\prod_{l=1}^LP_\phi(x_l|x_{<l},C).
\end{align}
Historically, in the paradigm of prompt engineering, the context $C$ was treated as a static string of text. This view is insufficient for complex agentic tasks which leverage dynamic, structured, and multifaceted information stream. To address this, context learning is redefined as a dynamically organized collection of information, denoted as $\{c_1,c_2,\dots,c_n\}$. Each component $c_k$ can be instructions, external knowledge~\citep{lewis2020retrieval}, available external tools~\citep{qin2023toolllm}, and memory~\citep{zhang2025g}. These components are sourced, filtered, and formatted by a set of functions $f_k$, and orchestrated into a coherent representation by a high-level assembly function $\mathcal{A}$:
\begin{align}
	\label{eq:context_formulation}
	C=\mathcal{A}(f_1(c_1),f_2(c_2),\dots,f_n(c_n)).
\end{align}
\textbf{Multi-LLM context learning.} MAD involves a set of $N$ LLM instances collaboratively solving a task via multi-round inter-LLM discussion (debate). Each LLM $i$ is endowed with an evolving instruction context in terms of the task description, available external tools and current knowledge aggregation. 
At the $t$-th round, we use the concatenation function as the assembly function $\mathcal{A}$ and construct the context with three components: (\textit{i}) the task goal $P\in \mathbb{R}^{d_{model}\times n}$, (\textit{ii}) the concatenation of responses from all other LLMs in the previous round $\bar{X}_i^{t-1}=[X_j^{t-1}]_{j \neq i}\in \mathbb{R}^{d_{model}\times n}$, where $X_j^{t-1}$ is the response of LLM $j$ at round $t-1$, and (\textit{iii}) the current instruction context $I_i^t\in \mathbb{R}^{d_{model}\times n}$ ($d_{model}$ refers to the dimension of embedding). The task goal represents the ultimate objective of collaboration and remains invariant across rounds. The second component serves as a dynamic memory that incorporates cross-LLM interaction history. For the third component, in constrast to existing efforts relying on static preassigned roles, we employ an instruction generator $\mathcal{G}$, parameterized by $\theta_i$, to adaptively refine it into a per-step instruction $I_i^t$, conditioned on the task goal $P$, the initial instruction $I_i^b$, and the concatenated response $\bar{X}_i^{t-1}$:
\begin{align}
	\label{eq:context_generation}
	I_i^t = \mathcal{G}_{\theta_i}([P;I_i^b;\bar{X}_i^{t-1}]).
\end{align}
Given context $C_i^t=[I_i^t,\bar{X}_i^{t-1},P]$, each LLM $i$, parameterized by $\phi_i$, generate its response by:
\begin{align}
	\label{eq:response_generation}
	X_i^{t}=\arg\max_X P_{\phi_i}(X|C_i^t).
\end{align}
After $T$ rounds of interaction, the final result is obtained by a majority vote on the LLMs' outputs generated in the final round.

\section{Motivation}
\label{sec:motivation}
In this section, we investigate the quantified impact of the contexts on the MAD performance. First, we introduce the formulation of attention activation, denoted as $a(\cdot)$, as follows \citep{vaswani2017attention}:
\begin{align}
	\label{eq:activation}
	a(C_i^t)\doteq W_V[I_i^t;\bar{X}_i^{t-1};P]\softmax\bigg(\frac{(W_K[I_i^t;\bar{X}_i^{t-1};P])^TW_Q{P}}{\sqrt{d}}\bigg)
\end{align}
where $\sqrt{d}$ is a scaling factor. $W_Q$, $W_K$, and $W_V$ denote the parameter weight matrices in the attention mechanism. $a(C_i^t)\in\mathbb{R}^{d_{model}\times n}$ is a matrix with the same shape as $[P;X;I]$

From this, we have the following theorem characterizing the total distance between the activations of the correct answer $a_c$ and that induced by context $C_j^{t}$. Here, we utilize activation distance instead of token embedding distance as attention activation captures deep representational similarity learned through the model’s internal reasoning process, making it more robust to superficial linguistic variations.
\begin{theorem}
	\label{thm:activation_difference}
	Assume that the attention activation function is $L_a$-smooth and define the weight vector as $\omega\doteq[\omega_1, \omega_2, \dots, \omega_N]$, with the initial context $C_i^b\doteq[I_i^b; P]$. Then, the following fact holds:
	\begin{align}
		\sum_{i=1}^N\|a_c-a(C_i^{t})\|&\leq \sum_{i=1}^N\Big(\sum_{j=1}^N\|a(C_i^{t})-a(C_j^{t})\|+(N+1)L_a\|C_i^t-C_i^b\|\Big)\nonumber\\
		&+N\min_{\omega}\|a_c-\sum_{i=1}^N\omega_ia(C_i^b)\|.
	\end{align}
\end{theorem}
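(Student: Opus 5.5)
The plan is a chain of triangle inequalities around an optimal convex-type combination $\bar a^b \doteq \sum_{j}\omega_j^\ast a(C_j^b)$ of the initial activations. Here $\omega^\ast$ attains (or nearly attains) $\min_\omega\|a_c-\sum_j\omega_j a(C_j^b)\|$. For the argument I will take $\omega^\ast$ with nonnegative entries summing to at most one; if the minimizer is unconstrained, I would restrict the minimum to the simplex, or absorb $\sum_j|\omega_j|$ into the constant. This normalization is exactly where the coefficient $N+1$ comes from. I will interpret $\|C_i^t-C_i^b\|$ by padding $C_i^b$ with zero blocks so that both contexts live in the same space. The $L_a$-smoothness assumption will be used only as a Lipschitz bound: $\|a(C)-a(C')\|\le L_a\|C-C'\|$.

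\textbf{Step 1 (fix one agent).} For each $i$, I will write
\[
\|a_c-a(C_i^t)\|\le \|a_c-\bar a^b\| + \|\bar a^b-\textstyle\sum_j\omega_j^\ast a(C_j^t)\| + \|\textstyle\sum_j\omega_j^\ast a(C_j^t)-a(C_i^t)\|.
\]
The first term is the minimum. Summed over $i$, it gives $N\min_\omega\|\cdot\|$.

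\textbf{Step 2 (drift term).} The second term is at most $\sum_j\omega_j^\ast\|a(C_j^b)-a(C_j^t)\|\le L_a\sum_j\|C_j^t-C_j^b\|$, using $0\le\omega_j^\ast\le1$. Summing over $i$ produces $N L_a\sum_j\|C_j^t-C_j^b\|$.

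\textbf{Step 3 (disagreement term).} Using $\sum_j\omega_j^\ast\le 1$, I write $a(C_i^t)$ as a weighted average of itself plus a remainder. The third term is then at most $\sum_j\|a(C_j^t)-a(C_i^t)\|$, up to an extra $(1-\sum_j\omega_j^\ast)\|a(C_i^t)\|$ term. To avoid that extra term I would insist on $\sum_j\omega_j^\ast=1$ (affine weights). If the minimum in the statement is genuinely over all $\omega$, I would instead route Step 1 through $a(C_i^b)$. That route bounds $\|a(C_i^t)-a(C_i^b)\|\le L_a\|C_i^t-C_i^b\|$, and this supplies the remaining $+1$ in $(N+1)$. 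The disagreement term then arises from the initial-activation differences, which are in turn converted back to time-$t$ differences at the cost of further Lipschitz terms.

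\textbf{Step 4 (sum).} Adding everything gives $\sum_i\sum_j\|a(C_i^t)-a(C_j^t)\|+(N+1)L_a\sum_i\|C_i^t-C_i^b\|+N\min_\omega\|\cdot\|$, which is the claim.

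The main obstacle is this bookkeeping of the weights $\omega$. With unconstrained $\omega$ the inequality needs a normalization assumption, and I expect the intended proof implicitly takes $\omega$ on the simplex. I would state that restriction explicitly and then run the chain above.
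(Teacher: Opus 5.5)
Your main line (simplex weights, the triangle inequality through $\sum_j\omega^*_j a(C_j^b)$, and the Lipschitz bound on the drift) is correct and is essentially the paper's argument, which silently relies on the same normalization: $\sum_j\omega^*_j=1$ to write $a(C_i^t)-a=\sum_j\omega^*_j(a(C_i^t)-a(C_j^b))$, and $|\omega^*_j|\le 1$ when it ``scales $\omega^*$ to 1''; note also that your chain actually yields the sharper coefficient $N$, and the paper's extra $L_a\sum_i\|C_i^t-C_i^b\|$ comes only from a redundant estimate of $\|a-a_c\|$, which by definition of $\omega^*$ already equals the minimum. Do drop the fallback for unconstrained $\omega$, though: no rerouting can work there, because for $a_c=\lambda a(C_1^b)$ with $a(C_1^b)\neq 0$ the unconstrained minimum vanishes while the left side grows without bound in $\lambda$ and the remaining right-hand terms do not depend on $a_c$, so the simplex restriction must be stated as a hypothesis rather than offered as one option among several.
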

\begin{proof}
	For a detailed proof, please refer to \cref{sec:activation_difference}.
\end{proof}
\cref{thm:activation_difference} corroborates the necessity of multi-LLM context learning, consisting of both initialization and evolution. The first term, $\|a(C_i^{t})-a(C_j^{t})\|+(N+1)L_a\|C_i^t-C_i^b\|$, captures the divergence among LLMs’ activations as well as the deviation from their initial contexts. This indicates contexts must be continuously evolved to reduce inter-LLM discrepancies while keeping coherent, promoting consistency of reasoning chains. The second term, $\min_{\omega}\|a_c-\sum_{i=1}^N\omega_ia(C_i^b)\|$, depends solely on the initial contexts $I_i^b$. It implies that orthogonality among initial activations provides a comprehensive basis, allowing the contexts to approximate the correct activation more effectively. This highlights the importance of diverse solution perspectives at the initialization stage. Together, the result motivates a two-stage design: initializing contexts to ensure diverse solution perspectives and evolving intermediate contexts to drive a consensus across LLMs.

\section{Multi-LLM Context Learning}
\label{sec:methodology}
In this section, we introduce a multi-LLM context learning method that can select appropriate contexts for each LLM and dynamically adapt the context according to the evolving task completion status.

\subsection{Context Initialization}
\label{sec:context_init}
The context initialization serves as the foundation of the entire multi-agent interaction process, as it frames the capability scope of LLMs and fundamentally shapes the chain of subsequent information exchange. Therefore, we propose to select initial contexts from a predefined pool $\{I_i^b\}_{i=1}^M$ containing prompts with diverse perspectives.

Motivated by \cref{thm:activation_difference}, we provide the following context initialization mechanism:
\begin{align}
	\label{eq:context_init}
	\bm I^b = \{I^b_1,\dots,I^b_N\} = \arg\min\limits_{\bm I^b}\Big\{\min\limits_\omega\|\sum_{i=1}^N \omega_i a([I^b_i; P]-a_c)\|\Big\}.
\end{align}
\cref{eq:context_init} identifies a subset of contexts $\bm I^b$ whose activation best reconstruct the target activation $a_c$. As the dimension of the activation matrix $a([I^b_i;P])\in\mathbb{R}^{d_{model}\times n}$ is far larger than the number of selected contexts $N$, a set of matrices that aims to best reconstruct the correct activation $a_c$ naturally tends toward forming a set of basis-like directions. The resulting near-orthogonal activations form a compact basis, ensuring each context contributes unique, non-overlapping information for subsequent discussion.

However, \cref{eq:context_init} is impractical since the correct activation $a_c$ is not accessible during initialization. Inspired by prior work~\citep{yang2024chain,yang2025lf}, we project the activation into a latent space with function $f$:
\begin{align}
	\label{eq:context_init_1}
    \bm I^b = \{I^b_1,\dots,I^b_N\} = \arg\min_{\bm I^b}\Big\{\min_\omega\|\sum_{i=1}^N\omega_if(a([I_i^b;P]))-f(a_c)\|\Big\},
\end{align}
Then, we use the problem space as the selected contexts are solely dependent on the problem. This projection preserves the original orthogonality properties of activations, ensuring that diverse perspectives remain distinguishable in other spaces. Therefore, we reformulate the initialization mechanism:
\begin{align}
	\label{eq:context_init_2}
    \bm I^b = \{I^b_1,\dots,I^b_N\} = \arg\min_{\bm I^b}\Big\{\min_\omega\|\sum_{i=1}^N\omega_if(a([I_i^b;P]))-v_P\|\Big\},
\end{align}
where $v_P$ denotes the sentence vector of the question $P$. To obtain the projection $f(\cdot)$ (parameterized by $\phi_f$), we utilize the activation of inputting the answer $A$ with the question $P$ as the correct activation $a_c=a([A;P])$ and provide the loss function as:
\begin{align}
	L(\phi_f)=\|v_P-f(a([A;P]))\|.
\end{align}
However, directly leveraging \cref{eq:context_init_2} to initialize context is computational costly as it requires all the context activations in the context pool. Therefore, we distill a lightweight $\mathcal{F}(\cdot)$ (parameterized by $\phi_F$) which directly projects the initial context into the problem space through the following loss function:
\begin{align}
	L(\phi_F)=\|\mathcal{F}([I_i^b;P])-f(a([I_i^b;P]))\|.
\end{align}
Then, we provide the final context initialization as:
\begin{align}
	\label{eq:context_init_3}
    \bm I^b = \{I^b_1,\dots,I^b_N\} = \arg\min_{\bm I^b}\Big\{\min_\omega\|\sum_{i=1}^N\omega_i\mathcal{F}([I_i^b;P])-v_P\|\Big\}.
\end{align}
\cref{eq:context_init_3} provides a computational efficient context initialization (as illustrated in \cref{fig:runtime_init}) that aligns with \cref{eq:context_init} to encourage orthogonality of selected context activations, thereby providing diverse reasoning perspectives and expanding the search space for solutions.

\subsection{Context Evolution}
\label{sec:context_evolve}
As discussed in \cref{sec:context_init}, context initialization aims to select contexts with diverse perspectives to solve the problem. Nevertheless, these individual contexts address the problem from a single viewpoint while lacking explicit instructions on how to incorporate perspectives provided by other LLMs. Directly using them often results in inconsistent reasoning across LLMs due to misaligned inter-LLM guidance. To address this, we iteratively refine the instruction to integrate collaborative instructions that guide the LLMs more effectively toward generating the desired solution.

\subsubsection{Evaluate the Contribution of Contexts}
\label{sec:context_evaluation}

To achieve efficient context evolution, it is crucial to quantify the contribution of different contexts (refer to a utility function). A natural solution is to assign utilities only at the final round. However, this leads to inefficient and unstable training process due to its sparsity~\cite{liu2025ghpo}. On the other hand, current methods typically use correctness as the only criterion~\citep{zelikman2022star,rafailov2023direct,yue2024leverage,guo2025deepseek}. Albeit with promising performance in single-LLM question answering, this simple criterion ignores the complex dependency among LLMs with different contexts in MAD. As a result, an LLM that provides crucial insights which enable others to reach the correct answer may still receive a low utility, simply because it fails to produce the correct final answer itself. To tackle these challenges, we introduce a novel round-wise criterion to evaluate the contribution of $C_i^t$ in multi-agent interaction:
\begin{align}
	\label{eq:reward_activation}
	\max_{j\in [N]}\Big\{-\alpha\|C_i^t-C_i^b\|-\|a(C_i^t)-a(C_j^t)\|\Big\},
\end{align}
where $\alpha\ge0$ is a weighting parameter and $\|a(C_i^t)-a(C_j^t)\|$ represents the activation difference between contexts.

\textbf{\textit{Remarks.}} Of note, if removing the second term of \cref{eq:reward_activation}, the context generation reduces to that of prompt engineering: $\min_{I_{1:N}^t}\sum_{i\in[N]}\alpha\|C_i^t-C_i^b\|$, which utilize fixed instructions in contexts to LLMs throughout the discussion~\citep{liu2024dynamic,lu2024llm}. As mentioned in \cref{sec:introduction}, this easily results in inconsistency across LLMs. Thus, we introduce $\|a(C_i^t)-a(C_j^t)\|$ into the objective to encourage LLMs to remain aligned with each other.

This design addresses the two aforementioned challenges. First, this round-wise criterion provides denser feedback, mitigating the inefficiency and instability caused by sparse final-round utilities. Second, the activation-based alignment term captures inter-LLM dependencies, thereby encouraging them to offer guidance on how to integrate others’ responses.

Applying \cref{eq:reward_activation} as the criterion for LLM $i$ requires the contexts of all other LLMs. However, since these contexts are simultaneously being optimized, a bias inevitably arises as each LLM is updated based on stale snapshots of others’ contexts. Such outdated information introduces a biased utility, which in turn hinders consistency and degrades the quality of the learned contexts. To overcome this issue, we decouple the inter-LLM dependencies by designing an alternative per-LLM criterion:
\begin{align}
	\label{eq:reward_decouple}
	-\alpha\|C_i^t-C_i^b\|-\|a([I_i^t,P])-a([X_i^{t-1},P])\|.
\end{align}
In \cref{lem:activation_decouple}, we can prove that the summation of \cref{eq:reward_decouple} over all LLMs serves as an upper bound of the summation of \cref{eq:reward_activation} over all LLMs. Intuitively, this criterion conveys a clear meaning: the first term, $-\alpha\|C_i^t-C_i^b\|$, preserves the fundamental problem-solving capability endowed by the initialization, while the second term, $-\|a([I_i^t,P])-a([X_i^{t-1},P])\|$, enforces consistency between an LLM’s current instruction and its own previous response. This local consistency constraint implicitly aligns all LLMs, because when every LLM evolves its context in a temporally coherent way, the divergence among them is gradually reduced. Hence, the collective effect of all LLMs following this rule is that their contexts evolve coherently across rounds, progressively shrinking activation differences and ultimately driving the MAD toward consistency in their final answers.

To intuitively show the tightness of the bound, consider the case where $I_i^t=X_i^{t-1}$ holds, i.e., the instruction of each LLM in the next round equals its response in the previous round. That means $a(I_i^t)=a(X_i^{t-1})$. Given that all LLMs receive the same context, the second term of both \cref{eq:reward_activation} and \cref{eq:reward_decouple} becomes zero. In this case, the two formulations coincide.

\subsubsection{Multi-Round Context Evolving}
\label{sec:context_learning}

We slightly abuse notation by denoting $a(I_i^t)=a([I_i^t;P])$ and $a(X_i^{t-1})=a([X_i^{t-1};P])$, and provide the multi-round context evolution objective by accumulating contributions as follows:
\begin{align}
	\label{eq:objective_multi_round}
	\sum_{t=1}^T\max_{I_i^t}\Big\{-\alpha\|C_i^{t}-C_i^b\|-\|a(I_i^t)-a({X}_i^{t-1})\|\Big\}.
\end{align}
However, selecting the optimal value for the weight $\alpha$ is non-trivial, as it must be carefully tuned for the evolving discussion process. To alleviate the need for manual tuning, we recast \cref{eq:objective_multi_round} into a constrained optimization problem, where the context adjustment $\|I_i^t-I_i^b\|$ is treated as a constraint:
\begin{align}
	\label{eq:objective_multi_round_final}
	\min_{I_{i}^1} &\bigg(\|a(I_{i}^1)-a(X_{i}^0)\|+\min_{I_{i}^2} \Big(\|a(I_{i}^2)-a(X_{i}^1)\|+\cdots+ \min_{I_{i}^T}\|a(I_{i}^T)-a(X_{i}^{T-1})\|\Big)\bigg)\nonumber\\
	\text{s.t.}&~~ \|C_{i}^t-C_i^b\|\leq \beta, \quad \forall t,i.
\end{align}
According to the detailed derivation in \cref{sec:loss}, we prove that taking the dual of this problem recovers \cref{eq:objective_multi_round} and yields an auxiliary update for the dual variable $\alpha$. Consequently, the optimization of Problem~(\ref{eq:objective_multi_round_final}) can be implemented using an approximate dual gradient descent procedure, alternating gradient updates of $L(\theta_i)$ and $L(\alpha_i)$.
\begin{align}
	\label{eq:loss}
	&L(\theta_i)=\big\|a(\mathcal{G}_{\theta_i}(P, I_i^b, \bar{X}_i^{t-1}))-a(X_i^{t-1})\big\|+\alpha\big\|C_i^t-C_i^b\big\|\nonumber\\
	&L(\alpha_i)=\alpha_i\big(\beta-\big\|\mathcal{G}_{\theta_i}(P, I_i^b, \bar{X}_i^{t-1})-C_i^b\big\|\big).
\end{align}
At the beginning of the discussion, the initial contexts are intentionally diverse to encourage multi-perspective reasoning of the problem. At this stage, when the answers differ greatly, $\alpha$ decreases rapidly, thereby weakening the constraint on the distance between the generated contexts and their initial contexts. This guides the generation of contexts towards promoting faster convergence on a unified solution. As the discussion progresses and the LLMs gradually reach agreement, $\alpha$ will be kept at a certain level. This adjustment prevents premature consensus and supports a richer, more comprehensive final solution by shifting the contexts’ focus toward preserving multiple perspectives and exploring nuanced differences.

Overall, we name our proposed algorithm \alg{}, with its pseudocode detailed in \cref{sec:algorithm}.

\section{Experiment}
\label{sec:experiment}

\begin{table*}[t]
    \vspace{-1em}
	\centering
	\caption{Accuracy (\%) on different datasets. The number of LLMs is $4$ for all dataset. We exhibit the performance advantage with \texttt{BoN} and highlight the \colorbox{blue!10}{best} result.}
	\centering
	\vskip 0.1in
	\resizebox{1.0\textwidth}{!}{
		\begin{tabular}{YZYYYYYYYY}
			\toprule
			\textbf{Model} & \textbf{Method} & \textbf{MMLU} & \textbf{MATH} & \textbf{GPQA} & \textbf{Code} & \textbf{ALFWorld} & \textbf{SciWorld} & \textbf{GAIA} & \textbf{PDDL} \\
			\midrule
			\multirow{7}{*}{Qwen-7B}
			& \texttt{Single} & 61.2{\tiny\textcolor{green}{$\downarrow$13.0}} & 12.9{\tiny\textcolor{green}{$\downarrow$12.0}} & 20.2{\tiny\textcolor{green}{$\downarrow$16.2}} & 51.2{\tiny\textcolor{green}{$\downarrow$11.3}} & 23.8{\tiny\textcolor{green}{$\downarrow$7.7}} & 25.2{\tiny\textcolor{green}{$\downarrow$10.1}} & 15.6{\tiny\textcolor{green}{$\downarrow$5.5}} & 21.0{\tiny\textcolor{green}{$\downarrow$5.3}} \\
			& \texttt{BoN} & 74.2{\tiny\textcolor{red}{$\uparrow$0.0}} & 24.9{\tiny\textcolor{red}{$\uparrow$0.0}} & 36.4{\tiny\textcolor{red}{$\uparrow$0.0}} & 62.5{\tiny\textcolor{red}{$\uparrow$0.0}} & 31.5{\tiny\textcolor{red}{$\uparrow$0.0}} & 35.3{\tiny\textcolor{red}{$\uparrow$0.0}} & 21.1{\tiny\textcolor{red}{$\uparrow$0.0}} & 26.3{\tiny\textcolor{red}{$\uparrow$0.0}} \\
			& \texttt{Debate} & 71.1{\tiny\textcolor{green}{$\downarrow$3.1}} & 19.9{\tiny\textcolor{green}{$\downarrow$5.0}} & 28.7{\tiny\textcolor{green}{$\downarrow$7.7}} & 60.0{\tiny\textcolor{green}{$\downarrow$2.5}} & 30.6{\tiny\textcolor{green}{$\downarrow$0.9}} & 32.2{\tiny\textcolor{green}{$\downarrow$3.1}} & 21.0{\tiny\textcolor{green}{$\downarrow$0.1}} & 24.8{\tiny\textcolor{green}{$\downarrow$1.5}} \\
			& \texttt{DyLAN} & 74.3{\tiny\textcolor{red}{$\uparrow$0.1}} & 26.7{\tiny\textcolor{red}{$\uparrow$1.8}} & 35.4{\tiny\textcolor{green}{$\downarrow$1.0}} & 63.4{\tiny\textcolor{red}{$\uparrow$0.9}} & 29.8{\tiny\textcolor{green}{$\downarrow$1.7}} & 29.4{\tiny\textcolor{green}{$\downarrow$5.9}} & 18.4{\tiny\textcolor{green}{$\downarrow$2.7}} & 23.4{\tiny\textcolor{green}{$\downarrow$2.9}} \\
			& \texttt{GPTSwarm} & 76.3{\tiny\textcolor{red}{$\uparrow$2.1}} & 26.2{\tiny\textcolor{red}{$\uparrow$1.3}} & 35.6{\tiny\textcolor{green}{$\downarrow$0.8}} & 62.7{\tiny\textcolor{red}{$\uparrow$0.2}} & 29.9{\tiny\textcolor{green}{$\downarrow$1.6}} & 30.8{\tiny\textcolor{green}{$\downarrow$4.5}} & 20.6{\tiny\textcolor{green}{$\downarrow$0.5}} & 24.5{\tiny\textcolor{green}{$\downarrow$1.8}} \\
			& \texttt{MacNet} & 71.5{\tiny\textcolor{green}{$\downarrow$2.7}} & 21.4{\tiny\textcolor{green}{$\downarrow$3.5}} & 30.8{\tiny\textcolor{green}{$\downarrow$5.6}} & 59.3{\tiny\textcolor{green}{$\downarrow$3.2}} & 33.6{\tiny\textcolor{red}{$\uparrow$2.1}} & 37.0{\tiny\textcolor{red}{$\uparrow$1.7}} & 21.5{\tiny\textcolor{red}{$\uparrow$0.4}} & 29.5{\tiny\textcolor{red}{$\uparrow$3.2}} \\
			& {\alg{} (ours)} & \cellcolor{blue!10}92.5{\tiny\textcolor{red}{$\uparrow$18.3}} & \cellcolor{blue!10}47.8{\tiny\textcolor{red}{$\uparrow$22.9}} & \cellcolor{blue!10}66.1{\tiny\textcolor{red}{$\uparrow$29.7}} & \cellcolor{blue!10}80.3{\tiny\textcolor{red}{$\uparrow$17.8}} & \cellcolor{blue!10}39.9{\tiny\textcolor{red}{$\uparrow$8.4}} & \cellcolor{blue!10}45.3{\tiny\textcolor{red}{$\uparrow$10.0}} & \cellcolor{blue!10}33.6{\tiny\textcolor{red}{$\uparrow$12.5}} & \cellcolor{blue!10}34.7{\tiny\textcolor{red}{$\uparrow$8.4}} \\
			\midrule
			\multirow{7}{*}{Qwen-14B}
			& \texttt{Single} & 67.2{\tiny\textcolor{green}{$\downarrow$12.5}} & 21.6{\tiny\textcolor{green}{$\downarrow$6.2}} & 21.2{\tiny\textcolor{green}{$\downarrow$11.6}} & 56.7{\tiny\textcolor{green}{$\downarrow$12.7}} & 30.1{\tiny\textcolor{green}{$\downarrow$7.4}} & 31.3{\tiny\textcolor{green}{$\downarrow$10.3}} & 18.7{\tiny\textcolor{green}{$\downarrow$6.9}} & 25.7{\tiny\textcolor{green}{$\downarrow$2.8}} \\
			& \texttt{BoN} & 79.7{\tiny\textcolor{red}{$\uparrow$0.0}} & 27.8{\tiny\textcolor{red}{$\uparrow$0.0}} & 32.8{\tiny\textcolor{red}{$\uparrow$0.0}} & 69.4{\tiny\textcolor{red}{$\uparrow$0.0}} & 37.5{\tiny\textcolor{red}{$\uparrow$0.0}} & 41.6{\tiny\textcolor{red}{$\uparrow$0.0}} & 25.6{\tiny\textcolor{red}{$\uparrow$0.0}} & 28.5{\tiny\textcolor{red}{$\uparrow$0.0}} \\
			& \texttt{Debate} & 77.2{\tiny\textcolor{green}{$\downarrow$2.5}} & 27.4{\tiny\textcolor{green}{$\downarrow$0.4}} & 30.3{\tiny\textcolor{green}{$\downarrow$2.5}} & 66.9{\tiny\textcolor{green}{$\downarrow$2.5}} & 36.8{\tiny\textcolor{green}{$\downarrow$0.7}} & 39.4{\tiny\textcolor{green}{$\downarrow$2.2}} & 26.4{\tiny\textcolor{red}{$\uparrow$0.8}} & 30.0{\tiny\textcolor{red}{$\uparrow$1.5}} \\
			& \texttt{DyLAN} & 86.8{\tiny\textcolor{red}{$\uparrow$7.1}} & 31.2{\tiny\textcolor{red}{$\uparrow$3.4}} & 39.4{\tiny\textcolor{red}{$\uparrow$6.6}} & 76.6{\tiny\textcolor{red}{$\uparrow$7.2}} & 34.8{\tiny\textcolor{green}{$\downarrow$2.7}} & 36.0{\tiny\textcolor{green}{$\downarrow$5.6}} & 22.8{\tiny\textcolor{green}{$\downarrow$2.8}} & 28.0{\tiny\textcolor{green}{$\downarrow$0.5}} \\
			& \texttt{GPTSwarm} & 87.0{\tiny\textcolor{red}{$\uparrow$7.3}} & 31.3{\tiny\textcolor{red}{$\uparrow$3.5}} & 38.7{\tiny\textcolor{red}{$\uparrow$5.9}} & 75.9{\tiny\textcolor{red}{$\uparrow$6.5}} & 34.9{\tiny\textcolor{green}{$\downarrow$2.6}} & 35.9{\tiny\textcolor{green}{$\downarrow$5.7}} & 25.0{\tiny\textcolor{green}{$\downarrow$0.6}} & 28.4{\tiny\textcolor{green}{$\downarrow$0.1}} \\
			& \texttt{MacNet} & 78.9{\tiny\textcolor{green}{$\downarrow$0.8}} & 28.7{\tiny\textcolor{red}{$\uparrow$0.9}} & 31.9{\tiny\textcolor{green}{$\downarrow$0.9}} & 70.4{\tiny\textcolor{red}{$\uparrow$1.0}} & 39.2{\tiny\textcolor{red}{$\uparrow$1.7}} & 46.0{\tiny\textcolor{red}{$\uparrow$4.4}} & 32.2{\tiny\textcolor{red}{$\uparrow$6.6}} & 32.7{\tiny\textcolor{red}{$\uparrow$4.2}} \\
			& {\alg{} (ours)} & \cellcolor{blue!10}93.7{\tiny\textcolor{red}{$\uparrow$14.0}} & \cellcolor{blue!10}51.7{\tiny\textcolor{red}{$\uparrow$23.9}} & \cellcolor{blue!10}66.2{\tiny\textcolor{red}{$\uparrow$33.4}} & \cellcolor{blue!10}91.1{\tiny\textcolor{red}{$\uparrow$21.7}} & \cellcolor{blue!10}48.2{\tiny\textcolor{red}{$\uparrow$10.7}} & \cellcolor{blue!10}56.1{\tiny\textcolor{red}{$\uparrow$14.5}} & \cellcolor{blue!10}42.0{\tiny\textcolor{red}{$\uparrow$16.4}} & \cellcolor{blue!10}43.0{\tiny\textcolor{red}{$\uparrow$14.5}} \\
			\midrule
			\multirow{7}{*}{Qwen-72B}
			& \texttt{Single} & 72.5{\tiny\textcolor{green}{$\downarrow$11.7}} & 31.6{\tiny\textcolor{green}{$\downarrow$19.4}} & 34.9{\tiny\textcolor{green}{$\downarrow$11.0}} & 59.1{\tiny\textcolor{green}{$\downarrow$13.1}} & 48.2{\tiny\textcolor{green}{$\downarrow$9.3}} & 50.4{\tiny\textcolor{green}{$\downarrow$11.7}} & 31.1{\tiny\textcolor{green}{$\downarrow$10.1}} & 41.0{\tiny\textcolor{green}{$\downarrow$10.0}} \\
			& \texttt{BoN} & 84.2{\tiny\textcolor{red}{$\uparrow$0.0}} & 51.0{\tiny\textcolor{red}{$\uparrow$0.0}} & 45.9{\tiny\textcolor{red}{$\uparrow$0.0}} & 72.2{\tiny\textcolor{red}{$\uparrow$0.0}} & 57.5{\tiny\textcolor{red}{$\uparrow$0.0}} & 62.1{\tiny\textcolor{red}{$\uparrow$0.0}} & 41.2{\tiny\textcolor{red}{$\uparrow$0.0}} & 51.0{\tiny\textcolor{red}{$\uparrow$0.0}} \\
			& \texttt{Debate} & 82.7{\tiny\textcolor{green}{$\downarrow$1.5}} & 48.4{\tiny\textcolor{green}{$\downarrow$2.6}} & 43.4{\tiny\textcolor{green}{$\downarrow$2.5}} & 69.1{\tiny\textcolor{green}{$\downarrow$3.1}} & 60.4{\tiny\textcolor{red}{$\uparrow$2.9}} & 65.2{\tiny\textcolor{red}{$\uparrow$3.1}} & 42.9{\tiny\textcolor{red}{$\uparrow$1.7}} & 49.1{\tiny\textcolor{green}{$\downarrow$1.9}} \\
			& \texttt{DyLAN} & 91.5{\tiny\textcolor{red}{$\uparrow$7.3}} & 63.1{\tiny\textcolor{red}{$\uparrow$12.1}} & 51.6{\tiny\textcolor{red}{$\uparrow$5.7}} & 80.4{\tiny\textcolor{red}{$\uparrow$8.2}} & 55.1{\tiny\textcolor{green}{$\downarrow$2.4}} & 58.0{\tiny\textcolor{green}{$\downarrow$4.1}} & 40.4{\tiny\textcolor{green}{$\downarrow$0.8}} & 45.5{\tiny\textcolor{green}{$\downarrow$5.5}} \\
			& \texttt{GPTSwarm} & 91.5{\tiny\textcolor{red}{$\uparrow$7.3}} & 64.7{\tiny\textcolor{red}{$\uparrow$13.7}} & 52.4{\tiny\textcolor{red}{$\uparrow$6.5}} & 79.6{\tiny\textcolor{red}{$\uparrow$7.4}} & 56.8{\tiny\textcolor{green}{$\downarrow$0.7}} & 60.2{\tiny\textcolor{green}{$\downarrow$1.9}} & 40.3{\tiny\textcolor{green}{$\downarrow$0.9}} & 49.7{\tiny\textcolor{green}{$\downarrow$1.3}} \\
			& \texttt{MacNet} & 83.8{\tiny\textcolor{green}{$\downarrow$0.4}} & 52.9{\tiny\textcolor{red}{$\uparrow$1.9}} & 46.2{\tiny\textcolor{red}{$\uparrow$0.3}} & 70.5{\tiny\textcolor{green}{$\downarrow$1.7}} & 61.8{\tiny\textcolor{red}{$\uparrow$4.3}} & 68.4{\tiny\textcolor{red}{$\uparrow$6.3}} & 46.4{\tiny\textcolor{red}{$\uparrow$5.2}} & 53.7{\tiny\textcolor{red}{$\uparrow$2.7}} \\
			& {\alg{} (ours)} & \cellcolor{blue!10}95.1{\tiny\textcolor{red}{$\uparrow$10.9}} & \cellcolor{blue!10}72.5{\tiny\textcolor{red}{$\uparrow$21.5}} & \cellcolor{blue!10}78.9{\tiny\textcolor{red}{$\uparrow$33.0}} & \cellcolor{blue!10}90.7{\tiny\textcolor{red}{$\uparrow$18.5}} & \cellcolor{blue!10}79.0{\tiny\textcolor{red}{$\uparrow$21.5}} & \cellcolor{blue!10}88.9{\tiny\textcolor{red}{$\uparrow$26.8}} & \cellcolor{blue!10}67.2{\tiny\textcolor{red}{$\uparrow$26.0}} & \cellcolor{blue!10}70.5{\tiny\textcolor{red}{$\uparrow$19.5}} \\
			\bottomrule
		\end{tabular}
	}
	\label{tab:comparative_results}
\end{table*}

\begin{figure*}[t]
	\vspace{-0.7em}
	\centering
	\includegraphics[width=0.995\linewidth]{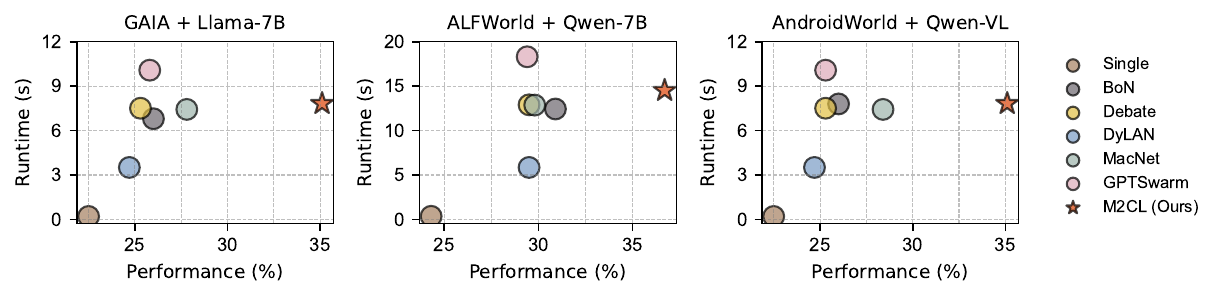}
	\vspace{-2.2em}
	\caption{Performance versus runtime under different settings. Circles closer to the lower-left corner indicate higher efficiency.}
	\label{fig:efficiency_performance}
	\vspace{-1em}
\end{figure*}

In this section, we conduct experiments to evaluate the performance of \alg{} by answering the following research questions:
\begin{enumerate}[itemsep=0pt,topsep=0pt]
	\renewcommand{\labelenumi}{Q\theenumi.}
	\item How does \alg{} perform compared to existing methods across various benchmarks, especially in complex agentic tasks?
	\item How does the performance of MAD scale with the number of LLMs?
	\item How is the performance affected by factors such as context constraint and components such as context initialization and context evolution?
	\item How do contexts promote consensus, and are they transferable to other models?
\end{enumerate}

\subsection{Experimental Setup}

\textbf{Dataset.} We run experiments with 3 domains including 9 datasets:
\textbf{1)} \textbf{LLM reasoning}, including MMLU~\citep{hendrycks2021measuring}, MATH~\citep{hendrycks2021measuring}, and GPQA~\citep{rein2023gpqa}, HumanEval~\citep{chen2021evaluating}. \textbf{2)} \textbf{Embodied Agentic}, including ALFWorld~\cite{shridhar2021alfworld}, SciWorld~\cite{wang2022scienceworld}, GAIA~\citep{mialon2023gaia}, and PDDL~\cite{chang2024agentboard}. \textbf{3)} \textbf{Mobile GUI} AndroidWorld~\cite{rawles2025androidworld}. Details on datasets can be found in \cref{sec:datasets}.

\textbf{Baselines.} We evaluate our method against six strong baseline methods:
\textbf{1)} \texttt{Single execution}, querying a single LLM to solve the task.
\textbf{2)} \texttt{Best-of-N}, querying a single LLM N times and sampling the most correct answer.
\textbf{3)} \texttt{Debate}~\citep{du2023improving}, a multi-agent framework where LLMs discuss their responses and reasoning processes over multiple rounds.
\textbf{4)} \texttt{DyLAN}~\citep{liu2024dynamic}, a multi-agent framework where LLMs score each other and collaborate dynamically.
\textbf{5)} \texttt{GPTSwarm}~\citep{zhuge2024gptswarm}, a multi-agent framework that refines LLM prompts and improves LLM orchestration by changing their connectivity.
\textbf{6)} \texttt{MacNet}~\citep{jiang2023llm}, a recent multi-LLM framework where LLMs are invoked between LLM interactions to provide actionable instructions to the next LLM based on the previous LLM’s outputs.

\textbf{Reproducibility.} All details of our experiments are provided in the appendices in terms of the tasks, network architectures, hyperparameters, etc. We conduct experiment on two series of LLM (Llama-2~\citep{touvron2023llama} and Qwen-2.5~\citep{yang2024qwen2}) and three model sizes each for LLM reasoning, and Qwen2.5-VL~\citep{bai2025qwen2} for GUI reasoning. All the experiments are run on Ubuntu 22.04.4 LTS with 8 NVIDIA H800 GPUs.

\begin{figure*}[t]
	\centering
	\includegraphics[width=0.995\linewidth]{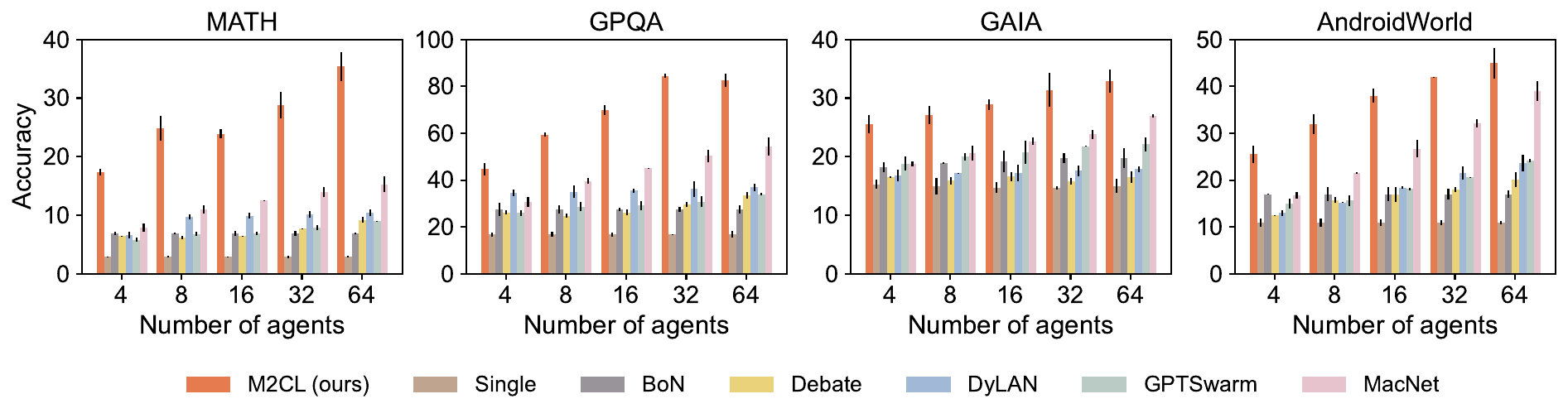}
	\vspace{-2em}
	\caption{Performance of varying the numbers of LLMs. Uncertainty intervals depict standard deviation over three seeds.}
    \vspace{-1em}
	\label{fig:number_agent}
\end{figure*}

\subsection{Experimental Results}
\textbf{Comparative results.} To answer the first question, we evaluate \alg{}'s performance across all datasets, with varying base models and number of LLMs (ranging from $4$ to $64$). We select Qwen series models and $4$ LLMs participating in \cref{tab:comparative_results} and provide full results in \cref{sec:number_agents,sec:agent}. We find \alg{} consistently outperforms baselines in all $9$ datasets, often by a significant margin in terms of performance. Of note, \texttt{BoN} outperforms most of the baselines especially in complex multi-round agentic tasks, revealing the drawback of fixed contexts, which, despite expanding the exploration space, do not converge and thus hinder LLMs from achieving true cooperative reasoning. In contrast, \alg{} can adapt contexts and enhance the relevance of responses and questions while ensuring creativity, indicating that \alg{} well avoids that LLMs with different contexts easily influence each other and successfully brings LLMs into cooperation by reaching a consensus through discussion.

In addition, we evaluate the efficiency of \alg{} by visualizing the performance versus runtime. As shown in \cref{fig:efficiency_performance}, \alg{} consistently delivers the highest performance improvement (more than $20\%$) while maintaining a modest increase in runtime (less than $10\%$). This clearly demonstrates the efficiency of \alg{} due to its lightweight context generator.

\textbf{Multi-agent scaling law.} To answer the second question, we run experiments with varying numbers of LLMs (ranging from 4 to 64). The data and parameter setup adhere to that of \cref{tab:parameter_setting}. We present selected results in \cref{fig:number_agent} and full results in \cref{tab:number_4,tab:number_8,tab:number_16,tab:number_32,tab:number_64,fig:number_agents_1,fig:number_agents_2,fig:number_agents_3,fig:number_agents_4,fig:number_agents_5,fig:number_agents_6} of \cref{sec:number_agents}. Scaling our method reveals a more efficient scaling law as the performance grows logarithmically before saturation and improves faster than baselines. We speculate this arises because collaborative instructions in our generated contexts enable genuine inter-LLM cooperation, thereby unleashing the multidimensional reasoning capabilities of the MAD.

\textbf{Context constraint.} To answer the third question, we vary the context constraint $\beta$ from $0$ to $10$ and run experiments across all datasets. Full results are shown in \cref{fig:context_constraint_1,fig:context_constraint_2,fig:context_constraint_3,fig:context_constraint_4,fig:context_constraint_5,} of \cref{sec:context_constraint}. The results clearly indicate that as $\beta$ increases, there is an initial improvement in performance; once it reaches a sufficiently large value, performance tends to drop. A strict (refers to a smaller $\beta$) context constraint leads to generated contexts getting closer to the initial contexts, thereby causing discussion inconsistency. On the other hand, a loose context constraint leads to naive consistency, where LLMs tend to generate the same answers, resulting in insufficient creativity.

\textbf{Ablation studies.} We assess the effect of key components by ablating them on all datasets under the same setting. \emph{1) Importance of context initialization.} \emph{2) Importance of tuning $\alpha$.} \emph{3) Importance of context evolution.} As illustrated in \cref{tab:ablation_llama_7b,tab:ablation_llama_13b,tab:ablation_llama_70b}, LLMs struggle to specialize and coordinate efficiently, whereas context initialization enables them to acquire high-impact contexts, significantly enhancing the foundational capabilities of MAD. Without tuning $\alpha$ during discussion rounds, LLMs tend to reach an agreement in the first round, leading to responses that lack creativity and diversity, which ultimately reduces problem-solving ability. Without context evolution, LLMs lack collaborative guidance considering previous responses, which prevents them from effectively leveraging the outputs of other LLMs.


\textbf{Discrepancy intensity.} To answer the fourth question, we explored the change of Discrepancy intensity ($\max_{i,j\in[N]}\|a_i-a_j\|^2$) with rounds to verify whether the generated context can help the LLMs gradually reach agreement with the discussion. Complete results are shown in \cref{fig:agreement_intensity_0,fig:agreement_intensity_1,fig:agreement_intensity_2,fig:agreement_intensity_3,fig:agreement_intensity_4}, of \cref{sec:agreement_intensity}. We find that discrepancy intensity from \alg{} decreases faster than other methods, corroborating its efficacy and superiority in converging the search space of multiple LLMs.

\textbf{Transferability of contexts.} To answer the fourth question, we also conduct experiments on transferring contexts directly into stronger LLMs to verify whether the generated contexts have better interpretation ability and efficacy. Full results are depicted in \cref{tab:context_transfer_all} of \cref{sec:transferability}. The results show that the transferred contexts deliver consistent improvement, indicating that the trained context generator can directly adapt to a wide range of models without additional retraining.

\section{Limitation and Discussion}
In this paper, we propose a novel context learning method, designed to expand LLMs' horizons and reach consensus in MAD. By initializing and adjusting LLMs' contexts based on the problem and discussion states, our method significantly improves problem-solving capabilities across diverse benchmarks while maintaining computational efficiency.
A limitation of \alg{} is the MAD framework in which diversity is brought about by the number of LLMs with heterogeneous characteristics which is computationally inefficient. An avenue for future work is to enable LLMs to truly capture the sub-tasks they are interested in or excel at.

\section*{Acknowledgments}
This research was supported in part by the National Natural Science Foundation of China under Grants 62572496 and 62402269, the Shenzhen Science and Technology Program under Grant JCYJ20250604175500001, and the Young Elite Scientist Sponsorship Program by CAST under Contract ZB2025-218.

\section*{Ethics Statement}
This work advances the field of large language model (LLM) research by introducing a context learning method for multi-agent discussion, demonstrating significant improvements in accuracy, diversity, and consensus across multiple benchmarks.

However, the broader implications of deploying such systems warrant careful consideration. Multi-agent discussion, while powerful, may inadvertently propagate biases introduced during context initialization or amplify errors during consensus-building. These risks are particularly critical in high-stakes domains like legal, financial, and healthcare applications, where erroneous outputs could lead to significant ethical, social, or economic consequences.
\section*{Reproducibility Statement}
The full algorithmic details of \alg{} are presented in the main paper, with additional implementation details and hyperparameter settings included in \cref{sec:experiment}. We provide source code at \url{https://github.com/HansenHua/M2CL-ICLR26}, which includes scripts for training, evaluation, and reproducing all reported experiments. We provide checkpoints at \url{https://huggingface.co/hansenhua/M2CL}. For theoretical results, we include complete proofs in \cref{sec:activation_difference,sec:reward_decouple,sec:softmax_attention}.

\bibliography{ref}
\bibliographystyle{iclr2026_conference}

\appendix
\newpage
\section{Usage of LLMs}
We use large language models (LLMs) solely as an assistive tool for polishing the writing and improving clarity of exposition. All content generated by LLMs was carefully reviewed, verified, and, where necessary, revised by the authors. The authors take full responsibility for the correctness and integrity of the final manuscript.

\section{Proof of \cref{thm:activation_difference}}
\label{sec:activation_difference}
\begin{proof}
	Define the optimal $\omega^*=\arg\min_{\omega}\|a_c-\sum_{i=1}^N\omega_ia(C_i^b)\|$ and its corresponding activation $a=\sum_{i=1}^N\omega^*_ia(C_i^b)$. Then, we can derive the upper bound of the activation difference using the triangle inequality:
	\begin{align}
		\label{eq:accuracy_bound}
		\sum_{i=1}^N\|a_c-a(C_i^{t})\|\leq\sum_{i=1}^N\|a(C_i^{t})-a\|+N\|a-a_c\|.
	\end{align}
	Using the triangle inequality, we can bound the first term in \cref{eq:accuracy_bound}:
	\begin{align}
		&\sum_{i=1}^N\|a(C_i^{t})-a\|=\sum_{i=1}^N\|\sum_{j=1}^N\omega^*_j(a(C_i^{t})-a(C_j^b))\|\nonumber\\
		\leq&\sum_{i=1}^N\sum_{j=1}^N|\omega^*_j|\cdot\|a(C_i^{t})-a(C_j^b)\|\leq \sum_{i=1}^N\sum_{j=1}^N\|a(C_i^{t})-a(C_j^b)\|\nonumber\\
        \leq&\sum_{i=1}^N\sum_{j=1}^N\|a(C_i^{t})-a(C_j^{t})\|+\sum_{i=1}^N\sum_{j=1}^N\|a(C_j^{t})-a(C_j^b)\|\nonumber\\
        \leq&\sum_{i=1}^N\sum_{j=1}^N\|a(C_i^{t})-a(C_j^{t})\|+\sum_{i=1}^N\sum_{j=1}^NL_a\|C_j^t-C_j^b\|.
	\end{align}
	Then, we bound the second term in \cref{eq:accuracy_bound}:
	\begin{align}
		\|a-a_c\|&\leq\|\sum_{i=1}^N\omega^*_ia(C_i^{t})-a_c\|=\|\sum_{i=1}^N\omega^*_ia(C_i^b)+\sum_{i=1}^N\omega^*_i(a(C_i^{t})-a(C_i^b))-a_c\|\nonumber\\
		&\leq\min_{\omega}\|a_c-\sum_{i=1}^N\omega_ia(C_i^b)\|+\sum_{i=1}^N\|\omega^*_i(a(C_i^{t})-a(C_i^b))\|\nonumber\\
		&\leq\min_{\omega}\|a_c-\sum_{i=1}^N\omega_ia(C_i^b)\|+\sum_{i=1}^N\|a(C_i^{t})-a(C_i^b)\|\nonumber\\
		&\leq\min_{\omega}\|a_c-\sum_{i=1}^N\omega_ia(C_i^b)\|+L_a\sum_{i=1}^N\|C_i^{t}-C_i^b\|.
	\end{align}
	In the first line, we add and subtract $\sum_{i=1}^Na(C_i^b)$. In the second line, we use the triangle inequality and the definition of $\omega^*$. In the third line, we scale $\omega^*$ to 1. Finally, we utilize the smoothness property of the activation function to bound the second term
	Substituting into \cref{eq:accuracy_bound}, we can derive:
	\begin{align}
		\sum_{i=1}^N\|a_c-a(C_i^{t})\|\leq \sum_{i=1}^N\Big(\sum_{j=1}^N\|a(C_i^{t})-a(C_j^{t})\|&+(N+1)L_a\|C_i^t-C_i^b\|\Big)\nonumber\\
		&+N\min_{\omega}\|a_c-\sum_{i=1}^N\omega_ia(C_i^b)\|,
	\end{align}
\end{proof}

\section{Decoupled Criterion Fucntion}
\label{sec:reward_decouple}
\begin{lemma}
	\label{lem:activation_decouple}
	Under the assumption of one-block transformer, LLMs' activation diversity can be bound by the activation difference between their instructions and responses.
	\begin{align}
		\|a(C_i^t)-a(C_j^t)\|\leq\|a(I_i^t)-a(X_i^{t})\|+\|a(I_j^t)-a(X_j^{t})\|+18L_VnN\exp(2\rho^2).
	\end{align}
\end{lemma}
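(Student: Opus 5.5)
The plan is a triangle-inequality decomposition that routes the comparison through the responses. Using the abbreviations $a(I_i^t)=a([I_i^t;P])$ and $a(X_i^t)=a([X_i^t;P])$ fixed before \cref{eq:objective_multi_round}, I will write
\begin{align*}
\|a(C_i^t)-a(C_j^t)\| \le{}& \|a(C_i^t)-a(I_i^t)\| + \|a(I_i^t)-a(X_i^t)\| + \|a(X_i^t)-a(X_j^t)\| \\
&+ \|a(X_j^t)-a(I_j^t)\| + \|a(I_j^t)-a(C_j^t)\|.
\end{align*}
The second and fourth terms are exactly the two terms on the right-hand side of the lemma. It therefore suffices to show that the three remaining terms are each bounded by a multiple of $L_V n N\exp(2\rho^2)$, with the multiples adding up to $18$.

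All of these remaining terms have the same form: a single-block softmax attention output $W_V Z\,\softmax(Z^T W_K^T W_Q P/\sqrt d)$, evaluated at two different input matrices $Z$. I will assume the following, in the spirit of the one-block transformer assumption and of the constants appearing in the bound.
\begin{itemize}[leftmargin=*,topsep=0pt,itemsep=0pt]
\item Every token embedding has norm at most $\rho$ after scaling, so that each attention logit satisfies $|z_k^T W_K^T W_Q p/\sqrt d|\le \rho^2$.
\item $W_V$ is $L_V$-Lipschitz in norm.
\end{itemize}
The first assumption traps every softmax weight between $e^{-2\rho^2}/m$ and $e^{2\rho^2}/m$, where $m$ is the number of columns. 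The total contribution of any block of columns is therefore controlled by $e^{2\rho^2}$ times its size relative to $m$. Each activation is thus a convex-like average of value vectors with bounded distortion, and the difference of two such averages is at most roughly $2L_V\cdot(\text{number of columns})\cdot e^{2\rho^2}$. I will prove this once as an auxiliary bound. It is presumably the content deferred to \cref{sec:softmax_attention}.

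Next I apply the auxiliary bound to each remaining term. For $\|a(C_i^t)-a(I_i^t)\|$, the two inputs differ only by the extra block $\bar X_i^{t-1}$, which has $(N-1)n$ columns. The total input length is at most about $3nN$ columns, which yields a contribution of order $6L_VnN e^{2\rho^2}$. The same bound holds for the $j$ term. For the middle term $\|a(X_i^t)-a(X_j^t)\|$, both inputs share the block $P$ and differ in $n$ columns, so the auxiliary bound again gives at most $6L_VnNe^{2\rho^2}$ after crude enlargement. Adding the three contributions gives $18L_VnN\exp(2\rho^2)$.

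The main obstacle is the softmax perturbation step. When the two inputs have different lengths, both the normalizing denominator and the value matrix change at once. I will handle this by writing each output as $\sum_k w_k v_k$, then splitting the difference into a part from the changed weights on shared columns and a part from the extra columns. Each part is bounded using the ratio bound $e^{2\rho^2}$. Getting the constant to be exactly $18$ will require some generosity in these steps.
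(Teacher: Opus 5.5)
Your argument works, but by a genuinely different route from the paper's.

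\textbf{The paper's route.} The paper never bounds $\|a(C_i^t)-a(I_i^t)\|$ on its own. It passes to linear attention, where activations are additive over blocks (\cref{lem:linear_attention}). This approximates $a(C_i^t)$ by $a(I_i^t)+\sum_{k\neq i}a(X_k^{t-1})-(N-1)a(P)$, with error at most $9L_VnN\exp(2\rho^2)$ (\cref{lem:error_decomposition}, via \cref{lem:error_attention}). $C_i^t$ and $C_j^t$ contain the same responses, except that $C_i^t$ omits $X_i^{t-1}$ and $C_j^t$ omits $X_j^{t-1}$. So the shared terms cancel, and the linearized difference is exactly $\bigl(a(I_i^t)-a(X_i^{t-1})\bigr)-\bigl(a(I_j^t)-a(X_j^{t-1})\bigr)$. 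The two approximation errors produce the constant $18$.

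\textbf{Your route.} Your five-term chain uses none of this leave-one-out structure, only boundedness. The perturbation analysis you flag as the main obstacle is actually unnecessary. Take the hypothesis $\|W_VZ\|\le L_V$ for every input $Z$, which is the one in \cref{lem:error_attention}. You should adopt it instead of your Lipschitz-plus-rescaled-embeddings assumption, whose value-vector bound depends on how the rescaling is done. Under it, every activation is $W_VZ$ times a column-stochastic matrix with $n$ columns. Hence $\|a(Z)\|\le L_V\sqrt n$, and each of your three extra terms is at most $2L_V\sqrt n\le 6L_VnN\exp(2\rho^2)$, whatever the column sets are.

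\textbf{What each route buys.} Your route is shorter and insensitive to the time index. The statement writes $X_i^t$, but the paper's cancellation only works with $X_i^{t-1}$: those are the responses actually sitting in $C_i^t$, and the ones appearing in \cref{eq:reward_decouple}. So the paper's proof really establishes the $t-1$ version.

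Your route also exposes that the inequality, with this additive constant, carries no information. The same estimate gives $\|a(C_i^t)-a(C_j^t)\|\le 2L_V\sqrt n$ outright, so the instruction--response terms on the right do no work.

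The paper's route is the one that supports the intended reading. In the linear-attention regime, the decoupled per-agent criterion exactly controls inter-agent divergence, and the additive term is meant to account only for the softmax nonlinearity. Even so, the final inequality it yields is no sharper than what boundedness alone gives.
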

\begin{proof}
	We first decompose the activation of context into a combination of its components:
	\begin{align}
		&a(C_i^t)=a(C_i^t)-[a(I_i^t)+\sum_{k\neq i}^Na(X_k^{t-1})-(N-1)a(P)]+[a(I_i^t)+\sum_{k\neq i}^Na(X_k^{t-1})-(N-1)a(P)]\nonumber\\
		&a(C_j^t)=a(C_j^t)-[a(I_j^t)+\sum_{k\neq j}^Na(X_k^{t-1})-(N-1)a(P)]+[a(I_j^t)+\sum_{k\neq j}^Na(X_k^{t-1})-(N-1)a(P)].
	\end{align}
	By using \cref{lem:error_decomposition}, we can derive the upper bound of the activation difference as:
	\begin{align}
		&\|a(C_i^t)-a(C_j^t)\|\nonumber\\
		\leq&\|a(C_i^t)-[a(I_i^t)+\sum_{k\neq i}^Na(X_k^{t-1})-(N-1)a(P)]\|+\|a(I_i^t)-a(X_i^{t-1})\|\nonumber\\
		&+\|a(C_j^t)-[a(I_j^t)+\sum_{k\neq j}^Na(X_k^{t-1})-(N-1)a(P)]\|+\|a(I_j^t)-a(X_j^{t-1})\|\nonumber\\
		\leq&\|a(I_i^t)-a(X_i^{t-1})\|+\|a(I_j^t)-a(X_j^{t-1})\|+18L_VnN\exp(2\rho^2).\nonumber\\
	\end{align}
\end{proof}

\section{Useful Lemma}
\label{sec:softmax_attention}

\begin{lemma}
	\label{lem:error_attention}
	Suppose $\|W_VX\|\leq L_V$ and $\|(W_KX)^TW_QX\|\leq\rho^2$, the difference between the activation of softmax attention $a(X)$ and linear attention $a'(X)$ can be bounded by:
	\begin{align}
		\|a(X)-a'(X)\|\leq3L_Vn\exp(2\rho^2)
	\end{align}
	where $a(X)=W_VX\softmax\big(\frac{(W_KX)^TW_QX}{\sqrt{d}}\big)$ and $a'(X)=W_VX(W_KX)^TW_QX$.
\end{lemma}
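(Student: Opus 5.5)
We write $S\doteq\frac{(W_KX)^TW_QX}{\sqrt d}\in\mathbb{R}^{n\times n}$. By factoring out the common left factor $W_VX$, the statement reduces to a matrix comparison:
\begin{align}
\|a(X)-a'(X)\|=\big\|W_VX\big(\softmax(S)-(W_KX)^TW_QX\big)\big\|\le\|W_VX\|\,\big\|\softmax(S)-(W_KX)^TW_QX\big\|\le L_V\,\big\|\softmax(S)-(W_KX)^TW_QX\big\|.\nonumber
\end{align}
Here we use submultiplicativity of the (operator or Frobenius) norm and the hypothesis $\|W_VX\|\le L_V$. What remains is to show $\|\softmax(S)-(W_KX)^TW_QX\|\le 3n\exp(2\rho^2)$. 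We do not try to show that the two matrices are close. Instead we bound each piece crudely and use the triangle inequality:
\begin{align}
\|\softmax(S)-(W_KX)^TW_QX\|\le\|\softmax(S)\|+\|(W_KX)^TW_QX\|.\nonumber
\end{align}

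\textbf{The linear piece.} By hypothesis, $\|(W_KX)^TW_QX\|\le\rho^2$. Since $x\le e^{x}\le e^{2x}$ for $x\ge0$, we get $\rho^2\le\exp(2\rho^2)\le n\exp(2\rho^2)$ for $n\ge1$.

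\textbf{The softmax piece.} This is the step that needs care. Each entry of $S$ satisfies $|S_{kl}|\le\|S\|\le\rho^2/\sqrt d\le\rho^2$, assuming $d\ge1$. Every softmax entry has the form $e^{S_{kl}}/\sum_m e^{S_{ml}}$. Its numerator is at most $e^{\rho^2}$, and its denominator is at least $n e^{-\rho^2}\ge e^{-\rho^2}$. So each entry is at most $\exp(2\rho^2)$. The matrix has $n^2$ entries, so its Frobenius norm is at most $n\exp(2\rho^2)$. If a sharper bound is preferred, one can instead use that each column of $\softmax(S)$ is a probability vector, which gives Frobenius norm at most $\sqrt n$. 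Either way, $\|\softmax(S)\|\le n\exp(2\rho^2)$. The factor $\exp(2\rho^2)$ is exactly the entrywise ratio bound, which suggests this is the route the statement has in mind.

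\textbf{Assembly and the remaining slack.} Adding the two pieces gives at most $2n\exp(2\rho^2)$, so $\|a(X)-a'(X)\|\le 2L_Vn\exp(2\rho^2)\le 3L_Vn\exp(2\rho^2)$. The extra factor leaves room for a third term if the norm convention changes. For example, if $\|\cdot\|$ is the operator norm, we convert Frobenius to operator norm, or we add and subtract an intermediate matrix $\exp(S)$ (the unnormalized softmax) and bound the three resulting differences separately, each by $n\exp(2\rho^2)$.

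The main obstacle is therefore only bookkeeping. We must use one consistent norm that is submultiplicative and dominates the entrywise bounds. We must also make explicit the mild implicit assumptions $d\ge1$ and $n\ge1$, which the statement uses tacitly.
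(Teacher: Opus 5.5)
Your proof is correct, and it takes a different and simpler route than the paper's.

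\textbf{How the paper argues.} The paper does not compare either matrix to zero. It sets $S=(W_KX)^TW_QX$, silently dropping the $1/\sqrt d$, and inserts the unnormalized exponential $\exp(S)$ as an intermediate matrix. It then bounds $\|D^{-1}\exp(S)-\exp(S)\|_F\le 2n\exp(2\rho^2)$ entrywise. This step needs a lower bound on the normalizer, $D_{ii}\ge\exp(-\rho^2)$, which is the auxiliary lemma at the end of that appendix. It also bounds $\|\exp(S)-S\|_F\le n\exp(\rho^2)$ entrywise and adds the two. So the constant $3$ is $2+1$ from two differences. It is not three differences each of size $n\exp(2\rho^2)$, as your closing aside suggests.

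\textbf{What your route buys.} You use the triangle inequality against zero, together with the observation that each column (or row, depending on the normalization convention) of the softmax matrix is a probability vector. This gives $\|\mathrm{softmax}(\cdot)\|\le\|\mathrm{softmax}(\cdot)\|_F\le\sqrt n$, and the hypothesis handles the linear term. Compared with the paper, this has three advantages:
\begin{itemize}
\item it needs no normalizer bound;
\item it is insensitive to the $1/\sqrt d$ scaling, so even your assumption $d\ge 1$ and your entrywise $e^{\pm\rho^2}$ estimate are unnecessary once you use the probability-vector bound;
\item it yields the sharper estimate $L_V(\sqrt n+\rho^2)$.
\end{itemize}

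\textbf{Neither proof shows closeness.} The paper's detour through $\exp(S)$ buys nothing extra, because every piece is bounded crudely. Neither argument shows that softmax attention and linear attention are actually close. Your version makes this transparent: the lemma's right-hand side is weaker than the trivial estimate $\|a(X)\|+\|a'(X)\|\le L_V(\sqrt n+\rho^2)$.
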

\begin{proof}
	Define $\Delta=D^{-1}\exp(S)-\exp(S)$, where $S=(W_KX)^TW_QX$ and $D=diag(\langle\exp(S),\mathbf{1}_n\rangle)$. Then, we can derive the upper bound of each element of $\Delta$.
	\begin{align}
		\Delta_{ij}=\frac{\exp(S_{ij})}{D_{ii}}-\exp(S_{ij})\leq\exp(S_{ij})(|\frac{1}{D_{ii}}|+1)\leq\exp(\rho^2)(1+\exp(\rho^2))\leq2\exp(2\rho^2)
	\end{align}
	Summing them up, we can derive:
	\begin{align}
		\|D^{-1}\exp(S)-\exp(S)\|\leq\|D^{-1}\exp(S)-\exp(S)\|_F=\sqrt{\sum_{i=1}^n\sum_{j=1}^n\Delta_{ij}^2}\leq 2n\exp(2\rho^2)
	\end{align}
	Define $\delta=\exp(S)-S$, where $S=(W_KX)^TW_QX$. Then, we derive the upper bound of $\delta$ by finding the upper bound of each element of $\delta$.
	\begin{align}
		\delta_{ij}=\exp(S_{ij})-S_{ij}\leq\max\{\exp(\rho^2)-\rho^2, \rho^2+\exp(-\rho^2)\}\leq\exp(\rho^2)
	\end{align}
	Summing them up, we can derive:
	\begin{align}
		\|\exp(S)-S\|\leq\|\exp(S)-S\|_F=\sqrt{\sum_{i=1}^n\sum_{j=1}^n\exp(S_{ij})-S_{ij}}\leq n\exp(\rho^2)
	\end{align}
	After bounding $\|D^{-1}\exp(S)-\exp(S)\|$ and $\|\exp(S)-S\|$, we can derive the upper bound of $\|D^{-1}\exp(S)-S\|$ by basic algebra as:
	\begin{align}
		&\|D^{-1}\exp(S)-S\|\nonumber\\
		=&\|D^{-1}\exp(S)-\exp(S)+\exp(S)-S\|\nonumber\\
		\leq&\|D^{-1}\exp(S)-\exp(S)\|+\|\exp(S)-S\|\nonumber\\
		\leq&2n\exp(2\rho^2)+n\exp(\rho^2)\nonumber\\
		\leq&3n\exp(2\rho^2)
	\end{align}
	Therefore, we derive the difference between the activation of softmax attention and linear attention as:
	\begin{align}
		&\|W_VX\softmax\bigg(\frac{(W_KX)^TW_QX}{\sqrt{d}}\bigg)-W_VX(W_KX)^TW_QX\|\nonumber\\
		\leq&\|W_VX\|\cdot\|\softmax\bigg(\frac{(W_KX)^TW_QX}{\sqrt{d}}\bigg)-(W_KX)^TW_QX\|\nonumber\\
		\leq&L_V\cdot\|D^{-1}\exp(S)-S\|\nonumber\\
		\leq&3L_Vn\exp(2\rho^2)
	\end{align}
\end{proof}

\begin{lemma}
	\label{lem:linear_attention}
	Define the activation of linear attention:
	\begin{align}
		a'(Y)&\doteq W_V[Y,P](W_K[Y,P])^TW_QY\nonumber\\
		a'(P)&\doteq W_VP(W_KP)^TW_QP,
	\end{align}
	then the activation of a long prompt $Y=[Y_1, Y_2, \dots,Y_N]$ can be derived by the activation of its component:
	\begin{align}
		a'(Y)=\sum_{i=1}^Na'({Y}_i)-(N-1)a'(P).
	\end{align}
\end{lemma}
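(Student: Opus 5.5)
The plan is a direct algebraic computation, using the fact that linear attention, unlike softmax attention, is linear in the outer-product ``memory'' $W_V M (W_K M)^T$. I read the query in the definition of $a'$ as the fixed task goal $P$, consistent with the query $W_Q P$ in \cref{eq:activation} and with the definition of $a'(P)$. So I will use
\[
a'(Y)=W_V[Y,P](W_K[Y,P])^TW_QP, \qquad a'(Y_i)=W_V[Y_i,P](W_K[Y_i,P])^TW_QP .
\]
With a single common right factor $W_QP$, the identity reduces to an identity between the left ``key--value'' factors. The literal query $W_QY$ in the statement would change shape with $Y$ and would not give the decomposition, so the fixed-query reading is essential.

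The first step is to write the full prompt as a block matrix $M=[Y_1,Y_2,\dots,Y_N,P]$ and expand
\[
W_VM(W_KM)^T=W_V M M^T W_K^T .
\]
Block column multiplication gives $MM^T=\sum_{k=1}^N Y_kY_k^T+PP^T$. Hence
\[
W_VM(W_KM)^T=\sum_{k=1}^N W_VY_kY_k^TW_K^T+W_VPP^TW_K^T .
\]
This holds because the sequence (column) dimension is summed out in $MM^T$, so the concatenation order of blocks does not matter.

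The second step applies the same expansion to each component prompt $[Y_i,P]$:
\[
W_V[Y_i,P](W_K[Y_i,P])^T=W_VY_iY_i^TW_K^T+W_VPP^TW_K^T .
\]
Summing over $i=1,\dots,N$ gives $\sum_k W_VY_kY_k^TW_K^T+N\,W_VPP^TW_K^T$, which overcounts the $P$-term $N-1$ times relative to the full prompt. Since $W_VP(W_KP)^T=W_VPP^TW_K^T$ is exactly the key--value factor of $a'(P)$, subtracting $(N-1)$ copies of it recovers the expansion from the first step. Multiplying both sides on the right by the common factor $W_QP$ then yields $a'(Y)=\sum_{i=1}^N a'(Y_i)-(N-1)a'(P)$.

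The only real obstacle is bookkeeping rather than analysis. I must fix the query convention so that all terms share the same right factor. I must also check dimensions: $a'(\cdot)\in\mathbb{R}^{d_{model}\times n}$ for every argument, because the output length is set by the query $P$ and not by the key length. This is what makes the linear combination well-defined even though the $Y_i$ and $Y$ have different numbers of columns. Once the fixed-query convention is in place, there is no approximation error; all softmax-related error is handled separately by \cref{lem:error_attention}.
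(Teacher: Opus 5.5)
Your proposal is correct and takes the same route as the paper: expand $[Y_1,\dots,Y_N,P][Y_1,\dots,Y_N,P]^T=\sum_i Y_iY_i^T+PP^T$, add and subtract $(N-1)PP^T$, and regroup into the component activations that share the right factor $W_K^TW_QP$. The paper's proof also silently uses the fixed query $W_QP$ instead of the $W_QY$ written in the statement, so making that convention explicit is an improvement; you also correctly use $W_VPP^TW_K^T$ where the paper's penultimate line has the typo $W_KP(W_KP)^T$.
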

\begin{proof}
	\begin{align}
		a'({Y})&=W_V[Y_1, Y_2, \dots,Y_N,P]\Big(W_K[Y_1, Y_2, \dots,Y_N,P]\Big)^TW_Q{P}\nonumber \\
		&=W_V[Y_1, Y_2, \dots,Y_N,P][Y_1, Y_2, \dots,Y_N,P]^TW_K^TW_Q{P}\nonumber\\
		&=W_V\big(\sum_{i=1}^NY_iY_i^T+PP^T\big)W_K^TW_Q{P}\nonumber\\
		&=W_V\Big[\sum_{i=1}^N(Y_iY_i^T+PP^T)-(N-1)PP^T\Big]W_K^TW_Q{P}\nonumber\\
		&=\sum_{i=1}^NW_V(Y_iY_i^T+P P^T)W_K^TW_Q{P}-W_V(N-1)PP^TW_K^TW_Q{P}\nonumber\\
		&=\sum_{i=1}^NW_V[Y_i,P](W_K[Y_i,P])^TW_Q{P}-(N-1)W_KP(W_KP)^TW_Q{P}\nonumber\\
		&=\sum_{i=1}^Na'(Y_i)-(N-1)a'(P).
	\end{align}
\end{proof}

\begin{lemma}
	\label{lem:error_decomposition}
	We derive the difference between the activation of a long prompt $Y$ and a combination of the activations of its components $Y=[Y_1, Y_2,\dots, Y_N, P]$ as:
	\begin{align}
		\|a(Y)-\sum_{i=1}^Na(Y_i)+(N-1)a(P)\|\leq9L_VnN\exp(\rho^2).
	\end{align}
\end{lemma}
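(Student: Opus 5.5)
The plan is to reduce the statement to the exact linear-attention identity of \cref{lem:linear_attention}. I will pay for each softmax-to-linear replacement with a \emph{sharpened} version of \cref{lem:error_attention}, whose error scales as $\exp(\rho^2)$ rather than $\exp(2\rho^2)$. As in \cref{lem:error_attention}, I assume $\|W_V[\cdot]\|\le L_V$ and that every score entry is at most $\rho^2$ in absolute value. These are to hold \emph{uniformly} for each of the activations $a(Y)$, $a(Y_i)=a([Y_i,P])$ and $a(P)$, all of which use the query $P$, as in \cref{lem:linear_attention}. I also absorb the $1/\sqrt d$ scaling into $S$, as the paper does.

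\textbf{Step 1 (sharpened softmax-vs-linear bound).} Let $S$ be the score matrix with $|S_{kl}|\le\rho^2$. Write the softmax entry as $\sigma_{kl}=\exp(S_{kl})/D_{kk}$, which lies in $[0,1]$. Instead of splitting through $\exp(S)$ as the proof of \cref{lem:error_attention} does, I compare $\sigma_{kl}$ with $S_{kl}$ directly:
\begin{align}
|\sigma_{kl}-S_{kl}|\le |\sigma_{kl}|+|S_{kl}|\le 1+\rho^2\le \exp(\rho^2).
\end{align}
The last inequality is $e^x\ge 1+x$. Passing to the Frobenius norm over the $n\times n$ entries then gives $\|\softmax(S)-S\|\le n\exp(\rho^2)$. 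Submultiplicativity yields
\begin{align}
\|a(Z)-a'(Z)\|\le L_V n\exp(\rho^2)
\end{align}
for each $Z\in\{Y,[Y_i,P],P\}$. To keep slack I will simply use the weaker constant $3L_Vn\exp(\rho^2)$.

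\textbf{Step 2 (exact decomposition plus telescoping).} I add and subtract the linear-attention counterparts:
\begin{align}
a(Y)-\sum_{i=1}^N a(Y_i)+(N-1)a(P)
&=\big[a(Y)-a'(Y)\big]-\sum_{i=1}^N\big[a(Y_i)-a'(Y_i)\big]+(N-1)\big[a(P)-a'(P)\big]\nonumber\\
&\quad+\Big[a'(Y)-\sum_{i=1}^N a'(Y_i)+(N-1)a'(P)\Big].
\end{align}
The last bracket vanishes by \cref{lem:linear_attention}. By the triangle inequality, the remainder is at most $(1+N+(N-1))=2N$ copies of the Step 1 bound. This gives at most $6L_VnN\exp(\rho^2)\le 9L_VnN\exp(\rho^2)$, which is the claim.

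\textbf{Main obstacle.} The delicate point is Step 1. The original route through $D^{-1}\exp(S)-\exp(S)$ multiplies $\exp(\rho^2)$ by $1/D_{kk}$, which can itself be of order $\exp(\rho^2)$, and this is what creates the unwanted $\exp(2\rho^2)$. The fix is to exploit that softmax outputs are bounded by $1$, so the error never sees $1/D_{kk}$ unboundedly. A secondary point is that the norm hypotheses must hold for every sub-prompt $[Y_i,P]$ and for $P$ alone, not only for the full $Y$. I will state this uniform assumption explicitly; it is implicit in how \cref{lem:error_attention} is invoked.
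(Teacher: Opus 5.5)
Your argument is correct and has the same skeleton as the paper's proof. You add and subtract the linear-attention activations, remove the middle bracket exactly with \cref{lem:linear_attention}, and pay for the $2N$ softmax-to-linear replacements with the triangle inequality.

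Where you genuinely differ is the per-term estimate. The paper invokes \cref{lem:error_attention}, which routes through $D^{-1}\exp(S)-\exp(S)$ and $\exp(S)-S$ and only yields $3L_Vn\exp(2\rho^2)$. Accordingly, its displayed chain ends at $9L_VnN\exp(2\rho^2)$. For $\rho>0$ this is weaker than the $\exp(\rho^2)$ in the statement; it is the version that feeds the constant $18L_VnN\exp(2\rho^2)$ in \cref{lem:activation_decouple}. Your direct bound $|\sigma_{kl}-S_{kl}|\le 1+\rho^2\le e^{\rho^2}$ uses only that softmax weights lie in $[0,1]$. It is simpler, it is insensitive to the $1/\sqrt d$ factor, and it gives a polynomial factor $1+\rho^2$. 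It is what actually delivers the exponent as stated.

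One bookkeeping step needs correcting: the score matrices are not all $n\times n$. With query $P$, the score matrix of $a(Y)$ is $(W_KY)^TW_QP\in\mathbb{R}^{(N+1)n\times n}$, and that of $a([Y_i,P])$ is $2n\times n$. The Frobenius step therefore gives $n\sqrt{N+1}\,(1+\rho^2)$ and $\sqrt2\,n(1+\rho^2)$ respectively; the paper's bound on the first term carries a factor $N$ reflecting the length of $Y$. Your uniform per-term constant $3$ thus fails for the $Y$ term once $N\ge 9$. Summing the correct bounds still closes the argument, using $\sqrt{N+1}\le\sqrt2\,N$ for $N\ge 1$:
\[
L_Vn(1+\rho^2)\big(\sqrt{N+1}+\sqrt2\,N+N-1\big)\le(2\sqrt2+1)\,L_VnN\exp(\rho^2)\le 9L_VnN\exp(\rho^2).
\]
So the conclusion stands once you replace ``$2N$ copies of the same bound'' with this sum.
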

\begin{proof}
	Denote the activation of linear attention as $a'$, we can complete the proof by using \cref{lem:linear_attention} to simplify the relationship of linear attention and \cref{lem:error_attention} to bound the difference:
	\begin{align}
		&\|a(Y)-\sum_{i=1}^Na(Y_i)+(N-1)a(P)\|\nonumber\\
		=&\|a(Y)-a'(Y)+a'(Y)-[\sum_{i=1}^Na'(Y_i)-(N-1)a'(P)]\nonumber\\
		&+[\sum_{i=1}^Na'(Y_i)-(N-1)a'(P)]-[\sum_{i=1}^Na(Y_i)-(N-1)a(P)]\|\nonumber\\
		\leq&\|a(Y)-a'(Y)\|+\sum_{i=1}^N\|a'(Y_i)-a(Y_i)\|+(N-1)\|a'(P)-a(P)\|\nonumber\\
		\leq&\exp(2\rho^2)[3L_VnN+3L_VNn+3L_V(N-1)n]\nonumber\\
		\leq&9L_VnN\exp(2\rho^2)
	\end{align}
\end{proof}

\begin{lemma}
	If $\|S\|\leq\rho$, we have $\langle\exp(S),\mathbf{1}_n\rangle\geq\exp(-\rho)$
\end{lemma}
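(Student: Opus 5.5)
The plan is to reduce the claim to a bound on each entry. We read $\exp(S)$ entrywise, as in \cref{lem:error_attention}, and $\langle\exp(S),\mathbf{1}_n\rangle$ as the vector of row sums, i.e.\ the diagonal of $D$ there. The inequality is then meant componentwise: for every row $i$,
\begin{align*}
\sum_{j=1}^n \exp(S_{ij})\geq \exp(-\rho).
\end{align*}

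\textbf{Step 1: each entry is controlled by the norm.} For any $i,j$, write $S_{ij}=e_i^TSe_j$ with standard basis vectors $e_i,e_j$. Cauchy--Schwarz and the definition of the operator norm give
\begin{align*}
|S_{ij}|\leq\|e_i\|\,\|S e_j\|\leq\|S\|\leq\rho.
\end{align*}
If $\|\cdot\|$ is instead the Frobenius norm (the norm used for matrices elsewhere in the appendix), the same holds, since $|S_{ij}|\leq\|S\|_F$. This is the only point that depends on which norm is meant, and I expect it to be the main, though minor, obstacle. Both standard choices dominate the max-entry norm, so I will state the argument for any norm with that property.

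\textbf{Step 2: bound the exponentials from below.} Since $\exp$ is increasing and $S_{ij}\geq-\rho$, every entry satisfies $\exp(S_{ij})\geq\exp(-\rho)>0$.

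\textbf{Step 3: sum over the row.} All summands are positive, so the row sum is at least any single term, hence at least $\exp(-\rho)$. Keeping all $n$ terms gives the sharper bound $n\exp(-\rho)$, which implies the stated one.

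I would also note the intended use. This lemma gives $1/D_{ii}\leq\exp(\rho)$, which is what controls $|1/D_{ii}|$ in the proof of \cref{lem:error_attention}. There the bound is applied with $\rho^2$ in place of $\rho$, matching the hypothesis $\|(W_KX)^TW_QX\|\leq\rho^2$.
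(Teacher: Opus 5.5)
Your proof is correct and follows essentially the same route as the paper. The paper also bounds the sum below by its smallest term, then uses $\exp(S_i)\geq\exp(-\|S\|_\infty)$ together with $\|S\|_\infty\leq\|S\|_2\leq\rho$; the only difference is that you argue row by row for a matrix $S$ (operator or Frobenius norm), while the paper treats $S$ as a vector.
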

\begin{proof}
	\begin{align}
		&\langle \exp(S), \mathbf{1}_n \rangle = \sum_{i=1}^{n} \exp(S_i) \geq \min_{i \in [n]} \exp(S_i)\geq \min_{i \in [n]} \exp(-|S_i|) \notag \\
		=& \exp(- \max_{i \in [n]} |S_i|)= \exp(-\|S\|_{\infty})\geq \exp(-\|S\|_2)\geq \exp(-\rho)
	\end{align}
\end{proof}

\section{Multi-Round Context Learning}
\label{sec:loss}
First, we define:
\begin{align}
	h(I_i^t) &\doteq \beta-\|I_i^t-I_i^b\|\\
	f(I_i^t) &\doteq \begin{cases}
		\|a(I_i^t)-a({X}_i^{t-1})\|, & \text{if }h(I_i^t) \geq 0 \\
		+\infty, & \text{otherwise}.
	\end{cases}
\end{align}
To solve the minimization optimization with inequality constraint, we can construct a Lagrangian expression with a Lagrange multiplier $\alpha_T$:
\begin{align}
	\text{minimize } f(I_i^t) \text{ s.t. } h(I_i^t) \geq 0
	\Leftrightarrow L(I_i^t, \alpha_t) = f(I_i^t) - \alpha_t h(I_i^t).
\end{align}
The optimization changes to:
\begin{align}
	\min_{I_i^t} f(I_i^t) = \max_{\alpha_t \geq 0} \min_{I_i^t} L(I_i^t, \alpha_t).
\end{align}
Therefore, to minimize $f(I_i^t)$, the dual problem is listed as below. Note that to make sure $\min_{I_i^t} f(I_i^t)$ is properly minimized and would not become $+\infty$, the constraint has to be satisfied.
\begin{align}
	&\min_{I_i^t} \Big[\|a(I_i^t)-a({X}_{i}^{t-1})\|\Big]\nonumber\\
	=& \min_{I_i^t} f(I_i^t) \nonumber\\
	=& \max_{\alpha_t \geq 0}  \min_{I_i^t} L(I_i^t, \alpha_t) \nonumber\\
	=& \max_{\alpha_t \geq 0}  \min_{I_i^t} f(I_i^t) - \alpha_t h(I_i^t)\nonumber \\ 
	=& \max_{\alpha_T \geq 0}  \min_{I_i^t} \Big[\|a(I_i^t)-a({X}_{i}^{t-1})\|\Big] - \alpha_t \Big(\beta-\big\|I_i^t-I_i^b\big\|\Big) \nonumber\\ 
	=& \max_{\alpha_T \geq 0}  \min_{I_i^t} \Big[\|a(I_i^t)-a({X}_i^{t-1})\|- \alpha_T \beta+\alpha_t \big\|I_i^t-I_i^b\big\|\Big].
\end{align}
Define the Q-function for this multi-round multi-agent collaboration as:
\begin{align}
	Q(a(I_i^t),a(X_i^{t-1}))=\|a(I_i^t)-a(X_i^{t-1})\|+\min_{I_i^t}\|a(I_i^{t+1})-a({X}_i^{t})\|+\alpha_{t+1}\|I_i^{t+1,*}-I_i^b\|.  
\end{align}
Here, $I_i^{t+1,*}$ denotes the optimal $I_{i}^{t+1}$. Therefore, the expected return is as follows, when we take one round further back to the round $T-1$:
\begin{align}
	&\min_{I_i^{T-1}} \Big(\|a(I_i^{T-1})-a(X_i^{T-2})\|+\min_{I_i^t}\|a(I_i^t)-a(X_i^{T-1})\|\Big)\nonumber\\
	=&\min_{I_i^{T-1}} \Big( Q(a(I_i^{T-1}),a({X}_i^{T-2})) - \alpha^*_T \|I_{i}^{T,*}-I_i^b\|\Big)\nonumber\\
	=&\max_{\alpha_{T-1}>0}\min_{I_i^{T-1}} \Big( Q(a(I_i^{T-1}),a(X_i^{T-2}))-\alpha_{T-1}\big(\beta-\big\|I_i^{T-1}-I_i^b\big\|\big) - \alpha^*_T \|I_i^{T,*}-I_i^b\|\Big)\nonumber\\
	=&\max_{\alpha_{T-1}>0}\min_{I_i^{T-1}} \Big( Q(a(I_i^{T-1}),a(X_i^{T-2}))-\alpha_{T-1}\beta+\alpha_{T-1}\big\|I_i^{T-1}-I_i^b\big\|\Big) - \alpha^*_T \|I_i^{T,*}-I_i^b\|.
\end{align}

Similar to the previous round,
\begin{align}
	I_i^t&= \arg\min_{I_i^t} \Big[\|a(I_i^t)-a(X_i^{t-1})\|- \alpha_t \beta+\alpha_t \big\|I_i^t-I_i^b\big\|\Big] \nonumber\\
	\alpha^{*}_t&= \arg\max_{\alpha_t \geq 0} \Big[\alpha_T \big\|I_i^t-I_i^b\big\|-\alpha_t \beta\Big].
\end{align}

By repeating this process, we can learn the optimal temperature parameter in every round. Hence, the loss functions for $\theta$ and $\alpha$ are as follows:
\begin{align}
	&L(\theta_i^t)=\big\|a(I_i^t)-a(X_i^{t-1})\big\|+\alpha_t \big\|I_i^t-I_i^b\big\|\nonumber\\
	&L(\alpha)=\alpha\big(\beta-\big\|I_i^t-I_i^b\big\|\big).
\end{align}

\clearpage
\section{Experiment Setup}
\subsection{Datasets}
\label{sec:datasets}
We evaluate our method on three areas with 7 datasets which are widely used in prior studies~\citep{dubey2024llama,qian2025scaling}. We elaborate on what follows.

\begin{itemize}[leftmargin=*,itemsep=0pt,topsep=0pt]
	\item MMLU~\citep{hendrycks2021measuring}, a comprehensive benchmark covering diverse subjects and difficulty levels, designed to test world knowledge and logical reasoning through multiple-choice questions.
	\item MATH~\citep{hendrycks2021measuring}, a dataset of challenging competition-level math problems requiring multi-step symbolic reasoning and advanced problem-solving ability.
	\item GPQA~\citep{rein2023gpqa}, a benchmark of graduate-level multiple-choice science questions that assesses deep domain knowledge and reasoning under uncertainty.
	\item HumanEval~\citep{chen2021evaluating}, a widely recognized benchmark for function-level code generation, designed to evaluate fundamental programming skills.
	\item ALFWorld~\citep{shridhar2021alfworld}, a text-based embodied environment featuring household tasks, where agents navigate and interact with objects via natural language commands.
	\item SciWorld~\citep{wang2022scienceworld}, a text-based embodied environment for interactive science tasks, requiring agents to navigate rooms, conduct experiments, and perform procedural reasoning.
	\item GAIA~\citep{mialon2023gaia}, a benchmark of real-world question answering tasks that integrate knowledge retrieval, reasoning, and multi-step tool use.
	\item PDDL~\citep{chang2024agentboard}, an environment comprising diverse strategic games, where agents must employ PDDL expressions to plan and execute complex tasks.
	\item AndroidWorld~\citep{rawles2025androidworld}, an environment with 116 dynamic tasks across 20 real-world Android apps, designed to evaluate mobile agents’ capabilities in app navigation and system-level control.
\end{itemize}

We use $20\%$ of the questions to construct the training dataset and the rest as testing dataset.
\subsection{Baselines}
We test our method against six baselines. We implement them based on their publicly available implementations. 

\begin{itemize}[leftmargin=*,itemsep=0pt,topsep=0pt]
	\item \textit{Single Execution} (\texttt{Single}), querying a single LLM to solve the task.
	\item \textit{Best-of-N sampling} (\texttt{BoN}), querying a single LLM N times and sampling the most correct answer. We set N to 32, as increasing N further does not yield additional performance benefits and aligns closely with the number of LLM calls used for multi-agent methods.
	\item \textit{Discussion}~\citep{du2023improving}, a multi-agent framework in which LLMs are assigned predefined distinct contexts and iteratively exchange their reasoning processes as additional prompts over multiple rounds, before producing a final answer via majority voting.
	\item \textit{Dynamic LLM-Powered Agent Network}~\citep{liu2024dynamic} (\texttt{DyLAN}),  a discussion-style framework which incorporates an LLM selection algorithm based on an unsupervised metric, namely the Agent Importance Score, which identifies the most contributive LLMs through a preliminary trial tailored to the specific task.
	\item \textit{GPTSwarm}~\citep{zhuge2024gptswarm}, formalizing a swarm of autonomous agents as computational graphs, with nodes as manually-customized functions and edges facilitating information flow, adaptively optimizing node prompts and modifying graph connectivity during collective reasoning.
	\item \textit{Multi-LLM Collaboration Network}~\citep{qian2025scaling} (\texttt{MacNet}), a representative framework for decentralized and scalable multi-LLM systems. It introduces edge agents that mediate interactions by generating actionable instructions for the next agent based on the outputs of the previous one.
\end{itemize}
\subsection{Implementation Details}
The context pool is constructed using GPT-4o, where we prompt it to generate a large collection of high-quality initial contexts across diverse domains, including mathematics, science, coding, and embodied reasoning, and various domain-specific sub-contexts are included in each domain. This ensures that the pool provides a broad coverage of reasoning perspectives and is shared across all tasks.

The context generators are implemented by T5-small model~\citep{raffel2020exploring} for all the tasks. The dimension of the generated context vectors is 512. The learning rates for the context generators and $\alpha$ are both $1e-4$. We use $20\%$ of the questions to train the context initialization and generator.
\begin{table}[h]
	\centering
	\caption{Hyperparameters (identical across datasets).}
	\vskip 0.1in
	\begin{tabular}{l|l} 
		Hyperparameter & Value \\ 
		\hline
		Sentence vector dimension & 512\\
		Optimizer & \texttt{Adam} \\
		Batchsize & 32 \\
		Learning rate of context & 1e-4 \\
		Learning rate of $\alpha$ & 1e-4 \\
		Maximum rounds for discussion & 8\\
		Training rounds & 100\\
        Size of context pool & 100\\
		\hline
	\end{tabular}
	\label{tab:parameter_setting}
\end{table}

We implement our code using Pytorch 2.3.0, built upon the open-source parameters of the llama2, Qwen2.5, and Qwen2.5-VL. Models provided at \url{https://huggingface.co/meta-llama} and \url{https://huggingface.co/Qwen}. All the experiments are run on Ubuntu 22.04.4 LTS with 8 NVIDIA H800 GPUs.
\subsection{Pseudocode of \alg{}}
\label{sec:algorithm}
We present the pseudocode of training \alg{} in \cref{alg:context_generation}. It begins with training the context initialization, then the context generators are trained along with the weight $\alpha$ during the discussion.
\begin{algorithm}[h]
	\caption{Pseudocode of \alg{}}
	\label{alg:context_generation}
    Initialize parameters $\phi_f$, $\phi_F$, $\{\theta_i\}_{i=1}^N$, and $\{\alpha_i\}_{i=1}^N$\;
    \For{each question}{
    $\phi_f\leftarrow\phi_f-\eta_f\nabla L(\phi_f)$,\quad    $\phi_F\leftarrow\phi_F-\eta_F\nabla L(\phi_F)$\;
    }
    \For{each epoch}{
    Obtain initial contexts $\{I_i^b\}_{i=1}^N$ via \cref{eq:context_init_3}\;
    \For{each round $t$}{
    \For{$i=1$ {\bfseries to} $N$}{
    Generate instructions $I_i^t$ via \cref{eq:context_generation} and obtain responses ${X}_i^t$ via \cref{eq:response_generation}\;
    $\theta_i\leftarrow\theta_i-\eta_\theta\nabla L(\theta_i)$,\quad $\alpha_i\leftarrow\alpha_i-\eta_\alpha\nabla L(\alpha_i)$\;
    }
    }
    }
\end{algorithm}
\clearpage
\section{Additional Results}
\subsection{Comparison with More Models}
\label{sec:comparative_result}
To further investigate \alg{}'s performance, we compare it with \texttt{best-of-N} using stronger close-sourced LLMs, including Qwen2.5-Max and GPT-4. As shown in \cref{tab:more_model}, llama series models perform worse when fewer LLMs participate but achieve higher accuracy as the number of LLMs increases, demonstrating that \alg{} can collaborate weaker models to achieve comparable performance.
\begin{table*}[h]
	\centering
	\caption{Accuracy using different base models on different datasets. We exhibit the performance advantage with \texttt{Qwen-max} and highlight the \colorbox{blue!10}{best} result.}
	\resizebox{\textwidth}{!}{
		\begin{tabular}{XZYYYYYYYY}
			\toprule
			& \textbf{Model} & \textbf{MMLU} & \textbf{MATH} & \textbf{GPQA} & \textbf{Code} & \textbf{ALFWorld} & \textbf{SciWorld} & \textbf{GAIA} & \textbf{PDDL} \\
			\midrule
			\multirow{5}{*}{n=1}
			& \texttt{Qwen-max} 
			& \cellcolor{blue!10}86.5{\tiny\textcolor{red}{$\uparrow$0.0}} & \cellcolor{blue!10}43.6{\tiny\textcolor{red}{$\uparrow$0.0}} & \cellcolor{blue!10}35.7{\tiny\textcolor{red}{$\uparrow$0.0}} & \cellcolor{blue!10}65.9{\tiny\textcolor{red}{$\uparrow$0.0}} & \cellcolor{blue!10}73.1{\tiny\textcolor{red}{$\uparrow$0.0}} & \cellcolor{blue!10}68.7{\tiny\textcolor{red}{$\uparrow$0.0}} & \cellcolor{blue!10}61.5{\tiny\textcolor{red}{$\uparrow$0.0}} & \cellcolor{blue!10}76.4{\tiny\textcolor{red}{$\uparrow$0.0}} \\
			& \texttt{GPT-4} & 84.3{\tiny\textcolor{green}{$\downarrow$2.2}} & 41.1{\tiny\textcolor{green}{$\downarrow$2.5}} & 33.3{\tiny\textcolor{green}{$\downarrow$2.4}} & 62.7{\tiny\textcolor{green}{$\downarrow$3.2}} & 70.2{\tiny\textcolor{green}{$\downarrow$2.9}} & 65.4{\tiny\textcolor{green}{$\downarrow$3.3}} & 58.7{\tiny\textcolor{green}{$\downarrow$2.8}} & 73.5{\tiny\textcolor{green}{$\downarrow$2.9}} \\
			& \texttt{Llama-7B} & 45.3{\tiny\textcolor{green}{$\downarrow$41.2}} & 2.9{\tiny\textcolor{green}{$\downarrow$40.7}} & 16.8{\tiny\textcolor{green}{$\downarrow$18.9}} & 15.8{\tiny\textcolor{green}{$\downarrow$50.1}} & 22.5{\tiny\textcolor{green}{$\downarrow$50.6}} & 20.1{\tiny\textcolor{green}{$\downarrow$48.6}} & 15.3{\tiny\textcolor{green}{$\downarrow$46.2}} & 24.2{\tiny\textcolor{green}{$\downarrow$52.2}} \\
			& \texttt{Llama-13B} & 54.8{\tiny\textcolor{green}{$\downarrow$31.7}} & 3.9{\tiny\textcolor{green}{$\downarrow$39.7}} & 25.0{\tiny\textcolor{green}{$\downarrow$10.7}} & 23.7{\tiny\textcolor{green}{$\downarrow$42.2}} & 28.5{\tiny\textcolor{green}{$\downarrow$44.6}} & 24.5{\tiny\textcolor{green}{$\downarrow$44.2}} & 18.6{\tiny\textcolor{green}{$\downarrow$42.9}} & 30.4{\tiny\textcolor{green}{$\downarrow$46.0}} \\
			& \texttt{Llama-70B} & 68.9{\tiny\textcolor{green}{$\downarrow$17.6}} & 31.6{\tiny\textcolor{green}{$\downarrow$12.0}} & 27.6{\tiny\textcolor{green}{$\downarrow$8.1}} & 35.5{\tiny\textcolor{green}{$\downarrow$30.4}} & 46.1{\tiny\textcolor{green}{$\downarrow$27.0}} & 40.2{\tiny\textcolor{green}{$\downarrow$28.5}} & 29.9{\tiny\textcolor{green}{$\downarrow$31.6}} & 48.4{\tiny\textcolor{green}{$\downarrow$28.0}} \\
			\midrule
			\multirow{5}{*}{n=4}
			& \texttt{Qwen-max} 
			& 87.5{\tiny\textcolor{red}{$\uparrow$0.0}} & \cellcolor{blue!10}44.2{\tiny\textcolor{red}{$\uparrow$0.0}} & 39.2{\tiny\textcolor{red}{$\uparrow$0.0}} & 67.5{\tiny\textcolor{red}{$\uparrow$0.0}} & \cellcolor{blue!10}74.0{\tiny\textcolor{red}{$\uparrow$0.0}} & \cellcolor{blue!10}69.8{\tiny\textcolor{red}{$\uparrow$0.0}} & \cellcolor{blue!10}62.5{\tiny\textcolor{red}{$\uparrow$0.0}} & \cellcolor{blue!10}77.5{\tiny\textcolor{red}{$\uparrow$0.0}} \\
			& \texttt{GPT-4} & 86.4{\tiny\textcolor{green}{$\downarrow$1.1}} & 43.9{\tiny\textcolor{green}{$\downarrow$0.3}} & 38.8{\tiny\textcolor{green}{$\downarrow$0.4}} & 67.0{\tiny\textcolor{green}{$\downarrow$0.5}} & 72.8{\tiny\textcolor{green}{$\downarrow$1.2}} & 68.0{\tiny\textcolor{green}{$\downarrow$1.8}} & 61.0{\tiny\textcolor{green}{$\downarrow$1.5}} & 76.0{\tiny\textcolor{green}{$\downarrow$1.5}} \\
			& \texttt{Llama-7B} & 59.1{\tiny\textcolor{green}{$\downarrow$28.4}} & 17.4{\tiny\textcolor{green}{$\downarrow$26.8}} & 44.7{\tiny\textcolor{red}{$\uparrow$5.5}} & 32.9{\tiny\textcolor{green}{$\downarrow$34.6}} & 35.1{\tiny\textcolor{green}{$\downarrow$38.9}} & 33.0{\tiny\textcolor{green}{$\downarrow$36.8}} & 25.5{\tiny\textcolor{green}{$\downarrow$37.0}} & 33.1{\tiny\textcolor{green}{$\downarrow$44.4}} \\
			& \texttt{Llama-13B} & 86.9{\tiny\textcolor{green}{$\downarrow$0.6}} & 23.4{\tiny\textcolor{green}{$\downarrow$20.8}} & 58.9{\tiny\textcolor{red}{$\uparrow$19.7}} & 47.5{\tiny\textcolor{green}{$\downarrow$20.0}} & 42.9{\tiny\textcolor{green}{$\downarrow$31.1}} & 39.9{\tiny\textcolor{green}{$\downarrow$29.9}} & 30.5{\tiny\textcolor{green}{$\downarrow$32.0}} & 40.9{\tiny\textcolor{green}{$\downarrow$36.6}} \\
			& \texttt{Llama-70B} & \cellcolor{blue!10}95.6{\tiny\textcolor{red}{$\uparrow$8.1}} & 40.3{\tiny\textcolor{green}{$\downarrow$3.9}} & \cellcolor{blue!10}78.7{\tiny\textcolor{red}{$\uparrow$39.5}} & \cellcolor{blue!10}70.6{\tiny\textcolor{red}{$\uparrow$3.1}} & 69.5{\tiny\textcolor{green}{$\downarrow$4.5}} & 65.5{\tiny\textcolor{green}{$\downarrow$4.3}} & 49.6{\tiny\textcolor{green}{$\downarrow$13.0}} & 66.5{\tiny\textcolor{green}{$\downarrow$11.0}} \\
			\midrule
			\multirow{5}{*}{n=8}
			& \texttt{Qwen-max} 
			& 88.5{\tiny\textcolor{red}{$\uparrow$0.0}} & \cellcolor{blue!10}45.0{\tiny\textcolor{red}{$\uparrow$0.0}} & 40.5{\tiny\textcolor{red}{$\uparrow$0.0}} & 68.8{\tiny\textcolor{red}{$\uparrow$0.0}} & 75.0{\tiny\textcolor{red}{$\uparrow$0.0}} & 71.0{\tiny\textcolor{red}{$\uparrow$0.0}} & \cellcolor{blue!10}63.5{\tiny\textcolor{red}{$\uparrow$0.0}} & \cellcolor{blue!10}79.0{\tiny\textcolor{red}{$\uparrow$0.0}} \\
			& \texttt{GPT-4} & 86.5{\tiny\textcolor{green}{$\downarrow$2.0}} & 44.6{\tiny\textcolor{green}{$\downarrow$0.4}} & 38.9{\tiny\textcolor{green}{$\downarrow$1.6}} & 67.1{\tiny\textcolor{green}{$\downarrow$1.7}} & 73.2{\tiny\textcolor{green}{$\downarrow$1.8}} & 69.5{\tiny\textcolor{green}{$\downarrow$1.5}} & 61.0{\tiny\textcolor{green}{$\downarrow$2.5}} & 76.0{\tiny\textcolor{green}{$\downarrow$3.0}} \\
			& \texttt{Llama-7B} & 63.8{\tiny\textcolor{green}{$\downarrow$24.7}} & 24.9{\tiny\textcolor{green}{$\downarrow$20.1}} & 59.3{\tiny\textcolor{red}{$\uparrow$18.8}} & 38.8{\tiny\textcolor{green}{$\downarrow$30.0}} & 37.2{\tiny\textcolor{green}{$\downarrow$37.8}} & 36.1{\tiny\textcolor{green}{$\downarrow$34.9}} & 27.1{\tiny\textcolor{green}{$\downarrow$36.4}} & 34.7{\tiny\textcolor{green}{$\downarrow$44.3}} \\
			& \texttt{Llama-13B} & 92.7{\tiny\textcolor{red}{$\uparrow$4.2}} & 23.0{\tiny\textcolor{green}{$\downarrow$22.0}} & 70.7{\tiny\textcolor{red}{$\uparrow$30.2}} & 55.4{\tiny\textcolor{green}{$\downarrow$13.4}} & 46.1{\tiny\textcolor{green}{$\downarrow$28.9}} & 44.7{\tiny\textcolor{green}{$\downarrow$26.3}} & 33.6{\tiny\textcolor{green}{$\downarrow$29.9}} & 43.8{\tiny\textcolor{green}{$\downarrow$35.2}} \\
			& \texttt{Llama-70B} & \cellcolor{blue!10}95.4{\tiny\textcolor{red}{$\uparrow$6.9}} & 43.5{\tiny\textcolor{green}{$\downarrow$1.5}} & \cellcolor{blue!10}87.6{\tiny\textcolor{red}{$\uparrow$47.1}} & \cellcolor{blue!10}81.0{\tiny\textcolor{red}{$\uparrow$12.2}} & \cellcolor{blue!10}78.9{\tiny\textcolor{red}{$\uparrow$3.9}} & \cellcolor{blue!10}76.2{\tiny\textcolor{red}{$\uparrow$5.2}} & 58.7{\tiny\textcolor{green}{$\downarrow$4.8}} & 73.6{\tiny\textcolor{green}{$\downarrow$5.4}} \\
			\midrule
			\multirow{5}{*}{n=16}
			& \texttt{Qwen-max} 
			& 88.5{\tiny\textcolor{red}{$\uparrow$0.0}} & 45.0{\tiny\textcolor{red}{$\uparrow$0.0}} & 40.5{\tiny\textcolor{red}{$\uparrow$0.0}} & 68.8{\tiny\textcolor{red}{$\uparrow$0.0}} & 75.0{\tiny\textcolor{red}{$\uparrow$0.0}} & 71.0{\tiny\textcolor{red}{$\uparrow$0.0}} & \cellcolor{blue!10}63.5{\tiny\textcolor{red}{$\uparrow$0.0}} & \cellcolor{blue!10}79.0{\tiny\textcolor{red}{$\uparrow$0.0}} \\
			& \texttt{GPT-4} & 87.9{\tiny\textcolor{green}{$\downarrow$0.6}} & 44.8{\tiny\textcolor{green}{$\downarrow$0.2}} & 40.1{\tiny\textcolor{green}{$\downarrow$0.4}} & 68.5{\tiny\textcolor{green}{$\downarrow$0.3}} & 74.3{\tiny\textcolor{green}{$\downarrow$0.7}} & 70.5{\tiny\textcolor{green}{$\downarrow$0.5}} & 63.0{\tiny\textcolor{green}{$\downarrow$0.5}} & 78.2{\tiny\textcolor{green}{$\downarrow$0.8}} \\
			& \texttt{Llama-7B} & 71.5{\tiny\textcolor{green}{$\downarrow$17.0}} & 23.9{\tiny\textcolor{green}{$\downarrow$21.1}} & 69.9{\tiny\textcolor{red}{$\uparrow$29.4}} & 48.3{\tiny\textcolor{green}{$\downarrow$20.5}} & 39.8{\tiny\textcolor{green}{$\downarrow$35.2}} & 38.4{\tiny\textcolor{green}{$\downarrow$32.6}} & 28.9{\tiny\textcolor{green}{$\downarrow$34.6}} & 37.0{\tiny\textcolor{green}{$\downarrow$42.0}} \\
			& \texttt{Llama-13B} & 94.5{\tiny\textcolor{red}{$\uparrow$6.0}} & 32.7{\tiny\textcolor{green}{$\downarrow$12.3}} & 88.5{\tiny\textcolor{red}{$\uparrow$48.0}} & 66.1{\tiny\textcolor{green}{$\downarrow$2.7}} & 48.7{\tiny\textcolor{green}{$\downarrow$26.3}} & 47.1{\tiny\textcolor{green}{$\downarrow$23.9}} & 36.2{\tiny\textcolor{green}{$\downarrow$27.3}} & 45.5{\tiny\textcolor{green}{$\downarrow$33.5}} \\
			& \texttt{Llama-70B} & \cellcolor{blue!10}93.9{\tiny\textcolor{red}{$\uparrow$5.4}} & \cellcolor{blue!10}51.5{\tiny\textcolor{red}{$\uparrow$6.5}} & \cellcolor{blue!10}91.3{\tiny\textcolor{red}{$\uparrow$50.8}} & \cellcolor{blue!10}97.2{\tiny\textcolor{red}{$\uparrow$28.4}} & \cellcolor{blue!10}84.9{\tiny\textcolor{red}{$\uparrow$9.9}} & \cellcolor{blue!10}81.7{\tiny\textcolor{red}{$\uparrow$10.7}} & 61.5{\tiny\textcolor{green}{$\downarrow$2.0}} & 78.1{\tiny\textcolor{green}{$\downarrow$0.9}} \\
			\midrule
			\multirow{5}{*}{n=32}
			& \texttt{Qwen-max} & 89.0{\tiny\textcolor{red}{$\uparrow$0.0}} & 45.5{\tiny\textcolor{red}{$\uparrow$0.0}} & 41.0{\tiny\textcolor{red}{$\uparrow$0.0}} & 69.5{\tiny\textcolor{red}{$\uparrow$0.0}} & 75.5{\tiny\textcolor{red}{$\uparrow$0.0}} & 71.5{\tiny\textcolor{red}{$\uparrow$0.0}} & 64.0{\tiny\textcolor{red}{$\uparrow$0.0}} & 79.5{\tiny\textcolor{red}{$\uparrow$0.0}} \\
			& \texttt{GPT-4} & 89.1{\tiny\textcolor{red}{$\uparrow$0.1}} & 45.4{\tiny\textcolor{green}{$\downarrow$0.1}} & 41.5{\tiny\textcolor{red}{$\uparrow$0.5}} & 69.7{\tiny\textcolor{red}{$\uparrow$0.2}} & 75.0{\tiny\textcolor{green}{$\downarrow$0.5}} & 71.0{\tiny\textcolor{green}{$\downarrow$0.5}} & 63.5{\tiny\textcolor{green}{$\downarrow$0.5}} & 78.5{\tiny\textcolor{green}{$\downarrow$1.0}} \\
			& \texttt{Llama-7B} & 79.1{\tiny\textcolor{green}{$\downarrow$9.9}} & 28.8{\tiny\textcolor{green}{$\downarrow$16.7}} & 84.5{\tiny\textcolor{red}{$\uparrow$43.5}} & 56.5{\tiny\textcolor{green}{$\downarrow$13.0}} & 42.5{\tiny\textcolor{green}{$\downarrow$33.0}} & 40.7{\tiny\textcolor{green}{$\downarrow$30.8}} & 31.4{\tiny\textcolor{green}{$\downarrow$32.6}} & 38.9{\tiny\textcolor{green}{$\downarrow$40.6}} \\
			& \texttt{Llama-13B} & 95.8{\tiny\textcolor{red}{$\uparrow$6.8}} & 41.4{\tiny\textcolor{green}{$\downarrow$4.1}} & 94.7{\tiny\textcolor{red}{$\uparrow$53.7}} & 75.6{\tiny\textcolor{red}{$\uparrow$6.1}} & 52.2{\tiny\textcolor{green}{$\downarrow$23.3}} & 50.7{\tiny\textcolor{green}{$\downarrow$20.8}} & 38.5{\tiny\textcolor{green}{$\downarrow$25.5}} & 48.0{\tiny\textcolor{green}{$\downarrow$31.5}} \\
			& \texttt{Llama-70B} & \cellcolor{blue!10}97.0{\tiny\textcolor{red}{$\uparrow$8.0}} & \cellcolor{blue!10}54.5{\tiny\textcolor{red}{$\uparrow$9.0}} & \cellcolor{blue!10}95.1{\tiny\textcolor{red}{$\uparrow$54.1}} & \cellcolor{blue!10}93.7{\tiny\textcolor{red}{$\uparrow$24.2}} & \cellcolor{blue!10}88.5{\tiny\textcolor{red}{$\uparrow$13.0}} & \cellcolor{blue!10}86.0{\tiny\textcolor{red}{$\uparrow$14.5}} & \cellcolor{blue!10}65.7{\tiny\textcolor{red}{$\uparrow$1.7}} & \cellcolor{blue!10}79.8{\tiny\textcolor{red}{$\uparrow$0.3}} \\
			\midrule
			\multirow{5}{*}{n=64}
			& \texttt{Qwen-max} & 89.5{\tiny\textcolor{red}{$\uparrow$0.0}} & 46.0{\tiny\textcolor{red}{$\uparrow$0.0}} & 41.5{\tiny\textcolor{red}{$\uparrow$0.0}} & 70.0{\tiny\textcolor{red}{$\uparrow$0.0}} & 76.0{\tiny\textcolor{red}{$\uparrow$0.0}} & 72.0{\tiny\textcolor{red}{$\uparrow$0.0}} & 64.5{\tiny\textcolor{red}{$\uparrow$0.0}} & 80.0{\tiny\textcolor{red}{$\uparrow$0.0}} \\
			& \texttt{GPT-4} & 88.5{\tiny\textcolor{green}{$\downarrow$1.0}} & 45.4{\tiny\textcolor{green}{$\downarrow$0.6}} & 41.8{\tiny\textcolor{red}{$\uparrow$0.3}} & 70.5{\tiny\textcolor{red}{$\uparrow$0.5}} & 75.2{\tiny\textcolor{green}{$\downarrow$0.8}} & 71.5{\tiny\textcolor{green}{$\downarrow$0.5}} & 64.0{\tiny\textcolor{green}{$\downarrow$0.5}} & 79.2{\tiny\textcolor{green}{$\downarrow$0.8}} \\
			& \texttt{Llama-7B} & 81.5{\tiny\textcolor{green}{$\downarrow$8.0}} & 35.4{\tiny\textcolor{green}{$\downarrow$10.6}} & 82.5{\tiny\textcolor{red}{$\uparrow$41.0}} & 60.6{\tiny\textcolor{green}{$\downarrow$9.4}} & 44.2{\tiny\textcolor{green}{$\downarrow$31.8}} & 42.9{\tiny\textcolor{green}{$\downarrow$29.1}} & 32.9{\tiny\textcolor{green}{$\downarrow$31.6}} & 40.9{\tiny\textcolor{green}{$\downarrow$39.1}} \\
			& \texttt{Llama-13B} & \cellcolor{blue!10}96.8{\tiny\textcolor{red}{$\uparrow$7.3}} & 40.0{\tiny\textcolor{green}{$\downarrow$6.0}} & 93.1{\tiny\textcolor{red}{$\uparrow$51.6}} & 82.5{\tiny\textcolor{red}{$\uparrow$12.5}} & 54.8{\tiny\textcolor{green}{$\downarrow$21.2}} & 52.8{\tiny\textcolor{green}{$\downarrow$19.2}} & 40.3{\tiny\textcolor{green}{$\downarrow$24.2}} & 49.8{\tiny\textcolor{green}{$\downarrow$30.2}} \\
			& \texttt{Llama-70B} & 96.3{\tiny\textcolor{red}{$\uparrow$6.8}} & \cellcolor{blue!10}58.0{\tiny\textcolor{red}{$\uparrow$12.0}} & \cellcolor{blue!10}96.9{\tiny\textcolor{red}{$\uparrow$55.4}} & \cellcolor{blue!10}92.0{\tiny\textcolor{red}{$\uparrow$22.0}} & \cellcolor{blue!10}90.8{\tiny\textcolor{red}{$\uparrow$14.8}} & \cellcolor{blue!10}88.9{\tiny\textcolor{red}{$\uparrow$16.9}} & \cellcolor{blue!10}68.2{\tiny\textcolor{red}{$\uparrow$3.7}} & \cellcolor{blue!10}82.2{\tiny\textcolor{red}{$\uparrow$2.2}} \\
			\bottomrule
		\end{tabular}
	}
	\label{tab:more_model}
\end{table*}

\clearpage

\subsection{Number of LLMs}
We evaluate \alg{}'s performance with varying number of LLMs, ranging from 4 to 64. The comparative results are shown in \cref{tab:number_4,tab:number_8,tab:number_16,tab:number_32,tab:number_64,fig:number_agents_1,fig:number_agents_2,fig:number_agents_3,fig:number_agents_4,fig:number_agents_5}.

\textbf{Summary of key findings.} The results show that \alg{} significantly improves MAD performance, consistently surpassing existing methods by $20\%-50\%$, particularly in complex tasks like math and tool-using. This highlights its ability to tackle intricate reasoning tasks. \alg{} also exhibits a more effective multi-agent scaling law, where performance consistently improves as the number of LLMs increases to 64, especially in agentic domains where a larger amount of LLMs enhances problem-solving accuracy and efficiency.

\label{sec:number_agents}
\begin{table*}[h]
	\centering
	\caption{Accuracy with varying number of LLMs on different datasets. The number of LLMs is $4$ for all datasets. We exhibit the performance advantage with \texttt{BoN} and highlight the \colorbox{blue!10}{best} result.}
	\vskip 0.1in
	\resizebox{\textwidth}{!}{
		\begin{tabular}{YZYYYYYYYY}
			\toprule
			\textbf{Model} & \textbf{Method} & \textbf{MMLU} & \textbf{MATH} & \textbf{GPQA} & \textbf{Code} & \textbf{ALFWorld} & \textbf{SciWorld} & \textbf{GAIA} & \textbf{PDDL} \\
			\midrule
			\multirow{7}{*}{Llama-7B}
			& \texttt{Single} & 45.3{\tiny\textcolor{green}{$\downarrow$5.2}} & 2.9{\tiny\textcolor{green}{$\downarrow$4.0}} & 16.8{\tiny\textcolor{green}{$\downarrow$10.7}} & 15.8{\tiny\textcolor{green}{$\downarrow$6.0}} & 22.5{\tiny\textcolor{green}{$\downarrow$2.9}} & 20.1{\tiny\textcolor{green}{$\downarrow$4.8}} & 15.3{\tiny\textcolor{green}{$\downarrow$2.9}} & 24.2{\tiny\textcolor{green}{$\downarrow$2.0}} \\
			& \texttt{BoN} & 50.5{\tiny\textcolor{red}{$\uparrow$0.0}} & 6.9{\tiny\textcolor{red}{$\uparrow$0.0}} & 27.5{\tiny\textcolor{red}{$\uparrow$0.0}} & 21.8{\tiny\textcolor{red}{$\uparrow$0.0}} & 25.4{\tiny\textcolor{red}{$\uparrow$0.0}} & 24.9{\tiny\textcolor{red}{$\uparrow$0.0}} & 18.2{\tiny\textcolor{red}{$\uparrow$0.0}} & 26.2{\tiny\textcolor{red}{$\uparrow$0.0}} \\
			& \texttt{Debate} & 47.3{\tiny\textcolor{green}{$\downarrow$3.2}} & 6.4{\tiny\textcolor{green}{$\downarrow$0.5}} & 26.2{\tiny\textcolor{green}{$\downarrow$1.3}} & 19.8{\tiny\textcolor{green}{$\downarrow$2.0}} & 25.3{\tiny\textcolor{green}{$\downarrow$0.1}} & 23.8{\tiny\textcolor{green}{$\downarrow$1.1}} & 16.5{\tiny\textcolor{green}{$\downarrow$1.7}} & 26.2{\tiny\textcolor{red}{$\uparrow$0.0}} \\
			& \texttt{DyLAN} & 48.1{\tiny\textcolor{green}{$\downarrow$2.4}} & 6.6{\tiny\textcolor{green}{$\downarrow$0.3}} & 34.5{\tiny\textcolor{red}{$\uparrow$7.0}} & 22.2{\tiny\textcolor{red}{$\uparrow$0.4}} & 24.7{\tiny\textcolor{green}{$\downarrow$0.7}} & 21.8{\tiny\textcolor{green}{$\downarrow$3.1}} & 16.8{\tiny\textcolor{green}{$\downarrow$1.4}} & 25.5{\tiny\textcolor{green}{$\downarrow$0.7}} \\
			& \texttt{GPTSwarm} & 48.0{\tiny\textcolor{green}{$\downarrow$2.5}} & 5.8{\tiny\textcolor{green}{$\downarrow$1.1}} & 26.0{\tiny\textcolor{green}{$\downarrow$1.5}} & 18.9{\tiny\textcolor{green}{$\downarrow$2.9}} & 25.8{\tiny\textcolor{red}{$\uparrow$0.4}} & 25.1{\tiny\textcolor{red}{$\uparrow$0.2}} & 18.8{\tiny\textcolor{red}{$\uparrow$0.6}} & 27.9{\tiny\textcolor{red}{$\uparrow$1.7}} \\
			& \texttt{MacNet} & 50.8{\tiny\textcolor{red}{$\uparrow$0.3}} & 7.9{\tiny\textcolor{red}{$\uparrow$1.0}} & 30.7{\tiny\textcolor{red}{$\uparrow$3.2}} & 24.0{\tiny\textcolor{red}{$\uparrow$2.2}} & 27.8{\tiny\textcolor{red}{$\uparrow$2.4}} & 26.5{\tiny\textcolor{red}{$\uparrow$1.6}} & 18.8{\tiny\textcolor{red}{$\uparrow$0.6}} & 29.3{\tiny\textcolor{red}{$\uparrow$3.1}} \\
			& {\alg{} (ours)} & \cellcolor{blue!10}59.1{\tiny\textcolor{red}{$\uparrow$8.6}} & \cellcolor{blue!10}17.4{\tiny\textcolor{red}{$\uparrow$10.5}} & \cellcolor{blue!10}44.7{\tiny\textcolor{red}{$\uparrow$17.2}} & \cellcolor{blue!10}32.9{\tiny\textcolor{red}{$\uparrow$11.1}} & \cellcolor{blue!10}35.1{\tiny\textcolor{red}{$\uparrow$9.7}} & \cellcolor{blue!10}33.0{\tiny\textcolor{red}{$\uparrow$8.1}} & \cellcolor{blue!10}25.5{\tiny\textcolor{red}{$\uparrow$7.3}} & \cellcolor{blue!10}33.1{\tiny\textcolor{red}{$\uparrow$6.9}} \\
			\midrule
			\multirow{7}{*}{Llama-14B}
			& \texttt{Single} & 54.8{\tiny\textcolor{green}{$\downarrow$16.2}} & 3.9{\tiny\textcolor{green}{$\downarrow$4.9}} & 25.0{\tiny\textcolor{green}{$\downarrow$12.7}} & 23.7{\tiny\textcolor{green}{$\downarrow$8.3}} & 28.5{\tiny\textcolor{green}{$\downarrow$2.7}} & 24.5{\tiny\textcolor{green}{$\downarrow$7.2}} & 18.6{\tiny\textcolor{green}{$\downarrow$3.2}} & 30.4{\tiny\textcolor{green}{$\downarrow$2.8}} \\
			& \texttt{BoN} & 71.0{\tiny\textcolor{red}{$\uparrow$0.0}} & 8.8{\tiny\textcolor{red}{$\uparrow$0.0}} & 37.7{\tiny\textcolor{red}{$\uparrow$0.0}} & 32.0{\tiny\textcolor{red}{$\uparrow$0.0}} & 31.2{\tiny\textcolor{red}{$\uparrow$0.0}} & 31.7{\tiny\textcolor{red}{$\uparrow$0.0}} & 21.8{\tiny\textcolor{red}{$\uparrow$0.0}} & 33.2{\tiny\textcolor{red}{$\uparrow$0.0}} \\
			& \texttt{Debate} & 64.5{\tiny\textcolor{green}{$\downarrow$6.5}} & 8.4{\tiny\textcolor{green}{$\downarrow$0.4}} & 36.3{\tiny\textcolor{green}{$\downarrow$1.4}} & 27.9{\tiny\textcolor{green}{$\downarrow$4.1}} & 30.8{\tiny\textcolor{green}{$\downarrow$0.4}} & 29.3{\tiny\textcolor{green}{$\downarrow$2.4}} & 19.8{\tiny\textcolor{green}{$\downarrow$2.0}} & 31.9{\tiny\textcolor{green}{$\downarrow$1.3}} \\
			& \texttt{DyLAN} & 69.7{\tiny\textcolor{green}{$\downarrow$1.3}} & 10.3{\tiny\textcolor{red}{$\uparrow$1.5}} & 46.5{\tiny\textcolor{red}{$\uparrow$8.8}} & 32.5{\tiny\textcolor{red}{$\uparrow$0.5}} & 31.3{\tiny\textcolor{red}{$\uparrow$0.1}} & 28.7{\tiny\textcolor{green}{$\downarrow$3.0}} & 19.4{\tiny\textcolor{green}{$\downarrow$2.4}} & 30.4{\tiny\textcolor{green}{$\downarrow$2.8}} \\
			& \texttt{MacNet} & 70.3{\tiny\textcolor{green}{$\downarrow$0.7}} & 9.2{\tiny\textcolor{red}{$\uparrow$0.4}} & 40.9{\tiny\textcolor{red}{$\uparrow$3.2}} & 35.2{\tiny\textcolor{red}{$\uparrow$3.2}} & 31.5{\tiny\textcolor{red}{$\uparrow$0.3}} & 31.8{\tiny\textcolor{red}{$\uparrow$0.1}} & 23.4{\tiny\textcolor{red}{$\uparrow$1.6}} & 35.3{\tiny\textcolor{red}{$\uparrow$2.1}} \\
			& \texttt{MacNet} & 70.3{\tiny\textcolor{green}{$\downarrow$0.7}} & 9.2{\tiny\textcolor{red}{$\uparrow$0.4}} & 40.9{\tiny\textcolor{red}{$\uparrow$3.2}} & 35.2{\tiny\textcolor{red}{$\uparrow$3.2}} & 31.5{\tiny\textcolor{red}{$\uparrow$0.3}} & 31.8{\tiny\textcolor{red}{$\uparrow$0.1}} & 23.4{\tiny\textcolor{red}{$\uparrow$1.6}} & 35.3{\tiny\textcolor{red}{$\uparrow$2.1}} \\
			& {\alg{} (ours)} & \cellcolor{blue!10}86.9{\tiny\textcolor{red}{$\uparrow$15.9}} & \cellcolor{blue!10}23.4{\tiny\textcolor{red}{$\uparrow$14.6}} & \cellcolor{blue!10}58.9{\tiny\textcolor{red}{$\uparrow$21.2}} & \cellcolor{blue!10}47.5{\tiny\textcolor{red}{$\uparrow$15.5}} & \cellcolor{blue!10}42.9{\tiny\textcolor{red}{$\uparrow$11.7}} & \cellcolor{blue!10}39.9{\tiny\textcolor{red}{$\uparrow$8.2}} & \cellcolor{blue!10}30.5{\tiny\textcolor{red}{$\uparrow$8.7}} & \cellcolor{blue!10}40.9{\tiny\textcolor{red}{$\uparrow$7.7}} \\
			\midrule
			\multirow{7}{*}{Llama-70B}
			& \texttt{Single} & 68.9{\tiny\textcolor{green}{$\downarrow$14.1}} & 31.6{\tiny\textcolor{green}{$\downarrow$3.2}} & 27.6{\tiny\textcolor{green}{$\downarrow$32.0}} & 35.5{\tiny\textcolor{green}{$\downarrow$12.5}} & 46.1{\tiny\textcolor{green}{$\downarrow$4.8}} & 40.2{\tiny\textcolor{green}{$\downarrow$6.5}} & 29.9{\tiny\textcolor{green}{$\downarrow$2.8}} & 48.4{\tiny\textcolor{green}{$\downarrow$4.4}} \\
			& \texttt{BoN} & 83.0{\tiny\textcolor{red}{$\uparrow$0.0}} & 34.8{\tiny\textcolor{red}{$\uparrow$0.0}} & 59.6{\tiny\textcolor{red}{$\uparrow$0.0}} & 48.0{\tiny\textcolor{red}{$\uparrow$0.0}} & 50.9{\tiny\textcolor{red}{$\uparrow$0.0}} & 46.7{\tiny\textcolor{red}{$\uparrow$0.0}} & 32.7{\tiny\textcolor{red}{$\uparrow$0.0}} & 52.8{\tiny\textcolor{red}{$\uparrow$0.0}} \\
			& \texttt{Debate} & 74.8{\tiny\textcolor{green}{$\downarrow$8.2}} & 29.8{\tiny\textcolor{green}{$\downarrow$5.0}} & 50.2{\tiny\textcolor{green}{$\downarrow$9.4}} & 43.2{\tiny\textcolor{green}{$\downarrow$4.8}} & 50.4{\tiny\textcolor{green}{$\downarrow$0.5}} & 47.8{\tiny\textcolor{red}{$\uparrow$1.1}} & 32.4{\tiny\textcolor{green}{$\downarrow$0.3}} & 52.9{\tiny\textcolor{red}{$\uparrow$0.1}} \\
			& \texttt{DyLAN} & 82.1{\tiny\textcolor{green}{$\downarrow$0.9}} & 31.5{\tiny\textcolor{green}{$\downarrow$3.3}} & 53.8{\tiny\textcolor{green}{$\downarrow$5.8}} & 44.5{\tiny\textcolor{green}{$\downarrow$3.5}} & 49.3{\tiny\textcolor{green}{$\downarrow$1.6}} & 45.1{\tiny\textcolor{green}{$\downarrow$1.6}} & 31.5{\tiny\textcolor{green}{$\downarrow$1.2}} & 50.3{\tiny\textcolor{green}{$\downarrow$2.5}} \\
			& \texttt{GPTSwarm} & 81.4{\tiny\textcolor{green}{$\downarrow$1.6}} & 31.5{\tiny\textcolor{green}{$\downarrow$3.3}} & 54.8{\tiny\textcolor{green}{$\downarrow$4.8}} & 43.7{\tiny\textcolor{green}{$\downarrow$4.3}} & 51.0{\tiny\textcolor{red}{$\uparrow$0.1}} & 45.2{\tiny\textcolor{green}{$\downarrow$1.5}} & 32.5{\tiny\textcolor{green}{$\downarrow$0.2}} & 51.5{\tiny\textcolor{green}{$\downarrow$1.3}} \\
			& \texttt{MacNet} & 72.9{\tiny\textcolor{green}{$\downarrow$10.1}} & 32.7{\tiny\textcolor{green}{$\downarrow$2.1}} & 49.1{\tiny\textcolor{green}{$\downarrow$10.5}} & 40.8{\tiny\textcolor{green}{$\downarrow$7.2}} & 52.7{\tiny\textcolor{red}{$\uparrow$1.8}} & 55.9{\tiny\textcolor{red}{$\uparrow$9.2}} & 36.5{\tiny\textcolor{red}{$\uparrow$3.8}} & 57.0{\tiny\textcolor{red}{$\uparrow$4.2}} \\
			& {\alg{} (ours)} & \cellcolor{blue!10}95.6{\tiny\textcolor{red}{$\uparrow$12.6}} & \cellcolor{blue!10}40.3{\tiny\textcolor{red}{$\uparrow$5.5}} & \cellcolor{blue!10}78.7{\tiny\textcolor{red}{$\uparrow$19.1}} & \cellcolor{blue!10}70.6{\tiny\textcolor{red}{$\uparrow$22.6}} & \cellcolor{blue!10}69.5{\tiny\textcolor{red}{$\uparrow$18.6}} & \cellcolor{blue!10}65.5{\tiny\textcolor{red}{$\uparrow$18.8}} & \cellcolor{blue!10}49.6{\tiny\textcolor{red}{$\uparrow$16.9}} & \cellcolor{blue!10}66.5{\tiny\textcolor{red}{$\uparrow$13.7}} \\
			\midrule
			\multirow{7}{*}{Qwen-7B}
			& \texttt{Single} & 61.2{\tiny\textcolor{green}{$\downarrow$13.0}} & 12.9{\tiny\textcolor{green}{$\downarrow$12.0}} & 20.2{\tiny\textcolor{green}{$\downarrow$16.2}} & 51.2{\tiny\textcolor{green}{$\downarrow$11.3}} & 24.3{\tiny\textcolor{green}{$\downarrow$6.6}} & 25.9{\tiny\textcolor{green}{$\downarrow$6.6}} & 15.3{\tiny\textcolor{green}{$\downarrow$4.7}} & 21.1{\tiny\textcolor{green}{$\downarrow$4.1}} \\
			& \texttt{BoN} & 74.2{\tiny\textcolor{red}{$\uparrow$0.0}} & 24.9{\tiny\textcolor{red}{$\uparrow$0.0}} & 36.4{\tiny\textcolor{red}{$\uparrow$0.0}} & 62.5{\tiny\textcolor{red}{$\uparrow$0.0}} & 30.9{\tiny\textcolor{red}{$\uparrow$0.0}} & 32.5{\tiny\textcolor{red}{$\uparrow$0.0}} & 20.0{\tiny\textcolor{red}{$\uparrow$0.0}} & 25.2{\tiny\textcolor{red}{$\uparrow$0.0}} \\
			& \texttt{Debate} & 69.3{\tiny\textcolor{green}{$\downarrow$4.9}} & 17.9{\tiny\textcolor{green}{$\downarrow$7.0}} & 28.7{\tiny\textcolor{green}{$\downarrow$7.7}} & 54.5{\tiny\textcolor{green}{$\downarrow$8.0}} & 29.5{\tiny\textcolor{green}{$\downarrow$1.4}} & 31.4{\tiny\textcolor{green}{$\downarrow$1.1}} & 20.6{\tiny\textcolor{red}{$\uparrow$0.6}} & 24.3{\tiny\textcolor{green}{$\downarrow$0.9}} \\
			& \texttt{DyLAN} & 74.2{\tiny\textcolor{red}{$\uparrow$0.0}} & 23.3{\tiny\textcolor{green}{$\downarrow$1.6}} & 34.2{\tiny\textcolor{green}{$\downarrow$2.2}} & 62.5{\tiny\textcolor{red}{$\uparrow$0.0}} & 29.5{\tiny\textcolor{green}{$\downarrow$1.4}} & 28.3{\tiny\textcolor{green}{$\downarrow$4.2}} & 19.0{\tiny\textcolor{green}{$\downarrow$1.0}} & 23.0{\tiny\textcolor{green}{$\downarrow$2.2}} \\
			& \texttt{GPTSwarm} & 74.5{\tiny\textcolor{red}{$\uparrow$0.3}} & 24.1{\tiny\textcolor{green}{$\downarrow$0.8}} & 37.2{\tiny\textcolor{red}{$\uparrow$0.8}} & 60.4{\tiny\textcolor{green}{$\downarrow$2.1}} & 29.4{\tiny\textcolor{green}{$\downarrow$1.5}} & 29.5{\tiny\textcolor{green}{$\downarrow$3.0}} & 20.0{\tiny\textcolor{red}{$\uparrow$0.0}} & 23.9{\tiny\textcolor{green}{$\downarrow$1.3}} \\
			& \texttt{MacNet} & 68.0{\tiny\textcolor{green}{$\downarrow$6.2}} & 20.3{\tiny\textcolor{green}{$\downarrow$4.6}} & 25.0{\tiny\textcolor{green}{$\downarrow$11.4}} & 57.1{\tiny\textcolor{green}{$\downarrow$5.4}} & 29.8{\tiny\textcolor{green}{$\downarrow$1.1}} & 33.8{\tiny\textcolor{red}{$\uparrow$1.3}} & 20.5{\tiny\textcolor{red}{$\uparrow$0.5}} & 26.6{\tiny\textcolor{red}{$\uparrow$1.4}} \\
			& {\alg{} (ours)} & \cellcolor{blue!10}88.7{\tiny\textcolor{red}{$\uparrow$14.5}} & \cellcolor{blue!10}40.8{\tiny\textcolor{red}{$\uparrow$15.9}} & \cellcolor{blue!10}58.4{\tiny\textcolor{red}{$\uparrow$22.0}} & \cellcolor{blue!10}76.8{\tiny\textcolor{red}{$\uparrow$14.3}} & \cellcolor{blue!10}36.7{\tiny\textcolor{red}{$\uparrow$5.8}} & \cellcolor{blue!10}41.7{\tiny\textcolor{red}{$\uparrow$9.2}} & \cellcolor{blue!10}30.2{\tiny\textcolor{red}{$\uparrow$10.2}} & \cellcolor{blue!10}33.0{\tiny\textcolor{red}{$\uparrow$7.8}} \\
			\midrule
			\multirow{7}{*}{Qwen-14B}
			& \texttt{Single} & 67.2{\tiny\textcolor{green}{$\downarrow$12.5}} & 21.6{\tiny\textcolor{green}{$\downarrow$6.2}} & 21.2{\tiny\textcolor{green}{$\downarrow$11.6}} & 56.7{\tiny\textcolor{green}{$\downarrow$12.7}} & 30.2{\tiny\textcolor{green}{$\downarrow$6.0}} & 31.7{\tiny\textcolor{green}{$\downarrow$7.7}} & 19.2{\tiny\textcolor{green}{$\downarrow$4.6}} & 26.4{\tiny\textcolor{green}{$\downarrow$2.4}} \\
			& \texttt{BoN} & 79.7{\tiny\textcolor{red}{$\uparrow$0.0}} & 27.8{\tiny\textcolor{red}{$\uparrow$0.0}} & 32.8{\tiny\textcolor{red}{$\uparrow$0.0}} & 69.4{\tiny\textcolor{red}{$\uparrow$0.0}} & 36.2{\tiny\textcolor{red}{$\uparrow$0.0}} & 39.4{\tiny\textcolor{red}{$\uparrow$0.0}} & 23.8{\tiny\textcolor{red}{$\uparrow$0.0}} & 28.8{\tiny\textcolor{red}{$\uparrow$0.0}} \\
			& \texttt{Debate} & 71.7{\tiny\textcolor{green}{$\downarrow$8.0}} & 24.7{\tiny\textcolor{green}{$\downarrow$3.1}} & 24.8{\tiny\textcolor{green}{$\downarrow$8.0}} & 62.7{\tiny\textcolor{green}{$\downarrow$6.7}} & 35.6{\tiny\textcolor{green}{$\downarrow$0.6}} & 38.5{\tiny\textcolor{green}{$\downarrow$0.9}} & 25.5{\tiny\textcolor{red}{$\uparrow$1.7}} & 29.9{\tiny\textcolor{red}{$\uparrow$1.1}} \\
			& \texttt{DyLAN} & 77.5{\tiny\textcolor{green}{$\downarrow$2.2}} & 26.9{\tiny\textcolor{green}{$\downarrow$0.9}} & 31.9{\tiny\textcolor{green}{$\downarrow$0.9}} & 67.9{\tiny\textcolor{green}{$\downarrow$1.5}} & 33.6{\tiny\textcolor{green}{$\downarrow$2.6}} & 34.9{\tiny\textcolor{green}{$\downarrow$4.5}} & 22.7{\tiny\textcolor{green}{$\downarrow$1.1}} & 27.8{\tiny\textcolor{green}{$\downarrow$1.0}} \\
			& \texttt{GPTSwarm} & 80.1{\tiny\textcolor{red}{$\uparrow$0.4}} & 26.9{\tiny\textcolor{green}{$\downarrow$0.9}} & 33.0{\tiny\textcolor{red}{$\uparrow$0.2}} & 68.1{\tiny\textcolor{green}{$\downarrow$1.3}} & 34.6{\tiny\textcolor{green}{$\downarrow$1.6}} & 34.8{\tiny\textcolor{green}{$\downarrow$4.6}} & 24.5{\tiny\textcolor{red}{$\uparrow$0.7}} & 28.3{\tiny\textcolor{green}{$\downarrow$0.5}} \\
			& \texttt{MacNet} & 71.2{\tiny\textcolor{green}{$\downarrow$8.5}} & 24.6{\tiny\textcolor{green}{$\downarrow$3.2}} & 27.6{\tiny\textcolor{green}{$\downarrow$5.2}} & 62.9{\tiny\textcolor{green}{$\downarrow$6.5}} & 36.1{\tiny\textcolor{green}{$\downarrow$0.1}} & 41.4{\tiny\textcolor{red}{$\uparrow$2.0}} & 28.6{\tiny\textcolor{red}{$\uparrow$4.8}} & 29.4{\tiny\textcolor{red}{$\uparrow$0.6}} \\
			& {\alg{} (ours)} & \cellcolor{blue!10}91.4{\tiny\textcolor{red}{$\uparrow$11.7}} & \cellcolor{blue!10}43.6{\tiny\textcolor{red}{$\uparrow$15.8}} & \cellcolor{blue!10}64.6{\tiny\textcolor{red}{$\uparrow$31.8}} & \cellcolor{blue!10}86.5{\tiny\textcolor{red}{$\uparrow$17.1}} & \cellcolor{blue!10}45.0{\tiny\textcolor{red}{$\uparrow$8.8}} & \cellcolor{blue!10}51.3{\tiny\textcolor{red}{$\uparrow$11.9}} & \cellcolor{blue!10}37.2{\tiny\textcolor{red}{$\uparrow$13.4}} & \cellcolor{blue!10}39.7{\tiny\textcolor{red}{$\uparrow$10.9}} \\
			\midrule
			\multirow{7}{*}{Qwen-72B}
			& \texttt{Single} & 72.5{\tiny\textcolor{green}{$\downarrow$11.7}} & 31.6{\tiny\textcolor{green}{$\downarrow$19.4}} & 34.9{\tiny\textcolor{green}{$\downarrow$11.0}} & 59.1{\tiny\textcolor{green}{$\downarrow$13.1}} & 49.3{\tiny\textcolor{green}{$\downarrow$7.3}} & 51.1{\tiny\textcolor{green}{$\downarrow$8.7}} & 31.4{\tiny\textcolor{green}{$\downarrow$7.2}} & 41.8{\tiny\textcolor{green}{$\downarrow$8.0}} \\
			& \texttt{BoN} & 84.2{\tiny\textcolor{red}{$\uparrow$0.0}} & 51.0{\tiny\textcolor{red}{$\uparrow$0.0}} & 45.9{\tiny\textcolor{red}{$\uparrow$0.0}} & 72.2{\tiny\textcolor{red}{$\uparrow$0.0}} & 56.6{\tiny\textcolor{red}{$\uparrow$0.0}} & 59.8{\tiny\textcolor{red}{$\uparrow$0.0}} & 38.6{\tiny\textcolor{red}{$\uparrow$0.0}} & 49.8{\tiny\textcolor{red}{$\uparrow$0.0}} \\
			& \texttt{Debate} & 76.6{\tiny\textcolor{green}{$\downarrow$7.6}} & 48.0{\tiny\textcolor{green}{$\downarrow$3.0}} & 39.6{\tiny\textcolor{green}{$\downarrow$6.3}} & 66.7{\tiny\textcolor{green}{$\downarrow$5.5}} & 58.3{\tiny\textcolor{red}{$\uparrow$1.7}} & 62.5{\tiny\textcolor{red}{$\uparrow$2.7}} & 40.7{\tiny\textcolor{red}{$\uparrow$2.1}} & 47.8{\tiny\textcolor{green}{$\downarrow$2.0}} \\
			& \texttt{DyLAN} & 84.6{\tiny\textcolor{red}{$\uparrow$0.4}} & 61.5{\tiny\textcolor{red}{$\uparrow$10.5}} & 45.5{\tiny\textcolor{green}{$\downarrow$0.4}} & 71.3{\tiny\textcolor{green}{$\downarrow$0.9}} & 54.2{\tiny\textcolor{green}{$\downarrow$2.4}} & 57.8{\tiny\textcolor{green}{$\downarrow$2.0}} & 38.2{\tiny\textcolor{green}{$\downarrow$0.4}} & 45.9{\tiny\textcolor{green}{$\downarrow$3.9}} \\
			& \texttt{GPTSwarm} & 84.9{\tiny\textcolor{red}{$\uparrow$0.7}} & 60.9{\tiny\textcolor{red}{$\uparrow$9.9}} & 46.6{\tiny\textcolor{red}{$\uparrow$0.7}} & 69.7{\tiny\textcolor{green}{$\downarrow$2.5}} & 55.2{\tiny\textcolor{green}{$\downarrow$1.4}} & 59.1{\tiny\textcolor{green}{$\downarrow$0.7}} & 39.0{\tiny\textcolor{red}{$\uparrow$0.4}} & 48.8{\tiny\textcolor{green}{$\downarrow$1.0}} \\
			& \texttt{MacNet} & 80.3{\tiny\textcolor{green}{$\downarrow$3.9}} & 48.1{\tiny\textcolor{green}{$\downarrow$2.9}} & 41.2{\tiny\textcolor{green}{$\downarrow$4.7}} & 68.1{\tiny\textcolor{green}{$\downarrow$4.1}} & 57.2{\tiny\textcolor{red}{$\uparrow$0.6}} & 64.1{\tiny\textcolor{red}{$\uparrow$4.3}} & 42.0{\tiny\textcolor{red}{$\uparrow$3.4}} & 49.8{\tiny\textcolor{red}{$\uparrow$0.0}} \\
			& {\alg{} (ours)} & \cellcolor{blue!10}93.5{\tiny\textcolor{red}{$\uparrow$9.3}} & \cellcolor{blue!10}69.3{\tiny\textcolor{red}{$\uparrow$18.3}} & \cellcolor{blue!10}73.1{\tiny\textcolor{red}{$\uparrow$27.2}} & \cellcolor{blue!10}88.3{\tiny\textcolor{red}{$\uparrow$16.1}} & \cellcolor{blue!10}73.9{\tiny\textcolor{red}{$\uparrow$17.3}} & \cellcolor{blue!10}81.9{\tiny\textcolor{red}{$\uparrow$22.1}} & \cellcolor{blue!10}60.3{\tiny\textcolor{red}{$\uparrow$21.7}} & \cellcolor{blue!10}65.3{\tiny\textcolor{red}{$\uparrow$15.5}} \\
			\bottomrule
		\end{tabular}
	}
	\label{tab:number_4}
\end{table*}

\begin{table*}[h]
	\centering
	\caption{Accuracy with varying number of LLMs on different datasets. The number of LLMs is $8$ for all datasets. We exhibit the performance advantage with \texttt{BoN} and highlight the \colorbox{blue!10}{best} result.}
	\vskip 0.1in
	\resizebox{\textwidth}{!}{
		\begin{tabular}{YZYYYYYYYY}
			\toprule
			\textbf{Model} & \textbf{Method} & \textbf{MMLU} & \textbf{MATH} & \textbf{GPQA} & \textbf{Code} & \textbf{ALFWorld} & \textbf{SciWorld} & \textbf{GAIA} & \textbf{PDDL} \\
			\midrule
			\multirow{7}{*}{Llama-7B}
			& \texttt{Single} & 45.3{\tiny\textcolor{green}{$\downarrow$5.2}} & 2.9{\tiny\textcolor{green}{$\downarrow$4.0}} & 16.8{\tiny\textcolor{green}{$\downarrow$10.7}} & 15.8{\tiny\textcolor{green}{$\downarrow$6.0}} & 22.6{\tiny\textcolor{green}{$\downarrow$3.6}} & 19.7{\tiny\textcolor{green}{$\downarrow$5.3}} & 14.9{\tiny\textcolor{green}{$\downarrow$4.0}} & 23.6{\tiny\textcolor{green}{$\downarrow$3.1}} \\
			& \texttt{BoN} & 50.5{\tiny\textcolor{red}{$\uparrow$0.0}} & 6.9{\tiny\textcolor{red}{$\uparrow$0.0}} & 27.5{\tiny\textcolor{red}{$\uparrow$0.0}} & 21.8{\tiny\textcolor{red}{$\uparrow$0.0}} & 26.2{\tiny\textcolor{red}{$\uparrow$0.0}} & 25.0{\tiny\textcolor{red}{$\uparrow$0.0}} & 18.9{\tiny\textcolor{red}{$\uparrow$0.0}} & 26.7{\tiny\textcolor{red}{$\uparrow$0.0}} \\
			& \texttt{Debate} & 49.5{\tiny\textcolor{green}{$\downarrow$1.0}} & 6.2{\tiny\textcolor{green}{$\downarrow$0.7}} & 24.8{\tiny\textcolor{green}{$\downarrow$2.7}} & 19.2{\tiny\textcolor{green}{$\downarrow$2.6}} & 25.9{\tiny\textcolor{green}{$\downarrow$0.3}} & 24.7{\tiny\textcolor{green}{$\downarrow$0.3}} & 15.9{\tiny\textcolor{green}{$\downarrow$3.0}} & 26.9{\tiny\textcolor{red}{$\uparrow$0.2}} \\
			& \texttt{DyLAN} & 54.2{\tiny\textcolor{red}{$\uparrow$3.7}} & 9.7{\tiny\textcolor{red}{$\uparrow$2.8}} & 35{\tiny\textcolor{red}{$\uparrow$7.5}} & 26{\tiny\textcolor{red}{$\uparrow$4.2}} & 25.0{\tiny\textcolor{green}{$\downarrow$1.2}} & 22.0{\tiny\textcolor{green}{$\downarrow$3.0}} & 17.2{\tiny\textcolor{green}{$\downarrow$1.7}} & 25.4{\tiny\textcolor{green}{$\downarrow$1.3}} \\
			& \texttt{GPTSwarm} & 49.2{\tiny\textcolor{green}{$\downarrow$1.3}} & 6.8{\tiny\textcolor{green}{$\downarrow$0.1}} & 28.7{\tiny\textcolor{red}{$\uparrow$1.2}} & 20.7{\tiny\textcolor{green}{$\downarrow$1.1}} & 26.2{\tiny\textcolor{red}{$\uparrow$0.0}} & 26.4{\tiny\textcolor{red}{$\uparrow$1.4}} & 20.0{\tiny\textcolor{red}{$\uparrow$1.1}} & 29.3{\tiny\textcolor{red}{$\uparrow$2.6}} \\
			& \texttt{MacNet} & 56.5{\tiny\textcolor{red}{$\uparrow$6.0}} & 11.0{\tiny\textcolor{red}{$\uparrow$4.1}} & 39.6{\tiny\textcolor{red}{$\uparrow$12.1}} & 30.0{\tiny\textcolor{red}{$\uparrow$8.2}} & 30.4{\tiny\textcolor{red}{$\uparrow$4.2}} & 30.2{\tiny\textcolor{red}{$\uparrow$5.2}} & 20.6{\tiny\textcolor{red}{$\uparrow$1.7}} & 31.7{\tiny\textcolor{red}{$\uparrow$5.0}} \\
			& {\alg{} (ours)} & \cellcolor{blue!10}63.8{\tiny\textcolor{red}{$\uparrow$13.3}} & \cellcolor{blue!10}24.9{\tiny\textcolor{red}{$\uparrow$18.0}} & \cellcolor{blue!10}59.3{\tiny\textcolor{red}{$\uparrow$31.8}} & \cellcolor{blue!10}38.8{\tiny\textcolor{red}{$\uparrow$17.0}} & \cellcolor{blue!10}37.2{\tiny\textcolor{red}{$\uparrow$11.0}} & \cellcolor{blue!10}36.1{\tiny\textcolor{red}{$\uparrow$11.1}} & \cellcolor{blue!10}27.1{\tiny\textcolor{red}{$\uparrow$8.2}} & \cellcolor{blue!10}34.7{\tiny\textcolor{red}{$\uparrow$8.0}} \\
			\midrule
			\multirow{7}{*}{Llama-14B}
			& \texttt{Single} & 54.8{\tiny\textcolor{green}{$\downarrow$16.2}} & 3.9{\tiny\textcolor{green}{$\downarrow$4.9}} & 25.0{\tiny\textcolor{green}{$\downarrow$12.7}} & 23.7{\tiny\textcolor{green}{$\downarrow$8.3}} & 27.8{\tiny\textcolor{green}{$\downarrow$4.1}} & 24.1{\tiny\textcolor{green}{$\downarrow$8.2}} & 18.6{\tiny\textcolor{green}{$\downarrow$3.9}} & 29.4{\tiny\textcolor{green}{$\downarrow$5.4}} \\
			& \texttt{BoN} & 71.0{\tiny\textcolor{red}{$\uparrow$0.0}} & 8.8{\tiny\textcolor{red}{$\uparrow$0.0}} & 37.7{\tiny\textcolor{red}{$\uparrow$0.0}} & 32.0{\tiny\textcolor{red}{$\uparrow$0.0}} & 31.9{\tiny\textcolor{red}{$\uparrow$0.0}} & 32.3{\tiny\textcolor{red}{$\uparrow$0.0}} & 22.5{\tiny\textcolor{red}{$\uparrow$0.0}} & 34.8{\tiny\textcolor{red}{$\uparrow$0.0}} \\
			& \texttt{Debate} & 66.9{\tiny\textcolor{green}{$\downarrow$4.1}} & 8.0{\tiny\textcolor{green}{$\downarrow$0.8}} & 34.5{\tiny\textcolor{green}{$\downarrow$3.2}} & 30.8{\tiny\textcolor{green}{$\downarrow$1.2}} & 31.7{\tiny\textcolor{green}{$\downarrow$0.2}} & 30.3{\tiny\textcolor{green}{$\downarrow$2.0}} & 20.0{\tiny\textcolor{green}{$\downarrow$2.5}} & 32.0{\tiny\textcolor{green}{$\downarrow$2.8}} \\
			& \texttt{DyLAN} & 82.1{\tiny\textcolor{red}{$\uparrow$11.1}} & 12.4{\tiny\textcolor{red}{$\uparrow$3.6}} & 46.7{\tiny\textcolor{red}{$\uparrow$9.0}} & 37.5{\tiny\textcolor{red}{$\uparrow$5.5}} & 31.7{\tiny\textcolor{green}{$\downarrow$0.2}} & 29.6{\tiny\textcolor{green}{$\downarrow$2.7}} & 19.4{\tiny\textcolor{green}{$\downarrow$3.1}} & 31.3{\tiny\textcolor{green}{$\downarrow$3.5}} \\
			& \texttt{MacNet} & 81.7{\tiny\textcolor{red}{$\uparrow$10.7}} & 14.2{\tiny\textcolor{red}{$\uparrow$5.4}} & 53.9{\tiny\textcolor{red}{$\uparrow$16.2}} & 43.3{\tiny\textcolor{red}{$\uparrow$11.3}} & 33.4{\tiny\textcolor{red}{$\uparrow$1.5}} & 35.6{\tiny\textcolor{red}{$\uparrow$3.3}} & 24.9{\tiny\textcolor{red}{$\uparrow$2.4}} & 37.0{\tiny\textcolor{red}{$\uparrow$2.2}} \\
			& \texttt{MacNet} & 81.7{\tiny\textcolor{red}{$\uparrow$10.7}} & 14.2{\tiny\textcolor{red}{$\uparrow$5.4}} & 53.9{\tiny\textcolor{red}{$\uparrow$16.2}} & 43.3{\tiny\textcolor{red}{$\uparrow$11.3}} & 33.4{\tiny\textcolor{red}{$\uparrow$1.5}} & 35.6{\tiny\textcolor{red}{$\uparrow$3.3}} & 24.9{\tiny\textcolor{red}{$\uparrow$2.4}} & 37.0{\tiny\textcolor{red}{$\uparrow$2.2}} \\
			& {\alg{} (ours)} & \cellcolor{blue!10}92.7{\tiny\textcolor{red}{$\uparrow$21.7}} & \cellcolor{blue!10}23.0{\tiny\textcolor{red}{$\uparrow$14.2}} & \cellcolor{blue!10}70.7{\tiny\textcolor{red}{$\uparrow$33.0}} & \cellcolor{blue!10}55.4{\tiny\textcolor{red}{$\uparrow$23.4}} & \cellcolor{blue!10}46.1{\tiny\textcolor{red}{$\uparrow$14.2}} & \cellcolor{blue!10}44.7{\tiny\textcolor{red}{$\uparrow$12.4}} & \cellcolor{blue!10}33.6{\tiny\textcolor{red}{$\uparrow$11.1}} & \cellcolor{blue!10}43.8{\tiny\textcolor{red}{$\uparrow$9.0}} \\
			\midrule
			\multirow{7}{*}{Llama-70B}
			& \texttt{Single} & 68.9{\tiny\textcolor{green}{$\downarrow$14.1}} & 31.6{\tiny\textcolor{green}{$\downarrow$3.2}} & 27.6{\tiny\textcolor{green}{$\downarrow$32.0}} & 35.5{\tiny\textcolor{green}{$\downarrow$12.5}} & 44.1{\tiny\textcolor{green}{$\downarrow$8.3}} & 39.1{\tiny\textcolor{green}{$\downarrow$9.7}} & 29.1{\tiny\textcolor{green}{$\downarrow$3.8}} & 47.1{\tiny\textcolor{green}{$\downarrow$5.7}} \\
			& \texttt{BoN} & 83.0{\tiny\textcolor{red}{$\uparrow$0.0}} & 34.8{\tiny\textcolor{red}{$\uparrow$0.0}} & 59.6{\tiny\textcolor{red}{$\uparrow$0.0}} & 48.0{\tiny\textcolor{red}{$\uparrow$0.0}} & 52.4{\tiny\textcolor{red}{$\uparrow$0.0}} & 48.8{\tiny\textcolor{red}{$\uparrow$0.0}} & 32.9{\tiny\textcolor{red}{$\uparrow$0.0}} & 52.8{\tiny\textcolor{red}{$\uparrow$0.0}} \\
			& \texttt{Debate} & 82.2{\tiny\textcolor{green}{$\downarrow$0.8}} & 34.3{\tiny\textcolor{green}{$\downarrow$0.5}} & 58.1{\tiny\textcolor{green}{$\downarrow$1.5}} & 47.1{\tiny\textcolor{green}{$\downarrow$0.9}} & 51.7{\tiny\textcolor{green}{$\downarrow$0.7}} & 50.1{\tiny\textcolor{red}{$\uparrow$1.3}} & 32.2{\tiny\textcolor{green}{$\downarrow$0.7}} & 53.3{\tiny\textcolor{red}{$\uparrow$0.5}} \\
			& \texttt{DyLAN} & 93.5{\tiny\textcolor{red}{$\uparrow$10.5}} & 37.1{\tiny\textcolor{red}{$\uparrow$2.3}} & 72.8{\tiny\textcolor{red}{$\uparrow$13.2}} & 56.8{\tiny\textcolor{red}{$\uparrow$8.8}} & 50.2{\tiny\textcolor{green}{$\downarrow$2.2}} & 46.4{\tiny\textcolor{green}{$\downarrow$2.4}} & 31.7{\tiny\textcolor{green}{$\downarrow$1.2}} & 49.7{\tiny\textcolor{green}{$\downarrow$3.1}} \\
			& \texttt{GPTSwarm} & 93.6{\tiny\textcolor{red}{$\uparrow$10.6}} & 36.6{\tiny\textcolor{red}{$\uparrow$1.8}} & 71.2{\tiny\textcolor{red}{$\uparrow$11.6}} & 57.1{\tiny\textcolor{red}{$\uparrow$9.1}} & 51.4{\tiny\textcolor{green}{$\downarrow$1.0}} & 46.5{\tiny\textcolor{green}{$\downarrow$2.3}} & 32.2{\tiny\textcolor{green}{$\downarrow$0.7}} & 52.2{\tiny\textcolor{green}{$\downarrow$0.6}} \\
			& \texttt{MacNet} & 82.0{\tiny\textcolor{green}{$\downarrow$1.0}} & 36.9{\tiny\textcolor{red}{$\uparrow$2.1}} & 62.2{\tiny\textcolor{red}{$\uparrow$2.6}} & 48.8{\tiny\textcolor{red}{$\uparrow$0.8}} & 57.3{\tiny\textcolor{red}{$\uparrow$4.9}} & 64.0{\tiny\textcolor{red}{$\uparrow$15.2}} & 40.7{\tiny\textcolor{red}{$\uparrow$7.8}} & 62.6{\tiny\textcolor{red}{$\uparrow$9.8}} \\
			& {\alg{} (ours)} & \cellcolor{blue!10}95.4{\tiny\textcolor{red}{$\uparrow$12.4}} & \cellcolor{blue!10}43.5{\tiny\textcolor{red}{$\uparrow$8.7}} & \cellcolor{blue!10}87.6{\tiny\textcolor{red}{$\uparrow$28.0}} & \cellcolor{blue!10}81.0{\tiny\textcolor{red}{$\uparrow$33.0}} & \cellcolor{blue!10}78.9{\tiny\textcolor{red}{$\uparrow$26.5}} & \cellcolor{blue!10}76.2{\tiny\textcolor{red}{$\uparrow$27.4}} & \cellcolor{blue!10}58.7{\tiny\textcolor{red}{$\uparrow$25.8}} & \cellcolor{blue!10}73.6{\tiny\textcolor{red}{$\uparrow$20.8}} \\
			\midrule
			\multirow{7}{*}{Qwen-7B}
			& \texttt{Single} & 61.2{\tiny\textcolor{green}{$\downarrow$13.0}} & 12.9{\tiny\textcolor{green}{$\downarrow$12.0}} & 20.2{\tiny\textcolor{green}{$\downarrow$16.2}} & 51.2{\tiny\textcolor{green}{$\downarrow$11.3}} & 23.8{\tiny\textcolor{green}{$\downarrow$7.7}} & 25.2{\tiny\textcolor{green}{$\downarrow$10.1}} & 15.6{\tiny\textcolor{green}{$\downarrow$5.5}} & 21.0{\tiny\textcolor{green}{$\downarrow$5.3}} \\
			& \texttt{BoN} & 74.2{\tiny\textcolor{red}{$\uparrow$0.0}} & 24.9{\tiny\textcolor{red}{$\uparrow$0.0}} & 36.4{\tiny\textcolor{red}{$\uparrow$0.0}} & 62.5{\tiny\textcolor{red}{$\uparrow$0.0}} & 31.5{\tiny\textcolor{red}{$\uparrow$0.0}} & 35.3{\tiny\textcolor{red}{$\uparrow$0.0}} & 21.1{\tiny\textcolor{red}{$\uparrow$0.0}} & 26.3{\tiny\textcolor{red}{$\uparrow$0.0}} \\
			& \texttt{Debate} & 71.1{\tiny\textcolor{green}{$\downarrow$3.1}} & 19.9{\tiny\textcolor{green}{$\downarrow$5.0}} & 28.7{\tiny\textcolor{green}{$\downarrow$7.7}} & 60.0{\tiny\textcolor{green}{$\downarrow$2.5}} & 30.6{\tiny\textcolor{green}{$\downarrow$0.9}} & 32.2{\tiny\textcolor{green}{$\downarrow$3.1}} & 21.0{\tiny\textcolor{green}{$\downarrow$0.1}} & 24.8{\tiny\textcolor{green}{$\downarrow$1.5}} \\
			& \texttt{DyLAN} & 74.3{\tiny\textcolor{red}{$\uparrow$0.1}} & 26.7{\tiny\textcolor{red}{$\uparrow$1.8}} & 35.4{\tiny\textcolor{green}{$\downarrow$1.0}} & 63.4{\tiny\textcolor{red}{$\uparrow$0.9}} & 29.8{\tiny\textcolor{green}{$\downarrow$1.7}} & 29.4{\tiny\textcolor{green}{$\downarrow$5.9}} & 18.4{\tiny\textcolor{green}{$\downarrow$2.7}} & 23.4{\tiny\textcolor{green}{$\downarrow$2.9}} \\
			& \texttt{GPTSwarm} & 76.3{\tiny\textcolor{red}{$\uparrow$2.1}} & 26.2{\tiny\textcolor{red}{$\uparrow$1.3}} & 35.6{\tiny\textcolor{green}{$\downarrow$0.8}} & 62.7{\tiny\textcolor{red}{$\uparrow$0.2}} & 29.9{\tiny\textcolor{green}{$\downarrow$1.6}} & 30.8{\tiny\textcolor{green}{$\downarrow$4.5}} & 20.6{\tiny\textcolor{green}{$\downarrow$0.5}} & 24.5{\tiny\textcolor{green}{$\downarrow$1.8}} \\
			& \texttt{MacNet} & 71.5{\tiny\textcolor{green}{$\downarrow$2.7}} & 21.4{\tiny\textcolor{green}{$\downarrow$3.5}} & 30.8{\tiny\textcolor{green}{$\downarrow$5.6}} & 59.3{\tiny\textcolor{green}{$\downarrow$3.2}} & 33.6{\tiny\textcolor{red}{$\uparrow$2.1}} & 37.0{\tiny\textcolor{red}{$\uparrow$1.7}} & 21.5{\tiny\textcolor{red}{$\uparrow$0.4}} & 29.5{\tiny\textcolor{red}{$\uparrow$3.2}} \\
			& {\alg{} (ours)} & \cellcolor{blue!10}92.5{\tiny\textcolor{red}{$\uparrow$18.3}} & \cellcolor{blue!10}47.8{\tiny\textcolor{red}{$\uparrow$22.9}} & \cellcolor{blue!10}66.1{\tiny\textcolor{red}{$\uparrow$29.7}} & \cellcolor{blue!10}80.3{\tiny\textcolor{red}{$\uparrow$17.8}} & \cellcolor{blue!10}39.9{\tiny\textcolor{red}{$\uparrow$8.4}} & \cellcolor{blue!10}45.3{\tiny\textcolor{red}{$\uparrow$10.0}} & \cellcolor{blue!10}33.6{\tiny\textcolor{red}{$\uparrow$12.5}} & \cellcolor{blue!10}34.7{\tiny\textcolor{red}{$\uparrow$8.4}} \\
			\midrule
			\multirow{7}{*}{Qwen-14B}
			& \texttt{Single} & 67.2{\tiny\textcolor{green}{$\downarrow$12.5}} & 21.6{\tiny\textcolor{green}{$\downarrow$6.2}} & 21.2{\tiny\textcolor{green}{$\downarrow$11.6}} & 56.7{\tiny\textcolor{green}{$\downarrow$12.7}} & 30.1{\tiny\textcolor{green}{$\downarrow$7.4}} & 31.3{\tiny\textcolor{green}{$\downarrow$10.3}} & 18.7{\tiny\textcolor{green}{$\downarrow$6.9}} & 25.7{\tiny\textcolor{green}{$\downarrow$2.8}} \\
			& \texttt{BoN} & 79.7{\tiny\textcolor{red}{$\uparrow$0.0}} & 27.8{\tiny\textcolor{red}{$\uparrow$0.0}} & 32.8{\tiny\textcolor{red}{$\uparrow$0.0}} & 69.4{\tiny\textcolor{red}{$\uparrow$0.0}} & 37.5{\tiny\textcolor{red}{$\uparrow$0.0}} & 41.6{\tiny\textcolor{red}{$\uparrow$0.0}} & 25.6{\tiny\textcolor{red}{$\uparrow$0.0}} & 28.5{\tiny\textcolor{red}{$\uparrow$0.0}} \\
			& \texttt{Debate} & 77.2{\tiny\textcolor{green}{$\downarrow$2.5}} & 27.4{\tiny\textcolor{green}{$\downarrow$0.4}} & 30.3{\tiny\textcolor{green}{$\downarrow$2.5}} & 66.9{\tiny\textcolor{green}{$\downarrow$2.5}} & 36.8{\tiny\textcolor{green}{$\downarrow$0.7}} & 39.4{\tiny\textcolor{green}{$\downarrow$2.2}} & 26.4{\tiny\textcolor{red}{$\uparrow$0.8}} & 30.0{\tiny\textcolor{red}{$\uparrow$1.5}} \\
			& \texttt{DyLAN} & 86.8{\tiny\textcolor{red}{$\uparrow$7.1}} & 31.2{\tiny\textcolor{red}{$\uparrow$3.4}} & 39.4{\tiny\textcolor{red}{$\uparrow$6.6}} & 76.6{\tiny\textcolor{red}{$\uparrow$7.2}} & 34.8{\tiny\textcolor{green}{$\downarrow$2.7}} & 36.0{\tiny\textcolor{green}{$\downarrow$5.6}} & 22.8{\tiny\textcolor{green}{$\downarrow$2.8}} & 28.0{\tiny\textcolor{green}{$\downarrow$0.5}} \\
			& \texttt{GPTSwarm} & 87.0{\tiny\textcolor{red}{$\uparrow$7.3}} & 31.3{\tiny\textcolor{red}{$\uparrow$3.5}} & 38.7{\tiny\textcolor{red}{$\uparrow$5.9}} & 75.9{\tiny\textcolor{red}{$\uparrow$6.5}} & 34.9{\tiny\textcolor{green}{$\downarrow$2.6}} & 35.9{\tiny\textcolor{green}{$\downarrow$5.7}} & 25.0{\tiny\textcolor{green}{$\downarrow$0.6}} & 28.4{\tiny\textcolor{green}{$\downarrow$0.1}} \\
			& \texttt{MacNet} & 78.9{\tiny\textcolor{green}{$\downarrow$0.8}} & 28.7{\tiny\textcolor{red}{$\uparrow$0.9}} & 31.9{\tiny\textcolor{green}{$\downarrow$0.9}} & 70.4{\tiny\textcolor{red}{$\uparrow$1.0}} & 39.2{\tiny\textcolor{red}{$\uparrow$1.7}} & 46.0{\tiny\textcolor{red}{$\uparrow$4.4}} & 32.2{\tiny\textcolor{red}{$\uparrow$6.6}} & 32.7{\tiny\textcolor{red}{$\uparrow$4.2}} \\
			& {\alg{} (ours)} & \cellcolor{blue!10}93.7{\tiny\textcolor{red}{$\uparrow$14.0}} & \cellcolor{blue!10}51.7{\tiny\textcolor{red}{$\uparrow$23.9}} & \cellcolor{blue!10}66.2{\tiny\textcolor{red}{$\uparrow$33.4}} & \cellcolor{blue!10}91.1{\tiny\textcolor{red}{$\uparrow$21.7}} & \cellcolor{blue!10}48.2{\tiny\textcolor{red}{$\uparrow$10.7}} & \cellcolor{blue!10}56.1{\tiny\textcolor{red}{$\uparrow$14.5}} & \cellcolor{blue!10}42.0{\tiny\textcolor{red}{$\uparrow$16.4}} & \cellcolor{blue!10}43.0{\tiny\textcolor{red}{$\uparrow$14.5}} \\
			\midrule
			\multirow{7}{*}{Qwen-72B}
			& \texttt{Single} & 72.5{\tiny\textcolor{green}{$\downarrow$11.7}} & 31.6{\tiny\textcolor{green}{$\downarrow$19.4}} & 34.9{\tiny\textcolor{green}{$\downarrow$11.0}} & 59.1{\tiny\textcolor{green}{$\downarrow$13.1}} & 48.2{\tiny\textcolor{green}{$\downarrow$9.3}} & 50.4{\tiny\textcolor{green}{$\downarrow$11.7}} & 31.1{\tiny\textcolor{green}{$\downarrow$10.1}} & 41.0{\tiny\textcolor{green}{$\downarrow$10.0}} \\
			& \texttt{BoN} & 84.2{\tiny\textcolor{red}{$\uparrow$0.0}} & 51.0{\tiny\textcolor{red}{$\uparrow$0.0}} & 45.9{\tiny\textcolor{red}{$\uparrow$0.0}} & 72.2{\tiny\textcolor{red}{$\uparrow$0.0}} & 57.5{\tiny\textcolor{red}{$\uparrow$0.0}} & 62.1{\tiny\textcolor{red}{$\uparrow$0.0}} & 41.2{\tiny\textcolor{red}{$\uparrow$0.0}} & 51.0{\tiny\textcolor{red}{$\uparrow$0.0}} \\
			& \texttt{Debate} & 82.7{\tiny\textcolor{green}{$\downarrow$1.5}} & 48.4{\tiny\textcolor{green}{$\downarrow$2.6}} & 43.4{\tiny\textcolor{green}{$\downarrow$2.5}} & 69.1{\tiny\textcolor{green}{$\downarrow$3.1}} & 60.4{\tiny\textcolor{red}{$\uparrow$2.9}} & 65.2{\tiny\textcolor{red}{$\uparrow$3.1}} & 42.9{\tiny\textcolor{red}{$\uparrow$1.7}} & 49.1{\tiny\textcolor{green}{$\downarrow$1.9}} \\
			& \texttt{DyLAN} & 91.5{\tiny\textcolor{red}{$\uparrow$7.3}} & 63.1{\tiny\textcolor{red}{$\uparrow$12.1}} & 51.6{\tiny\textcolor{red}{$\uparrow$5.7}} & 80.4{\tiny\textcolor{red}{$\uparrow$8.2}} & 55.1{\tiny\textcolor{green}{$\downarrow$2.4}} & 58.0{\tiny\textcolor{green}{$\downarrow$4.1}} & 40.4{\tiny\textcolor{green}{$\downarrow$0.8}} & 45.5{\tiny\textcolor{green}{$\downarrow$5.5}} \\
			& \texttt{GPTSwarm} & 91.5{\tiny\textcolor{red}{$\uparrow$7.3}} & 64.7{\tiny\textcolor{red}{$\uparrow$13.7}} & 52.4{\tiny\textcolor{red}{$\uparrow$6.5}} & 79.6{\tiny\textcolor{red}{$\uparrow$7.4}} & 56.8{\tiny\textcolor{green}{$\downarrow$0.7}} & 60.2{\tiny\textcolor{green}{$\downarrow$1.9}} & 40.3{\tiny\textcolor{green}{$\downarrow$0.9}} & 49.7{\tiny\textcolor{green}{$\downarrow$1.3}} \\
			& \texttt{MacNet} & 83.8{\tiny\textcolor{green}{$\downarrow$0.4}} & 52.9{\tiny\textcolor{red}{$\uparrow$1.9}} & 46.2{\tiny\textcolor{red}{$\uparrow$0.3}} & 70.5{\tiny\textcolor{green}{$\downarrow$1.7}} & 61.8{\tiny\textcolor{red}{$\uparrow$4.3}} & 68.4{\tiny\textcolor{red}{$\uparrow$6.3}} & 46.4{\tiny\textcolor{red}{$\uparrow$5.2}} & 53.7{\tiny\textcolor{red}{$\uparrow$2.7}} \\
			& {\alg{} (ours)} & \cellcolor{blue!10}95.1{\tiny\textcolor{red}{$\uparrow$10.9}} & \cellcolor{blue!10}72.5{\tiny\textcolor{red}{$\uparrow$21.5}} & \cellcolor{blue!10}78.9{\tiny\textcolor{red}{$\uparrow$33.0}} & \cellcolor{blue!10}90.7{\tiny\textcolor{red}{$\uparrow$18.5}} & \cellcolor{blue!10}79.0{\tiny\textcolor{red}{$\uparrow$21.5}} & \cellcolor{blue!10}88.9{\tiny\textcolor{red}{$\uparrow$26.8}} & \cellcolor{blue!10}67.2{\tiny\textcolor{red}{$\uparrow$26.0}} & \cellcolor{blue!10}70.5{\tiny\textcolor{red}{$\uparrow$19.5}} \\
			\bottomrule
		\end{tabular}
	}
	\label{tab:number_8}
\end{table*}

\begin{table*}[h]
	\centering
	\caption{Accuracy with varying number of LLMs on different datasets. The number of LLMs is $16$ for all datasets. We exhibit the performance advantage with \texttt{BoN} and highlight the \colorbox{blue!10}{best} result.}
	\vskip 0.1in
	\resizebox{\textwidth}{!}{
		\begin{tabular}{YZYYYYYYYY}
			\toprule
			\textbf{Model} & \textbf{Method} & \textbf{MMLU} & \textbf{MATH} & \textbf{GPQA} & \textbf{Code} & \textbf{ALFWorld} & \textbf{SciWorld} & \textbf{GAIA} & \textbf{PDDL} \\
			\midrule
			\multirow{7}{*}{Llama-7B}
			& \texttt{Single} & 45.3{\tiny\textcolor{green}{$\downarrow$5.2}} & 2.9{\tiny\textcolor{green}{$\downarrow$4.0}} & 16.8{\tiny\textcolor{green}{$\downarrow$10.7}} & 15.8{\tiny\textcolor{green}{$\downarrow$6.0}} & 22.4{\tiny\textcolor{green}{$\downarrow$4.3}} & 19.6{\tiny\textcolor{green}{$\downarrow$6.1}} & 14.7{\tiny\textcolor{green}{$\downarrow$4.5}} & 24.2{\tiny\textcolor{green}{$\downarrow$2.1}} \\
			& \texttt{BoN} & 50.5{\tiny\textcolor{red}{$\uparrow$0.0}} & 6.9{\tiny\textcolor{red}{$\uparrow$0.0}} & 27.5{\tiny\textcolor{red}{$\uparrow$0.0}} & 21.8{\tiny\textcolor{red}{$\uparrow$0.0}} & 26.7{\tiny\textcolor{red}{$\uparrow$0.0}} & 25.7{\tiny\textcolor{red}{$\uparrow$0.0}} & 19.2{\tiny\textcolor{red}{$\uparrow$0.0}} & 26.3{\tiny\textcolor{red}{$\uparrow$0.0}} \\
			& \texttt{Debate} & 51{\tiny\textcolor{red}{$\uparrow$0.5}} & 6.4{\tiny\textcolor{green}{$\downarrow$0.5}} & 26.1{\tiny\textcolor{green}{$\downarrow$1.4}} & 21.3{\tiny\textcolor{green}{$\downarrow$0.5}} & 25.7{\tiny\textcolor{green}{$\downarrow$1.0}} & 25.3{\tiny\textcolor{green}{$\downarrow$0.4}} & 16.6{\tiny\textcolor{green}{$\downarrow$2.6}} & 26.4{\tiny\textcolor{red}{$\uparrow$0.1}} \\
			& \texttt{DyLAN} & 54.5{\tiny\textcolor{red}{$\uparrow$4.0}} & 9.9{\tiny\textcolor{red}{$\uparrow$3.0}} & 35.5{\tiny\textcolor{red}{$\uparrow$8.0}} & 26.2{\tiny\textcolor{red}{$\uparrow$4.4}} & 24.6{\tiny\textcolor{green}{$\downarrow$2.1}} & 22.7{\tiny\textcolor{green}{$\downarrow$3.0}} & 17.2{\tiny\textcolor{green}{$\downarrow$2.0}} & 25.8{\tiny\textcolor{green}{$\downarrow$0.5}} \\
			& \texttt{GPTSwarm} & 51.6{\tiny\textcolor{red}{$\uparrow$1.1}} & 6.9{\tiny\textcolor{red}{$\uparrow$0.0}} & 29.1{\tiny\textcolor{red}{$\uparrow$1.6}} & 23.0{\tiny\textcolor{red}{$\uparrow$1.2}} & 27.2{\tiny\textcolor{red}{$\uparrow$0.5}} & 26.8{\tiny\textcolor{red}{$\uparrow$1.1}} & 20.8{\tiny\textcolor{red}{$\uparrow$1.6}} & 29.7{\tiny\textcolor{red}{$\uparrow$3.4}} \\
			& \texttt{MacNet} & 59.3{\tiny\textcolor{red}{$\uparrow$8.8}} & 12.5{\tiny\textcolor{red}{$\uparrow$5.6}} & 45.0{\tiny\textcolor{red}{$\uparrow$17.5}} & 33.7{\tiny\textcolor{red}{$\uparrow$11.9}} & 32.8{\tiny\textcolor{red}{$\uparrow$6.1}} & 33.4{\tiny\textcolor{red}{$\uparrow$7.7}} & 22.6{\tiny\textcolor{red}{$\uparrow$3.4}} & 34.2{\tiny\textcolor{red}{$\uparrow$7.9}} \\
			& {\alg{} (ours)} & \cellcolor{blue!10}71.5{\tiny\textcolor{red}{$\uparrow$21.0}} & \cellcolor{blue!10}23.9{\tiny\textcolor{red}{$\uparrow$17.0}} & \cellcolor{blue!10}69.9{\tiny\textcolor{red}{$\uparrow$42.4}} & \cellcolor{blue!10}48.3{\tiny\textcolor{red}{$\uparrow$26.5}} & \cellcolor{blue!10}39.8{\tiny\textcolor{red}{$\uparrow$13.1}} & \cellcolor{blue!10}38.4{\tiny\textcolor{red}{$\uparrow$12.7}} & \cellcolor{blue!10}28.9{\tiny\textcolor{red}{$\uparrow$9.7}} & \cellcolor{blue!10}37.0{\tiny\textcolor{red}{$\uparrow$10.7}} \\
			\midrule
			\multirow{7}{*}{Llama-14B}
			& \texttt{Single} & 54.8{\tiny\textcolor{green}{$\downarrow$16.2}} & 3.9{\tiny\textcolor{green}{$\downarrow$4.9}} & 25.0{\tiny\textcolor{green}{$\downarrow$12.7}} & 23.7{\tiny\textcolor{green}{$\downarrow$8.3}} & 27.4{\tiny\textcolor{green}{$\downarrow$5.6}} & 24.0{\tiny\textcolor{green}{$\downarrow$9.5}} & 18.3{\tiny\textcolor{green}{$\downarrow$5.0}} & 29.2{\tiny\textcolor{green}{$\downarrow$4.6}} \\
			& \texttt{BoN} & 71.0{\tiny\textcolor{red}{$\uparrow$0.0}} & 8.8{\tiny\textcolor{red}{$\uparrow$0.0}} & 37.7{\tiny\textcolor{red}{$\uparrow$0.0}} & 32.0{\tiny\textcolor{red}{$\uparrow$0.0}} & 33.0{\tiny\textcolor{red}{$\uparrow$0.0}} & 33.5{\tiny\textcolor{red}{$\uparrow$0.0}} & 23.3{\tiny\textcolor{red}{$\uparrow$0.0}} & 33.8{\tiny\textcolor{red}{$\uparrow$0.0}} \\
			& \texttt{Debate} & 72.2{\tiny\textcolor{red}{$\uparrow$1.2}} & 8.2{\tiny\textcolor{green}{$\downarrow$0.6}} & 37.0{\tiny\textcolor{green}{$\downarrow$0.7}} & 32.0{\tiny\textcolor{red}{$\uparrow$0.0}} & 32.0{\tiny\textcolor{green}{$\downarrow$1.0}} & 30.9{\tiny\textcolor{green}{$\downarrow$2.6}} & 19.6{\tiny\textcolor{green}{$\downarrow$3.7}} & 32.7{\tiny\textcolor{green}{$\downarrow$1.1}} \\
			& \texttt{DyLAN} & 83.2{\tiny\textcolor{red}{$\uparrow$12.2}} & 12.4{\tiny\textcolor{red}{$\uparrow$3.6}} & 47.0{\tiny\textcolor{red}{$\uparrow$9.3}} & 38.1{\tiny\textcolor{red}{$\uparrow$6.1}} & 31.9{\tiny\textcolor{green}{$\downarrow$1.1}} & 30.2{\tiny\textcolor{green}{$\downarrow$3.3}} & 20.0{\tiny\textcolor{green}{$\downarrow$3.3}} & 31.2{\tiny\textcolor{green}{$\downarrow$2.6}} \\
			& \texttt{MacNet} & 86.8{\tiny\textcolor{red}{$\uparrow$15.8}} & 16.0{\tiny\textcolor{red}{$\uparrow$7.2}} & 59.7{\tiny\textcolor{red}{$\uparrow$22.0}} & 49.1{\tiny\textcolor{red}{$\uparrow$17.1}} & 34.8{\tiny\textcolor{red}{$\uparrow$1.8}} & 37.8{\tiny\textcolor{red}{$\uparrow$4.3}} & 26.7{\tiny\textcolor{red}{$\uparrow$3.4}} & 39.9{\tiny\textcolor{red}{$\uparrow$6.1}} \\
			& \texttt{MacNet} & 86.8{\tiny\textcolor{red}{$\uparrow$15.8}} & 16.0{\tiny\textcolor{red}{$\uparrow$7.2}} & 59.7{\tiny\textcolor{red}{$\uparrow$22.0}} & 49.1{\tiny\textcolor{red}{$\uparrow$17.1}} & 34.8{\tiny\textcolor{red}{$\uparrow$1.8}} & 37.8{\tiny\textcolor{red}{$\uparrow$4.3}} & 26.7{\tiny\textcolor{red}{$\uparrow$3.4}} & 39.9{\tiny\textcolor{red}{$\uparrow$6.1}} \\
			& {\alg{} (ours)} & \cellcolor{blue!10}94.5{\tiny\textcolor{red}{$\uparrow$23.5}} & \cellcolor{blue!10}32.7{\tiny\textcolor{red}{$\uparrow$23.9}} & \cellcolor{blue!10}88.5{\tiny\textcolor{red}{$\uparrow$50.8}} & \cellcolor{blue!10}66.1{\tiny\textcolor{red}{$\uparrow$34.1}} & \cellcolor{blue!10}48.7{\tiny\textcolor{red}{$\uparrow$15.7}} & \cellcolor{blue!10}47.1{\tiny\textcolor{red}{$\uparrow$13.6}} & \cellcolor{blue!10}36.2{\tiny\textcolor{red}{$\uparrow$12.9}} & \cellcolor{blue!10}45.5{\tiny\textcolor{red}{$\uparrow$11.7}} \\
			\midrule
			\multirow{7}{*}{Llama-70B}
			& \texttt{Single} & 68.9{\tiny\textcolor{green}{$\downarrow$14.1}} & 31.6{\tiny\textcolor{green}{$\downarrow$3.2}} & 27.6{\tiny\textcolor{green}{$\downarrow$32.0}} & 35.5{\tiny\textcolor{green}{$\downarrow$12.5}} & 43.7{\tiny\textcolor{green}{$\downarrow$10.2}} & 38.3{\tiny\textcolor{green}{$\downarrow$11.5}} & 28.6{\tiny\textcolor{green}{$\downarrow$4.6}} & 46.8{\tiny\textcolor{green}{$\downarrow$6.2}} \\
			& \texttt{BoN} & 83.0{\tiny\textcolor{red}{$\uparrow$0.0}} & 34.8{\tiny\textcolor{red}{$\uparrow$0.0}} & 59.6{\tiny\textcolor{red}{$\uparrow$0.0}} & 48.0{\tiny\textcolor{red}{$\uparrow$0.0}} & 53.9{\tiny\textcolor{red}{$\uparrow$0.0}} & 49.8{\tiny\textcolor{red}{$\uparrow$0.0}} & 33.2{\tiny\textcolor{red}{$\uparrow$0.0}} & 53.0{\tiny\textcolor{red}{$\uparrow$0.0}} \\
			& \texttt{Debate} & 82.9{\tiny\textcolor{green}{$\downarrow$0.1}} & 34.6{\tiny\textcolor{green}{$\downarrow$0.2}} & 58.5{\tiny\textcolor{green}{$\downarrow$1.1}} & 46.8{\tiny\textcolor{green}{$\downarrow$1.2}} & 52.2{\tiny\textcolor{green}{$\downarrow$1.7}} & 51.2{\tiny\textcolor{red}{$\uparrow$1.4}} & 33.0{\tiny\textcolor{green}{$\downarrow$0.2}} & 52.8{\tiny\textcolor{green}{$\downarrow$0.2}} \\
			& \texttt{DyLAN} & 93.2{\tiny\textcolor{red}{$\uparrow$10.2}} & 37.2{\tiny\textcolor{red}{$\uparrow$2.4}} & 73.2{\tiny\textcolor{red}{$\uparrow$13.6}} & 57.2{\tiny\textcolor{red}{$\uparrow$9.2}} & 50.5{\tiny\textcolor{green}{$\downarrow$3.4}} & 46.8{\tiny\textcolor{green}{$\downarrow$3.0}} & 31.1{\tiny\textcolor{green}{$\downarrow$2.1}} & 49.7{\tiny\textcolor{green}{$\downarrow$3.3}} \\
			& \texttt{GPTSwarm} & 93.0{\tiny\textcolor{red}{$\uparrow$10.0}} & 37.4{\tiny\textcolor{red}{$\uparrow$2.6}} & 69.7{\tiny\textcolor{red}{$\uparrow$10.1}} & 55.5{\tiny\textcolor{red}{$\uparrow$7.5}} & 51.7{\tiny\textcolor{green}{$\downarrow$2.2}} & 47.5{\tiny\textcolor{green}{$\downarrow$2.3}} & 32.4{\tiny\textcolor{green}{$\downarrow$0.8}} & 51.6{\tiny\textcolor{green}{$\downarrow$1.4}} \\
			& \texttt{MacNet} & 86.7{\tiny\textcolor{red}{$\uparrow$3.7}} & 38.9{\tiny\textcolor{red}{$\uparrow$4.1}} & 68.3{\tiny\textcolor{red}{$\uparrow$8.7}} & 49.8{\tiny\textcolor{red}{$\uparrow$1.8}} & 61.6{\tiny\textcolor{red}{$\uparrow$7.7}} & 70.8{\tiny\textcolor{red}{$\uparrow$21.0}} & 43.8{\tiny\textcolor{red}{$\uparrow$10.6}} & 67.1{\tiny\textcolor{red}{$\uparrow$14.1}} \\
			& {\alg{} (ours)} & \cellcolor{blue!10}93.9{\tiny\textcolor{red}{$\uparrow$10.9}} & \cellcolor{blue!10}51.5{\tiny\textcolor{red}{$\uparrow$16.7}} & \cellcolor{blue!10}91.3{\tiny\textcolor{red}{$\uparrow$31.7}} & \cellcolor{blue!10}97.2{\tiny\textcolor{red}{$\uparrow$49.2}} & \cellcolor{blue!10}84.9{\tiny\textcolor{red}{$\uparrow$31.0}} & \cellcolor{blue!10}81.7{\tiny\textcolor{red}{$\uparrow$31.9}} & \cellcolor{blue!10}61.5{\tiny\textcolor{red}{$\uparrow$28.3}} & \cellcolor{blue!10}78.1{\tiny\textcolor{red}{$\uparrow$25.1}} \\
			\midrule
			\multirow{7}{*}{Qwen-7B}
			& \texttt{Single} & 61.2{\tiny\textcolor{green}{$\downarrow$13.0}} & 12.9{\tiny\textcolor{green}{$\downarrow$12.0}} & 20.2{\tiny\textcolor{green}{$\downarrow$16.2}} & 51.2{\tiny\textcolor{green}{$\downarrow$11.3}} & 23.4{\tiny\textcolor{green}{$\downarrow$9.5}} & 24.9{\tiny\textcolor{green}{$\downarrow$10.5}} & 15.6{\tiny\textcolor{green}{$\downarrow$6.4}} & 20.6{\tiny\textcolor{green}{$\downarrow$6.5}} \\
			& \texttt{BoN} & 74.2{\tiny\textcolor{red}{$\uparrow$0.0}} & 24.9{\tiny\textcolor{red}{$\uparrow$0.0}} & 36.4{\tiny\textcolor{red}{$\uparrow$0.0}} & 62.5{\tiny\textcolor{red}{$\uparrow$0.0}} & 32.9{\tiny\textcolor{red}{$\uparrow$0.0}} & 35.4{\tiny\textcolor{red}{$\uparrow$0.0}} & 22.0{\tiny\textcolor{red}{$\uparrow$0.0}} & 27.1{\tiny\textcolor{red}{$\uparrow$0.0}} \\
			& \texttt{Debate} & 73.4{\tiny\textcolor{green}{$\downarrow$0.8}} & 24.1{\tiny\textcolor{green}{$\downarrow$0.8}} & 35.6{\tiny\textcolor{green}{$\downarrow$0.8}} & 62.1{\tiny\textcolor{green}{$\downarrow$0.4}} & 31.7{\tiny\textcolor{green}{$\downarrow$1.2}} & 32.6{\tiny\textcolor{green}{$\downarrow$2.8}} & 22.1{\tiny\textcolor{red}{$\uparrow$0.1}} & 24.2{\tiny\textcolor{green}{$\downarrow$2.9}} \\
			& \texttt{DyLAN} & 77.4{\tiny\textcolor{red}{$\uparrow$3.2}} & 26.0{\tiny\textcolor{red}{$\uparrow$1.1}} & 38.9{\tiny\textcolor{red}{$\uparrow$2.5}} & 64.8{\tiny\textcolor{red}{$\uparrow$2.3}} & 30.4{\tiny\textcolor{green}{$\downarrow$2.5}} & 28.7{\tiny\textcolor{green}{$\downarrow$6.7}} & 19.2{\tiny\textcolor{green}{$\downarrow$2.8}} & 22.7{\tiny\textcolor{green}{$\downarrow$4.4}} \\
			& \texttt{GPTSwarm} & 78.0{\tiny\textcolor{red}{$\uparrow$3.8}} & 27.6{\tiny\textcolor{red}{$\uparrow$2.7}} & 41.1{\tiny\textcolor{red}{$\uparrow$4.7}} & 66.2{\tiny\textcolor{red}{$\uparrow$3.7}} & 30.1{\tiny\textcolor{green}{$\downarrow$2.8}} & 30.4{\tiny\textcolor{green}{$\downarrow$5.0}} & 22.1{\tiny\textcolor{red}{$\uparrow$0.1}} & 24.4{\tiny\textcolor{green}{$\downarrow$2.7}} \\
			& \texttt{MacNet} & 76.8{\tiny\textcolor{red}{$\uparrow$2.6}} & 27.8{\tiny\textcolor{red}{$\uparrow$2.9}} & 40.3{\tiny\textcolor{red}{$\uparrow$3.9}} & 69.4{\tiny\textcolor{red}{$\uparrow$6.9}} & 35.4{\tiny\textcolor{red}{$\uparrow$2.5}} & 39.7{\tiny\textcolor{red}{$\uparrow$4.3}} & 24.6{\tiny\textcolor{red}{$\uparrow$2.6}} & 30.7{\tiny\textcolor{red}{$\uparrow$3.6}} \\
			& {\alg{} (ours)} & \cellcolor{blue!10}94.6{\tiny\textcolor{red}{$\uparrow$20.4}} & \cellcolor{blue!10}47.4{\tiny\textcolor{red}{$\uparrow$22.5}} & \cellcolor{blue!10}72.8{\tiny\textcolor{red}{$\uparrow$36.4}} & \cellcolor{blue!10}84.9{\tiny\textcolor{red}{$\uparrow$22.4}} & \cellcolor{blue!10}41.8{\tiny\textcolor{red}{$\uparrow$8.9}} & \cellcolor{blue!10}47.6{\tiny\textcolor{red}{$\uparrow$12.2}} & \cellcolor{blue!10}36.5{\tiny\textcolor{red}{$\uparrow$14.5}} & \cellcolor{blue!10}37.4{\tiny\textcolor{red}{$\uparrow$10.3}} \\
			\midrule
			\multirow{7}{*}{Qwen-14B}
			& \texttt{Single} & 67.2{\tiny\textcolor{green}{$\downarrow$12.5}} & 21.6{\tiny\textcolor{green}{$\downarrow$6.2}} & 21.2{\tiny\textcolor{green}{$\downarrow$11.6}} & 56.7{\tiny\textcolor{green}{$\downarrow$12.7}} & 28.3{\tiny\textcolor{green}{$\downarrow$9.8}} & 30.9{\tiny\textcolor{green}{$\downarrow$11.4}} & 18.5{\tiny\textcolor{green}{$\downarrow$7.6}} & 25.5{\tiny\textcolor{green}{$\downarrow$3.8}} \\
			& \texttt{BoN} & 79.7{\tiny\textcolor{red}{$\uparrow$0.0}} & 27.8{\tiny\textcolor{red}{$\uparrow$0.0}} & 32.8{\tiny\textcolor{red}{$\uparrow$0.0}} & 69.4{\tiny\textcolor{red}{$\uparrow$0.0}} & 38.1{\tiny\textcolor{red}{$\uparrow$0.0}} & 42.3{\tiny\textcolor{red}{$\uparrow$0.0}} & 26.1{\tiny\textcolor{red}{$\uparrow$0.0}} & 29.3{\tiny\textcolor{red}{$\uparrow$0.0}} \\
			& \texttt{Debate} & 78.7{\tiny\textcolor{green}{$\downarrow$1.0}} & 26.7{\tiny\textcolor{green}{$\downarrow$1.1}} & 33.4{\tiny\textcolor{red}{$\uparrow$0.6}} & 70.1{\tiny\textcolor{red}{$\uparrow$0.7}} & 37.7{\tiny\textcolor{green}{$\downarrow$0.4}} & 40.5{\tiny\textcolor{green}{$\downarrow$1.8}} & 26.7{\tiny\textcolor{red}{$\uparrow$0.6}} & 30.4{\tiny\textcolor{red}{$\uparrow$1.1}} \\
			& \texttt{DyLAN} & 88.0{\tiny\textcolor{red}{$\uparrow$8.3}} & 31.9{\tiny\textcolor{red}{$\uparrow$4.1}} & 41.0{\tiny\textcolor{red}{$\uparrow$8.2}} & 78.2{\tiny\textcolor{red}{$\uparrow$8.8}} & 34.9{\tiny\textcolor{green}{$\downarrow$3.2}} & 35.7{\tiny\textcolor{green}{$\downarrow$6.6}} & 23.9{\tiny\textcolor{green}{$\downarrow$2.2}} & 28.3{\tiny\textcolor{green}{$\downarrow$1.0}} \\
			& \texttt{GPTSwarm} & 88.0{\tiny\textcolor{red}{$\uparrow$8.3}} & 31.8{\tiny\textcolor{red}{$\uparrow$4.0}} & 40.4{\tiny\textcolor{red}{$\uparrow$7.6}} & 78.4{\tiny\textcolor{red}{$\uparrow$9.0}} & 35.5{\tiny\textcolor{green}{$\downarrow$2.6}} & 36.8{\tiny\textcolor{green}{$\downarrow$5.5}} & 25.1{\tiny\textcolor{green}{$\downarrow$1.0}} & 28.7{\tiny\textcolor{green}{$\downarrow$0.6}} \\
			& \texttt{MacNet} & 83.1{\tiny\textcolor{red}{$\uparrow$3.4}} & 30.8{\tiny\textcolor{red}{$\uparrow$3.0}} & 35.1{\tiny\textcolor{red}{$\uparrow$2.3}} & 74.8{\tiny\textcolor{red}{$\uparrow$5.4}} & 42.1{\tiny\textcolor{red}{$\uparrow$4.0}} & 49.3{\tiny\textcolor{red}{$\uparrow$7.0}} & 35.5{\tiny\textcolor{red}{$\uparrow$9.4}} & 33.8{\tiny\textcolor{red}{$\uparrow$4.5}} \\
			& {\alg{} (ours)} & \cellcolor{blue!10}95.9{\tiny\textcolor{red}{$\uparrow$16.2}} & \cellcolor{blue!10}52.7{\tiny\textcolor{red}{$\uparrow$24.9}} & \cellcolor{blue!10}69.0{\tiny\textcolor{red}{$\uparrow$36.2}} & \cellcolor{blue!10}93.3{\tiny\textcolor{red}{$\uparrow$23.9}} & \cellcolor{blue!10}51.2{\tiny\textcolor{red}{$\uparrow$13.1}} & \cellcolor{blue!10}59.4{\tiny\textcolor{red}{$\uparrow$17.1}} & \cellcolor{blue!10}43.9{\tiny\textcolor{red}{$\uparrow$17.8}} & \cellcolor{blue!10}46.0{\tiny\textcolor{red}{$\uparrow$16.7}} \\
			\midrule
			\multirow{7}{*}{Qwen-72B}
			& \texttt{Single} & 72.5{\tiny\textcolor{green}{$\downarrow$11.7}} & 31.6{\tiny\textcolor{green}{$\downarrow$19.4}} & 34.9{\tiny\textcolor{green}{$\downarrow$11.0}} & 59.1{\tiny\textcolor{green}{$\downarrow$13.1}} & 47.7{\tiny\textcolor{green}{$\downarrow$10.8}} & 50.1{\tiny\textcolor{green}{$\downarrow$12.8}} & 31.2{\tiny\textcolor{green}{$\downarrow$12.1}} & 40.8{\tiny\textcolor{green}{$\downarrow$11.5}} \\
			& \texttt{BoN} & 84.2{\tiny\textcolor{red}{$\uparrow$0.0}} & 51.0{\tiny\textcolor{red}{$\uparrow$0.0}} & 45.9{\tiny\textcolor{red}{$\uparrow$0.0}} & 72.2{\tiny\textcolor{red}{$\uparrow$0.0}} & 58.5{\tiny\textcolor{red}{$\uparrow$0.0}} & 62.9{\tiny\textcolor{red}{$\uparrow$0.0}} & 43.3{\tiny\textcolor{red}{$\uparrow$0.0}} & 52.3{\tiny\textcolor{red}{$\uparrow$0.0}} \\
			& \texttt{Debate} & 83.9{\tiny\textcolor{green}{$\downarrow$0.3}} & 56.1{\tiny\textcolor{red}{$\uparrow$5.1}} & 44.1{\tiny\textcolor{green}{$\downarrow$1.8}} & 71.6{\tiny\textcolor{green}{$\downarrow$0.6}} & 60.8{\tiny\textcolor{red}{$\uparrow$2.3}} & 66.7{\tiny\textcolor{red}{$\uparrow$3.8}} & 44.1{\tiny\textcolor{red}{$\uparrow$0.8}} & 49.3{\tiny\textcolor{green}{$\downarrow$3.0}} \\
			& \texttt{DyLAN} & 92.7{\tiny\textcolor{red}{$\uparrow$8.5}} & 65.8{\tiny\textcolor{red}{$\uparrow$14.8}} & 53.6{\tiny\textcolor{red}{$\uparrow$7.7}} & 81.4{\tiny\textcolor{red}{$\uparrow$9.2}} & 56.1{\tiny\textcolor{green}{$\downarrow$2.4}} & 59.1{\tiny\textcolor{green}{$\downarrow$3.8}} & 41.1{\tiny\textcolor{green}{$\downarrow$2.2}} & 46.3{\tiny\textcolor{green}{$\downarrow$6.0}} \\
			& \texttt{GPTSwarm} & 92.4{\tiny\textcolor{red}{$\uparrow$8.2}} & 66.0{\tiny\textcolor{red}{$\uparrow$15.0}} & 53.3{\tiny\textcolor{red}{$\uparrow$7.4}} & 81.1{\tiny\textcolor{red}{$\uparrow$8.9}} & 57.3{\tiny\textcolor{green}{$\downarrow$1.2}} & 59.8{\tiny\textcolor{green}{$\downarrow$3.1}} & 41.3{\tiny\textcolor{green}{$\downarrow$2.0}} & 50.6{\tiny\textcolor{green}{$\downarrow$1.7}} \\
			& \texttt{MacNet} & 88.1{\tiny\textcolor{red}{$\uparrow$3.9}} & 60.6{\tiny\textcolor{red}{$\uparrow$9.6}} & 49.2{\tiny\textcolor{red}{$\uparrow$3.3}} & 72.6{\tiny\textcolor{red}{$\uparrow$0.4}} & 65.8{\tiny\textcolor{red}{$\uparrow$7.3}} & 74.7{\tiny\textcolor{red}{$\uparrow$11.8}} & 50.3{\tiny\textcolor{red}{$\uparrow$7.0}} & 58.1{\tiny\textcolor{red}{$\uparrow$5.8}} \\
			& {\alg{} (ours)} & \cellcolor{blue!10}96.6{\tiny\textcolor{red}{$\uparrow$12.4}} & \cellcolor{blue!10}74.3{\tiny\textcolor{red}{$\uparrow$23.3}} & \cellcolor{blue!10}80.0{\tiny\textcolor{red}{$\uparrow$34.1}} & \cellcolor{blue!10}94.4{\tiny\textcolor{red}{$\uparrow$22.2}} & \cellcolor{blue!10}82.9{\tiny\textcolor{red}{$\uparrow$24.4}} & \cellcolor{blue!10}94.6{\tiny\textcolor{red}{$\uparrow$31.7}} & \cellcolor{blue!10}71.9{\tiny\textcolor{red}{$\uparrow$28.6}} & \cellcolor{blue!10}74.3{\tiny\textcolor{red}{$\uparrow$22.0}} \\
			\bottomrule
		\end{tabular}
	}
	\label{tab:number_16}
\end{table*}

\begin{table*}[h]
	\centering
	\caption{Accuracy with varying number of LLMs on different datasets. The number of LLMs is $32$ for all datasets. We exhibit the performance advantage with \texttt{BoN} and highlight the \colorbox{blue!10}{best} result.}
	\vskip 0.1in
	\resizebox{\textwidth}{!}{
		\begin{tabular}{YZYYYYYYYY}
			\toprule
			\textbf{Model} & \textbf{Method} & \textbf{MMLU} & \textbf{MATH} & \textbf{GPQA} & \textbf{Code} & \textbf{ALFWorld} & \textbf{SciWorld} & \textbf{GAIA} & \textbf{PDDL} \\
			\midrule
			\multirow{7}{*}{Llama-7B}
			& \texttt{Single} & 45.3{\tiny\textcolor{green}{$\downarrow$5.2}} & 2.9{\tiny\textcolor{green}{$\downarrow$4.0}} & 16.8{\tiny\textcolor{green}{$\downarrow$10.7}} & 15.8{\tiny\textcolor{green}{$\downarrow$6.0}} & 22.4{\tiny\textcolor{green}{$\downarrow$4.4}} & 18.8{\tiny\textcolor{green}{$\downarrow$8.1}} & 14.7{\tiny\textcolor{green}{$\downarrow$5.1}} & 23.2{\tiny\textcolor{green}{$\downarrow$4.0}} \\
			& \texttt{BoN} & 50.5{\tiny\textcolor{red}{$\uparrow$0.0}} & 6.9{\tiny\textcolor{red}{$\uparrow$0.0}} & 27.5{\tiny\textcolor{red}{$\uparrow$0.0}} & 21.8{\tiny\textcolor{red}{$\uparrow$0.0}} & 26.8{\tiny\textcolor{red}{$\uparrow$0.0}} & 26.9{\tiny\textcolor{red}{$\uparrow$0.0}} & 19.8{\tiny\textcolor{red}{$\uparrow$0.0}} & 27.2{\tiny\textcolor{red}{$\uparrow$0.0}} \\
			& \texttt{Debate} & 51.8{\tiny\textcolor{red}{$\uparrow$1.3}} & 7.7{\tiny\textcolor{red}{$\uparrow$0.8}} & 29.5{\tiny\textcolor{red}{$\uparrow$2.0}} & 23.5{\tiny\textcolor{red}{$\uparrow$1.7}} & 26.4{\tiny\textcolor{green}{$\downarrow$0.4}} & 25.6{\tiny\textcolor{green}{$\downarrow$1.3}} & 15.8{\tiny\textcolor{green}{$\downarrow$4.0}} & 26.2{\tiny\textcolor{green}{$\downarrow$1.0}} \\
			& \texttt{DyLAN} & 54.7{\tiny\textcolor{red}{$\uparrow$4.2}} & 10.1{\tiny\textcolor{red}{$\uparrow$3.2}} & 36.2{\tiny\textcolor{red}{$\uparrow$8.7}} & 26.7{\tiny\textcolor{red}{$\uparrow$4.9}} & 24.9{\tiny\textcolor{green}{$\downarrow$1.9}} & 23.1{\tiny\textcolor{green}{$\downarrow$3.8}} & 17.6{\tiny\textcolor{green}{$\downarrow$2.2}} & 26.0{\tiny\textcolor{green}{$\downarrow$1.2}} \\
			& \texttt{GPTSwarm} & 52.4{\tiny\textcolor{red}{$\uparrow$1.9}} & 7.9{\tiny\textcolor{red}{$\uparrow$1.0}} & 30.8{\tiny\textcolor{red}{$\uparrow$3.3}} & 23.7{\tiny\textcolor{red}{$\uparrow$1.9}} & 27.3{\tiny\textcolor{red}{$\uparrow$0.5}} & 27.9{\tiny\textcolor{red}{$\uparrow$1.0}} & 21.8{\tiny\textcolor{red}{$\uparrow$2.0}} & 30.4{\tiny\textcolor{red}{$\uparrow$3.2}} \\
			& \texttt{MacNet} & 62.7{\tiny\textcolor{red}{$\uparrow$12.2}} & 14.0{\tiny\textcolor{red}{$\uparrow$7.1}} & 50.3{\tiny\textcolor{red}{$\uparrow$22.8}} & 37.2{\tiny\textcolor{red}{$\uparrow$15.4}} & 35.7{\tiny\textcolor{red}{$\uparrow$8.9}} & 37.5{\tiny\textcolor{red}{$\uparrow$10.6}} & 23.8{\tiny\textcolor{red}{$\uparrow$4.0}} & 37.5{\tiny\textcolor{red}{$\uparrow$10.3}} \\
			& {\alg{} (ours)} & \cellcolor{blue!10}79.1{\tiny\textcolor{red}{$\uparrow$28.6}} & \cellcolor{blue!10}28.8{\tiny\textcolor{red}{$\uparrow$21.9}} & \cellcolor{blue!10}84.5{\tiny\textcolor{red}{$\uparrow$57.0}} & \cellcolor{blue!10}56.5{\tiny\textcolor{red}{$\uparrow$34.7}} & \cellcolor{blue!10}42.5{\tiny\textcolor{red}{$\uparrow$15.7}} & \cellcolor{blue!10}40.7{\tiny\textcolor{red}{$\uparrow$13.8}} & \cellcolor{blue!10}31.4{\tiny\textcolor{red}{$\uparrow$11.6}} & \cellcolor{blue!10}38.9{\tiny\textcolor{red}{$\uparrow$11.7}} \\
			\midrule
			\multirow{7}{*}{Llama-14B}
			& \texttt{Single} & 54.8{\tiny\textcolor{green}{$\downarrow$16.2}} & 3.9{\tiny\textcolor{green}{$\downarrow$4.9}} & 25.0{\tiny\textcolor{green}{$\downarrow$12.7}} & 23.7{\tiny\textcolor{green}{$\downarrow$8.3}} & 27.0{\tiny\textcolor{green}{$\downarrow$5.8}} & 23.5{\tiny\textcolor{green}{$\downarrow$11.3}} & 18.1{\tiny\textcolor{green}{$\downarrow$6.1}} & 28.9{\tiny\textcolor{green}{$\downarrow$5.4}} \\
			& \texttt{BoN} & 71.0{\tiny\textcolor{red}{$\uparrow$0.0}} & 8.8{\tiny\textcolor{red}{$\uparrow$0.0}} & 37.7{\tiny\textcolor{red}{$\uparrow$0.0}} & 32.0{\tiny\textcolor{red}{$\uparrow$0.0}} & 32.8{\tiny\textcolor{red}{$\uparrow$0.0}} & 34.8{\tiny\textcolor{red}{$\uparrow$0.0}} & 24.2{\tiny\textcolor{red}{$\uparrow$0.0}} & 34.3{\tiny\textcolor{red}{$\uparrow$0.0}} \\
			& \texttt{Debate} & 73.1{\tiny\textcolor{red}{$\uparrow$2.1}} & 9.5{\tiny\textcolor{red}{$\uparrow$0.7}} & 40.7{\tiny\textcolor{red}{$\uparrow$3.0}} & 34.3{\tiny\textcolor{red}{$\uparrow$2.3}} & 31.9{\tiny\textcolor{green}{$\downarrow$0.9}} & 31.7{\tiny\textcolor{green}{$\downarrow$3.1}} & 19.9{\tiny\textcolor{green}{$\downarrow$4.3}} & 33.0{\tiny\textcolor{green}{$\downarrow$1.3}} \\
			& \texttt{DyLAN} & 83.6{\tiny\textcolor{red}{$\uparrow$12.6}} & 12.7{\tiny\textcolor{red}{$\uparrow$3.9}} & 47.7{\tiny\textcolor{red}{$\uparrow$10.0}} & 38.7{\tiny\textcolor{red}{$\uparrow$6.7}} & 32.1{\tiny\textcolor{green}{$\downarrow$0.7}} & 30.8{\tiny\textcolor{green}{$\downarrow$4.0}} & 19.8{\tiny\textcolor{green}{$\downarrow$4.4}} & 31.0{\tiny\textcolor{green}{$\downarrow$3.3}} \\
			& \texttt{MacNet} & 90.7{\tiny\textcolor{red}{$\uparrow$19.7}} & 17.6{\tiny\textcolor{red}{$\uparrow$8.8}} & 67.2{\tiny\textcolor{red}{$\uparrow$29.5}} & 53.9{\tiny\textcolor{red}{$\uparrow$21.9}} & 37.1{\tiny\textcolor{red}{$\uparrow$4.3}} & 41.7{\tiny\textcolor{red}{$\uparrow$6.9}} & 29.1{\tiny\textcolor{red}{$\uparrow$4.9}} & 42.3{\tiny\textcolor{red}{$\uparrow$8.0}} \\
			& \texttt{MacNet} & 90.7{\tiny\textcolor{red}{$\uparrow$19.7}} & 17.6{\tiny\textcolor{red}{$\uparrow$8.8}} & 67.2{\tiny\textcolor{red}{$\uparrow$29.5}} & 53.9{\tiny\textcolor{red}{$\uparrow$21.9}} & 37.1{\tiny\textcolor{red}{$\uparrow$4.3}} & 41.7{\tiny\textcolor{red}{$\uparrow$6.9}} & 29.1{\tiny\textcolor{red}{$\uparrow$4.9}} & 42.3{\tiny\textcolor{red}{$\uparrow$8.0}} \\
			& {\alg{} (ours)} & \cellcolor{blue!10}95.8{\tiny\textcolor{red}{$\uparrow$24.8}} & \cellcolor{blue!10}41.4{\tiny\textcolor{red}{$\uparrow$32.6}} & \cellcolor{blue!10}94.7{\tiny\textcolor{red}{$\uparrow$57.0}} & \cellcolor{blue!10}75.6{\tiny\textcolor{red}{$\uparrow$43.6}} & \cellcolor{blue!10}52.2{\tiny\textcolor{red}{$\uparrow$19.4}} & \cellcolor{blue!10}50.7{\tiny\textcolor{red}{$\uparrow$15.9}} & \cellcolor{blue!10}38.5{\tiny\textcolor{red}{$\uparrow$14.3}} & \cellcolor{blue!10}48.0{\tiny\textcolor{red}{$\uparrow$13.7}} \\
			\midrule
			\multirow{7}{*}{Llama-70B}
			& \texttt{Single} & 68.9{\tiny\textcolor{green}{$\downarrow$14.1}} & 31.6{\tiny\textcolor{green}{$\downarrow$3.2}} & 27.6{\tiny\textcolor{green}{$\downarrow$32.0}} & 35.5{\tiny\textcolor{green}{$\downarrow$12.5}} & 43.1{\tiny\textcolor{green}{$\downarrow$10.0}} & 38.6{\tiny\textcolor{green}{$\downarrow$11.5}} & 28.1{\tiny\textcolor{green}{$\downarrow$4.7}} & 46.0{\tiny\textcolor{green}{$\downarrow$7.2}} \\
			& \texttt{BoN} & 83.0{\tiny\textcolor{red}{$\uparrow$0.0}} & 34.8{\tiny\textcolor{red}{$\uparrow$0.0}} & 59.6{\tiny\textcolor{red}{$\uparrow$0.0}} & 48.0{\tiny\textcolor{red}{$\uparrow$0.0}} & 53.1{\tiny\textcolor{red}{$\uparrow$0.0}} & 50.1{\tiny\textcolor{red}{$\uparrow$0.0}} & 32.8{\tiny\textcolor{red}{$\uparrow$0.0}} & 53.2{\tiny\textcolor{red}{$\uparrow$0.0}} \\
			& \texttt{Debate} & 85.7{\tiny\textcolor{red}{$\uparrow$2.7}} & 35.3{\tiny\textcolor{red}{$\uparrow$0.5}} & 63.4{\tiny\textcolor{red}{$\uparrow$3.8}} & 51.2{\tiny\textcolor{red}{$\uparrow$3.2}} & 52.6{\tiny\textcolor{green}{$\downarrow$0.5}} & 52.1{\tiny\textcolor{red}{$\uparrow$2.0}} & 32.9{\tiny\textcolor{red}{$\uparrow$0.1}} & 53.7{\tiny\textcolor{red}{$\uparrow$0.5}} \\
			& \texttt{DyLAN} & 94.5{\tiny\textcolor{red}{$\uparrow$11.5}} & 37.3{\tiny\textcolor{red}{$\uparrow$2.5}} & 74.6{\tiny\textcolor{red}{$\uparrow$15.0}} & 57.8{\tiny\textcolor{red}{$\uparrow$9.8}} & 50.1{\tiny\textcolor{green}{$\downarrow$3.0}} & 47.2{\tiny\textcolor{green}{$\downarrow$2.9}} & 31.4{\tiny\textcolor{green}{$\downarrow$1.4}} & 49.7{\tiny\textcolor{green}{$\downarrow$3.5}} \\
			& \texttt{GPTSwarm} & 96.4{\tiny\textcolor{red}{$\uparrow$13.4}} & 37.8{\tiny\textcolor{red}{$\uparrow$3.0}} & 73.4{\tiny\textcolor{red}{$\uparrow$13.8}} & 56.4{\tiny\textcolor{red}{$\uparrow$8.4}} & 52.2{\tiny\textcolor{green}{$\downarrow$0.9}} & 48.4{\tiny\textcolor{green}{$\downarrow$1.7}} & 32.5{\tiny\textcolor{green}{$\downarrow$0.3}} & 51.6{\tiny\textcolor{green}{$\downarrow$1.6}} \\
			& \texttt{MacNet} & 90.0{\tiny\textcolor{red}{$\uparrow$7.0}} & 41.2{\tiny\textcolor{red}{$\uparrow$6.4}} & 73.9{\tiny\textcolor{red}{$\uparrow$14.3}} & 54.1{\tiny\textcolor{red}{$\uparrow$6.1}} & 64.5{\tiny\textcolor{red}{$\uparrow$11.4}} & 76.6{\tiny\textcolor{red}{$\uparrow$26.5}} & 46.5{\tiny\textcolor{red}{$\uparrow$13.7}} & 71.6{\tiny\textcolor{red}{$\uparrow$18.4}} \\
			& {\alg{} (ours)} & \cellcolor{blue!10}97.0{\tiny\textcolor{red}{$\uparrow$14.0}} & \cellcolor{blue!10}54.5{\tiny\textcolor{red}{$\uparrow$19.7}} & \cellcolor{blue!10}95.1{\tiny\textcolor{red}{$\uparrow$35.5}} & \cellcolor{blue!10}93.7{\tiny\textcolor{red}{$\uparrow$45.7}} & \cellcolor{blue!10}88.5{\tiny\textcolor{red}{$\uparrow$35.4}} & \cellcolor{blue!10}86.0{\tiny\textcolor{red}{$\uparrow$35.9}} & \cellcolor{blue!10}65.7{\tiny\textcolor{red}{$\uparrow$32.9}} & \cellcolor{blue!10}79.8{\tiny\textcolor{red}{$\uparrow$26.6}} \\
			\midrule
			\multirow{7}{*}{Qwen-7B}
			& \texttt{Single} & 61.2{\tiny\textcolor{green}{$\downarrow$13.0}} & 12.9{\tiny\textcolor{green}{$\downarrow$12.0}} & 20.2{\tiny\textcolor{green}{$\downarrow$16.2}} & 51.2{\tiny\textcolor{green}{$\downarrow$11.3}} & 24.2{\tiny\textcolor{green}{$\downarrow$8.6}} & 24.5{\tiny\textcolor{green}{$\downarrow$13.3}} & 14.6{\tiny\textcolor{green}{$\downarrow$7.7}} & 20.4{\tiny\textcolor{green}{$\downarrow$7.3}} \\
			& \texttt{BoN} & 74.2{\tiny\textcolor{red}{$\uparrow$0.0}} & 24.9{\tiny\textcolor{red}{$\uparrow$0.0}} & 36.4{\tiny\textcolor{red}{$\uparrow$0.0}} & 62.5{\tiny\textcolor{red}{$\uparrow$0.0}} & 32.8{\tiny\textcolor{red}{$\uparrow$0.0}} & 37.8{\tiny\textcolor{red}{$\uparrow$0.0}} & 22.3{\tiny\textcolor{red}{$\uparrow$0.0}} & 27.7{\tiny\textcolor{red}{$\uparrow$0.0}} \\
			& \texttt{Debate} & 76.8{\tiny\textcolor{red}{$\uparrow$2.6}} & 26.8{\tiny\textcolor{red}{$\uparrow$1.9}} & 40.0{\tiny\textcolor{red}{$\uparrow$3.6}} & 64.3{\tiny\textcolor{red}{$\uparrow$1.8}} & 31.7{\tiny\textcolor{green}{$\downarrow$1.1}} & 33.4{\tiny\textcolor{green}{$\downarrow$4.4}} & 23.0{\tiny\textcolor{red}{$\uparrow$0.7}} & 25.3{\tiny\textcolor{green}{$\downarrow$2.4}} \\
			& \texttt{DyLAN} & 78.2{\tiny\textcolor{red}{$\uparrow$4.0}} & 29.5{\tiny\textcolor{red}{$\uparrow$4.6}} & 40.3{\tiny\textcolor{red}{$\uparrow$3.9}} & 66.5{\tiny\textcolor{red}{$\uparrow$4.0}} & 30.7{\tiny\textcolor{green}{$\downarrow$2.1}} & 29.2{\tiny\textcolor{green}{$\downarrow$8.6}} & 19.0{\tiny\textcolor{green}{$\downarrow$3.3}} & 22.6{\tiny\textcolor{green}{$\downarrow$5.1}} \\
			& \texttt{GPTSwarm} & 79.6{\tiny\textcolor{red}{$\uparrow$5.4}} & 29.9{\tiny\textcolor{red}{$\uparrow$5.0}} & 41.8{\tiny\textcolor{red}{$\uparrow$5.4}} & 67.8{\tiny\textcolor{red}{$\uparrow$5.3}} & 31.2{\tiny\textcolor{green}{$\downarrow$1.6}} & 31.3{\tiny\textcolor{green}{$\downarrow$6.5}} & 22.6{\tiny\textcolor{red}{$\uparrow$0.3}} & 25.0{\tiny\textcolor{green}{$\downarrow$2.7}} \\
			& \texttt{MacNet} & 80.6{\tiny\textcolor{red}{$\uparrow$6.4}} & 31.5{\tiny\textcolor{red}{$\uparrow$6.6}} & 45.8{\tiny\textcolor{red}{$\uparrow$9.4}} & 73.7{\tiny\textcolor{red}{$\uparrow$11.2}} & 38.6{\tiny\textcolor{red}{$\uparrow$5.8}} & 43.1{\tiny\textcolor{red}{$\uparrow$5.3}} & 26.6{\tiny\textcolor{red}{$\uparrow$4.3}} & 34.3{\tiny\textcolor{red}{$\uparrow$6.6}} \\
			& {\alg{} (ours)} & \cellcolor{blue!10}96.0{\tiny\textcolor{red}{$\uparrow$21.8}} & \cellcolor{blue!10}47.1{\tiny\textcolor{red}{$\uparrow$22.2}} & \cellcolor{blue!10}74.9{\tiny\textcolor{red}{$\uparrow$38.5}} & \cellcolor{blue!10}87.3{\tiny\textcolor{red}{$\uparrow$24.8}} & \cellcolor{blue!10}44.4{\tiny\textcolor{red}{$\uparrow$11.6}} & \cellcolor{blue!10}50.4{\tiny\textcolor{red}{$\uparrow$12.6}} & \cellcolor{blue!10}39.7{\tiny\textcolor{red}{$\uparrow$17.4}} & \cellcolor{blue!10}40.2{\tiny\textcolor{red}{$\uparrow$12.5}} \\
			\midrule
			\multirow{7}{*}{Qwen-14B}
			& \texttt{Single} & 67.2{\tiny\textcolor{green}{$\downarrow$12.5}} & 21.6{\tiny\textcolor{green}{$\downarrow$6.2}} & 21.2{\tiny\textcolor{green}{$\downarrow$11.6}} & 56.7{\tiny\textcolor{green}{$\downarrow$12.7}} & 29.4{\tiny\textcolor{green}{$\downarrow$8.0}} & 29.5{\tiny\textcolor{green}{$\downarrow$15.6}} & 18.9{\tiny\textcolor{green}{$\downarrow$8.2}} & 24.5{\tiny\textcolor{green}{$\downarrow$4.5}} \\
			& \texttt{BoN} & 79.7{\tiny\textcolor{red}{$\uparrow$0.0}} & 27.8{\tiny\textcolor{red}{$\uparrow$0.0}} & 32.8{\tiny\textcolor{red}{$\uparrow$0.0}} & 69.4{\tiny\textcolor{red}{$\uparrow$0.0}} & 37.4{\tiny\textcolor{red}{$\uparrow$0.0}} & 45.1{\tiny\textcolor{red}{$\uparrow$0.0}} & 27.1{\tiny\textcolor{red}{$\uparrow$0.0}} & 29.0{\tiny\textcolor{red}{$\uparrow$0.0}} \\
			& \texttt{Debate} & 82.7{\tiny\textcolor{red}{$\uparrow$3.0}} & 29.1{\tiny\textcolor{red}{$\uparrow$1.3}} & 35.5{\tiny\textcolor{red}{$\uparrow$2.7}} & 71.8{\tiny\textcolor{red}{$\uparrow$2.4}} & 39.0{\tiny\textcolor{red}{$\uparrow$1.6}} & 41.7{\tiny\textcolor{green}{$\downarrow$3.4}} & 28.7{\tiny\textcolor{red}{$\uparrow$1.6}} & 30.8{\tiny\textcolor{red}{$\uparrow$1.8}} \\
			& \texttt{DyLAN} & 89.3{\tiny\textcolor{red}{$\uparrow$9.6}} & 32.6{\tiny\textcolor{red}{$\uparrow$4.8}} & 42.2{\tiny\textcolor{red}{$\uparrow$9.4}} & 78.9{\tiny\textcolor{red}{$\uparrow$9.5}} & 35.4{\tiny\textcolor{green}{$\downarrow$2.0}} & 36.2{\tiny\textcolor{green}{$\downarrow$8.9}} & 23.2{\tiny\textcolor{green}{$\downarrow$3.9}} & 29.3{\tiny\textcolor{red}{$\uparrow$0.3}} \\
			& \texttt{GPTSwarm} & 89.0{\tiny\textcolor{red}{$\uparrow$9.3}} & 32.6{\tiny\textcolor{red}{$\uparrow$4.8}} & 42.2{\tiny\textcolor{red}{$\uparrow$9.4}} & 78.9{\tiny\textcolor{red}{$\uparrow$9.5}} & 36.4{\tiny\textcolor{green}{$\downarrow$1.0}} & 37.5{\tiny\textcolor{green}{$\downarrow$7.6}} & 26.6{\tiny\textcolor{green}{$\downarrow$0.5}} & 28.9{\tiny\textcolor{green}{$\downarrow$0.1}} \\
			& \texttt{MacNet} & 87.8{\tiny\textcolor{red}{$\uparrow$8.1}} & 33.7{\tiny\textcolor{red}{$\uparrow$5.9}} & 40.1{\tiny\textcolor{red}{$\uparrow$7.3}} & 83.0{\tiny\textcolor{red}{$\uparrow$13.6}} & 44.8{\tiny\textcolor{red}{$\uparrow$7.4}} & 54.2{\tiny\textcolor{red}{$\uparrow$9.1}} & 39.8{\tiny\textcolor{red}{$\uparrow$12.7}} & 36.2{\tiny\textcolor{red}{$\uparrow$7.2}} \\
			& {\alg{} (ours)} & \cellcolor{blue!10}96.1{\tiny\textcolor{red}{$\uparrow$16.4}} & \cellcolor{blue!10}58.0{\tiny\textcolor{red}{$\uparrow$30.2}} & \cellcolor{blue!10}70.5{\tiny\textcolor{red}{$\uparrow$37.7}} & \cellcolor{blue!10}94.9{\tiny\textcolor{red}{$\uparrow$25.5}} & \cellcolor{blue!10}54.9{\tiny\textcolor{red}{$\uparrow$17.5}} & \cellcolor{blue!10}62.1{\tiny\textcolor{red}{$\uparrow$17.0}} & \cellcolor{blue!10}49.0{\tiny\textcolor{red}{$\uparrow$21.9}} & \cellcolor{blue!10}49.6{\tiny\textcolor{red}{$\uparrow$20.6}} \\
			\midrule
			\multirow{7}{*}{Qwen-72B}
			& \texttt{Single} & 72.5{\tiny\textcolor{green}{$\downarrow$11.7}} & 31.6{\tiny\textcolor{green}{$\downarrow$19.4}} & 34.9{\tiny\textcolor{green}{$\downarrow$11.0}} & 59.1{\tiny\textcolor{green}{$\downarrow$13.1}} & 46.7{\tiny\textcolor{green}{$\downarrow$12.6}} & 49.0{\tiny\textcolor{green}{$\downarrow$14.2}} & 30.4{\tiny\textcolor{green}{$\downarrow$13.5}} & 40.1{\tiny\textcolor{green}{$\downarrow$13.2}} \\
			& \texttt{BoN} & 84.2{\tiny\textcolor{red}{$\uparrow$0.0}} & 51.0{\tiny\textcolor{red}{$\uparrow$0.0}} & 45.9{\tiny\textcolor{red}{$\uparrow$0.0}} & 72.2{\tiny\textcolor{red}{$\uparrow$0.0}} & 59.3{\tiny\textcolor{red}{$\uparrow$0.0}} & 63.2{\tiny\textcolor{red}{$\uparrow$0.0}} & 43.9{\tiny\textcolor{red}{$\uparrow$0.0}} & 53.3{\tiny\textcolor{red}{$\uparrow$0.0}} \\
			& \texttt{Debate} & 86.5{\tiny\textcolor{red}{$\uparrow$2.3}} & 62.9{\tiny\textcolor{red}{$\uparrow$11.9}} & 48.3{\tiny\textcolor{red}{$\uparrow$2.4}} & 74.5{\tiny\textcolor{red}{$\uparrow$2.3}} & 62.8{\tiny\textcolor{red}{$\uparrow$3.5}} & 67.8{\tiny\textcolor{red}{$\uparrow$4.6}} & 46.6{\tiny\textcolor{red}{$\uparrow$2.7}} & 50.2{\tiny\textcolor{green}{$\downarrow$3.1}} \\
			& \texttt{DyLAN} & 92.8{\tiny\textcolor{red}{$\uparrow$8.6}} & 67.7{\tiny\textcolor{red}{$\uparrow$16.7}} & 54.8{\tiny\textcolor{red}{$\uparrow$8.9}} & 82.7{\tiny\textcolor{red}{$\uparrow$10.5}} & 56.0{\tiny\textcolor{green}{$\downarrow$3.3}} & 59.1{\tiny\textcolor{green}{$\downarrow$4.1}} & 42.9{\tiny\textcolor{green}{$\downarrow$1.0}} & 46.7{\tiny\textcolor{green}{$\downarrow$6.6}} \\
			& \texttt{GPTSwarm} & 93.1{\tiny\textcolor{red}{$\uparrow$8.9}} & 68.1{\tiny\textcolor{red}{$\uparrow$17.1}} & 54.7{\tiny\textcolor{red}{$\uparrow$8.8}} & 82.2{\tiny\textcolor{red}{$\uparrow$10.0}} & 58.3{\tiny\textcolor{green}{$\downarrow$1.0}} & 62.6{\tiny\textcolor{green}{$\downarrow$0.6}} & 42.7{\tiny\textcolor{green}{$\downarrow$1.2}} & 51.4{\tiny\textcolor{green}{$\downarrow$1.9}} \\
			& \texttt{MacNet} & 92.5{\tiny\textcolor{red}{$\uparrow$8.3}} & 72.1{\tiny\textcolor{red}{$\uparrow$21.1}} & 55.2{\tiny\textcolor{red}{$\uparrow$9.3}} & 79.9{\tiny\textcolor{red}{$\uparrow$7.7}} & 70.5{\tiny\textcolor{red}{$\uparrow$11.2}} & 79.5{\tiny\textcolor{red}{$\uparrow$16.3}} & 54.7{\tiny\textcolor{red}{$\uparrow$10.8}} & 62.3{\tiny\textcolor{red}{$\uparrow$9.0}} \\
			& {\alg{} (ours)} & \cellcolor{blue!10}97.5{\tiny\textcolor{red}{$\uparrow$13.3}} & \cellcolor{blue!10}74.7{\tiny\textcolor{red}{$\uparrow$23.7}} & \cellcolor{blue!10}82.0{\tiny\textcolor{red}{$\uparrow$36.1}} & \cellcolor{blue!10}95.9{\tiny\textcolor{red}{$\uparrow$23.7}} & \cellcolor{blue!10}89.3{\tiny\textcolor{red}{$\uparrow$30.0}} & \cellcolor{blue!10}95{\tiny\textcolor{red}{$\uparrow$31.8}} & \cellcolor{blue!10}79.7{\tiny\textcolor{red}{$\uparrow$35.8}} & \cellcolor{blue!10}80.0{\tiny\textcolor{red}{$\uparrow$26.7}} \\
			\bottomrule
		\end{tabular}
	}
	\label{tab:number_32}
\end{table*}

\begin{table*}[h]
	\centering
	\caption{Accuracy with varying number of LLMs on different datasets. The number of LLMs is $64$ for all datasets. We exhibit the performance advantage with \texttt{BoN} and highlight the \colorbox{blue!10}{best} result.}
	\vskip 0.1in
	\resizebox{\textwidth}{!}{
		\begin{tabular}{YZYYYYYYYY}
			\toprule
			\textbf{Model} & \textbf{Method} & \textbf{MMLU} & \textbf{MATH} & \textbf{GPQA} & \textbf{Code} & \textbf{ALFWorld} & \textbf{SciWorld} & \textbf{GAIA} & \textbf{PDDL} \\
			\midrule
			\multirow{7}{*}{Llama-7B}
			& \texttt{Single} & 45.3{\tiny\textcolor{green}{$\downarrow$5.2}} & 2.9{\tiny\textcolor{green}{$\downarrow$4.0}} & 16.8{\tiny\textcolor{green}{$\downarrow$10.7}} & 15.8{\tiny\textcolor{green}{$\downarrow$6.0}} & 22.0{\tiny\textcolor{green}{$\downarrow$5.2}} & 18.8{\tiny\textcolor{green}{$\downarrow$7.9}} & 15.0{\tiny\textcolor{green}{$\downarrow$4.8}} & 22.5{\tiny\textcolor{green}{$\downarrow$4.4}} \\
			& \texttt{BoN} & 50.5{\tiny\textcolor{red}{$\uparrow$0.0}} & 6.9{\tiny\textcolor{red}{$\uparrow$0.0}} & 27.5{\tiny\textcolor{red}{$\uparrow$0.0}} & 21.8{\tiny\textcolor{red}{$\uparrow$0.0}} & 27.2{\tiny\textcolor{red}{$\uparrow$0.0}} & 26.7{\tiny\textcolor{red}{$\uparrow$0.0}} & 19.8{\tiny\textcolor{red}{$\uparrow$0.0}} & 26.9{\tiny\textcolor{red}{$\uparrow$0.0}} \\
			& \texttt{Debate} & 53.4{\tiny\textcolor{red}{$\uparrow$2.9}} & 9.2{\tiny\textcolor{red}{$\uparrow$2.3}} & 33.5{\tiny\textcolor{red}{$\uparrow$6.0}} & 25{\tiny\textcolor{red}{$\uparrow$3.2}} & 26.9{\tiny\textcolor{green}{$\downarrow$0.3}} & 26.3{\tiny\textcolor{green}{$\downarrow$0.4}} & 16.5{\tiny\textcolor{green}{$\downarrow$3.3}} & 26.9{\tiny\textcolor{red}{$\uparrow$0.0}} \\
			& \texttt{DyLAN} & 55.1{\tiny\textcolor{red}{$\uparrow$4.6}} & 10.4{\tiny\textcolor{red}{$\uparrow$3.5}} & 36.9{\tiny\textcolor{red}{$\uparrow$9.4}} & 27{\tiny\textcolor{red}{$\uparrow$5.2}} & 24.6{\tiny\textcolor{green}{$\downarrow$2.6}} & 23.3{\tiny\textcolor{green}{$\downarrow$3.4}} & 17.9{\tiny\textcolor{green}{$\downarrow$1.9}} & 25.0{\tiny\textcolor{green}{$\downarrow$1.9}} \\
			& \texttt{GPTSwarm} & 53.5{\tiny\textcolor{red}{$\uparrow$3.0}} & 9{\tiny\textcolor{red}{$\uparrow$2.1}} & 34{\tiny\textcolor{red}{$\uparrow$6.5}} & 25.3{\tiny\textcolor{red}{$\uparrow$3.5}} & 27.0{\tiny\textcolor{green}{$\downarrow$0.2}} & 28.8{\tiny\textcolor{red}{$\uparrow$2.1}} & 22.1{\tiny\textcolor{red}{$\uparrow$2.3}} & 31.0{\tiny\textcolor{red}{$\uparrow$4.1}} \\
			& \texttt{MacNet} & 66.4{\tiny\textcolor{red}{$\uparrow$15.9}} & 15.3{\tiny\textcolor{red}{$\uparrow$8.4}} & 54.4{\tiny\textcolor{red}{$\uparrow$26.9}} & 40.0{\tiny\textcolor{red}{$\uparrow$18.2}} & 38.8{\tiny\textcolor{red}{$\uparrow$11.6}} & 39.6{\tiny\textcolor{red}{$\uparrow$12.9}} & 27.0{\tiny\textcolor{red}{$\uparrow$7.2}} & 38.4{\tiny\textcolor{red}{$\uparrow$11.5}} \\
			& {\alg{} (ours)} & \cellcolor{blue!10}81.5{\tiny\textcolor{red}{$\uparrow$31.0}} & \cellcolor{blue!10}35.4{\tiny\textcolor{red}{$\uparrow$28.5}} & \cellcolor{blue!10}82.5{\tiny\textcolor{red}{$\uparrow$55.0}} & \cellcolor{blue!10}60.6{\tiny\textcolor{red}{$\uparrow$38.8}} & \cellcolor{blue!10}44.2{\tiny\textcolor{red}{$\uparrow$17.0}} & \cellcolor{blue!10}42.9{\tiny\textcolor{red}{$\uparrow$16.2}} & \cellcolor{blue!10}32.9{\tiny\textcolor{red}{$\uparrow$13.1}} & \cellcolor{blue!10}40.9{\tiny\textcolor{red}{$\uparrow$14.0}} \\
			\midrule
			\multirow{7}{*}{Llama-14B}
			& \texttt{Single} & 54.8{\tiny\textcolor{green}{$\downarrow$16.2}} & 3.9{\tiny\textcolor{green}{$\downarrow$4.9}} & 25.0{\tiny\textcolor{green}{$\downarrow$12.7}} & 23.7{\tiny\textcolor{green}{$\downarrow$8.3}} & 26.9{\tiny\textcolor{green}{$\downarrow$5.7}} & 23.6{\tiny\textcolor{green}{$\downarrow$12.1}} & 17.8{\tiny\textcolor{green}{$\downarrow$6.7}} & 28.6{\tiny\textcolor{green}{$\downarrow$5.6}} \\
			& \texttt{BoN} & 71.0{\tiny\textcolor{red}{$\uparrow$0.0}} & 8.8{\tiny\textcolor{red}{$\uparrow$0.0}} & 37.7{\tiny\textcolor{red}{$\uparrow$0.0}} & 32.0{\tiny\textcolor{red}{$\uparrow$0.0}} & 32.6{\tiny\textcolor{red}{$\uparrow$0.0}} & 35.7{\tiny\textcolor{red}{$\uparrow$0.0}} & 24.5{\tiny\textcolor{red}{$\uparrow$0.0}} & 34.2{\tiny\textcolor{red}{$\uparrow$0.0}} \\
			& \texttt{Debate} & 79.6{\tiny\textcolor{red}{$\uparrow$8.6}} & 11.1{\tiny\textcolor{red}{$\uparrow$2.3}} & 45.0{\tiny\textcolor{red}{$\uparrow$7.3}} & 36.5{\tiny\textcolor{red}{$\uparrow$4.5}} & 32.6{\tiny\textcolor{red}{$\uparrow$0.0}} & 32.0{\tiny\textcolor{green}{$\downarrow$3.7}} & 19.9{\tiny\textcolor{green}{$\downarrow$4.6}} & 32.6{\tiny\textcolor{green}{$\downarrow$1.6}} \\
			& \texttt{DyLAN} & 85.0{\tiny\textcolor{red}{$\uparrow$14.0}} & 13.0{\tiny\textcolor{red}{$\uparrow$4.2}} & 48.8{\tiny\textcolor{red}{$\uparrow$11.1}} & 39.3{\tiny\textcolor{red}{$\uparrow$7.3}} & 31.9{\tiny\textcolor{green}{$\downarrow$0.7}} & 31.0{\tiny\textcolor{green}{$\downarrow$4.7}} & 19.9{\tiny\textcolor{green}{$\downarrow$4.6}} & 30.5{\tiny\textcolor{green}{$\downarrow$3.7}} \\
			& \texttt{MacNet} & 90.8{\tiny\textcolor{red}{$\uparrow$19.8}} & 18.1{\tiny\textcolor{red}{$\uparrow$9.3}} & 67.5{\tiny\textcolor{red}{$\uparrow$29.8}} & 55.2{\tiny\textcolor{red}{$\uparrow$23.2}} & 39.1{\tiny\textcolor{red}{$\uparrow$6.5}} & 44.4{\tiny\textcolor{red}{$\uparrow$8.7}} & 31.1{\tiny\textcolor{red}{$\uparrow$6.6}} & 44.0{\tiny\textcolor{red}{$\uparrow$9.8}} \\
			& \texttt{MacNet} & 90.8{\tiny\textcolor{red}{$\uparrow$19.8}} & 18.1{\tiny\textcolor{red}{$\uparrow$9.3}} & 67.5{\tiny\textcolor{red}{$\uparrow$29.8}} & 55.2{\tiny\textcolor{red}{$\uparrow$23.2}} & 39.1{\tiny\textcolor{red}{$\uparrow$6.5}} & 44.4{\tiny\textcolor{red}{$\uparrow$8.7}} & 31.1{\tiny\textcolor{red}{$\uparrow$6.6}} & 44.0{\tiny\textcolor{red}{$\uparrow$9.8}} \\
			& {\alg{} (ours)} & \cellcolor{blue!10}96.8{\tiny\textcolor{red}{$\uparrow$25.8}} & \cellcolor{blue!10}40.0{\tiny\textcolor{red}{$\uparrow$31.2}} & \cellcolor{blue!10}93.1{\tiny\textcolor{red}{$\uparrow$55.4}} & \cellcolor{blue!10}82.5{\tiny\textcolor{red}{$\uparrow$50.5}} & \cellcolor{blue!10}54.8{\tiny\textcolor{red}{$\uparrow$22.2}} & \cellcolor{blue!10}52.8{\tiny\textcolor{red}{$\uparrow$17.1}} & \cellcolor{blue!10}40.3{\tiny\textcolor{red}{$\uparrow$15.8}} & \cellcolor{blue!10}49.8{\tiny\textcolor{red}{$\uparrow$15.6}} \\
			\midrule
			\multirow{7}{*}{Llama-70B}
			& \texttt{Single} & 68.9{\tiny\textcolor{green}{$\downarrow$14.1}} & 31.6{\tiny\textcolor{green}{$\downarrow$3.2}} & 27.6{\tiny\textcolor{green}{$\downarrow$32.0}} & 35.5{\tiny\textcolor{green}{$\downarrow$12.5}} & 43.0{\tiny\textcolor{green}{$\downarrow$11.8}} & 37.3{\tiny\textcolor{green}{$\downarrow$13.7}} & 28.2{\tiny\textcolor{green}{$\downarrow$5.7}} & 45.9{\tiny\textcolor{green}{$\downarrow$6.8}} \\
			& \texttt{BoN} & 83.0{\tiny\textcolor{red}{$\uparrow$0.0}} & 34.8{\tiny\textcolor{red}{$\uparrow$0.0}} & 59.6{\tiny\textcolor{red}{$\uparrow$0.0}} & 48.0{\tiny\textcolor{red}{$\uparrow$0.0}} & 54.8{\tiny\textcolor{red}{$\uparrow$0.0}} & 51.0{\tiny\textcolor{red}{$\uparrow$0.0}} & 33.9{\tiny\textcolor{red}{$\uparrow$0.0}} & 52.7{\tiny\textcolor{red}{$\uparrow$0.0}} \\
			& \texttt{Debate} & 90.0{\tiny\textcolor{red}{$\uparrow$7.0}} & 36.6{\tiny\textcolor{red}{$\uparrow$1.8}} & 69.0{\tiny\textcolor{red}{$\uparrow$9.4}} & 54.0{\tiny\textcolor{red}{$\uparrow$6.0}} & 53.6{\tiny\textcolor{green}{$\downarrow$1.2}} & 53.5{\tiny\textcolor{red}{$\uparrow$2.5}} & 32.7{\tiny\textcolor{green}{$\downarrow$1.2}} & 53.2{\tiny\textcolor{red}{$\uparrow$0.5}} \\
			& \texttt{DyLAN} & 95.4{\tiny\textcolor{red}{$\uparrow$12.4}} & 37.6{\tiny\textcolor{red}{$\uparrow$2.8}} & 75.4{\tiny\textcolor{red}{$\uparrow$15.8}} & 58.9{\tiny\textcolor{red}{$\uparrow$10.9}} & 50.4{\tiny\textcolor{green}{$\downarrow$4.4}} & 47.6{\tiny\textcolor{green}{$\downarrow$3.4}} & 30.8{\tiny\textcolor{green}{$\downarrow$3.1}} & 50.4{\tiny\textcolor{green}{$\downarrow$2.3}} \\
			& \texttt{GPTSwarm} & 95.7{\tiny\textcolor{red}{$\uparrow$12.7}} & 37.9{\tiny\textcolor{red}{$\uparrow$3.1}} & 78.9{\tiny\textcolor{red}{$\uparrow$19.3}} & 58.1{\tiny\textcolor{red}{$\uparrow$10.1}} & 52.4{\tiny\textcolor{green}{$\downarrow$2.4}} & 49.3{\tiny\textcolor{green}{$\downarrow$1.7}} & 32.3{\tiny\textcolor{green}{$\downarrow$1.6}} & 51.3{\tiny\textcolor{green}{$\downarrow$1.4}} \\
			& \texttt{MacNet} & 95.3{\tiny\textcolor{red}{$\uparrow$12.3}} & 44.2{\tiny\textcolor{red}{$\uparrow$9.4}} & 84.2{\tiny\textcolor{red}{$\uparrow$24.6}} & 57.9{\tiny\textcolor{red}{$\uparrow$9.9}} & 67.9{\tiny\textcolor{red}{$\uparrow$13.1}} & 81.6{\tiny\textcolor{red}{$\uparrow$30.6}} & 49.0{\tiny\textcolor{red}{$\uparrow$15.1}} & 75.4{\tiny\textcolor{red}{$\uparrow$22.7}} \\
			& {\alg{} (ours)} & \cellcolor{blue!10}96.3{\tiny\textcolor{red}{$\uparrow$13.3}} & \cellcolor{blue!10}58.0{\tiny\textcolor{red}{$\uparrow$23.2}} & \cellcolor{blue!10}96.9{\tiny\textcolor{red}{$\uparrow$37.3}} & \cellcolor{blue!10}92.0{\tiny\textcolor{red}{$\uparrow$44.0}} & \cellcolor{blue!10}90.8{\tiny\textcolor{red}{$\uparrow$36.0}} & \cellcolor{blue!10}88.9{\tiny\textcolor{red}{$\uparrow$37.9}} & \cellcolor{blue!10}68.2{\tiny\textcolor{red}{$\uparrow$34.3}} & \cellcolor{blue!10}82.2{\tiny\textcolor{red}{$\uparrow$29.5}} \\
			\midrule
			\multirow{7}{*}{Qwen-7B}
			& \texttt{Single} & 61.2{\tiny\textcolor{green}{$\downarrow$13.0}} & 12.9{\tiny\textcolor{green}{$\downarrow$12.0}} & 20.2{\tiny\textcolor{green}{$\downarrow$16.2}} & 51.2{\tiny\textcolor{green}{$\downarrow$11.3}} & 23.0{\tiny\textcolor{green}{$\downarrow$11.5}} & 24.4{\tiny\textcolor{green}{$\downarrow$14.2}} & 14.8{\tiny\textcolor{green}{$\downarrow$8.2}} & 20.0{\tiny\textcolor{green}{$\downarrow$7.8}} \\
			& \texttt{BoN} & 74.2{\tiny\textcolor{red}{$\uparrow$0.0}} & 24.9{\tiny\textcolor{red}{$\uparrow$0.0}} & 36.4{\tiny\textcolor{red}{$\uparrow$0.0}} & 62.5{\tiny\textcolor{red}{$\uparrow$0.0}} & 34.5{\tiny\textcolor{red}{$\uparrow$0.0}} & 38.6{\tiny\textcolor{red}{$\uparrow$0.0}} & 23.0{\tiny\textcolor{red}{$\uparrow$0.0}} & 27.8{\tiny\textcolor{red}{$\uparrow$0.0}} \\
			& \texttt{Debate} & 78.1{\tiny\textcolor{red}{$\uparrow$3.9}} & 28.9{\tiny\textcolor{red}{$\uparrow$4.0}} & 41.7{\tiny\textcolor{red}{$\uparrow$5.3}} & 66.6{\tiny\textcolor{red}{$\uparrow$4.1}} & 32.3{\tiny\textcolor{green}{$\downarrow$2.2}} & 34.1{\tiny\textcolor{green}{$\downarrow$4.5}} & 23.5{\tiny\textcolor{red}{$\uparrow$0.5}} & 26.0{\tiny\textcolor{green}{$\downarrow$1.8}} \\
			& \texttt{DyLAN} & 79.8{\tiny\textcolor{red}{$\uparrow$5.6}} & 30.7{\tiny\textcolor{red}{$\uparrow$5.8}} & 43.5{\tiny\textcolor{red}{$\uparrow$7.1}} & 67.5{\tiny\textcolor{red}{$\uparrow$5.0}} & 30.7{\tiny\textcolor{green}{$\downarrow$3.8}} & 30.0{\tiny\textcolor{green}{$\downarrow$8.6}} & 19.8{\tiny\textcolor{green}{$\downarrow$3.2}} & 23.7{\tiny\textcolor{green}{$\downarrow$4.1}} \\
			& \texttt{GPTSwarm} & 80.9{\tiny\textcolor{red}{$\uparrow$6.7}} & 29.9{\tiny\textcolor{red}{$\uparrow$5.0}} & 43.1{\tiny\textcolor{red}{$\uparrow$6.7}} & 68.0{\tiny\textcolor{red}{$\uparrow$5.5}} & 31.8{\tiny\textcolor{green}{$\downarrow$2.7}} & 32.7{\tiny\textcolor{green}{$\downarrow$5.9}} & 23.2{\tiny\textcolor{red}{$\uparrow$0.2}} & 25.3{\tiny\textcolor{green}{$\downarrow$2.5}} \\
			& \texttt{MacNet} & 85.1{\tiny\textcolor{red}{$\uparrow$10.9}} & 34.7{\tiny\textcolor{red}{$\uparrow$9.8}} & 50.4{\tiny\textcolor{red}{$\uparrow$14.0}} & 79.8{\tiny\textcolor{red}{$\uparrow$17.3}} & 41.6{\tiny\textcolor{red}{$\uparrow$7.1}} & 46.7{\tiny\textcolor{red}{$\uparrow$8.1}} & 28.4{\tiny\textcolor{red}{$\uparrow$5.4}} & 37.1{\tiny\textcolor{red}{$\uparrow$9.3}} \\
			& {\alg{} (ours)} & \cellcolor{blue!10}97.5{\tiny\textcolor{red}{$\uparrow$23.3}} & \cellcolor{blue!10}50.7{\tiny\textcolor{red}{$\uparrow$25.8}} & \cellcolor{blue!10}80.6{\tiny\textcolor{red}{$\uparrow$44.2}} & \cellcolor{blue!10}90.2{\tiny\textcolor{red}{$\uparrow$27.7}} & \cellcolor{blue!10}46.8{\tiny\textcolor{red}{$\uparrow$12.3}} & \cellcolor{blue!10}53.3{\tiny\textcolor{red}{$\uparrow$14.7}} & \cellcolor{blue!10}42.0{\tiny\textcolor{red}{$\uparrow$19.0}} & \cellcolor{blue!10}42.2{\tiny\textcolor{red}{$\uparrow$14.4}} \\
			\midrule
			\multirow{7}{*}{Qwen-14B}
			& \texttt{Single} & 67.2{\tiny\textcolor{green}{$\downarrow$12.5}} & 21.6{\tiny\textcolor{green}{$\downarrow$6.2}} & 21.2{\tiny\textcolor{green}{$\downarrow$11.6}} & 56.7{\tiny\textcolor{green}{$\downarrow$12.7}} & 28.9{\tiny\textcolor{green}{$\downarrow$9.5}} & 29.9{\tiny\textcolor{green}{$\downarrow$15.7}} & 17.8{\tiny\textcolor{green}{$\downarrow$9.8}} & 24.7{\tiny\textcolor{green}{$\downarrow$4.8}} \\
			& \texttt{BoN} & 79.7{\tiny\textcolor{red}{$\uparrow$0.0}} & 27.8{\tiny\textcolor{red}{$\uparrow$0.0}} & 32.8{\tiny\textcolor{red}{$\uparrow$0.0}} & 69.4{\tiny\textcolor{red}{$\uparrow$0.0}} & 38.4{\tiny\textcolor{red}{$\uparrow$0.0}} & 45.6{\tiny\textcolor{red}{$\uparrow$0.0}} & 27.6{\tiny\textcolor{red}{$\uparrow$0.0}} & 29.5{\tiny\textcolor{red}{$\uparrow$0.0}} \\
			& \texttt{Debate} & 86.6{\tiny\textcolor{red}{$\uparrow$6.9}} & 31.1{\tiny\textcolor{red}{$\uparrow$3.3}} & 39.9{\tiny\textcolor{red}{$\uparrow$7.1}} & 75.8{\tiny\textcolor{red}{$\uparrow$6.4}} & 38.8{\tiny\textcolor{red}{$\uparrow$0.4}} & 42.7{\tiny\textcolor{green}{$\downarrow$2.9}} & 29.0{\tiny\textcolor{red}{$\uparrow$1.4}} & 31.7{\tiny\textcolor{red}{$\uparrow$2.2}} \\
			& \texttt{DyLAN} & 90.2{\tiny\textcolor{red}{$\uparrow$10.5}} & 33.6{\tiny\textcolor{red}{$\uparrow$5.8}} & 42.8{\tiny\textcolor{red}{$\uparrow$10.0}} & 80.1{\tiny\textcolor{red}{$\uparrow$10.7}} & 36.2{\tiny\textcolor{green}{$\downarrow$2.2}} & 37.0{\tiny\textcolor{green}{$\downarrow$8.6}} & 24.4{\tiny\textcolor{green}{$\downarrow$3.2}} & 28.7{\tiny\textcolor{green}{$\downarrow$0.8}} \\
			& \texttt{GPTSwarm} & 90.2{\tiny\textcolor{red}{$\uparrow$10.5}} & 33.0{\tiny\textcolor{red}{$\uparrow$5.2}} & 43.2{\tiny\textcolor{red}{$\uparrow$10.4}} & 80.4{\tiny\textcolor{red}{$\uparrow$11.0}} & 35.9{\tiny\textcolor{green}{$\downarrow$2.5}} & 36.0{\tiny\textcolor{green}{$\downarrow$9.6}} & 26.5{\tiny\textcolor{green}{$\downarrow$1.1}} & 29.0{\tiny\textcolor{green}{$\downarrow$0.5}} \\
			& \texttt{MacNet} & 93.7{\tiny\textcolor{red}{$\uparrow$14.0}} & 37.7{\tiny\textcolor{red}{$\uparrow$9.9}} & 46.7{\tiny\textcolor{red}{$\uparrow$13.9}} & 91.2{\tiny\textcolor{red}{$\uparrow$21.8}} & 47.5{\tiny\textcolor{red}{$\uparrow$9.1}} & 56.8{\tiny\textcolor{red}{$\uparrow$11.2}} & 42.7{\tiny\textcolor{red}{$\uparrow$15.1}} & 38.4{\tiny\textcolor{red}{$\uparrow$8.9}} \\
			& {\alg{} (ours)} & \cellcolor{blue!10}99.1{\tiny\textcolor{red}{$\uparrow$19.4}} & \cellcolor{blue!10}61.9{\tiny\textcolor{red}{$\uparrow$34.1}} & \cellcolor{blue!10}80.4{\tiny\textcolor{red}{$\uparrow$47.6}} & \cellcolor{blue!10}98.7{\tiny\textcolor{red}{$\uparrow$29.3}} & \cellcolor{blue!10}56.4{\tiny\textcolor{red}{$\uparrow$18.0}} & \cellcolor{blue!10}65.8{\tiny\textcolor{red}{$\uparrow$20.2}} & \cellcolor{blue!10}51.8{\tiny\textcolor{red}{$\uparrow$24.2}} & \cellcolor{blue!10}50.9{\tiny\textcolor{red}{$\uparrow$21.4}} \\
			\midrule
			\multirow{7}{*}{Qwen-72B}
			& \texttt{Single} & 72.5{\tiny\textcolor{green}{$\downarrow$11.7}} & 31.6{\tiny\textcolor{green}{$\downarrow$19.4}} & 34.9{\tiny\textcolor{green}{$\downarrow$11.0}} & 59.1{\tiny\textcolor{green}{$\downarrow$13.1}} & 46.7{\tiny\textcolor{green}{$\downarrow$13.4}} & 48.3{\tiny\textcolor{green}{$\downarrow$17.7}} & 29.8{\tiny\textcolor{green}{$\downarrow$15.1}} & 40.4{\tiny\textcolor{green}{$\downarrow$14.0}} \\
			& \texttt{BoN} & 84.2{\tiny\textcolor{red}{$\uparrow$0.0}} & 51.0{\tiny\textcolor{red}{$\uparrow$0.0}} & 45.9{\tiny\textcolor{red}{$\uparrow$0.0}} & 72.2{\tiny\textcolor{red}{$\uparrow$0.0}} & 60.1{\tiny\textcolor{red}{$\uparrow$0.0}} & 66.0{\tiny\textcolor{red}{$\uparrow$0.0}} & 44.9{\tiny\textcolor{red}{$\uparrow$0.0}} & 54.4{\tiny\textcolor{red}{$\uparrow$0.0}} \\
			& \texttt{Debate} & 91.2{\tiny\textcolor{red}{$\uparrow$7.0}} & 62.3{\tiny\textcolor{red}{$\uparrow$11.3}} & 52.4{\tiny\textcolor{red}{$\uparrow$6.5}} & 78.9{\tiny\textcolor{red}{$\uparrow$6.7}} & 62.5{\tiny\textcolor{red}{$\uparrow$2.4}} & 69.1{\tiny\textcolor{red}{$\uparrow$3.1}} & 47.6{\tiny\textcolor{red}{$\uparrow$2.7}} & 50.5{\tiny\textcolor{green}{$\downarrow$3.9}} \\
			& \texttt{DyLAN} & 95.1{\tiny\textcolor{red}{$\uparrow$10.9}} & 66.8{\tiny\textcolor{red}{$\uparrow$15.8}} & 55.7{\tiny\textcolor{red}{$\uparrow$9.8}} & 83.9{\tiny\textcolor{red}{$\uparrow$11.7}} & 56.9{\tiny\textcolor{green}{$\downarrow$3.2}} & 59.9{\tiny\textcolor{green}{$\downarrow$6.1}} & 43.7{\tiny\textcolor{green}{$\downarrow$1.2}} & 47.7{\tiny\textcolor{green}{$\downarrow$6.7}} \\
			& \texttt{GPTSwarm} & 94.3{\tiny\textcolor{red}{$\uparrow$10.1}} & 68.8{\tiny\textcolor{red}{$\uparrow$17.8}} & 55.2{\tiny\textcolor{red}{$\uparrow$9.3}} & 84.0{\tiny\textcolor{red}{$\uparrow$11.8}} & 59.6{\tiny\textcolor{green}{$\downarrow$0.5}} & 63.2{\tiny\textcolor{green}{$\downarrow$2.8}} & 43.0{\tiny\textcolor{green}{$\downarrow$1.9}} & 52.5{\tiny\textcolor{green}{$\downarrow$1.9}} \\
			& \texttt{MacNet} & 98.0{\tiny\textcolor{red}{$\uparrow$13.8}} & 73.9{\tiny\textcolor{red}{$\uparrow$22.9}} & 63.2{\tiny\textcolor{red}{$\uparrow$17.3}} & 86.7{\tiny\textcolor{red}{$\uparrow$14.5}} & 74.4{\tiny\textcolor{red}{$\uparrow$14.3}} & 85.1{\tiny\textcolor{red}{$\uparrow$19.1}} & 59.5{\tiny\textcolor{red}{$\uparrow$14.6}} & 66.6{\tiny\textcolor{red}{$\uparrow$12.2}} \\
			& {\alg{} (ours)} & \cellcolor{blue!10}99.7{\tiny\textcolor{red}{$\uparrow$15.5}} & \cellcolor{blue!10}79.7{\tiny\textcolor{red}{$\uparrow$28.7}} & \cellcolor{blue!10}86.0{\tiny\textcolor{red}{$\uparrow$40.1}} & \cellcolor{blue!10}97.5{\tiny\textcolor{red}{$\uparrow$25.3}} & \cellcolor{blue!10}92.9{\tiny\textcolor{red}{$\uparrow$32.8}} & \cellcolor{blue!10}95.9{\tiny\textcolor{red}{$\uparrow$29.9}} & \cellcolor{blue!10}84.2{\tiny\textcolor{red}{$\uparrow$39.3}} & \cellcolor{blue!10}83.9{\tiny\textcolor{red}{$\uparrow$29.5}} \\
			\bottomrule
		\end{tabular}
	}
	\label{tab:number_64}
\end{table*}

\begin{figure*}[h]
	\centering
	\includegraphics[width=\linewidth]{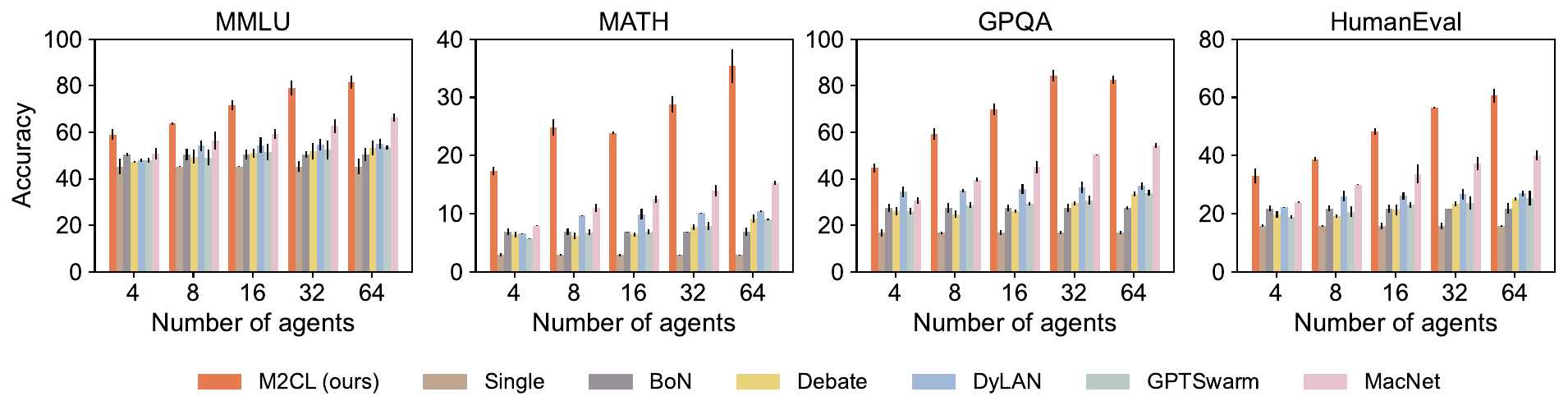}
	\includegraphics[width=\linewidth]{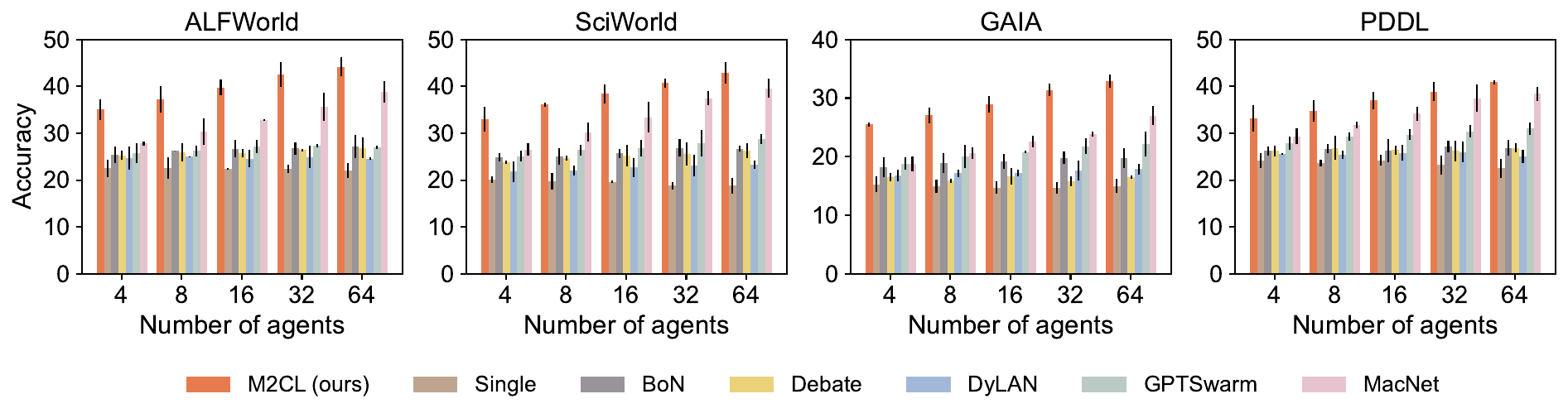}
	\caption{Performance of llama-7b as the base model with varying number of LLMs. Uncertainty intervals depict standard deviation over three seeds. \alg{} exhibits higher performance and increasing tendency with more LLMs, demonstrating its great collaboration efficiency compared to existing methods. Of note, academic and agentic tasks reasoning are challenging because they require more diverse thinking perspectives and more rigorous analysis. The outperformance of \alg{} reveals its capability of enabling LLMs to collaborate in changing discussion state.}
	\label{fig:number_agents_1}
\end{figure*}
\begin{figure*}[h]
	\centering
	\includegraphics[width=\linewidth]{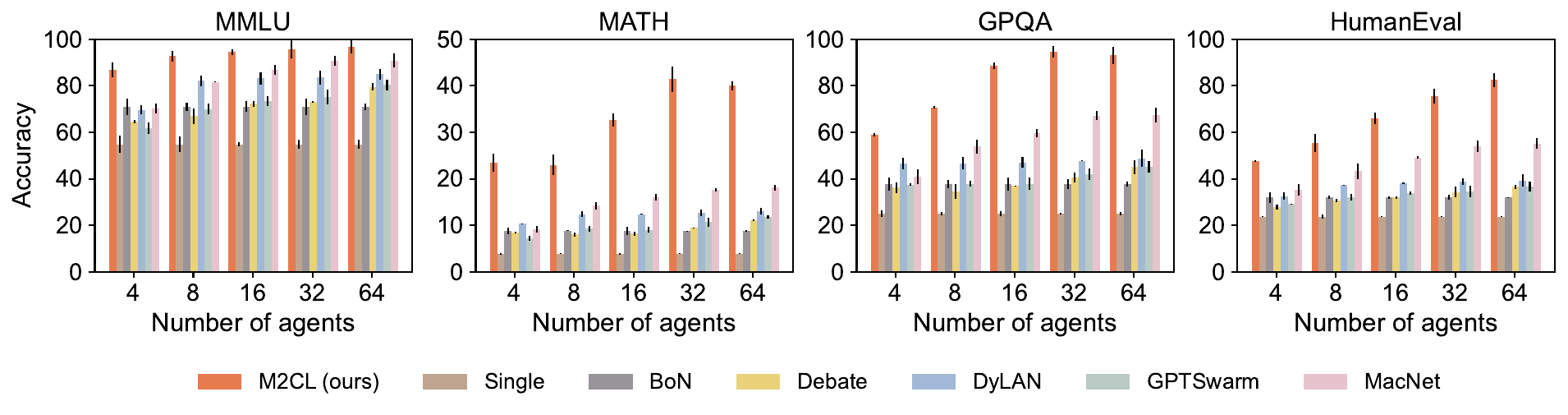}
	\includegraphics[width=\linewidth]{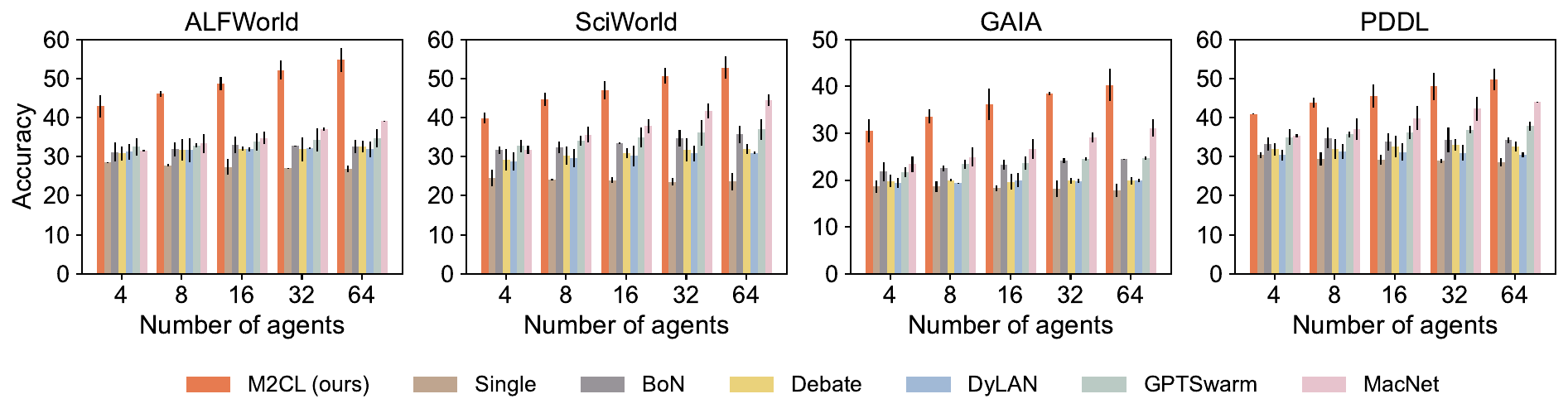}
	\caption{Performance of llama-13b as the base model with varying number of LLMs. Uncertainty intervals depict standard deviation over three seeds. \alg{} exhibits higher performance and increasing tendency with more LLMs, demonstrating its great collaboration efficiency compared to existing methods.}
	\label{fig:number_agents_2}
\end{figure*}

\begin{figure*}[h]
	\centering
	\includegraphics[width=\linewidth]{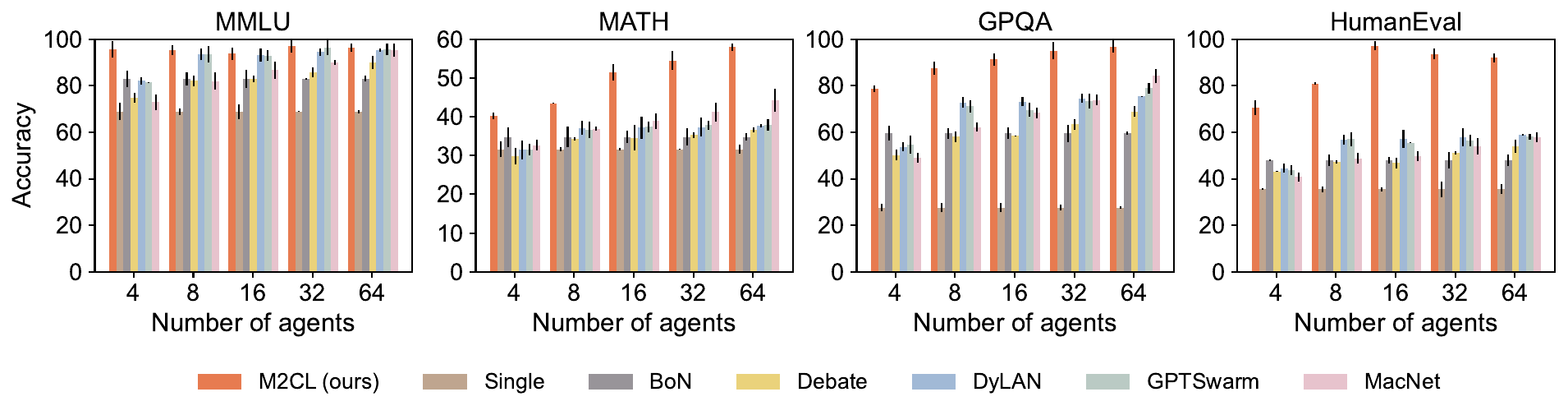}
	\includegraphics[width=\linewidth]{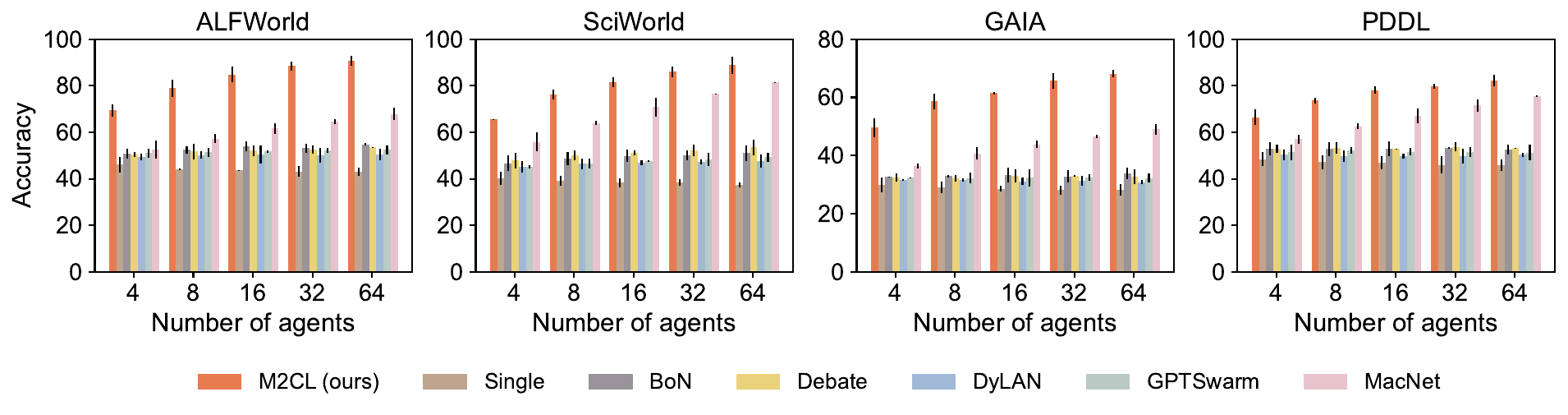}
	\caption{Performance as llama-70b as the base model with varying number of LLMs. Uncertainty intervals depict standard deviation over three seeds. \alg{} exhibits higher performance and increasing tendency with more LLMs, demonstrating its great collaboration efficiency compared to existing methods.}
	\label{fig:number_agents_3}
\end{figure*}

\begin{figure*}[h]
	\centering
	\includegraphics[width=\linewidth]{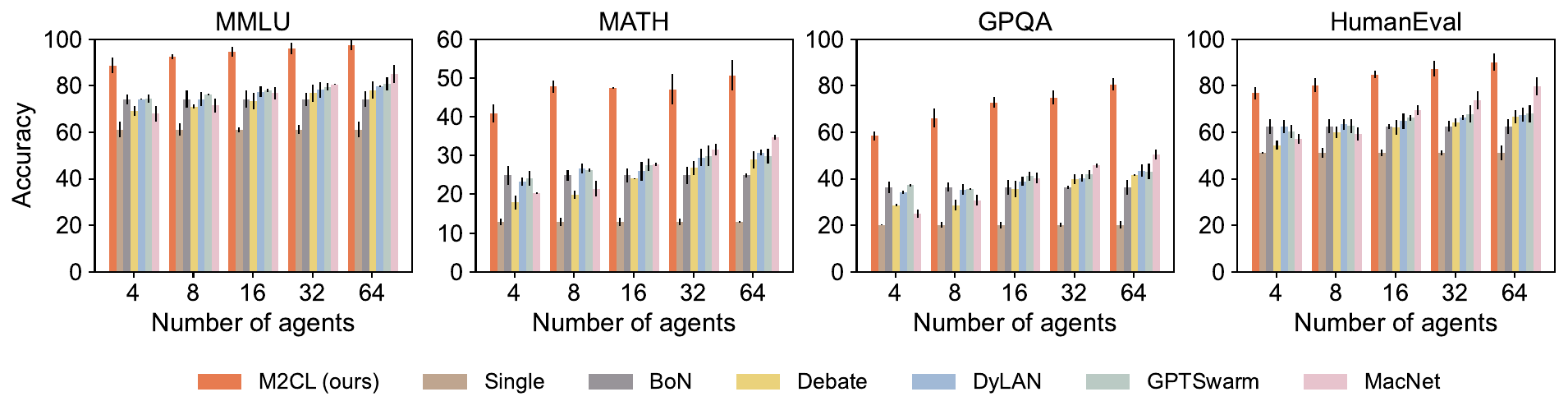}
	\includegraphics[width=\linewidth]{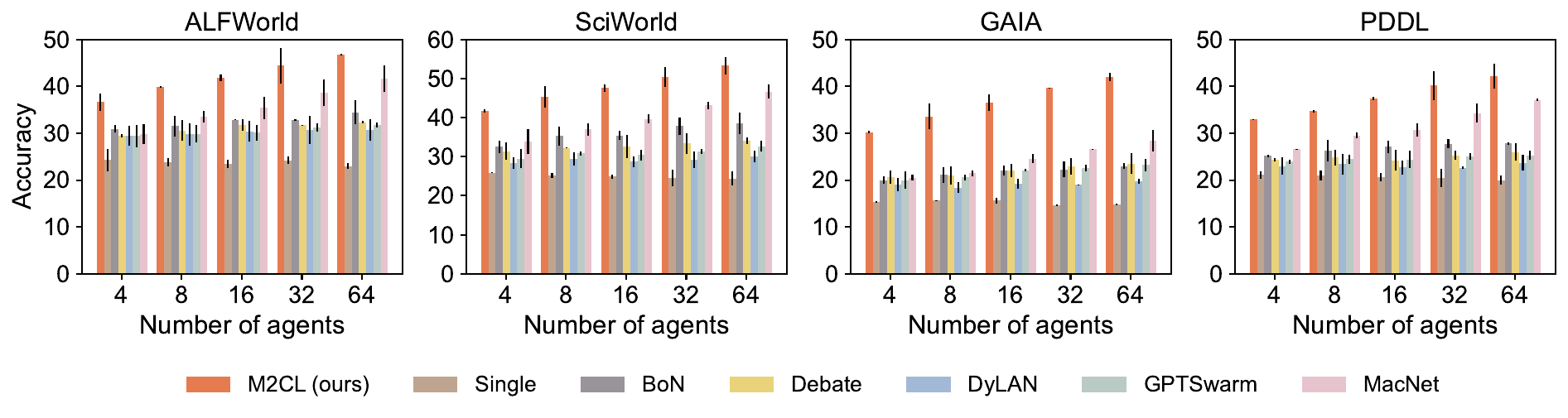}
	\caption{Performance of Qwen-7b as the base model with varying number of LLMs. Uncertainty intervals depict standard deviation over three seeds. \alg{} exhibits higher performance and increasing tendency with more LLMs, demonstrating its great collaboration efficiency compared to existing methods.}
	\label{fig:number_agents_4}
\end{figure*}
\begin{figure*}[h]
	\centering
	\includegraphics[width=\linewidth]{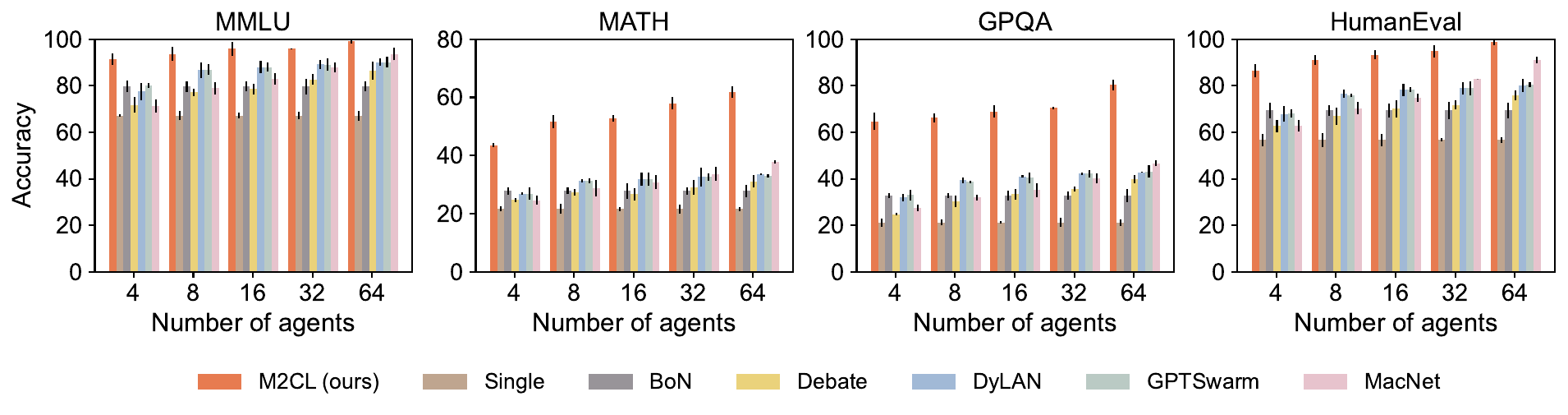}
	\includegraphics[width=\linewidth]{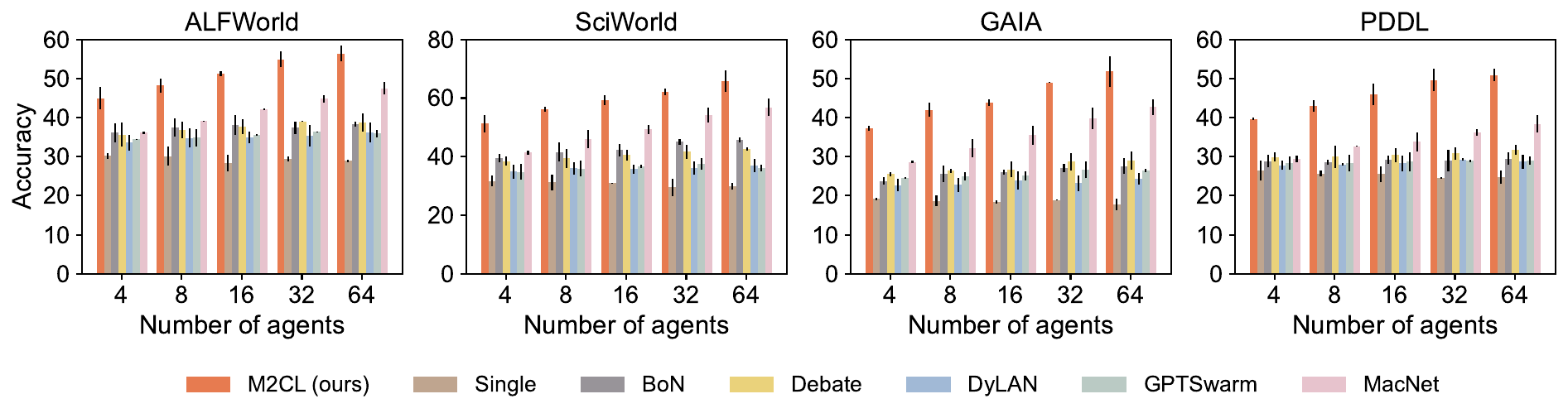}
	\caption{Performance of Qwen-14b as the base model with varying number of LLMs. Uncertainty intervals depict standard deviation over three seeds. \alg{} exhibits higher performance and increasing tendency with more LLMs, demonstrating its great collaboration efficiency compared to existing methods.}
	\label{fig:number_agents_5}
\end{figure*}

\begin{figure*}[h]
	\centering
	\includegraphics[width=\linewidth]{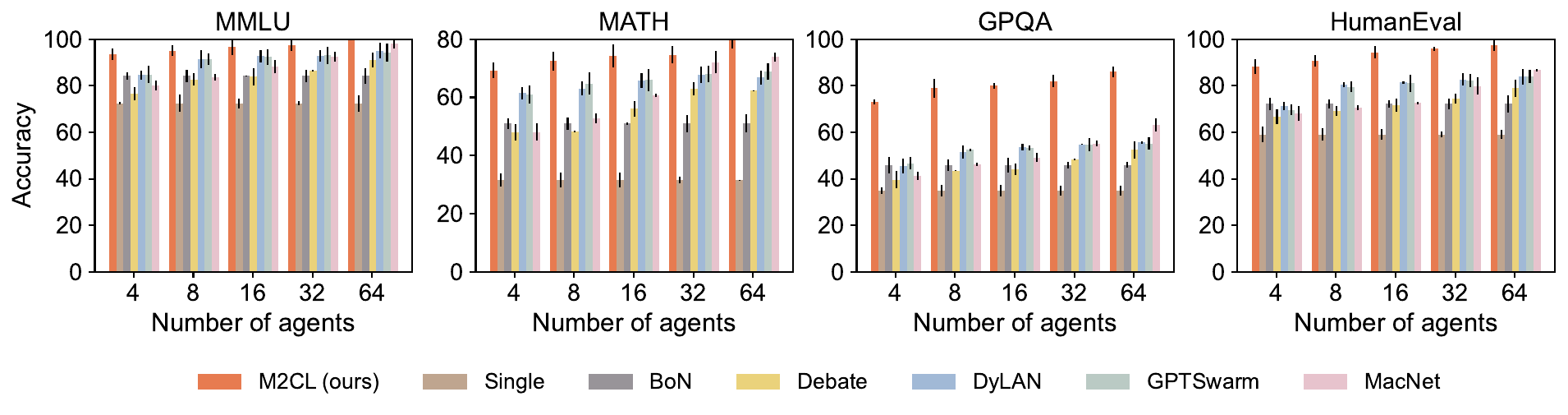}
	\includegraphics[width=\linewidth]{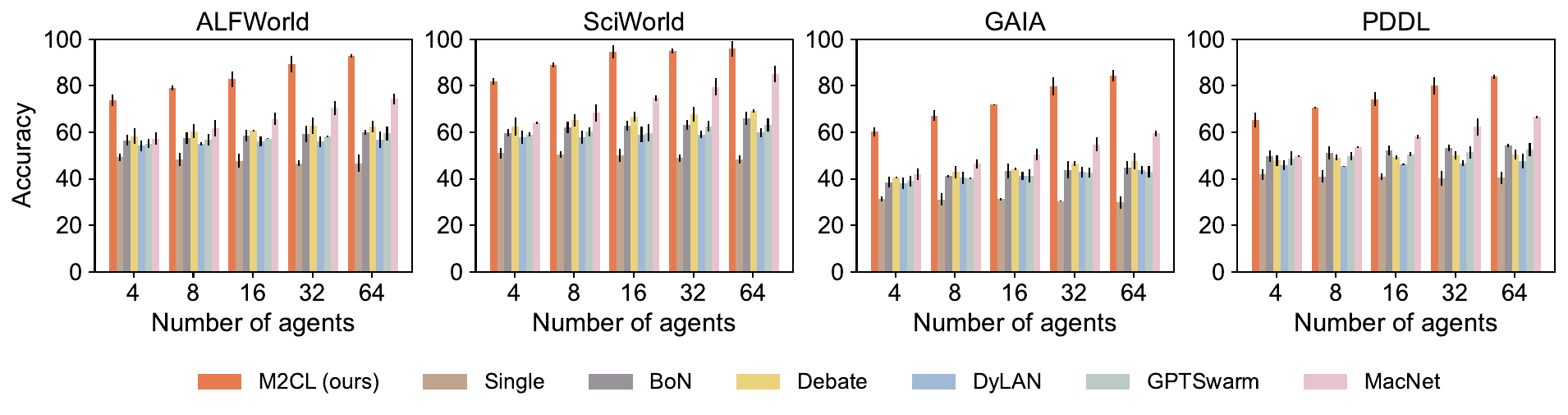}
	\caption{Performance of Qwen-72b as the base model with varying number of LLMs. Uncertainty intervals depict standard deviation over three seeds. \alg{} exhibits higher performance and increasing tendency with more LLMs, demonstrating its great collaboration efficiency compared to existing methods.}
	\label{fig:number_agents_6}
\end{figure*}

\clearpage
\subsection{GUI agent}
\label{sec:agent}
We evaluate \alg{}'s performance on more challenging AndroidWorld, which requires GUI identification, long-horizon planning, and accurate action execution capability. Comparative results are shown in \cref{tab:androidworld,fig:androidworld}.

We observe that \alg{} consistently outperforms existing baselines across different model scales up to $50\%$, with performance gains becoming more pronounced as the number of participating LLMs increases. The superior scalability of \alg{} in this setting highlights its ability to exploit diverse responses and maintain consistent under complex, real-world style interactions.
\begin{table*}[h]
	\centering
	\vskip 0.1in
	\resizebox{\textwidth}{!}{
		\begin{tabular}{YYYYYYYYY}
			\toprule
			\textbf{N} & \textbf{Model} & \texttt{Single} & \texttt{BoN} & \texttt{Debate} & \texttt{DyLAN} & \texttt{GPTSwarm} & \texttt{MacNet} & \alg{} \\
			\midrule
			\multirow{2}{*}{N=4}
			& 3B & 10.9{\tiny\textcolor{green}{$\downarrow$6.1}} & 17.0{\tiny\textcolor{red}{$\uparrow$0.0}} & 12.5{\tiny\textcolor{green}{$\downarrow$4.5}} & 13.0{\tiny\textcolor{green}{$\downarrow$4.0}} & 15.0{\tiny\textcolor{green}{$\downarrow$2.0}} & 16.8{\tiny\textcolor{green}{$\downarrow$0.2}} & \cellcolor{blue!10}25.5{\tiny\textcolor{red}{$\uparrow$8.5}} \\
			& 7B & 27.0{\tiny\textcolor{green}{$\downarrow$11.0}} & 38.0{\tiny\textcolor{red}{$\uparrow$0.0}} & 30.5{\tiny\textcolor{green}{$\downarrow$7.5}} & 31.0{\tiny\textcolor{green}{$\downarrow$7.0}} & 37.0{\tiny\textcolor{green}{$\downarrow$1.0}} & 39.6{\tiny\textcolor{red}{$\uparrow$1.6}} & \cellcolor{blue!10}44.9{\tiny\textcolor{red}{$\uparrow$6.9}} \\
			\midrule
			\multirow{2}{*}{N=8}
			& 3B & 10.9{\tiny\textcolor{green}{$\downarrow$6.1}} & 17.0{\tiny\textcolor{red}{$\uparrow$0.0}} & 15.7{\tiny\textcolor{green}{$\downarrow$1.3}} & 15.2{\tiny\textcolor{green}{$\downarrow$1.8}} & 15.6{\tiny\textcolor{green}{$\downarrow$1.4}} & 19.7{\tiny\textcolor{red}{$\uparrow$2.7}} & \cellcolor{blue!10}32.0{\tiny\textcolor{red}{$\uparrow$15.0}} \\
			& 7B & 27.0{\tiny\textcolor{green}{$\downarrow$11.0}} & 38.0{\tiny\textcolor{red}{$\uparrow$0.0}} & 33.0{\tiny\textcolor{green}{$\downarrow$5.0}} & 34.5{\tiny\textcolor{green}{$\downarrow$3.5}} & 38.0{\tiny\textcolor{red}{$\uparrow$0.0}} & 44.0{\tiny\textcolor{red}{$\uparrow$6.0}} & \cellcolor{blue!10}50.0{\tiny\textcolor{red}{$\uparrow$12.0}} \\
			\midrule
			\multirow{2}{*}{N=16}
			& 3B & 10.9{\tiny\textcolor{green}{$\downarrow$6.1}} & 17.0{\tiny\textcolor{red}{$\uparrow$0.0}} & 16.9{\tiny\textcolor{green}{$\downarrow$0.1}} & 18.4{\tiny\textcolor{red}{$\uparrow$1.4}} & 18.1{\tiny\textcolor{red}{$\uparrow$1.1}} & 22.6{\tiny\textcolor{red}{$\uparrow$5.6}} & \cellcolor{blue!10}38.0{\tiny\textcolor{red}{$\uparrow$21.0}} \\
			& 7B & 27.0{\tiny\textcolor{green}{$\downarrow$11.0}} & 38.0{\tiny\textcolor{red}{$\uparrow$0.0}} & 37.3{\tiny\textcolor{green}{$\downarrow$0.7}} & 37.8{\tiny\textcolor{green}{$\downarrow$0.2}} & 42.5{\tiny\textcolor{red}{$\uparrow$4.5}} & 47.9{\tiny\textcolor{red}{$\uparrow$9.9}} & \cellcolor{blue!10}55.0{\tiny\textcolor{red}{$\uparrow$17.0}} \\
			\midrule
			\multirow{2}{*}{N=32}
			& 3B & 10.9{\tiny\textcolor{green}{$\downarrow$6.1}} & 17.0{\tiny\textcolor{red}{$\uparrow$0.0}} & 18.0{\tiny\textcolor{red}{$\uparrow$1.0}} & 21.5{\tiny\textcolor{red}{$\uparrow$4.5}} & 20.6{\tiny\textcolor{red}{$\uparrow$3.6}} & 25.2{\tiny\textcolor{red}{$\uparrow$8.2}} & \cellcolor{blue!10}42.0{\tiny\textcolor{red}{$\uparrow$25.0}} \\
			& 7B & 27.0{\tiny\textcolor{green}{$\downarrow$11.0}} & 38.0{\tiny\textcolor{red}{$\uparrow$0.0}} & 41.5{\tiny\textcolor{red}{$\uparrow$3.5}} & 40.0{\tiny\textcolor{red}{$\uparrow$2.0}} & 45.0{\tiny\textcolor{red}{$\uparrow$7.0}} & 52.8{\tiny\textcolor{red}{$\uparrow$14.8}} & \cellcolor{blue!10}59.0{\tiny\textcolor{red}{$\uparrow$21.0}} \\
			\midrule
			\multirow{2}{*}{N=64}
			& 3B & 10.9{\tiny\textcolor{green}{$\downarrow$6.1}} & 17.0{\tiny\textcolor{red}{$\uparrow$0.0}} & 20.1{\tiny\textcolor{red}{$\uparrow$3.1}} & 23.6{\tiny\textcolor{red}{$\uparrow$6.6}} & 24.2{\tiny\textcolor{red}{$\uparrow$7.2}} & 28.6{\tiny\textcolor{red}{$\uparrow$11.6}} & \cellcolor{blue!10}45.0{\tiny\textcolor{red}{$\uparrow$28.0}} \\
			& 7B & 27.0{\tiny\textcolor{green}{$\downarrow$11.0}} & 38.0{\tiny\textcolor{red}{$\uparrow$0.0}} & 41.7{\tiny\textcolor{red}{$\uparrow$3.7}} & 42.2{\tiny\textcolor{red}{$\uparrow$4.2}} & 49.3{\tiny\textcolor{red}{$\uparrow$11.3}} & 55.2{\tiny\textcolor{red}{$\uparrow$17.2}} & \cellcolor{blue!10}62.0{\tiny\textcolor{red}{$\uparrow$24.0}} \\
			\bottomrule
		\end{tabular}
	}
	\caption{Accuracy with varying number of LLMs from $4$ to $64$ on AndroidWorld. We exhibit the performance advantage with \texttt{BoN} and highlight the \colorbox{blue!10}{best} result.}
	\label{tab:androidworld}
\end{table*}

\begin{figure}[ht]
	\centering
	\includegraphics[width=0.7\linewidth]{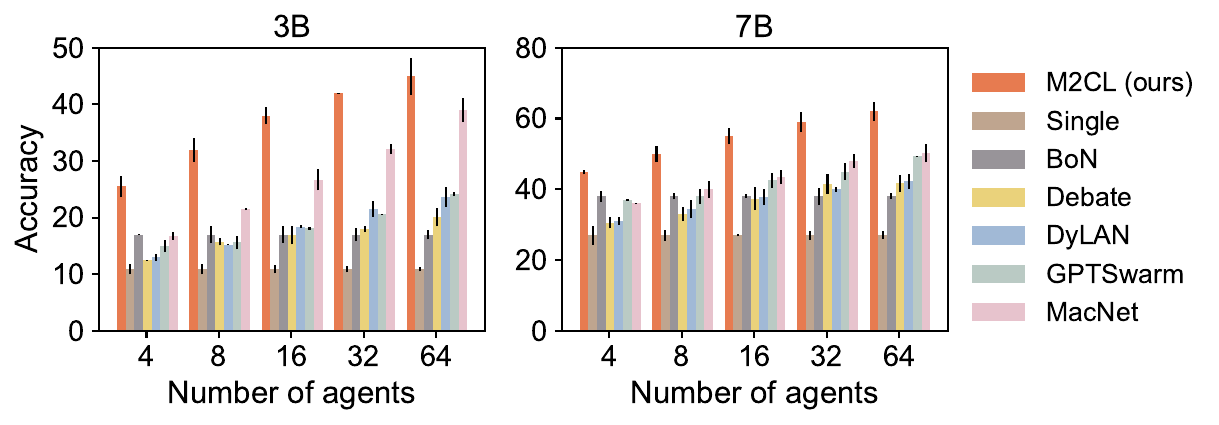}
	\caption{Performance of Qwen2.5-VL (3B and 7B) as the base model with varying number of LLMs. \alg{} exhibits higher performance and increasing tendency with more LLMs, demonstrating its great collaboration efficiency compared to existing methods. Uncertainty intervals depict standard deviation over three seeds.}
	\label{fig:androidworld}
\end{figure}

\clearpage
\subsection{Context Constraint}
To assess the effect of context constraint, we carry out experiments by varying the context constraint $\beta$ from 0 to 10. As illustrated in \cref{fig:context_constraint_1,fig:context_constraint_2,fig:context_constraint_3,fig:context_constraint_4,fig:context_constraint_5}, a larger value of $\beta$ results in a high degree of consistency among LLMs, leading them to produce similar answers. Conversely, a smaller value of $\beta$ is associated with reduced collaboration among LLMs. Therefore, it is important to adjust $\beta$ to control the degree of consistency among LLMs for better collaboration.
\label{sec:context_constraint}
\begin{figure*}[h]
	\centering
	\includegraphics[width=0.99\linewidth]{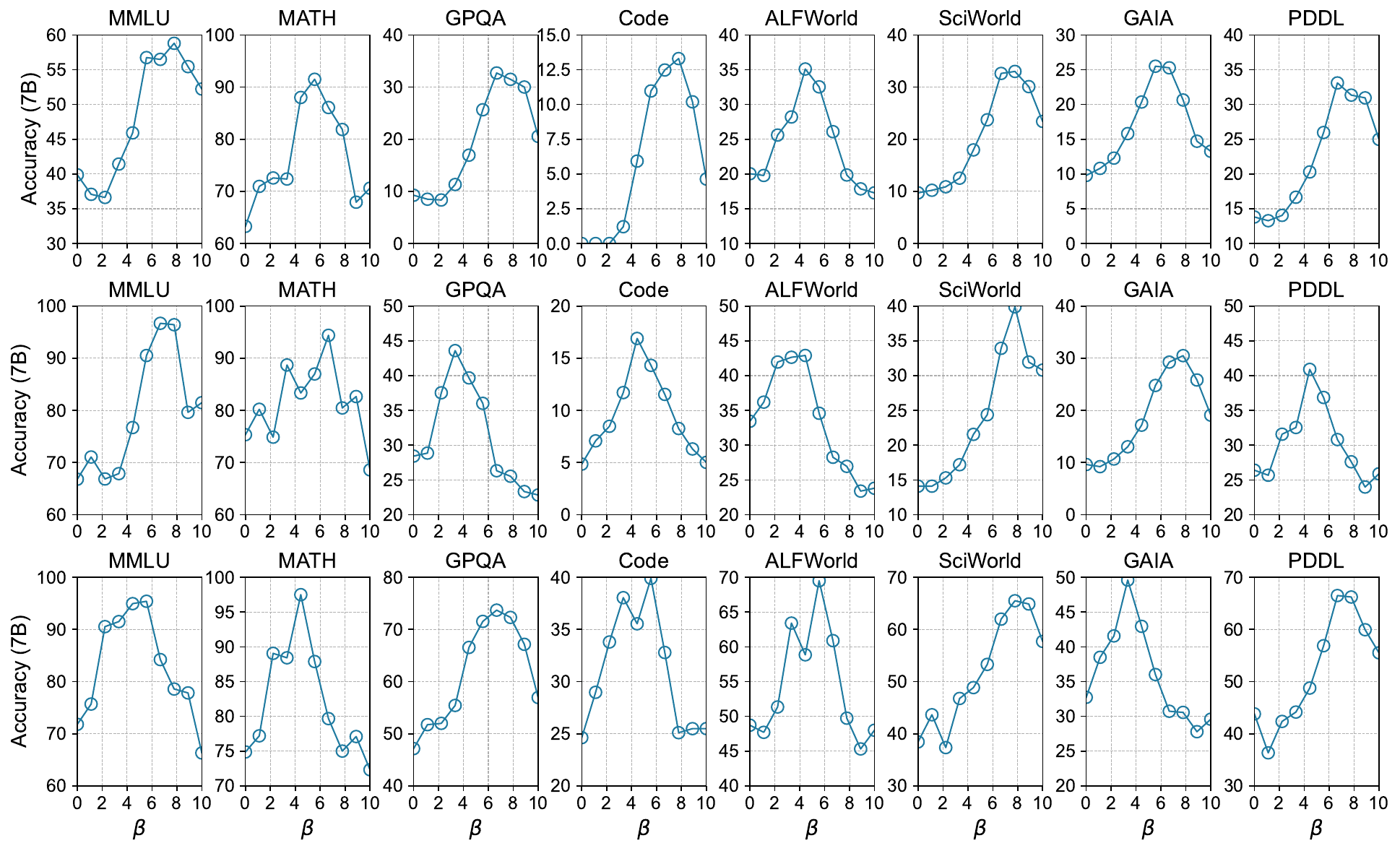}
	\caption{Performance with varying context constraint when 4 LLMs participate. All the curves display the same trend of rising first and then falling, which is consistent with our theory.}
	\label{fig:context_constraint_1}
\end{figure*}
\begin{figure*}[h]
	\centering
	\includegraphics[width=0.99\linewidth]{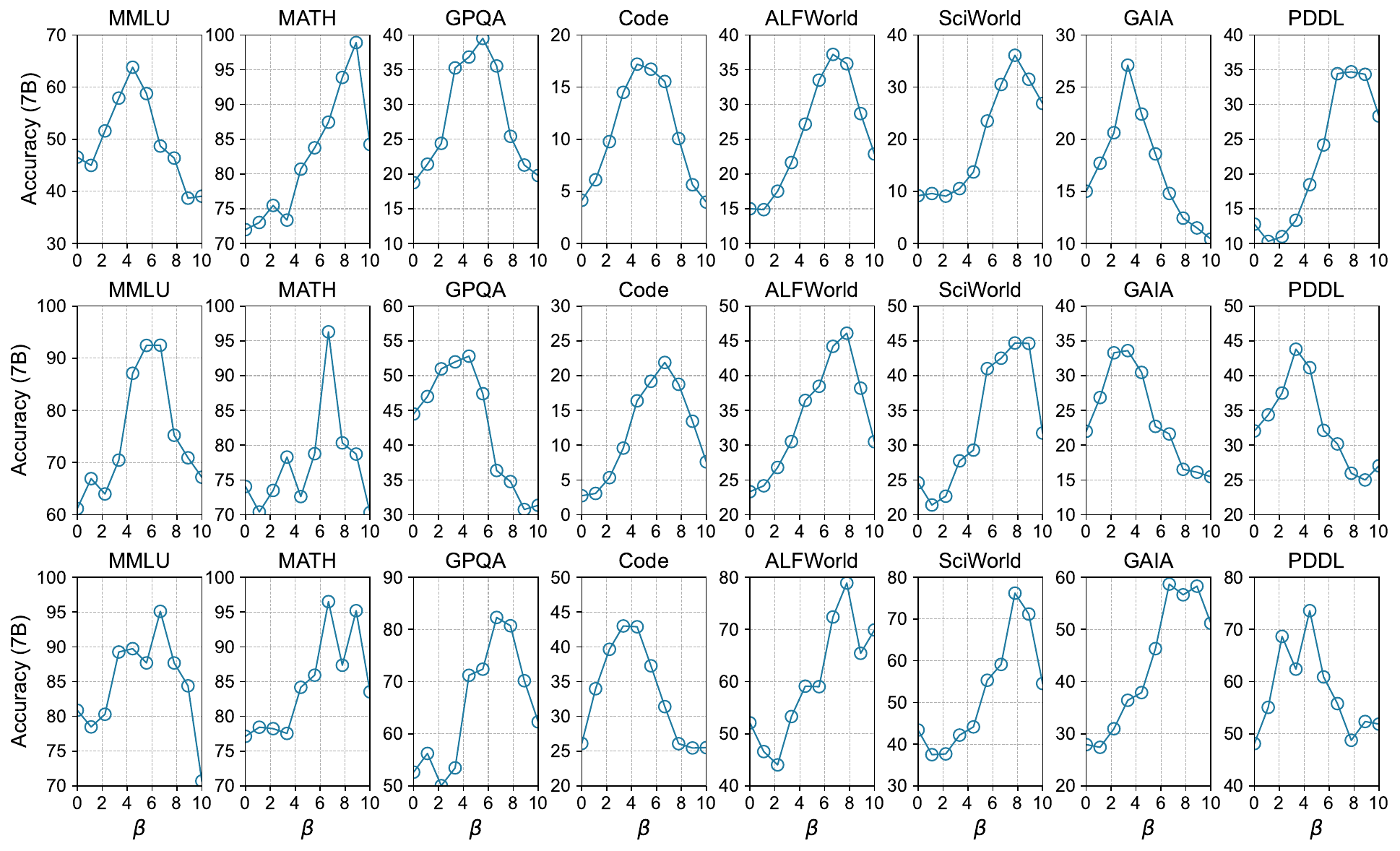}
	\caption{Performance with varying context constraint when 8 LLMs participate.}
	\label{fig:context_constraint_2}
\end{figure*}
\begin{figure*}[h]
	\centering
	\includegraphics[width=0.99\linewidth]{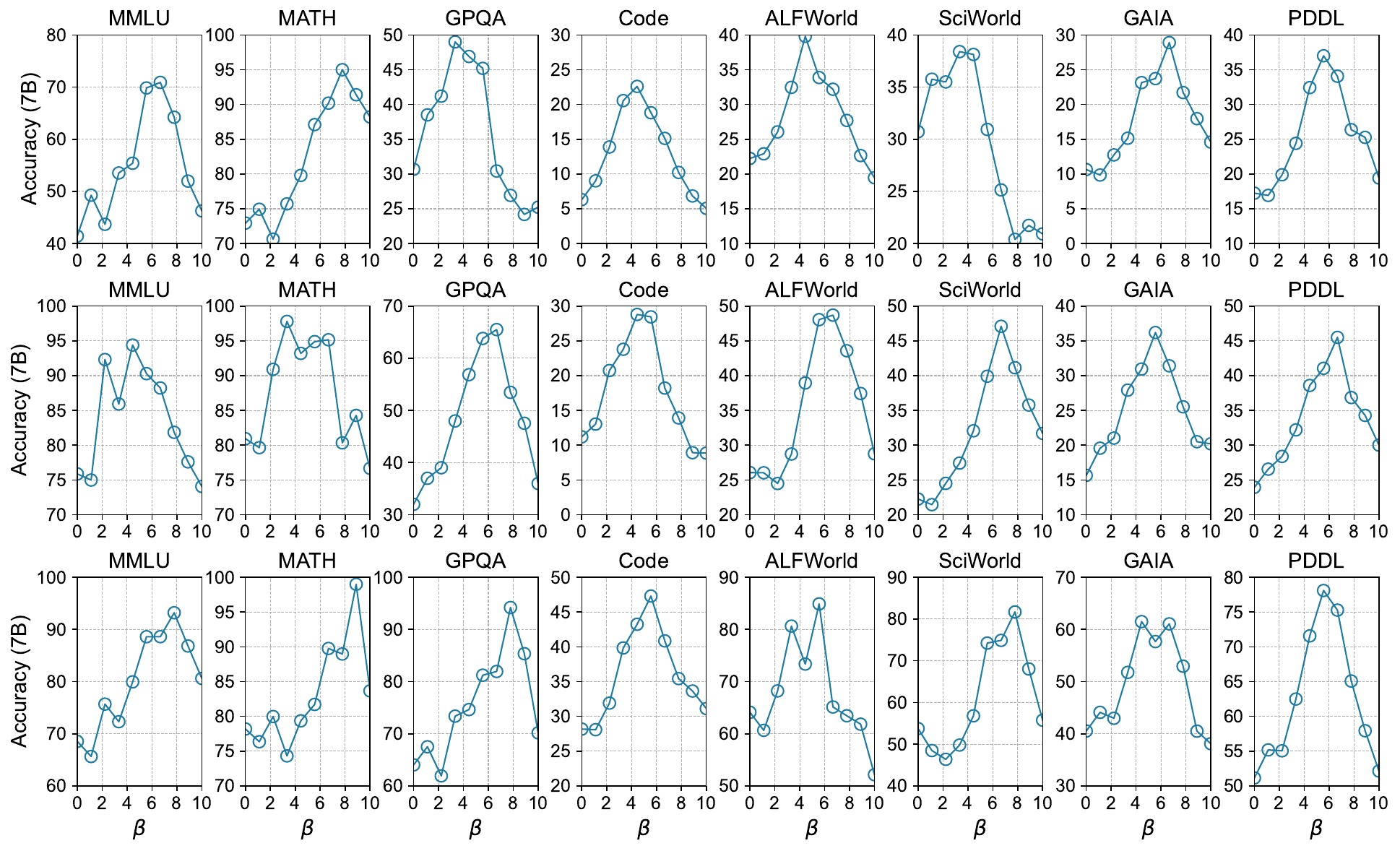}
	\caption{Performance with varying context constraint when 16 LLMs participate.}
	\label{fig:context_constraint_3}
\end{figure*}
\begin{figure*}[h]
	\centering
	\includegraphics[width=0.99\linewidth]{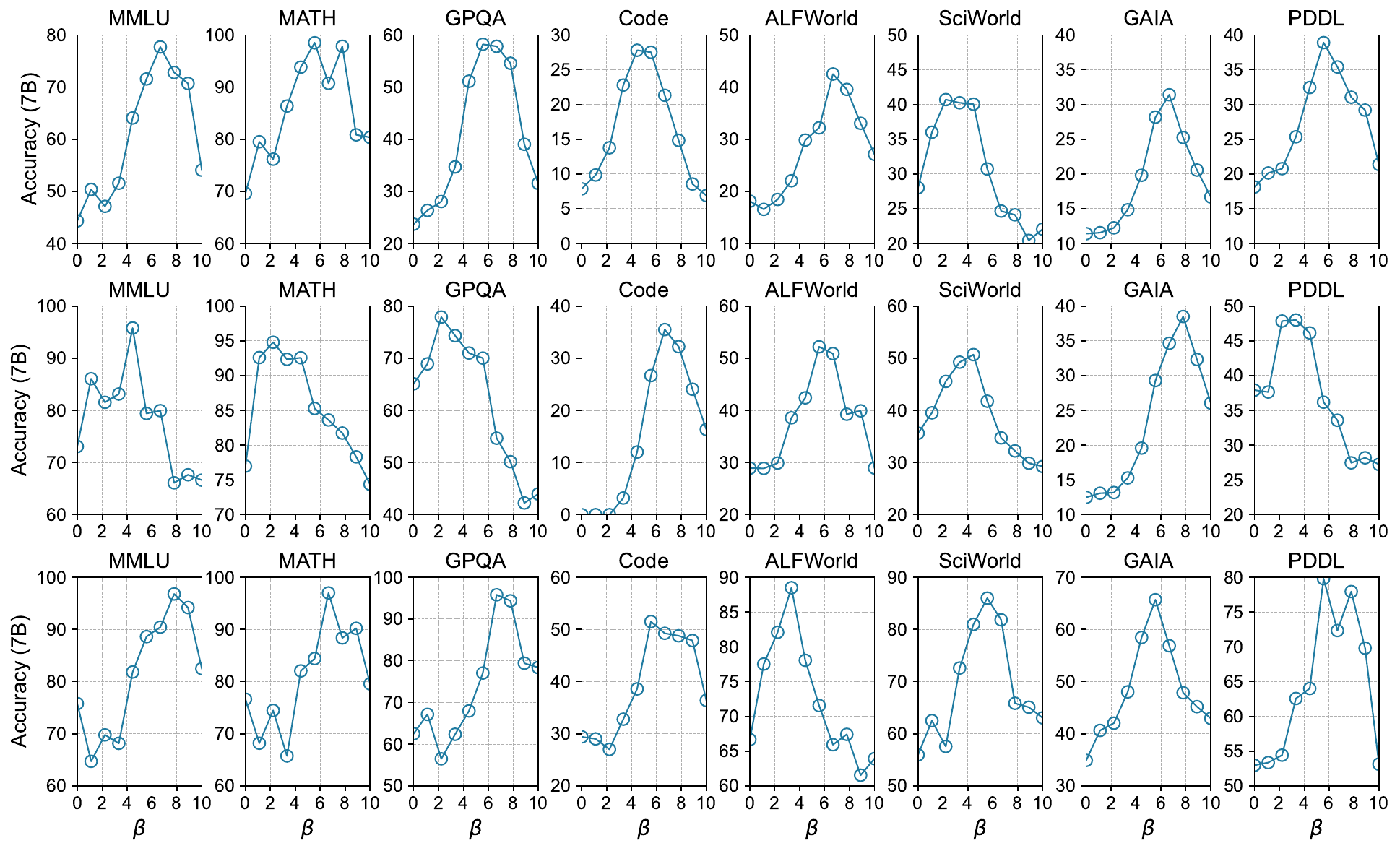}
	\caption{Performance with varying context constraint when 32 LLMs participate.}
	\label{fig:context_constraint_4}
\end{figure*}
\begin{figure*}[h]
	\centering
	\includegraphics[width=0.99\linewidth]{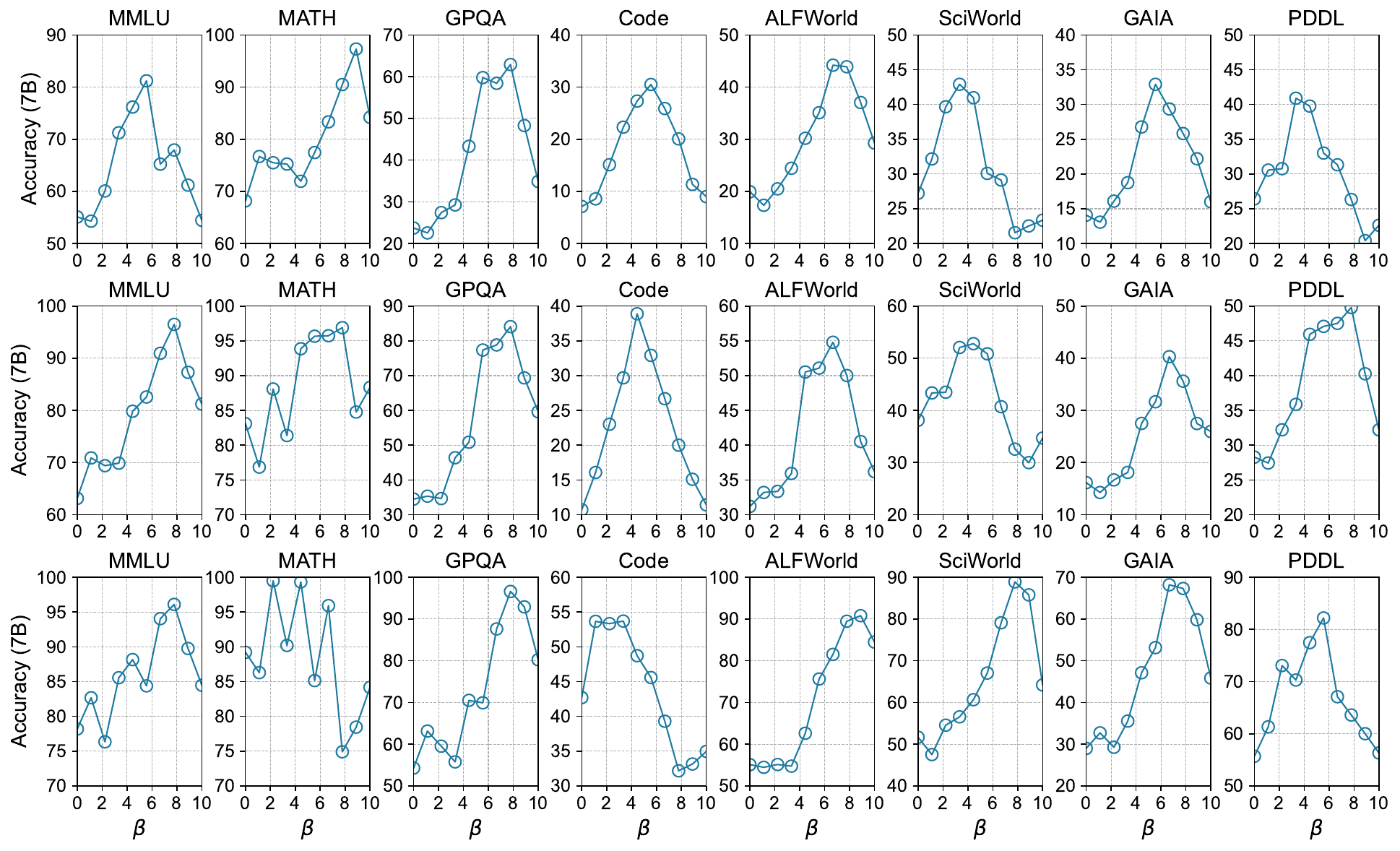}
	\caption{Performance with varying context constraint when 64 LLMs participate.}
	\label{fig:context_constraint_5}
\end{figure*}
\clearpage
\subsection{Discrepancy intensity by Rounds}
\label{sec:agreement_intensity}
To validate that \alg{} can collaborate LLMs to reach an agreement by rounds, we visualize the discrepancy intensity. As illustrated in \cref{fig:agreement_intensity_0,fig:agreement_intensity_1,fig:agreement_intensity_2,fig:agreement_intensity_3,fig:agreement_intensity_4}, the initial discrepancy intensity of \alg{} is higher as its context initialization can make the discussion more creative. The discrepancy intensity of \alg{} increases faster because the dynamic adjustment of the context provides LLMs with better ability to effectively receive information from other LLMs, resulting in reduced disagreement and faster collaboration among LLMs to reach a consensus.
\begin{figure*}[h]
	\centering
	\includegraphics[width=0.99\linewidth]{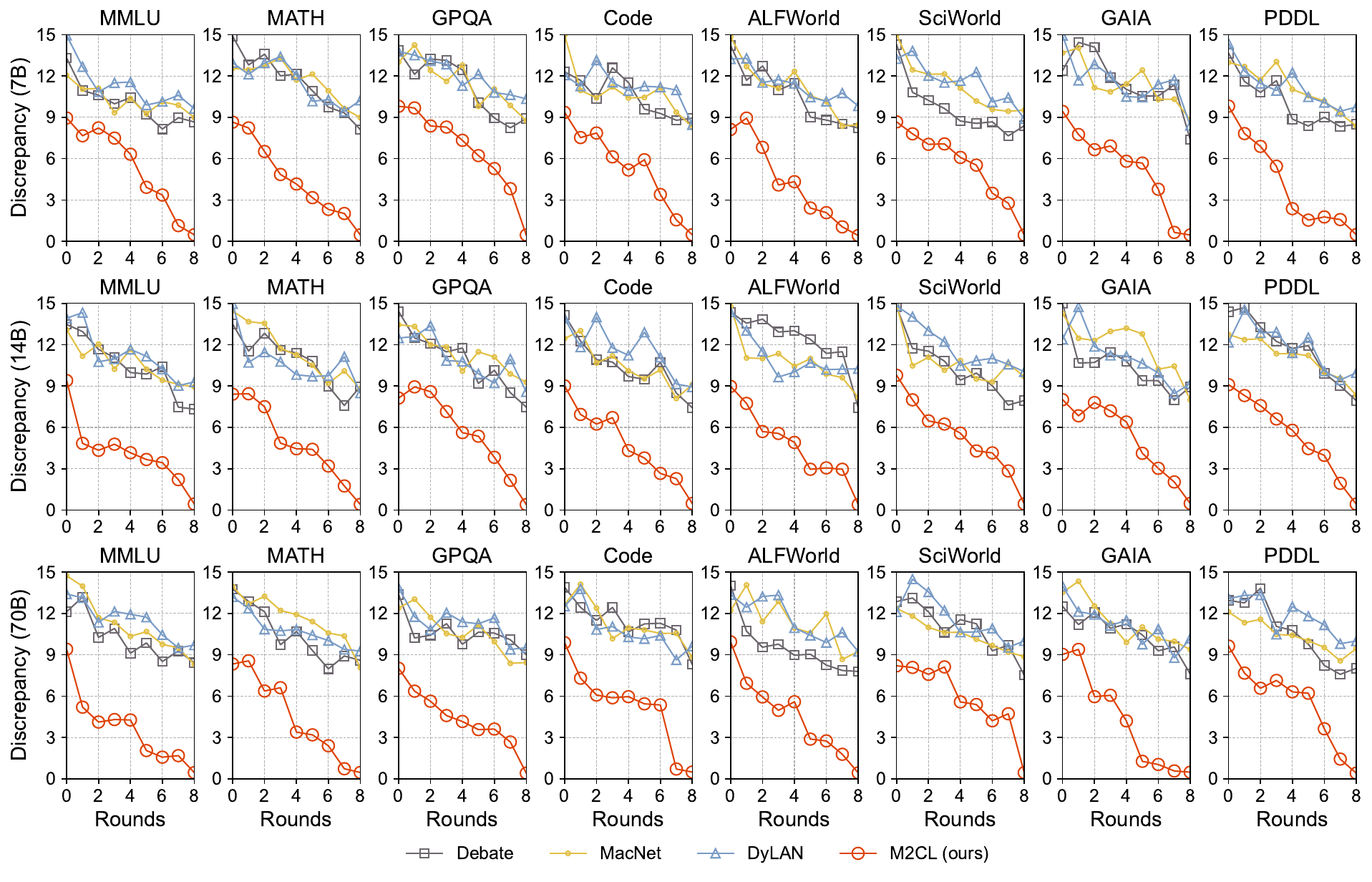}
	\caption{Comparative results on discrepancy intensity with varying model size (from top to bottom correspond to 7B, 14B, and 70B). The number of agents is set as $4$. Lower values represent a lower degree of disagreement. \alg{} can improve consistency with fewer rounds. Of note, \alg{} displays both a lower initial value and a faster decreasing speed, indicating its capability of assigning appropriate contexts based on the given question and current discussion situation.}
	\label{fig:agreement_intensity_0}
\end{figure*}

\begin{figure*}[h]
	\centering
	\includegraphics[width=0.99\linewidth]{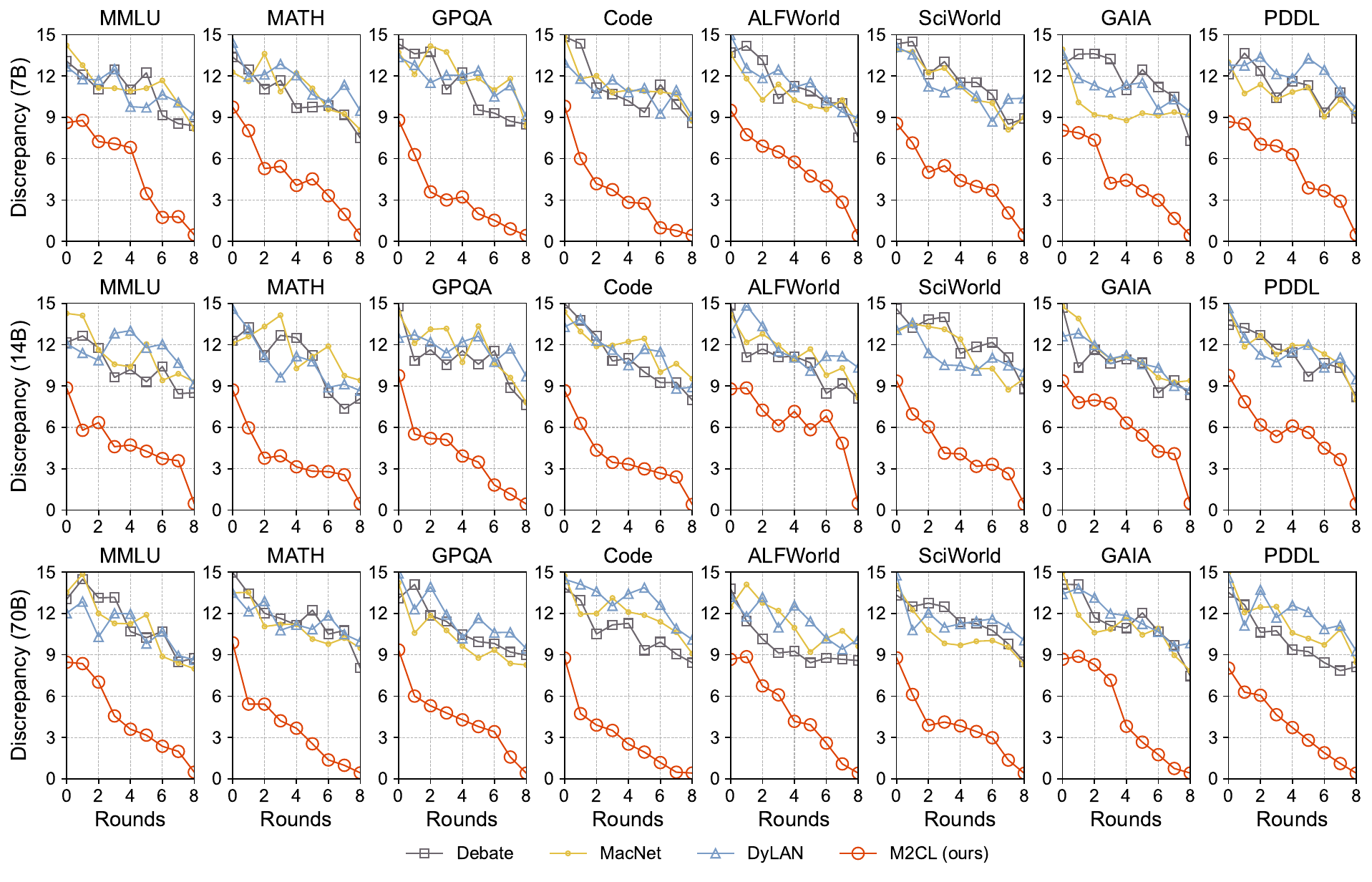}
	\caption{Comparative results on discrepancy intensity with varying model size (from top to bottom correspond to 7B, 14B, and 70B). The number of agents is set as $8$. Lower values represent a lower degree of disagreement. \alg{} can improve consistency with fewer rounds.}
	\label{fig:agreement_intensity_1}
\end{figure*}

\begin{figure*}[h]
	\centering
	\includegraphics[width=0.99\linewidth]{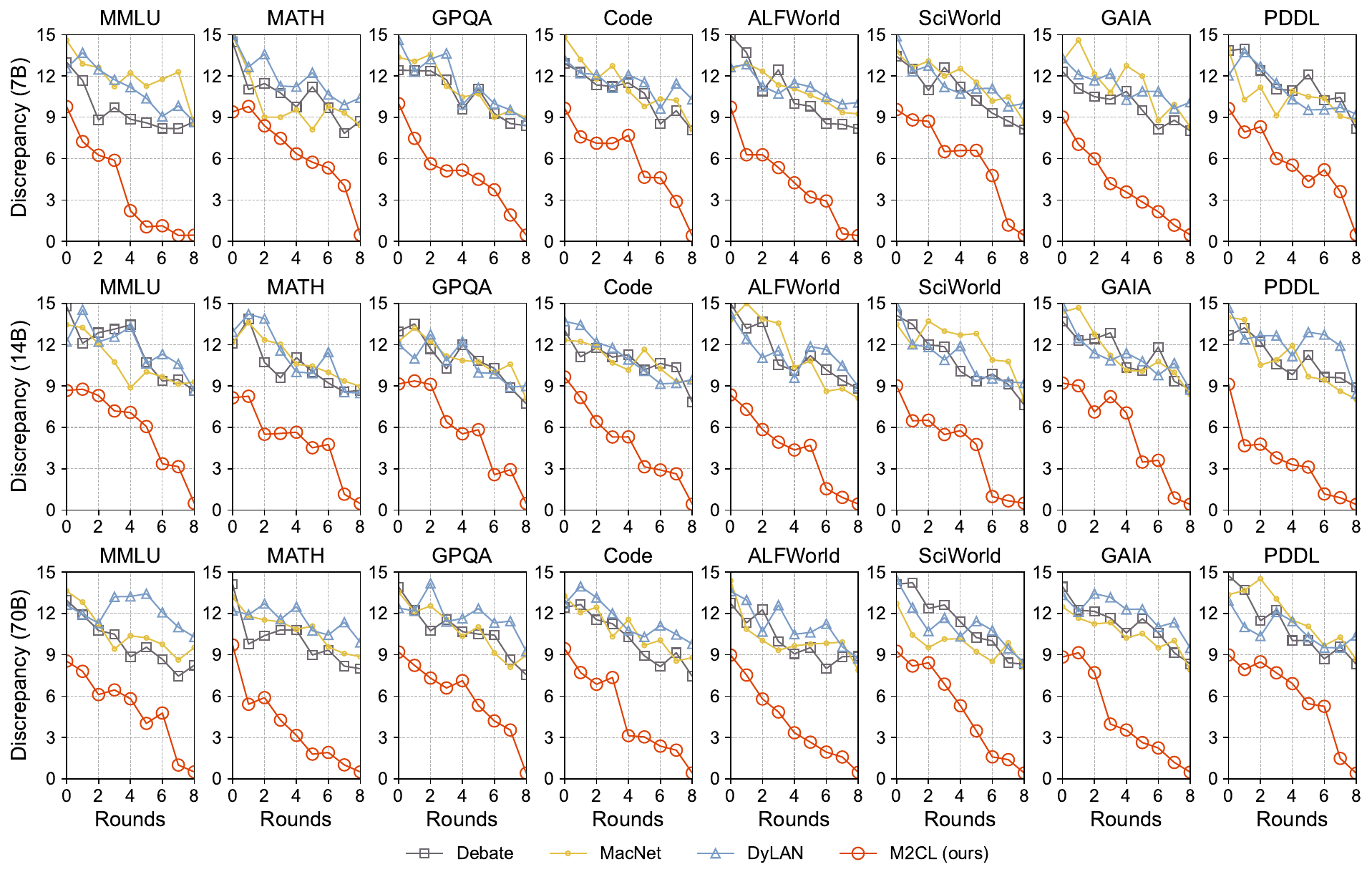}
	\caption{Comparative results on discrepancy intensity with varying model size (from top to bottom correspond to 7B, 14B, and 70B). The number of agents is set as $16$. Lower values represent a lower degree of disagreement. \alg{} can improve consistency with fewer rounds.}
	\label{fig:agreement_intensity_2}
\end{figure*}

\begin{figure*}[h]
	\centering
	\includegraphics[width=0.99\linewidth]{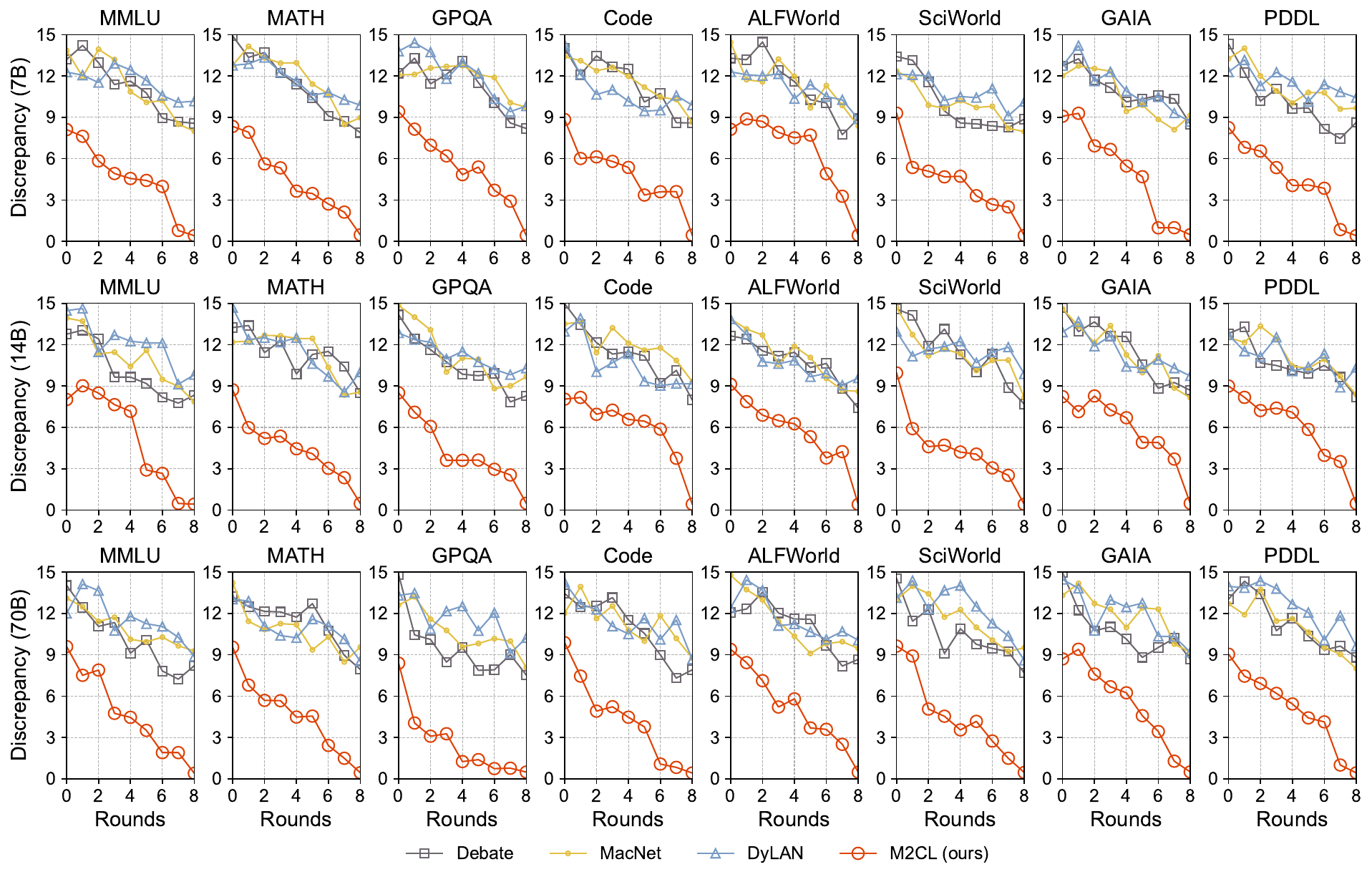}
	\caption{Comparative results on discrepancy intensity with varying model size (from top to bottom correspond to 7B, 14B, and 70B) The number of agents is set as $32$. Lower values represent a lower degree of disagreement. \alg{} can improve consistency with fewer rounds.}
	\label{fig:agreement_intensity_3}
\end{figure*}

\begin{figure*}[h]
	\centering
	\includegraphics[width=0.99\linewidth]{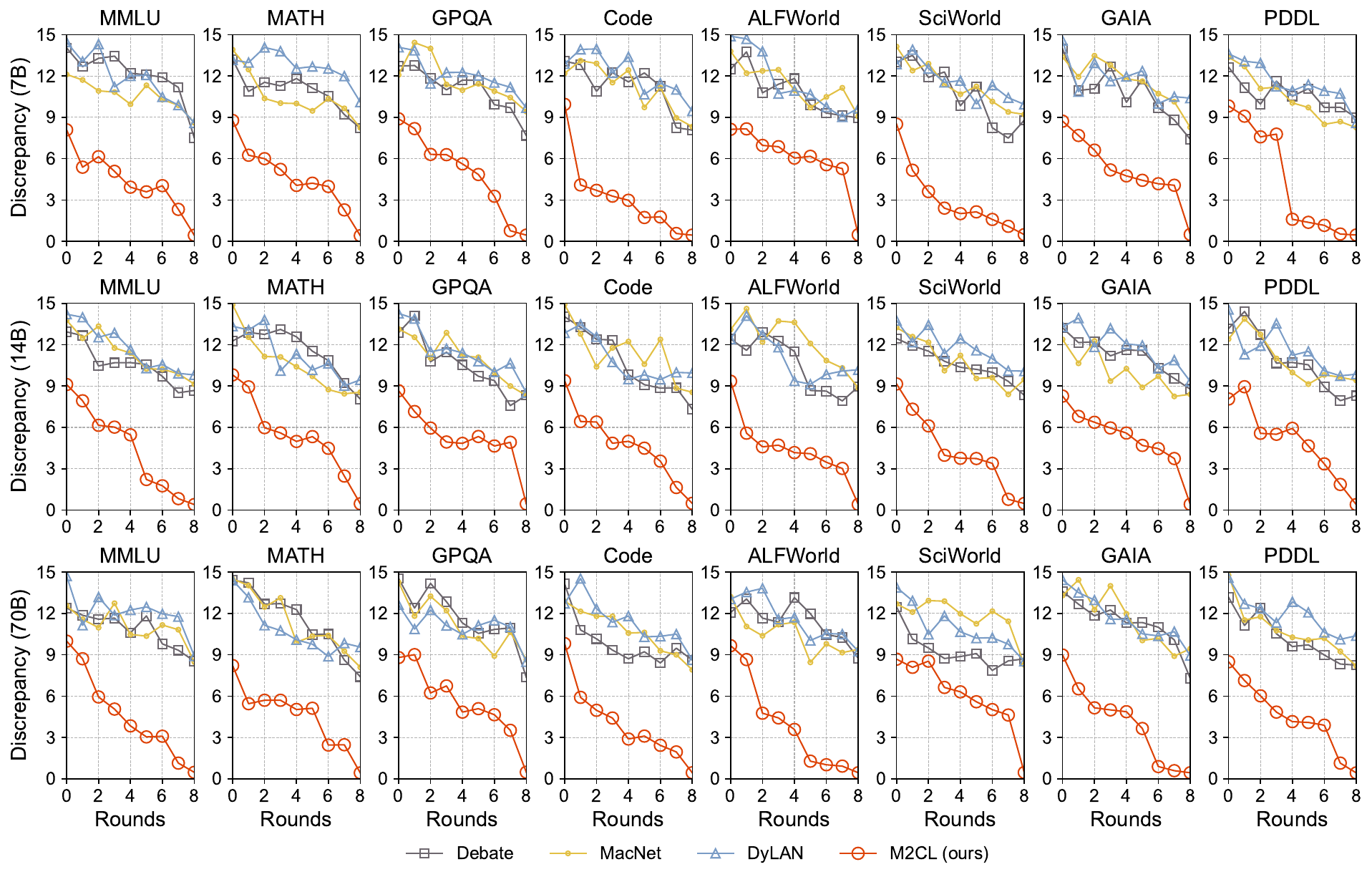}
	\caption{Comparative results on discrepancy intensity with varying model size (from top to bottom correspond to 7B, 14B, and 70B). The number of agents is set as $64$. Lower values represent a lower degree of disagreement. \alg{} can improve consistency with fewer rounds.}
	\label{fig:agreement_intensity_4}
\end{figure*}

\clearpage
\subsection{Transferability of Contexts}
\label{sec:transferability}
To further study the generalization of the generated contexts, we implement the multi-agent system using GPT-4 as the base model with the context generator trained on llama-7B and compare its performance with using initial contexts. As illustrated in \cref{tab:context_transfer_all}, the generated contexts outperform initial contexts, indicating that the trained context generator can be expanded to more models for improving overall performance through LLMs' collaboration.
\begin{table}[h] 
	\centering
	\resizebox{0.95\textwidth}{!}{
		\begin{tabular}{YZYYYYYYYY}
			\toprule
			\textbf{Dataset} & \textbf{Method} & \textbf{MMLU} & \textbf{MATH} & \textbf{GPQA} & \textbf{Code} & \textbf{ALFWorld} & \textbf{SciWorld} & \textbf{GAIA} & \textbf{PDDL} \\
			\midrule
			\multirow{2}{*}{$n=4$}
			& Initial & 84.3{\tiny\textcolor{green}{$\downarrow$0.0}} & 41.1{\tiny\textcolor{green}{$\downarrow$0.0}} & 33.3{\tiny\textcolor{green}{$\downarrow$0.0}} & 62.7{\tiny\textcolor{green}{$\downarrow$0.0}} & 70.2{\tiny\textcolor{green}{$\downarrow$0.0}} & 65.4{\tiny\textcolor{green}{$\downarrow$0.0}} & 58.7{\tiny\textcolor{green}{$\downarrow$0.0}} & 73.5{\tiny\textcolor{green}{$\downarrow$0.0}} \\
			& Generated & \cellcolor{blue!10}95.0{\tiny\textcolor{red}{$\uparrow$10.7}} & \cellcolor{blue!10}58.0{\tiny\textcolor{red}{$\uparrow$16.9}} & \cellcolor{blue!10}52.1{\tiny\textcolor{red}{$\uparrow$18.8}} & \cellcolor{blue!10}81.0{\tiny\textcolor{red}{$\uparrow$18.3}} & \cellcolor{blue!10}82.0{\tiny\textcolor{red}{$\uparrow$11.8}} & \cellcolor{blue!10}76.5{\tiny\textcolor{red}{$\uparrow$11.1}} & \cellcolor{blue!10}65.4{\tiny\textcolor{red}{$\uparrow$6.7}} & \cellcolor{blue!10}85.2{\tiny\textcolor{red}{$\uparrow$11.7}} \\
			\midrule
			\multirow{2}{*}{$n=8$}
			& Initial & 86.4{\tiny\textcolor{green}{$\downarrow$0.0}} & 43.9{\tiny\textcolor{green}{$\downarrow$0.0}} & 38.8{\tiny\textcolor{green}{$\downarrow$0.0}} & 67.0{\tiny\textcolor{green}{$\downarrow$0.0}} & 72.8{\tiny\textcolor{green}{$\downarrow$0.0}} & 68.0{\tiny\textcolor{green}{$\downarrow$0.0}} & 61.0{\tiny\textcolor{green}{$\downarrow$0.0}} & 76.0{\tiny\textcolor{green}{$\downarrow$0.0}} \\
			& Generated & \cellcolor{blue!10}98.2{\tiny\textcolor{red}{$\uparrow$11.8}} & \cellcolor{blue!10}68.5{\tiny\textcolor{red}{$\uparrow$24.6}} & \cellcolor{blue!10}62.5{\tiny\textcolor{red}{$\uparrow$23.7}} & \cellcolor{blue!10}89.1{\tiny\textcolor{red}{$\uparrow$22.1}} & \cellcolor{blue!10}82.5{\tiny\textcolor{red}{$\uparrow$9.7}} & \cellcolor{blue!10}77.8{\tiny\textcolor{red}{$\uparrow$9.8}} & \cellcolor{blue!10}64.3{\tiny\textcolor{red}{$\uparrow$3.3}} & \cellcolor{blue!10}88.2{\tiny\textcolor{red}{$\uparrow$12.2}} \\
			\midrule
			\multirow{2}{*}{$n=16$}
			& Initial & 87.9{\tiny\textcolor{green}{$\downarrow$0.0}} & 44.8{\tiny\textcolor{green}{$\downarrow$0.0}} & 40.1{\tiny\textcolor{green}{$\downarrow$0.0}} & 68.5{\tiny\textcolor{green}{$\downarrow$0.0}} & 74.3{\tiny\textcolor{green}{$\downarrow$0.0}} & 70.5{\tiny\textcolor{green}{$\downarrow$0.0}} & 63.0{\tiny\textcolor{green}{$\downarrow$0.0}} & 78.2{\tiny\textcolor{green}{$\downarrow$0.0}} \\
			& Generated & \cellcolor{blue!10}98.5{\tiny\textcolor{red}{$\uparrow$10.6}} & \cellcolor{blue!10}70.3{\tiny\textcolor{red}{$\uparrow$25.5}} & \cellcolor{blue!10}64.3{\tiny\textcolor{red}{$\uparrow$24.2}} & \cellcolor{blue!10}90.3{\tiny\textcolor{red}{$\uparrow$21.8}} & \cellcolor{blue!10}84.5{\tiny\textcolor{red}{$\uparrow$10.2}} & \cellcolor{blue!10}78.7{\tiny\textcolor{red}{$\uparrow$8.2}} & \cellcolor{blue!10}66.1{\tiny\textcolor{red}{$\uparrow$3.1}} & \cellcolor{blue!10}90.0{\tiny\textcolor{red}{$\uparrow$11.8}} \\
			\midrule
			\multirow{2}{*}{$n=32$}
			& Initial & 89.1{\tiny\textcolor{green}{$\downarrow$0.0}} & 45.4{\tiny\textcolor{green}{$\downarrow$0.0}} & 41.5{\tiny\textcolor{green}{$\downarrow$0.0}} & 69.7{\tiny\textcolor{green}{$\downarrow$0.0}} & 75.0{\tiny\textcolor{green}{$\downarrow$0.0}} & 71.0{\tiny\textcolor{green}{$\downarrow$0.0}} & 63.5{\tiny\textcolor{green}{$\downarrow$0.0}} & 78.5{\tiny\textcolor{green}{$\downarrow$0.0}} \\
			& Generated & \cellcolor{blue!10}98.8{\tiny\textcolor{red}{$\uparrow$9.7}} & \cellcolor{blue!10}72.0{\tiny\textcolor{red}{$\uparrow$26.6}} & \cellcolor{blue!10}66.0{\tiny\textcolor{red}{$\uparrow$24.5}} & \cellcolor{blue!10}91.3{\tiny\textcolor{red}{$\uparrow$21.6}} & \cellcolor{blue!10}86.0{\tiny\textcolor{red}{$\uparrow$11.0}} & \cellcolor{blue!10}79.5{\tiny\textcolor{red}{$\uparrow$8.5}} & \cellcolor{blue!10}66.8{\tiny\textcolor{red}{$\uparrow$3.3}} & \cellcolor{blue!10}91.2{\tiny\textcolor{red}{$\uparrow$12.7}} \\
			\midrule
			\multirow{2}{*}{$n=64$}
			& Initial & 88.5{\tiny\textcolor{green}{$\downarrow$0.0}} & 45.4{\tiny\textcolor{green}{$\downarrow$0.0}} & 41.8{\tiny\textcolor{green}{$\downarrow$0.0}} & 70.5{\tiny\textcolor{green}{$\downarrow$0.0}} & 75.2{\tiny\textcolor{green}{$\downarrow$0.0}} & 71.5{\tiny\textcolor{green}{$\downarrow$0.0}} & 64.0{\tiny\textcolor{green}{$\downarrow$0.0}} & 79.2{\tiny\textcolor{green}{$\downarrow$0.0}} \\
			& Generated & \cellcolor{blue!10}98.9{\tiny\textcolor{red}{$\uparrow$10.4}} & \cellcolor{blue!10}72.8{\tiny\textcolor{red}{$\uparrow$27.4}} & \cellcolor{blue!10}66.9{\tiny\textcolor{red}{$\uparrow$25.1}} & \cellcolor{blue!10}91.7{\tiny\textcolor{red}{$\uparrow$21.2}} & \cellcolor{blue!10}86.5{\tiny\textcolor{red}{$\uparrow$11.3}} & \cellcolor{blue!10}80.2{\tiny\textcolor{red}{$\uparrow$8.7}} & \cellcolor{blue!10}67.2{\tiny\textcolor{red}{$\uparrow$3.2}} & \cellcolor{blue!10}91.5{\tiny\textcolor{red}{$\uparrow$12.3}} \\
			\bottomrule
		\end{tabular}
	}
	\caption{Performance of transferring generated contexts trained on llama-7B to GPT-4 with varying number of LLMs.}
	\label{tab:context_transfer_all}
\end{table}

\subsection{Ablation Studies and Complementary Experiments}
\subsubsection{Training Convergence of Context Generator}
We verify the convergence of solving Problem~(\ref{eq:objective_multi_round_final}) by displaying the cumulative utility. As shown in \cref{fig:loss_1}, it works well in all dataset and often converges in 60 training steps.
\begin{figure*}[h]
	\centering
	\includegraphics[width=\linewidth]{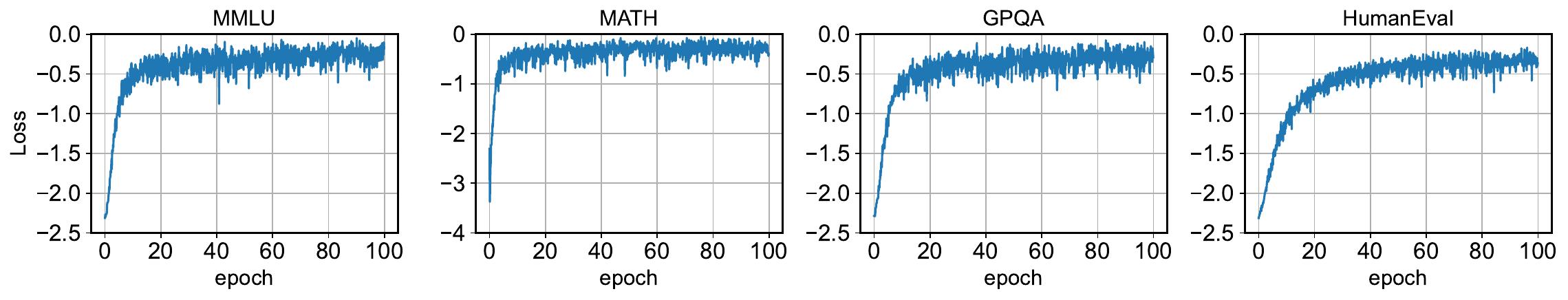}
	\includegraphics[width=\linewidth]{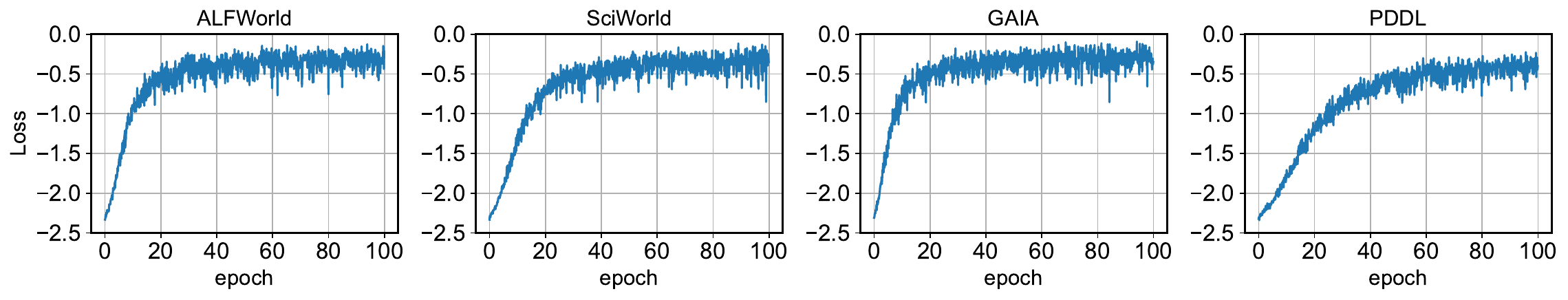}
	\caption{The value of $L(\theta)$ when training llama-7B with 8 LLMs participating.}
	\label{fig:loss_1}
\end{figure*}

\subsubsection{Ablation}
In this section, we assess the effect of key components by ablating them under the same setting.

Without context initialization, LLMs fail to develop specialized expertise, resulting in homogeneous policies that inefficiently duplicate effort, demonstrating poor adaptability when faced with novel questions. \cref{tab:ablation_llama_7b,tab:ablation_llama_13b,tab:ablation_llama_70b} underscore the importance of context initialization before discussion to enhance foundational multi-agent capabilities.

Without tuning $\alpha$ during discussion rounds, LLMs tend to reach an agreement in the first round, leading to responses that lack creativity and diversity, which ultimately reduces problem-solving ability. \cref{tab:ablation_llama_7b,tab:ablation_llama_13b,tab:ablation_llama_70b} underscore the importance of tuning $\alpha$ during discussion rounds for training.

Without context evolution, LLMs receive no guidance consider inter-LLM dependencies, making it difficult for them to fully utilize the information contributed by other LLMs. This context misalignment leads to discussion inconsistency and poor collaboration efficiency. \cref{tab:ablation_llama_7b,tab:ablation_llama_13b,tab:ablation_llama_70b} highlight the necessity of context evolution for enabling effective inter-LLM collaboration and achieving consistent improvements across discussion rounds.

\begin{table}[h] 
	\centering
	\resizebox{0.95\textwidth}{!}{
		\begin{tabular}{XZYYYYYYYY}
			\toprule
			\textbf{Dataset} & \textbf{Method} & \textbf{MMLU} & \textbf{MATH} & \textbf{GPQA} & \textbf{Code} & \textbf{ALFWorld} & \textbf{SciWorld} & \textbf{GAIA} & \textbf{PDDL} \\
			\midrule
			\multirow{4}{*}{$n=4$}
			& \alg{}      & \cellcolor{blue!10}59.1{\tiny\textcolor{red}{$\uparrow$10.6}} & \cellcolor{blue!10}17.4{\tiny\textcolor{red}{$\uparrow$9.9}}  & \cellcolor{blue!10}44.7{\tiny\textcolor{red}{$\uparrow$16.7}} & \cellcolor{blue!10}32.9{\tiny\textcolor{red}{$\uparrow$11.8}} & \cellcolor{blue!10}35.1{\tiny\textcolor{red}{$\uparrow$8.8}}  & \cellcolor{blue!10}33.0{\tiny\textcolor{red}{$\uparrow$8.3}}  & \cellcolor{blue!10}25.5{\tiny\textcolor{red}{$\uparrow$8.1}}  & \cellcolor{blue!10}33.1{\tiny\textcolor{red}{$\uparrow$6.2}} \\
			& w/o init.   & 56.4{\tiny\textcolor{red}{$\uparrow$7.9}}  & 13.3{\tiny\textcolor{red}{$\uparrow$5.8}}  & 44.5{\tiny\textcolor{red}{$\uparrow$16.5}} & 31.3{\tiny\textcolor{red}{$\uparrow$10.2}} & 31.8{\tiny\textcolor{red}{$\uparrow$5.5}}  & 30.1{\tiny\textcolor{red}{$\uparrow$5.4}}  & 23.2{\tiny\textcolor{red}{$\uparrow$5.8}}  & 30.5{\tiny\textcolor{red}{$\uparrow$3.6}} \\
			& w/o $\alpha$ & 53.9{\tiny\textcolor{red}{$\uparrow$5.4}}  & 9.4{\tiny\textcolor{red}{$\uparrow$1.9}}   & 37.8{\tiny\textcolor{red}{$\uparrow$9.8}}  & 26.0{\tiny\textcolor{red}{$\uparrow$4.9}}  & 22.5{\tiny\textcolor{green}{$\downarrow$3.8}} & 21.1{\tiny\textcolor{green}{$\downarrow$3.6}} & 20.3{\tiny\textcolor{red}{$\uparrow$2.9}}  & 23.4{\tiny\textcolor{green}{$\downarrow$3.5}} \\
			& w/o evolve   & 48.5{\tiny\textcolor{green}{$\downarrow$0.0}}  & 7.5{\tiny\textcolor{green}{$\downarrow$0.0}}   & 28.0{\tiny\textcolor{green}{$\downarrow$0.0}}  & 21.1{\tiny\textcolor{green}{$\downarrow$0.0}}  & 26.3{\tiny\textcolor{green}{$\downarrow$0.0}}  & 24.7{\tiny\textcolor{green}{$\downarrow$0.0}}  & 17.4{\tiny\textcolor{green}{$\downarrow$0.0}}  & 26.9{\tiny\textcolor{green}{$\downarrow$0.0}} \\
			\midrule
			\multirow{4}{*}{$n=8$}
			& \alg{}      & \cellcolor{blue!10}63.8{\tiny\textcolor{red}{$\uparrow$12.9}} & \cellcolor{blue!10}24.9{\tiny\textcolor{red}{$\uparrow$16.8}} & \cellcolor{blue!10}59.3{\tiny\textcolor{red}{$\uparrow$31.1}} & \cellcolor{blue!10}38.8{\tiny\textcolor{red}{$\uparrow$17.6}} & \cellcolor{blue!10}37.2{\tiny\textcolor{red}{$\uparrow$10.2}} & \cellcolor{blue!10}36.1{\tiny\textcolor{red}{$\uparrow$10.3}} & \cellcolor{blue!10}27.1{\tiny\textcolor{red}{$\uparrow$10.1}} & \cellcolor{blue!10}34.7{\tiny\textcolor{red}{$\uparrow$7.0}} \\
			& w/o init.   & 63.8{\tiny\textcolor{red}{$\uparrow$12.9}} & 17.2{\tiny\textcolor{red}{$\uparrow$9.1}}  & 54.9{\tiny\textcolor{red}{$\uparrow$26.7}} & 37.2{\tiny\textcolor{red}{$\uparrow$16.0}} & 34.7{\tiny\textcolor{red}{$\uparrow$7.7}}  & 33.1{\tiny\textcolor{red}{$\uparrow$7.3}}  & 24.5{\tiny\textcolor{red}{$\uparrow$7.5}}  & 31.5{\tiny\textcolor{red}{$\uparrow$3.8}} \\
			& w/o $\alpha$ & 57.4{\tiny\textcolor{red}{$\uparrow$6.5}}  & 12.9{\tiny\textcolor{red}{$\uparrow$4.8}}  & 43.0{\tiny\textcolor{red}{$\uparrow$14.8}} & 29.8{\tiny\textcolor{red}{$\uparrow$8.6}}  & 22.7{\tiny\textcolor{green}{$\downarrow$4.3}} & 26.8{\tiny\textcolor{red}{$\uparrow$1.0}}  & 20.9{\tiny\textcolor{red}{$\uparrow$3.9}}  & 24.8{\tiny\textcolor{green}{$\downarrow$2.9}} \\
			& w/o evolve   & 50.9{\tiny\textcolor{green}{$\downarrow$0.0}}  & 8.1{\tiny\textcolor{green}{$\downarrow$0.0}}   & 28.2{\tiny\textcolor{green}{$\downarrow$0.0}}  & 21.2{\tiny\textcolor{green}{$\downarrow$0.0}}  & 27.0{\tiny\textcolor{green}{$\downarrow$0.0}}  & 25.8{\tiny\textcolor{green}{$\downarrow$0.0}}  & 17.0{\tiny\textcolor{green}{$\downarrow$0.0}}  & 27.7{\tiny\textcolor{green}{$\downarrow$0.0}} \\
			\midrule
			\multirow{4}{*}{$n=16$}
			& \alg{}      & \cellcolor{blue!10}71.5{\tiny\textcolor{red}{$\uparrow$18.5}} & \cellcolor{blue!10}23.9{\tiny\textcolor{red}{$\uparrow$15.7}} & \cellcolor{blue!10}69.9{\tiny\textcolor{red}{$\uparrow$39.4}} & \cellcolor{blue!10}48.3{\tiny\textcolor{red}{$\uparrow$24.3}} & \cellcolor{blue!10}39.8{\tiny\textcolor{red}{$\uparrow$12.7}} & \cellcolor{blue!10}38.4{\tiny\textcolor{red}{$\uparrow$11.8}} & \cellcolor{blue!10}28.9{\tiny\textcolor{red}{$\uparrow$11.1}} & \cellcolor{blue!10}37.0{\tiny\textcolor{red}{$\uparrow$9.5}} \\
			& w/o init.   & 70.9{\tiny\textcolor{red}{$\uparrow$17.9}} & 22.6{\tiny\textcolor{red}{$\uparrow$14.4}} & 69.4{\tiny\textcolor{red}{$\uparrow$38.9}} & 45.3{\tiny\textcolor{red}{$\uparrow$21.3}} & 36.3{\tiny\textcolor{red}{$\uparrow$9.2}}  & 35.3{\tiny\textcolor{red}{$\uparrow$8.7}}  & 26.2{\tiny\textcolor{red}{$\uparrow$8.4}}  & 33.9{\tiny\textcolor{red}{$\uparrow$6.4}} \\
			& w/o $\alpha$ & 60.5{\tiny\textcolor{red}{$\uparrow$7.5}}  & 14.2{\tiny\textcolor{red}{$\uparrow$6.0}}  & 46.4{\tiny\textcolor{red}{$\uparrow$15.9}} & 32.8{\tiny\textcolor{red}{$\uparrow$8.8}}  & 26.6{\tiny\textcolor{green}{$\downarrow$0.5}} & 23.7{\tiny\textcolor{green}{$\downarrow$2.9}} & 22.2{\tiny\textcolor{red}{$\uparrow$4.4}}  & 24.9{\tiny\textcolor{green}{$\downarrow$2.6}} \\
			& w/o evolve   & 53.0{\tiny\textcolor{green}{$\downarrow$0.0}}  & 8.2{\tiny\textcolor{green}{$\downarrow$0.0}}   & 30.5{\tiny\textcolor{green}{$\downarrow$0.0}}  & 24.0{\tiny\textcolor{green}{$\downarrow$0.0}}  & 27.1{\tiny\textcolor{green}{$\downarrow$0.0}}  & 26.6{\tiny\textcolor{green}{$\downarrow$0.0}}  & 17.8{\tiny\textcolor{green}{$\downarrow$0.0}}  & 27.5{\tiny\textcolor{green}{$\downarrow$0.0}} \\
			\midrule
			\multirow{4}{*}{$n=32$}
			& \alg{}      & \cellcolor{blue!10}95.8{\tiny\textcolor{red}{$\uparrow$41.3}} & \cellcolor{blue!10}41.4{\tiny\textcolor{red}{$\uparrow$31.6}} & \cellcolor{blue!10}94.7{\tiny\textcolor{red}{$\uparrow$59.7}} & \cellcolor{blue!10}75.6{\tiny\textcolor{red}{$\uparrow$48.8}} & \cellcolor{blue!10}42.5{\tiny\textcolor{red}{$\uparrow$14.5}} & \cellcolor{blue!10}40.7{\tiny\textcolor{red}{$\uparrow$13.6}} & \cellcolor{blue!10}31.4{\tiny\textcolor{red}{$\uparrow$14.0}} & \cellcolor{blue!10}38.9{\tiny\textcolor{red}{$\uparrow$11.4}} \\
			& w/o init.   & 77.7{\tiny\textcolor{red}{$\uparrow$23.2}} & 27.8{\tiny\textcolor{red}{$\uparrow$18.0}} & 83.5{\tiny\textcolor{red}{$\uparrow$48.5}} & 53.2{\tiny\textcolor{red}{$\uparrow$26.4}} & 40.4{\tiny\textcolor{red}{$\uparrow$12.4}} & 36.7{\tiny\textcolor{red}{$\uparrow$9.6}}  & 28.8{\tiny\textcolor{red}{$\uparrow$11.4}} & 35.3{\tiny\textcolor{red}{$\uparrow$7.8}} \\
			& w/o $\alpha$ & 64.4{\tiny\textcolor{red}{$\uparrow$9.9}}  & 18.1{\tiny\textcolor{red}{$\uparrow$8.3}}  & 50.0{\tiny\textcolor{red}{$\uparrow$15.0}} & 34.8{\tiny\textcolor{red}{$\uparrow$8.0}}  & 32.5{\tiny\textcolor{red}{$\uparrow$4.5}}  & 28.6{\tiny\textcolor{red}{$\uparrow$1.5}}  & 24.6{\tiny\textcolor{red}{$\uparrow$7.2}}  & 28.4{\tiny\textcolor{red}{$\uparrow$0.9}} \\
			& w/o evolve   & 54.5{\tiny\textcolor{green}{$\downarrow$0.0}}  & 9.8{\tiny\textcolor{green}{$\downarrow$0.0}}   & 35.0{\tiny\textcolor{green}{$\downarrow$0.0}}  & 26.8{\tiny\textcolor{green}{$\downarrow$0.0}}  & 28.0{\tiny\textcolor{green}{$\downarrow$0.0}}  & 27.1{\tiny\textcolor{green}{$\downarrow$0.0}}  & 17.4{\tiny\textcolor{green}{$\downarrow$0.0}}  & 27.5{\tiny\textcolor{green}{$\downarrow$0.0}} \\
			\midrule
			\multirow{4}{*}{$n=64$}
			& \alg{}      & \cellcolor{blue!10}81.5{\tiny\textcolor{red}{$\uparrow$25.3}} & \cellcolor{blue!10}35.4{\tiny\textcolor{red}{$\uparrow$23.6}} & \cellcolor{blue!10}82.5{\tiny\textcolor{red}{$\uparrow$44.1}} & \cellcolor{blue!10}60.6{\tiny\textcolor{red}{$\uparrow$32.0}} & \cellcolor{blue!10}44.2{\tiny\textcolor{red}{$\uparrow$15.6}} & \cellcolor{blue!10}42.9{\tiny\textcolor{red}{$\uparrow$14.9}} & \cellcolor{blue!10}32.9{\tiny\textcolor{red}{$\uparrow$14.8}} & \cellcolor{blue!10}40.9{\tiny\textcolor{red}{$\uparrow$12.6}} \\
			& w/o init.   & 81.2{\tiny\textcolor{red}{$\uparrow$25.0}} & 30.5{\tiny\textcolor{red}{$\uparrow$18.7}} & 80.6{\tiny\textcolor{red}{$\uparrow$42.2}} & 57.2{\tiny\textcolor{red}{$\uparrow$28.6}} & 40.7{\tiny\textcolor{red}{$\uparrow$12.1}} & 39.5{\tiny\textcolor{red}{$\uparrow$11.5}} & 30.4{\tiny\textcolor{red}{$\uparrow$12.3}} & 37.4{\tiny\textcolor{red}{$\uparrow$9.1}} \\
			& w/o $\alpha$ & 63.3{\tiny\textcolor{red}{$\uparrow$7.1}}  & 19.2{\tiny\textcolor{red}{$\uparrow$7.4}}  & 49.7{\tiny\textcolor{red}{$\uparrow$11.3}} & 37.0{\tiny\textcolor{red}{$\uparrow$8.4}}  & 31.7{\tiny\textcolor{red}{$\uparrow$3.1}}  & 32.9{\tiny\textcolor{red}{$\uparrow$4.9}}  & 24.6{\tiny\textcolor{red}{$\uparrow$6.5}}  & 27.8{\tiny\textcolor{green}{$\downarrow$0.5}} \\
			& w/o evolve   & 56.2{\tiny\textcolor{green}{$\downarrow$0.0}}  & 11.8{\tiny\textcolor{green}{$\downarrow$0.0}}  & 38.4{\tiny\textcolor{green}{$\downarrow$0.0}}  & 28.6{\tiny\textcolor{green}{$\downarrow$0.0}}  & 28.6{\tiny\textcolor{green}{$\downarrow$0.0}}  & 28.0{\tiny\textcolor{green}{$\downarrow$0.0}}  & 18.1{\tiny\textcolor{green}{$\downarrow$0.0}}  & 28.3{\tiny\textcolor{green}{$\downarrow$0.0}} \\
			\bottomrule
		\end{tabular}
	}
	\caption{Ablation study on context initialization, tuning $\alpha$, and context evolution when using llama-7B with varying number of LLMs.}
	\label{tab:ablation_llama_7b}
\end{table}

\begin{table}[h]  
	\centering
	\resizebox{0.95\textwidth}{!}{
		\begin{tabular}{XZYYYYYYYY}
			\toprule
			\textbf{Dataset} & \textbf{Method} & \textbf{MMLU} & \textbf{MATH} & \textbf{GPQA} & \textbf{Code} & \textbf{ALFWorld} & \textbf{SciWorld} & \textbf{GAIA} & \textbf{PDDL} \\
			\midrule
			\multirow{4}{*}{$n=4$}
			& {\alg{} (ours)} & \cellcolor{blue!10}86.9{\tiny\textcolor{red}{$\uparrow$20.2}} & \cellcolor{blue!10}23.4{\tiny\textcolor{red}{$\uparrow$13.5}} & \cellcolor{blue!10}58.9{\tiny\textcolor{red}{$\uparrow$20.3}} & \cellcolor{blue!10}47.5{\tiny\textcolor{red}{$\uparrow$17.6}} & \cellcolor{blue!10}42.9{\tiny\textcolor{red}{$\uparrow$10.9}} & \cellcolor{blue!10}39.9{\tiny\textcolor{red}{$\uparrow$9.5}} & \cellcolor{blue!10}30.5{\tiny\textcolor{red}{$\uparrow$9.6}} & \cellcolor{blue!10}40.9{\tiny\textcolor{red}{$\uparrow$8.1}}\\
			& w/o init. & 86.7{\tiny\textcolor{red}{$\uparrow$20.0}} & 16.9{\tiny\textcolor{red}{$\uparrow$7.0}} & 57.9{\tiny\textcolor{red}{$\uparrow$19.3}} & 45.2{\tiny\textcolor{red}{$\uparrow$15.3}} & 40.7{\tiny\textcolor{red}{$\uparrow$8.7}} & 47.8{\tiny\textcolor{red}{$\uparrow$17.4}} & 33.7{\tiny\textcolor{red}{$\uparrow$12.8}} & 37.0{\tiny\textcolor{red}{$\uparrow$4.2}}\\
			& w/o $\alpha$ & 81.5{\tiny\textcolor{red}{$\uparrow$14.8}} & 13.2{\tiny\textcolor{red}{$\uparrow$3.3}} & 50.9{\tiny\textcolor{red}{$\uparrow$12.3}} & 37.3{\tiny\textcolor{red}{$\uparrow$7.4}} & 32.1{\tiny\textcolor{red}{$\uparrow$0.1}} & 31.9{\tiny\textcolor{red}{$\uparrow$1.5}} & 24.2{\tiny\textcolor{red}{$\uparrow$3.3}} & 24.3{\tiny\textcolor{green}{$\downarrow$8.5}}\\
			& w/o evolve & 66.7{\tiny\textcolor{green}{$\downarrow$0.0}} & 9.9{\tiny\textcolor{green}{$\downarrow$0.0}} & 38.6{\tiny\textcolor{green}{$\downarrow$0.0}} & 29.9{\tiny\textcolor{green}{$\downarrow$0.0}} & 32.0{\tiny\textcolor{green}{$\downarrow$0.0}} & 30.4{\tiny\textcolor{green}{$\downarrow$0.0}} & 20.9{\tiny\textcolor{green}{$\downarrow$0.0}} & 32.8{\tiny\textcolor{green}{$\downarrow$0.0}} \\
			\midrule
			\multirow{4}{*}{$n=8$}
			& {\alg{} (ours)} & \cellcolor{blue!10}92.7{\tiny\textcolor{red}{$\uparrow$23.2}} & \cellcolor{blue!10}23.0{\tiny\textcolor{red}{$\uparrow$13.5}} & \cellcolor{blue!10}70.7{\tiny\textcolor{red}{$\uparrow$32.6}} & \cellcolor{blue!10}55.4{\tiny\textcolor{red}{$\uparrow$22.1}} & \cellcolor{blue!10}46.1{\tiny\textcolor{red}{$\uparrow$13.0}} & \cellcolor{blue!10}44.7{\tiny\textcolor{red}{$\uparrow$13.0}} & \cellcolor{blue!10}33.6{\tiny\textcolor{red}{$\uparrow$12.2}} & \cellcolor{blue!10}43.8{\tiny\textcolor{red}{$\uparrow$10.6}}\\
			& w/o init. & 92.5{\tiny\textcolor{red}{$\uparrow$23.0}} & 21.9{\tiny\textcolor{red}{$\uparrow$12.4}} & 70.3{\tiny\textcolor{red}{$\uparrow$32.2}} & 53.3{\tiny\textcolor{red}{$\uparrow$20.0}} & 45.3{\tiny\textcolor{red}{$\uparrow$12.2}} & 51.8{\tiny\textcolor{red}{$\uparrow$20.1}} & 38.4{\tiny\textcolor{red}{$\uparrow$17.0}} & 39.9{\tiny\textcolor{red}{$\uparrow$6.7}}\\
			& w/o $\alpha$ & 85.4{\tiny\textcolor{red}{$\uparrow$15.9}} & 16.9{\tiny\textcolor{red}{$\uparrow$7.4}} & 53.9{\tiny\textcolor{red}{$\uparrow$15.8}} & 43.3{\tiny\textcolor{red}{$\uparrow$10.0}} & 34.7{\tiny\textcolor{red}{$\uparrow$1.6}} & 37.6{\tiny\textcolor{red}{$\uparrow$5.9}} & 26.2{\tiny\textcolor{red}{$\uparrow$4.8}} & 29.0{\tiny\textcolor{green}{$\downarrow$4.2}}\\
			& w/o evolve & 69.5{\tiny\textcolor{green}{$\downarrow$0.0}} & 9.5{\tiny\textcolor{green}{$\downarrow$0.0}} & 38.1{\tiny\textcolor{green}{$\downarrow$0.0}} & 33.3{\tiny\textcolor{green}{$\downarrow$0.0}} & 33.1{\tiny\textcolor{green}{$\downarrow$0.0}} & 31.7{\tiny\textcolor{green}{$\downarrow$0.0}} & 21.4{\tiny\textcolor{green}{$\downarrow$0.0}} & 33.2{\tiny\textcolor{green}{$\downarrow$0.0}} \\
			\midrule
			\multirow{4}{*}{$n=16$}
			& {\alg{} (ours)} & \cellcolor{blue!10}94.9{\tiny\textcolor{red}{$\uparrow$22.9}} & \cellcolor{blue!10}27.3{\tiny\textcolor{red}{$\uparrow$16.2}} & \cellcolor{blue!10}73.2{\tiny\textcolor{red}{$\uparrow$33.8}} & \cellcolor{blue!10}61.5{\tiny\textcolor{red}{$\uparrow$25.3}} & \cellcolor{blue!10}49.7{\tiny\textcolor{red}{$\uparrow$15.3}} & \cellcolor{blue!10}48.9{\tiny\textcolor{red}{$\uparrow$15.6}} & \cellcolor{blue!10}36.0{\tiny\textcolor{red}{$\uparrow$12.9}} & \cellcolor{blue!10}45.2{\tiny\textcolor{red}{$\uparrow$10.2}}\\
			& w/o init. & 94.7{\tiny\textcolor{red}{$\uparrow$22.7}} & 25.0{\tiny\textcolor{red}{$\uparrow$13.9}} & 72.3{\tiny\textcolor{red}{$\uparrow$32.9}} & 60.4{\tiny\textcolor{red}{$\uparrow$24.2}} & 48.5{\tiny\textcolor{red}{$\uparrow$14.1}} & 54.5{\tiny\textcolor{red}{$\uparrow$21.2}} & 39.1{\tiny\textcolor{red}{$\uparrow$16.0}} & 41.8{\tiny\textcolor{red}{$\uparrow$6.8}}\\
			& w/o $\alpha$ & 87.2{\tiny\textcolor{red}{$\uparrow$15.2}} & 18.7{\tiny\textcolor{red}{$\uparrow$7.6}} & 57.7{\tiny\textcolor{red}{$\uparrow$18.3}} & 47.6{\tiny\textcolor{red}{$\uparrow$11.4}} & 36.3{\tiny\textcolor{red}{$\uparrow$1.9}} & 39.9{\tiny\textcolor{red}{$\uparrow$6.6}} & 27.0{\tiny\textcolor{red}{$\uparrow$3.9}} & 31.0{\tiny\textcolor{green}{$\downarrow$4.0}}\\
			& w/o evolve & 72.0{\tiny\textcolor{green}{$\downarrow$0.0}} & 11.1{\tiny\textcolor{green}{$\downarrow$0.0}} & 39.4{\tiny\textcolor{green}{$\downarrow$0.0}} & 36.2{\tiny\textcolor{green}{$\downarrow$0.0}} & 34.4{\tiny\textcolor{green}{$\downarrow$0.0}} & 33.3{\tiny\textcolor{green}{$\downarrow$0.0}} & 23.1{\tiny\textcolor{green}{$\downarrow$0.0}} & 35.0{\tiny\textcolor{green}{$\downarrow$0.0}} \\
			\midrule
			\multirow{4}{*}{$n=32$}
			& {\alg{} (ours)} & \cellcolor{blue!10}95.5{\tiny\textcolor{red}{$\uparrow$21.8}} & \cellcolor{blue!10}27.9{\tiny\textcolor{red}{$\uparrow$15.3}} & \cellcolor{blue!10}74.0{\tiny\textcolor{red}{$\uparrow$32.3}} & \cellcolor{blue!10}63.1{\tiny\textcolor{red}{$\uparrow$24.6}} & \cellcolor{blue!10}50.8{\tiny\textcolor{red}{$\uparrow$15.5}} & \cellcolor{blue!10}49.9{\tiny\textcolor{red}{$\uparrow$14.7}} & \cellcolor{blue!10}36.6{\tiny\textcolor{red}{$\uparrow$12.3}} & \cellcolor{blue!10}46.7{\tiny\textcolor{red}{$\uparrow$10.5}}\\
			& w/o init. & 95.3{\tiny\textcolor{red}{$\uparrow$21.6}} & 25.7{\tiny\textcolor{red}{$\uparrow$13.1}} & 73.0{\tiny\textcolor{red}{$\uparrow$31.3}} & 62.0{\tiny\textcolor{red}{$\uparrow$23.5}} & 49.4{\tiny\textcolor{red}{$\uparrow$14.1}} & 55.2{\tiny\textcolor{red}{$\uparrow$20.0}} & 39.8{\tiny\textcolor{red}{$\uparrow$15.5}} & 43.0{\tiny\textcolor{red}{$\uparrow$6.8}}\\
			& w/o $\alpha$ & 87.6{\tiny\textcolor{red}{$\uparrow$13.9}} & 19.1{\tiny\textcolor{red}{$\uparrow$6.5}} & 58.5{\tiny\textcolor{red}{$\uparrow$16.8}} & 48.1{\tiny\textcolor{red}{$\uparrow$9.6}} & 36.7{\tiny\textcolor{red}{$\uparrow$1.4}} & 40.5{\tiny\textcolor{red}{$\uparrow$5.3}} & 27.4{\tiny\textcolor{red}{$\uparrow$3.1}} & 31.7{\tiny\textcolor{green}{$\downarrow$4.5}}\\
			& w/o evolve & 73.7{\tiny\textcolor{green}{$\downarrow$0.0}} & 12.6{\tiny\textcolor{green}{$\downarrow$0.0}} & 41.7{\tiny\textcolor{green}{$\downarrow$0.0}} & 38.5{\tiny\textcolor{green}{$\downarrow$0.0}} & 35.3{\tiny\textcolor{green}{$\downarrow$0.0}} & 35.2{\tiny\textcolor{green}{$\downarrow$0.0}} & 24.3{\tiny\textcolor{green}{$\downarrow$0.0}} & 36.2{\tiny\textcolor{green}{$\downarrow$0.0}} \\
			\midrule
			\multirow{4}{*}{$n=64$}
			& {\alg{} (ours)} & \cellcolor{blue!10}95.8{\tiny\textcolor{red}{$\uparrow$21.6}} & \cellcolor{blue!10}28.1{\tiny\textcolor{red}{$\uparrow$14.9}} & \cellcolor{blue!10}74.8{\tiny\textcolor{red}{$\uparrow$31.3}} & \cellcolor{blue!10}63.5{\tiny\textcolor{red}{$\uparrow$23.5}} & \cellcolor{blue!10}51.2{\tiny\textcolor{red}{$\uparrow$15.2}} & \cellcolor{blue!10}50.3{\tiny\textcolor{red}{$\uparrow$13.9}} & \cellcolor{blue!10}36.9{\tiny\textcolor{red}{$\uparrow$11.9}} & \cellcolor{blue!10}47.1{\tiny\textcolor{red}{$\uparrow$10.3}}\\
			& w/o init. & 95.6{\tiny\textcolor{red}{$\uparrow$21.4}} & 25.9{\tiny\textcolor{red}{$\uparrow$12.7}} & 73.5{\tiny\textcolor{red}{$\uparrow$30.0}} & 62.2{\tiny\textcolor{red}{$\uparrow$22.2}} & 49.6{\tiny\textcolor{red}{$\uparrow$13.6}} & 55.6{\tiny\textcolor{red}{$\uparrow$19.2}} & 40.1{\tiny\textcolor{red}{$\uparrow$15.1}} & 43.3{\tiny\textcolor{red}{$\uparrow$6.5}}\\
			& w/o $\alpha$ & 87.8{\tiny\textcolor{red}{$\uparrow$13.6}} & 19.4{\tiny\textcolor{red}{$\uparrow$6.2}} & 59.1{\tiny\textcolor{red}{$\uparrow$15.6}} & 48.6{\tiny\textcolor{red}{$\uparrow$8.6}} & 37.0{\tiny\textcolor{red}{$\uparrow$1.0}} & 40.8{\tiny\textcolor{red}{$\uparrow$4.4}} & 27.7{\tiny\textcolor{red}{$\uparrow$2.7}} & 32.0{\tiny\textcolor{green}{$\downarrow$4.8}}\\
			& w/o evolve & 74.2{\tiny\textcolor{green}{$\downarrow$0.0}} & 13.2{\tiny\textcolor{green}{$\downarrow$0.0}} & 43.5{\tiny\textcolor{green}{$\downarrow$0.0}} & 40.0{\tiny\textcolor{green}{$\downarrow$0.0}} & 36.0{\tiny\textcolor{green}{$\downarrow$0.0}} & 36.4{\tiny\textcolor{green}{$\downarrow$0.0}} & 25.0{\tiny\textcolor{green}{$\downarrow$0.0}} & 36.8{\tiny\textcolor{green}{$\downarrow$0.0}} \\
			\bottomrule
		\end{tabular}
	}
	\caption{Ablation study on context initialization, tuning $\alpha$, and context evolution when using llama-13B with varying number of LLMs.}
	\label{tab:ablation_llama_13b}
\end{table}

\begin{table}[h]  
	\centering
	\resizebox{0.95\textwidth}{!}{
		\begin{tabular}{XZYYYYYYYY}
			\toprule
			\textbf{Dataset} & \textbf{Method} & \textbf{MMLU} & \textbf{MATH} & \textbf{GPQA} & \textbf{Code} & \textbf{ALFWorld} & \textbf{SciWorld} & \textbf{GAIA} & \textbf{PDDL} \\
			\midrule
			\multirow{4}{*}{$n=4$}
			& {\alg{} (ours)} & \cellcolor{blue!10}95.6{\tiny\textcolor{red}{$\uparrow$18.7}} & \cellcolor{blue!10}40.3{\tiny\textcolor{red}{$\uparrow$9.4}} & \cellcolor{blue!10}78.7{\tiny\textcolor{red}{$\uparrow$25.6}} & \cellcolor{blue!10}70.6{\tiny\textcolor{red}{$\uparrow$24.7}} & \cellcolor{blue!10}69.5{\tiny\textcolor{red}{$\uparrow$17.2}} & \cellcolor{blue!10}65.5{\tiny\textcolor{red}{$\uparrow$15.9}} & \cellcolor{blue!10}49.6{\tiny\textcolor{red}{$\uparrow$15.5}} & \cellcolor{blue!10}66.5{\tiny\textcolor{red}{$\uparrow$12.2}}\\
			& w/o init. & 95.4{\tiny\textcolor{red}{$\uparrow$18.5}} & 39.9{\tiny\textcolor{red}{$\uparrow$9.0}} & 88.5{\tiny\textcolor{red}{$\uparrow$35.4}} & 67.8{\tiny\textcolor{red}{$\uparrow$21.9}} & 64.8{\tiny\textcolor{red}{$\uparrow$12.5}} & 62.9{\tiny\textcolor{red}{$\uparrow$13.3}} & 47.0{\tiny\textcolor{red}{$\uparrow$12.9}} & 62.6{\tiny\textcolor{red}{$\uparrow$8.3}}\\
			& w/o $\alpha$ & 86.7{\tiny\textcolor{red}{$\uparrow$9.8}} & 36.9{\tiny\textcolor{red}{$\uparrow$6.0}} & 72.3{\tiny\textcolor{red}{$\uparrow$19.2}} & 53.9{\tiny\textcolor{red}{$\uparrow$8.0}} & 43.3{\tiny\textcolor{green}{$\downarrow$9.0}} & 47.1{\tiny\textcolor{green}{$\downarrow$2.5}} & 38.0{\tiny\textcolor{green}{$\downarrow$-0.1}} & 40.3{\tiny\textcolor{green}{$\downarrow$14.0}}\\
			& w/o evolve & 76.9{\tiny\textcolor{green}{$\downarrow$0.0}} & 30.9{\tiny\textcolor{green}{$\downarrow$0.0}} & 53.1{\tiny\textcolor{green}{$\downarrow$0.0}} & 45.9{\tiny\textcolor{green}{$\downarrow$0.0}} & 52.3{\tiny\textcolor{green}{$\downarrow$0.0}} & 49.6{\tiny\textcolor{green}{$\downarrow$0.0}} & 34.1{\tiny\textcolor{green}{$\downarrow$0.0}} & 54.3{\tiny\textcolor{green}{$\downarrow$0.0}} \\
			\midrule
			\multirow{4}{*}{$n=8$}
			& {\alg{} (ours)} & \cellcolor{blue!10}95.4{\tiny\textcolor{red}{$\uparrow$11.9}} & \cellcolor{blue!10}43.5{\tiny\textcolor{red}{$\uparrow$8.3}} & \cellcolor{blue!10}87.6{\tiny\textcolor{red}{$\uparrow$26.6}} & \cellcolor{blue!10}81.0{\tiny\textcolor{red}{$\uparrow$30.5}} & \cellcolor{blue!10}78.9{\tiny\textcolor{red}{$\uparrow$24.5}} & \cellcolor{blue!10}76.2{\tiny\textcolor{red}{$\uparrow$23.5}} & \cellcolor{blue!10}58.7{\tiny\textcolor{red}{$\uparrow$23.8}} & \cellcolor{blue!10}73.6{\tiny\textcolor{red}{$\uparrow$18.3}}\\
			& w/o init. & 95.1{\tiny\textcolor{red}{$\uparrow$11.6}} & 43.0{\tiny\textcolor{red}{$\uparrow$7.8}} & 86.3{\tiny\textcolor{red}{$\uparrow$25.3}} & 80.0{\tiny\textcolor{red}{$\uparrow$29.5}} & 75.8{\tiny\textcolor{red}{$\uparrow$21.4}} & 71.7{\tiny\textcolor{red}{$\uparrow$18.5}} & 56.1{\tiny\textcolor{red}{$\uparrow$21.2}} & 71.3{\tiny\textcolor{red}{$\uparrow$16.0}}\\
			& w/o $\alpha$ & 94.0{\tiny\textcolor{red}{$\uparrow$10.5}} & 39.3{\tiny\textcolor{red}{$\uparrow$4.1}} & 78.0{\tiny\textcolor{red}{$\uparrow$17.0}} & 63.8{\tiny\textcolor{red}{$\uparrow$13.3}} & 49.3{\tiny\textcolor{green}{$\downarrow$5.1}} & 49.0{\tiny\textcolor{green}{$\downarrow$3.7}} & 42.6{\tiny\textcolor{red}{$\uparrow$7.7}} & 47.8{\tiny\textcolor{green}{$\downarrow$7.5}}\\
			& w/o evolve & 83.5{\tiny\textcolor{green}{$\downarrow$0.0}} & 35.2{\tiny\textcolor{green}{$\downarrow$0.0}} & 61.0{\tiny\textcolor{green}{$\downarrow$0.0}} & 50.5{\tiny\textcolor{green}{$\downarrow$0.0}} & 54.4{\tiny\textcolor{green}{$\downarrow$0.0}} & 52.7{\tiny\textcolor{green}{$\downarrow$0.0}} & 34.9{\tiny\textcolor{green}{$\downarrow$0.0}} & 55.3{\tiny\textcolor{green}{$\downarrow$0.0}} \\
			\midrule
			\multirow{4}{*}{$n=16$}
			& {\alg{} (ours)} & \cellcolor{blue!10}93.9{\tiny\textcolor{red}{$\uparrow$9.9}} & \cellcolor{blue!10}51.5{\tiny\textcolor{red}{$\uparrow$15.2}} & \cellcolor{blue!10}91.3{\tiny\textcolor{red}{$\uparrow$29.5}} & \cellcolor{blue!10}97.2{\tiny\textcolor{red}{$\uparrow$45.4}} & \cellcolor{blue!10}84.9{\tiny\textcolor{red}{$\uparrow$29.4}} & \cellcolor{blue!10}81.7{\tiny\textcolor{red}{$\uparrow$27.4}} & \cellcolor{blue!10}61.5{\tiny\textcolor{red}{$\uparrow$25.6}} & \cellcolor{blue!10}78.1{\tiny\textcolor{red}{$\uparrow$22.8}}\\
			& w/o init. & 93.2{\tiny\textcolor{red}{$\uparrow$9.2}} & 47.3{\tiny\textcolor{red}{$\uparrow$11.0}} & 90.9{\tiny\textcolor{red}{$\uparrow$29.1}} & 96.9{\tiny\textcolor{red}{$\uparrow$45.1}} & 81.3{\tiny\textcolor{red}{$\uparrow$25.8}} & 79.3{\tiny\textcolor{red}{$\uparrow$23.0}} & 56.6{\tiny\textcolor{red}{$\uparrow$20.7}} & 73.5{\tiny\textcolor{red}{$\uparrow$18.2}}\\
			& w/o $\alpha$ & 93.2{\tiny\textcolor{red}{$\uparrow$9.2}} & 40.5{\tiny\textcolor{red}{$\uparrow$4.2}} & 78.6{\tiny\textcolor{red}{$\uparrow$16.8}} & 72.2{\tiny\textcolor{red}{$\uparrow$20.4}} & 64.2{\tiny\textcolor{green}{$\downarrow$-8.7}} & 64.8{\tiny\textcolor{red}{$\uparrow$10.5}} & 47.0{\tiny\textcolor{red}{$\uparrow$11.1}} & 59.3{\tiny\textcolor{red}{$\uparrow$4.0}}\\
			& w/o evolve & 84.0{\tiny\textcolor{green}{$\downarrow$0.0}} & 36.3{\tiny\textcolor{green}{$\downarrow$0.0}} & 61.8{\tiny\textcolor{green}{$\downarrow$0.0}} & 51.8{\tiny\textcolor{green}{$\downarrow$0.0}} & 55.5{\tiny\textcolor{green}{$\downarrow$0.0}} & 54.3{\tiny\textcolor{green}{$\downarrow$0.0}} & 35.9{\tiny\textcolor{green}{$\downarrow$0.0}} & 55.3{\tiny\textcolor{green}{$\downarrow$0.0}} \\
			\midrule
						\multirow{4}{*}{$n=32$}
			& {\alg{} (ours)} & \cellcolor{blue!10}97.0{\tiny\textcolor{red}{$\uparrow$10.2}} & \cellcolor{blue!10}54.5{\tiny\textcolor{red}{$\uparrow$17.3}} & \cellcolor{blue!10}95.1{\tiny\textcolor{red}{$\uparrow$28.5}} & \cellcolor{blue!10}93.7{\tiny\textcolor{red}{$\uparrow$38.2}} & \cellcolor{blue!10}88.5{\tiny\textcolor{red}{$\uparrow$32.3}} & \cellcolor{blue!10}86.0{\tiny\textcolor{red}{$\uparrow$30.5}} & \cellcolor{blue!10}65.7{\tiny\textcolor{red}{$\uparrow$29.5}} & \cellcolor{blue!10}79.8{\tiny\textcolor{red}{$\uparrow$23.5}}\\
			& w/o init. & 96.8{\tiny\textcolor{red}{$\uparrow$10.0}} & 51.5{\tiny\textcolor{red}{$\uparrow$14.3}} & 94.8{\tiny\textcolor{red}{$\uparrow$27.9}} & 93.4{\tiny\textcolor{red}{$\uparrow$37.9}} & 85.8{\tiny\textcolor{red}{$\uparrow$29.6}} & 81.8{\tiny\textcolor{red}{$\uparrow$26.3}} & 61.2{\tiny\textcolor{red}{$\uparrow$24.3}} & 77.1{\tiny\textcolor{red}{$\uparrow$20.8}}\\
			& w/o $\alpha$ & 95.2{\tiny\textcolor{red}{$\uparrow$8.4}} & 44.0{\tiny\textcolor{red}{$\uparrow$6.8}} & 83.9{\tiny\textcolor{red}{$\uparrow$17.3}} & 68.6{\tiny\textcolor{red}{$\uparrow$13.1}} & 57.3{\tiny\textcolor{green}{$\downarrow$-1.1}} & 57.9{\tiny\textcolor{red}{$\uparrow$2.4}} & 50.9{\tiny\textcolor{red}{$\uparrow$14.7}} & 61.3{\tiny\textcolor{red}{$\uparrow$5.0}}\\
			& w/o evolve & 86.8{\tiny\textcolor{green}{$\downarrow$0.0}} & 37.2{\tiny\textcolor{green}{$\downarrow$0.0}} & 66.6{\tiny\textcolor{green}{$\downarrow$0.0}} & 55.5{\tiny\textcolor{green}{$\downarrow$0.0}} & 56.2{\tiny\textcolor{green}{$\downarrow$0.0}} & 55.5{\tiny\textcolor{green}{$\downarrow$0.0}} & 36.2{\tiny\textcolor{green}{$\downarrow$0.0}} & 56.3{\tiny\textcolor{green}{$\downarrow$0.0}} \\
			\midrule
			\multirow{4}{*}{$n=64$}
			& {\alg{} (ours)} & \cellcolor{blue!10}96.3{\tiny\textcolor{red}{$\uparrow$5.7}} & \cellcolor{blue!10}58.0{\tiny\textcolor{red}{$\uparrow$19.3}} & \cellcolor{blue!10}96.9{\tiny\textcolor{red}{$\uparrow$25.1}} & \cellcolor{blue!10}92.0{\tiny\textcolor{red}{$\uparrow$34.2}} & \cellcolor{blue!10}90.8{\tiny\textcolor{red}{$\uparrow$33.5}} & \cellcolor{blue!10}88.9{\tiny\textcolor{red}{$\uparrow$31.9}} & \cellcolor{blue!10}68.2{\tiny\textcolor{red}{$\uparrow$32.0}} & \cellcolor{blue!10}82.2{\tiny\textcolor{red}{$\uparrow$26.1}}\\
			& w/o init. & 96.1{\tiny\textcolor{red}{$\uparrow$5.5}} & 53.7{\tiny\textcolor{red}{$\uparrow$15.0}} & 96.9{\tiny\textcolor{red}{$\uparrow$25.1}} & 91.7{\tiny\textcolor{red}{$\uparrow$33.9}} & 88.7{\tiny\textcolor{red}{$\uparrow$31.4}} & 85.5{\tiny\textcolor{red}{$\uparrow$28.5}} & 65.0{\tiny\textcolor{red}{$\uparrow$28.8}} & 77.5{\tiny\textcolor{red}{$\uparrow$21.4}}\\
			& w/o $\alpha$ & 95.7{\tiny\textcolor{red}{$\uparrow$5.1}} & 43.3{\tiny\textcolor{red}{$\uparrow$4.6}} & 82.8{\tiny\textcolor{red}{$\uparrow$11.0}} & 69.6{\tiny\textcolor{red}{$\uparrow$11.8}} & 66.9{\tiny\textcolor{green}{$\downarrow$-9.6}} & 70.3{\tiny\textcolor{red}{$\uparrow$13.3}} & 41.8{\tiny\textcolor{red}{$\uparrow$5.6}} & 55.6{\tiny\textcolor{red}{$\uparrow$-0.5}}\\
			& w/o evolve & 90.6{\tiny\textcolor{green}{$\downarrow$0.0}} & 38.7{\tiny\textcolor{green}{$\downarrow$0.0}} & 71.8{\tiny\textcolor{green}{$\downarrow$0.0}} & 57.8{\tiny\textcolor{green}{$\downarrow$0.0}} & 57.3{\tiny\textcolor{green}{$\downarrow$0.0}} & 57.0{\tiny\textcolor{green}{$\downarrow$0.0}} & 36.2{\tiny\textcolor{green}{$\downarrow$0.0}} & 56.1{\tiny\textcolor{green}{$\downarrow$0.0}} \\
			\bottomrule
			\end{tabular}
		}
	\caption{Ablation study on context initialization, tuning $\alpha$, and context evolution when using llama-70B with varying number of LLMs.}
\label{tab:ablation_llama_70b}
\end{table}
\clearpage
\subsubsection{Run-time}
\label{sec:runtime}
To demonstrate the efficiency of our context initialization, we verify its runtime with varying the number of LLMs.
\begin{figure*}[h]
	\centering
	\includegraphics[width=0.95\linewidth]{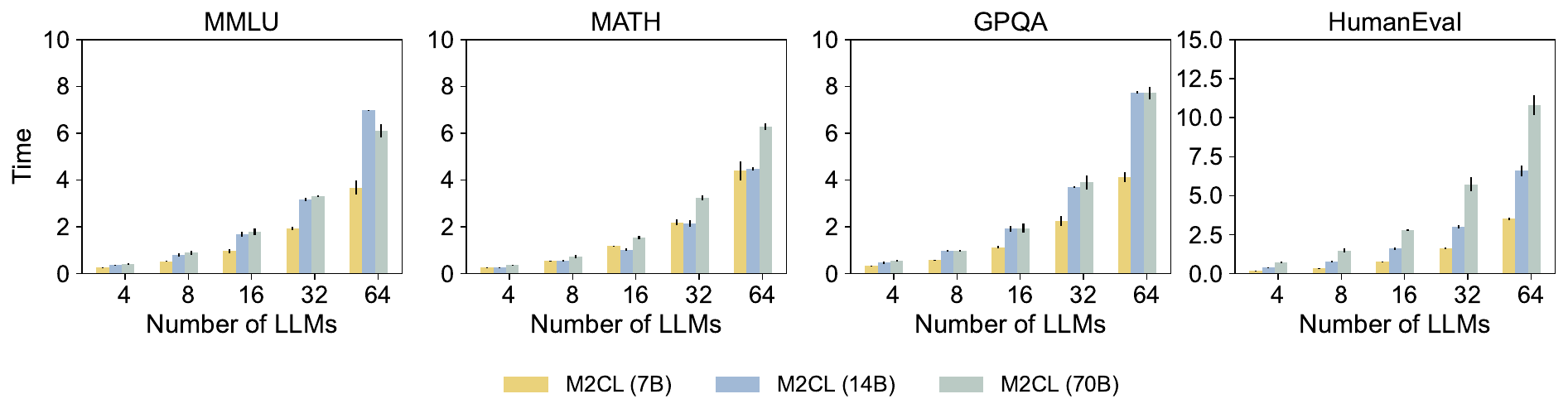}
	\includegraphics[width=0.95\linewidth]{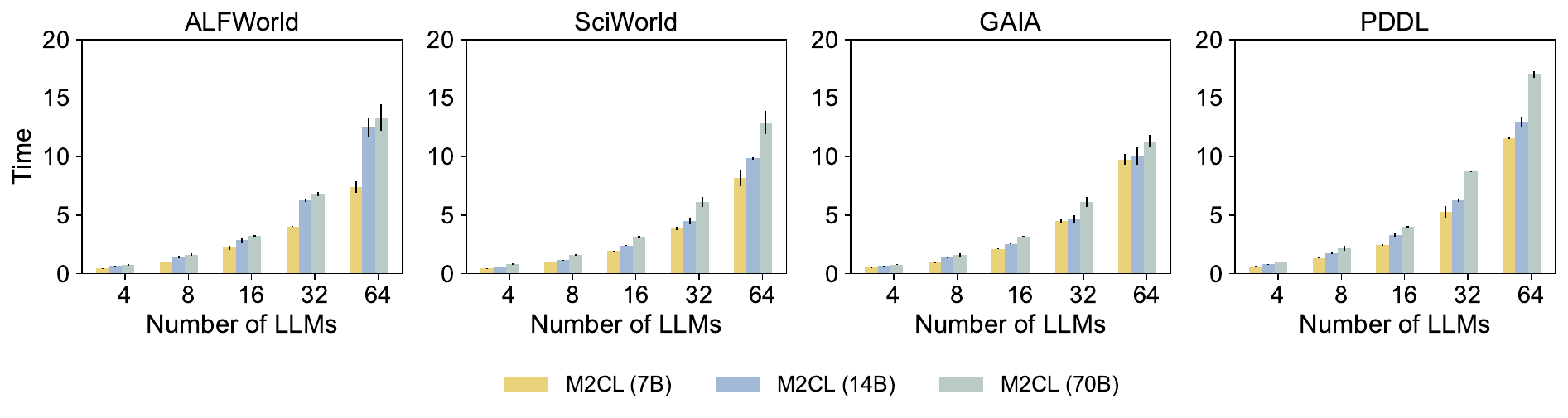}
	\caption{Runtime of initialization. Uncertainty intervals depict standard deviation over three seeds.}
	\label{fig:runtime_init}
\end{figure*}
Then, we evaluate the runtime of \alg{} compared with baseline algorithms for average testing time, utilizing the same model size on 8 NVIDIA H800 GPUs. As illustrated by \cref{fig:runtime_1}, the runtime of \alg{} is slightly longer than other multi-LLM discussion methods as the runtime of context generators is negligible compared with the inference time of LLM, which substantiates the low computational cost of \alg{}.
\begin{figure*}[h]
	\centering
	\includegraphics[width=0.95\linewidth]{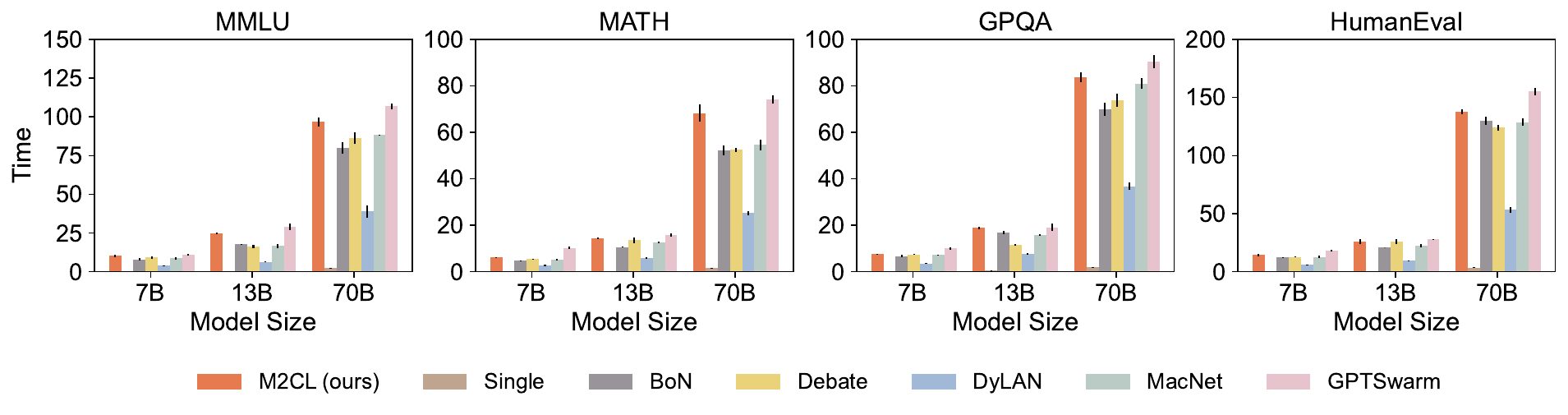}
	\includegraphics[width=0.95\linewidth]{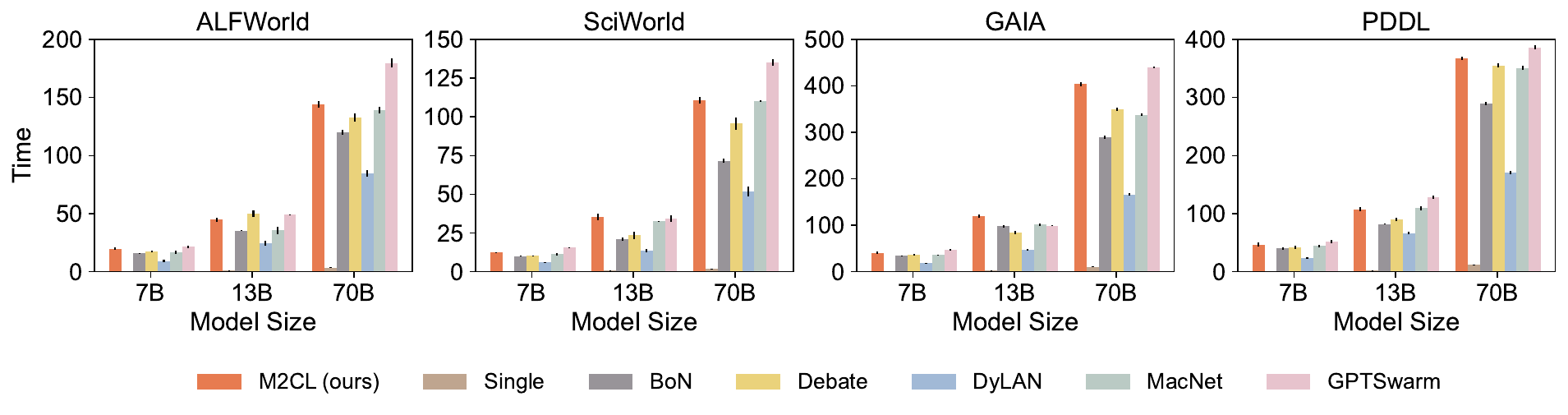}
	\caption{Runtime when varying the size of the LLama series models. The number of LLMs is $8$. Uncertainty intervals depict standard deviation over three seeds.}
	\label{fig:runtime_1}
\end{figure*}
\clearpage
\section{Case study}
\label{sec:case_study}
We used a problem from the MATH~\cite{hendrycks2021measuring}. The number of LLMs is set as $8$. For each LLM, we present their instructions, responses, and final answers for $4$ discussion rounds.
\subsection{Case study of \alg{} (ours)}
We provide the case of \alg{} solving the problem. As illustrated in \cref{fig:first_round,fig:second_round,fig:third_round,fig:last_round}, we observe that the generated contexts evolve progressively to enforce stricter collaboration among LLMs. Initially, the instructions merely encourage LLMs to pay attention to others’ responses. In subsequent rounds, they guide LLMs to cross-check each other’s answers and eventually require reaching full agreement. Correspondingly, the LLMs’ outputs transition from diverse, potentially conflicting answers to a single, consistent solution. This behavior demonstrates that our method effectively modulates context evolution to guide toward consensus without sacrificing initial creativity.
\begin{figure*}[h]
	\centering
	\includegraphics[width=0.99\linewidth]{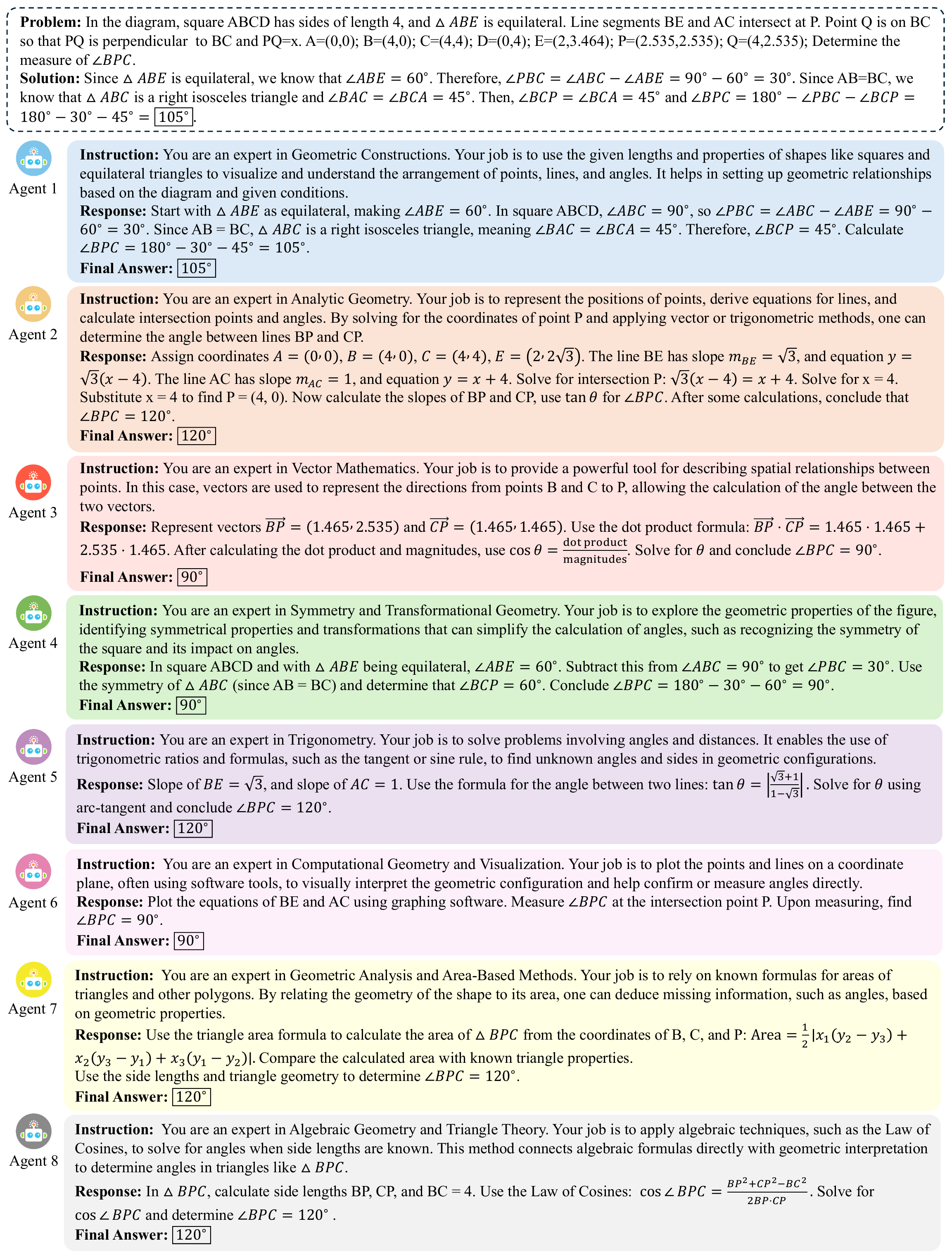}
	\caption{Visualization of \alg{} at the \textbf{first round}.}
	\label{fig:first_round}
\end{figure*}

\begin{figure*}[h]
	\centering
	\includegraphics[width=0.99\linewidth]{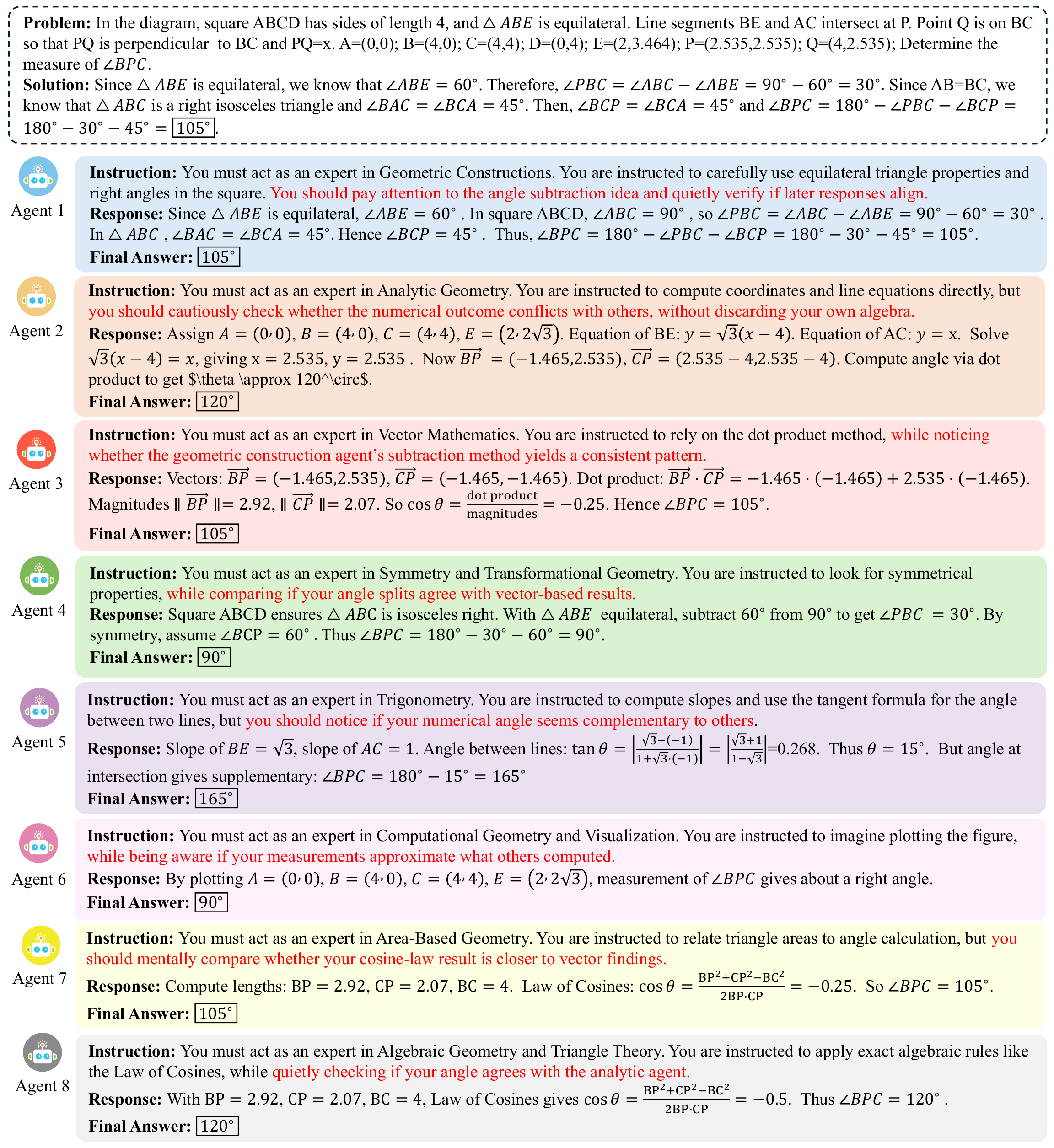}
	\caption{Visualization of \alg{} at the \textbf{second round}. We highlight the guidance on how to cooperate with other LLMs. At the beginning, instructions encourage diverse perspectives and consideration of others’ responses, but the requirements for discussion consistency are not yet strict.}
	\label{fig:second_round}
\end{figure*}

\begin{figure*}[h]
	\centering
	\includegraphics[width=0.99\linewidth]{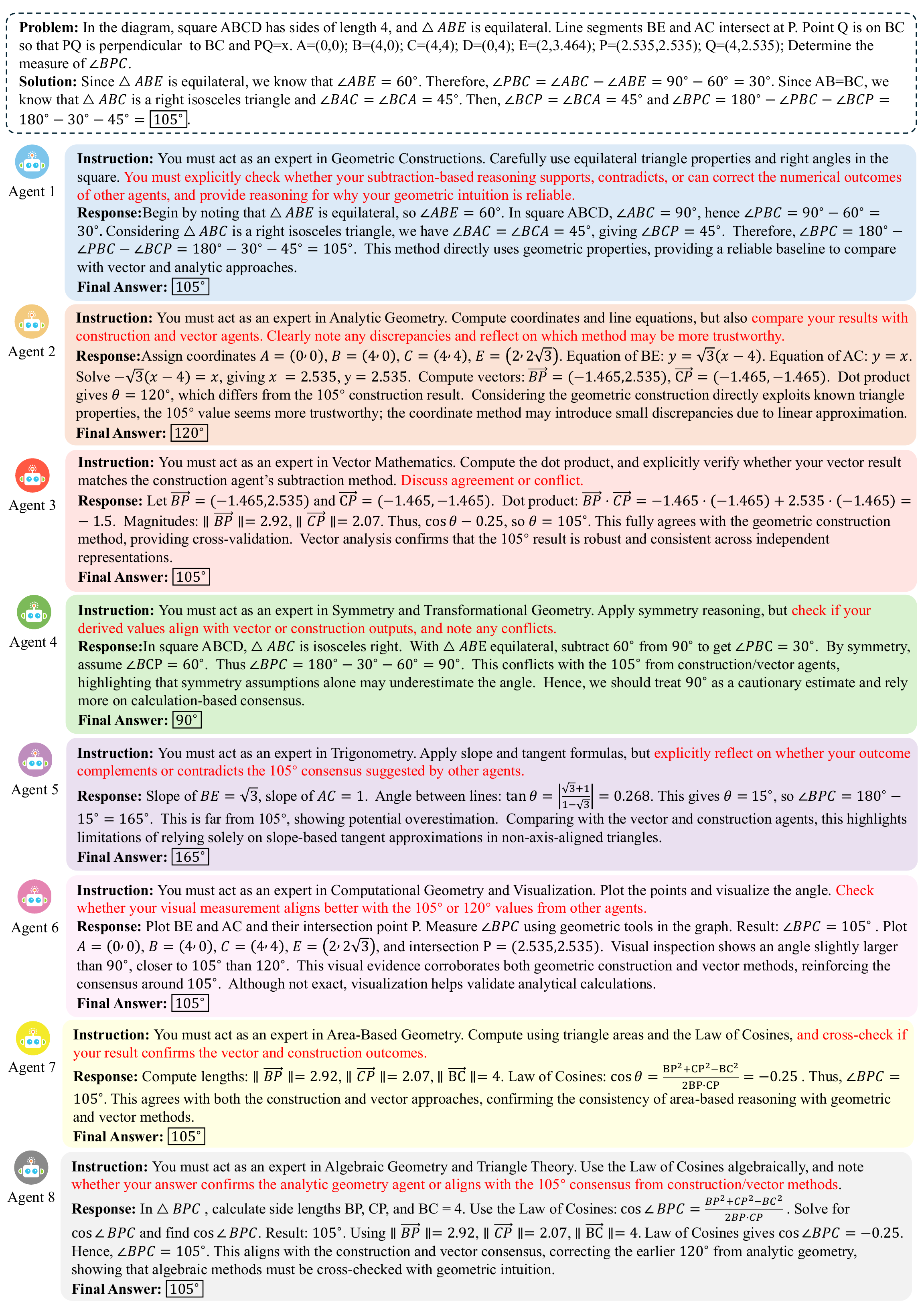}
	\caption{Visualization of \alg{} at the \textbf{third round}. We highlight the guidance on how to cooperate with other LLMs. As the discussion progresses, the instructions increasingly enforce stricter requirements for cross-checking and aligning answers, helping the models converge toward a consistent solution.}
	\label{fig:third_round}
\end{figure*}

\begin{figure*}[h]
	\centering
	\includegraphics[width=0.99\linewidth]{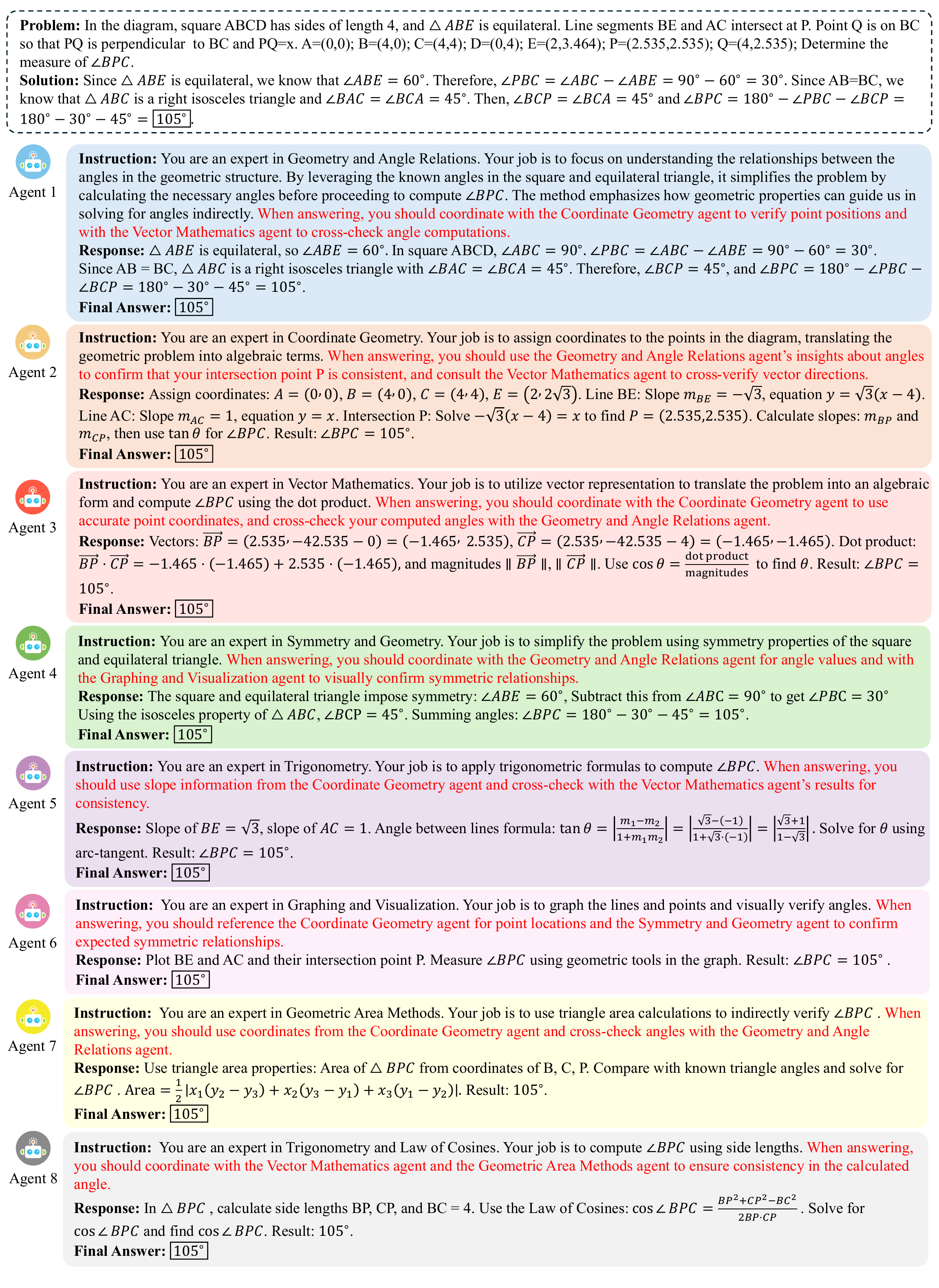}
	\caption{Visualization of \alg{} at the \textbf{last round}. We highlight the guidance on how to cooperate with other LLMs. Although the initial round produced divergent answers, the collaborative instructions enable LLMs to exchange and integrate information, ultimately reaching a correct consensus.}
	\label{fig:last_round}
\end{figure*}
\clearpage
\subsection{Case study of \texttt{Debate}}
We also provide the case of \texttt{Debate} solving the problem. As illustrated in \cref{fig:debate_first_round,fig:debate_second_round,fig:debate_third_round,fig:debate_last_round}, we observe an inconsistency during discussion when using fixed instructions.
\begin{figure*}[h]
	\centering
	\includegraphics[width=0.99\linewidth]{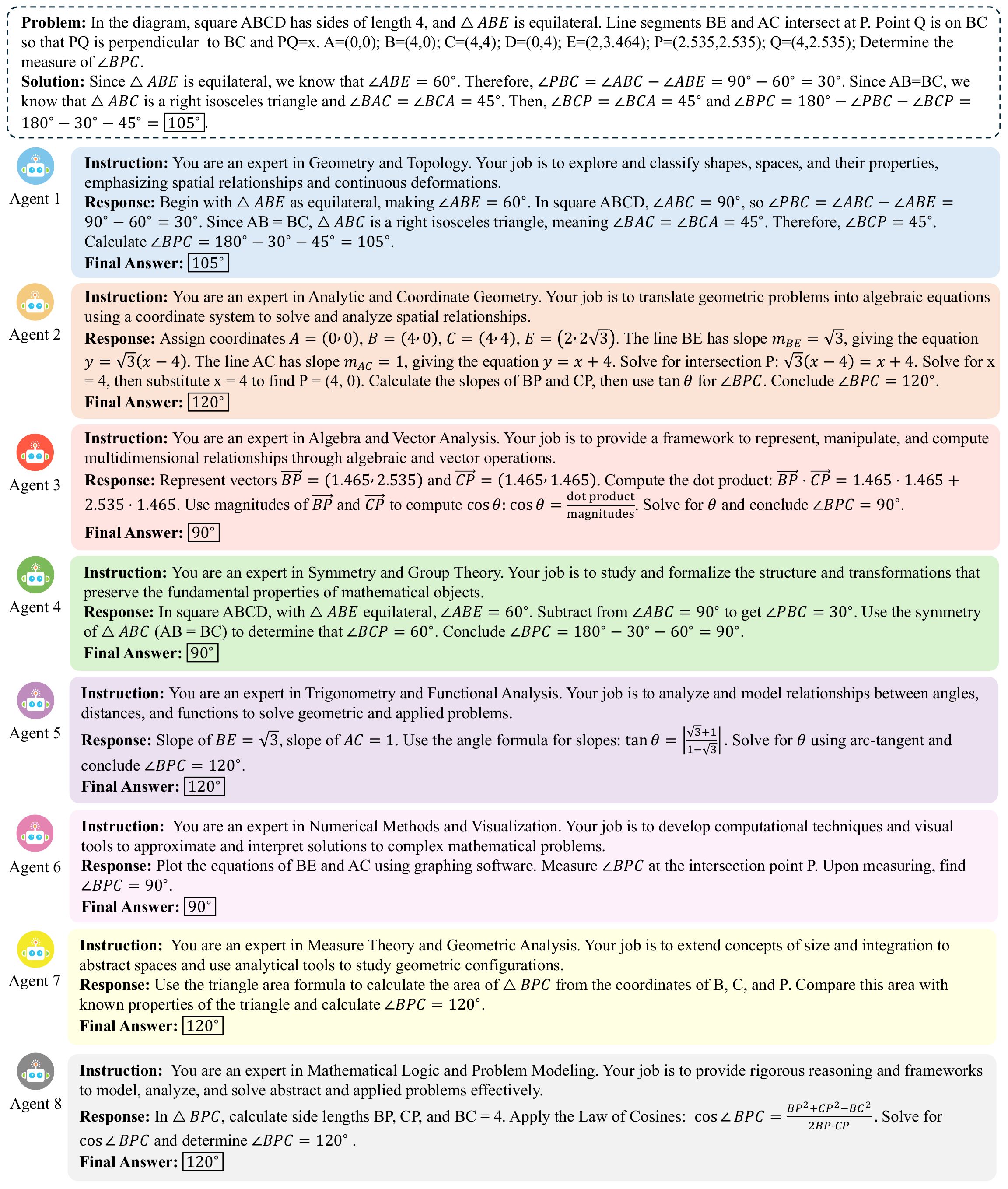}
	\caption{Visualization of \texttt{Debate} at the \textbf{first round}.}
	\label{fig:debate_first_round}
\end{figure*}
\begin{figure*}[h]
	\centering
	\includegraphics[width=0.99\linewidth]{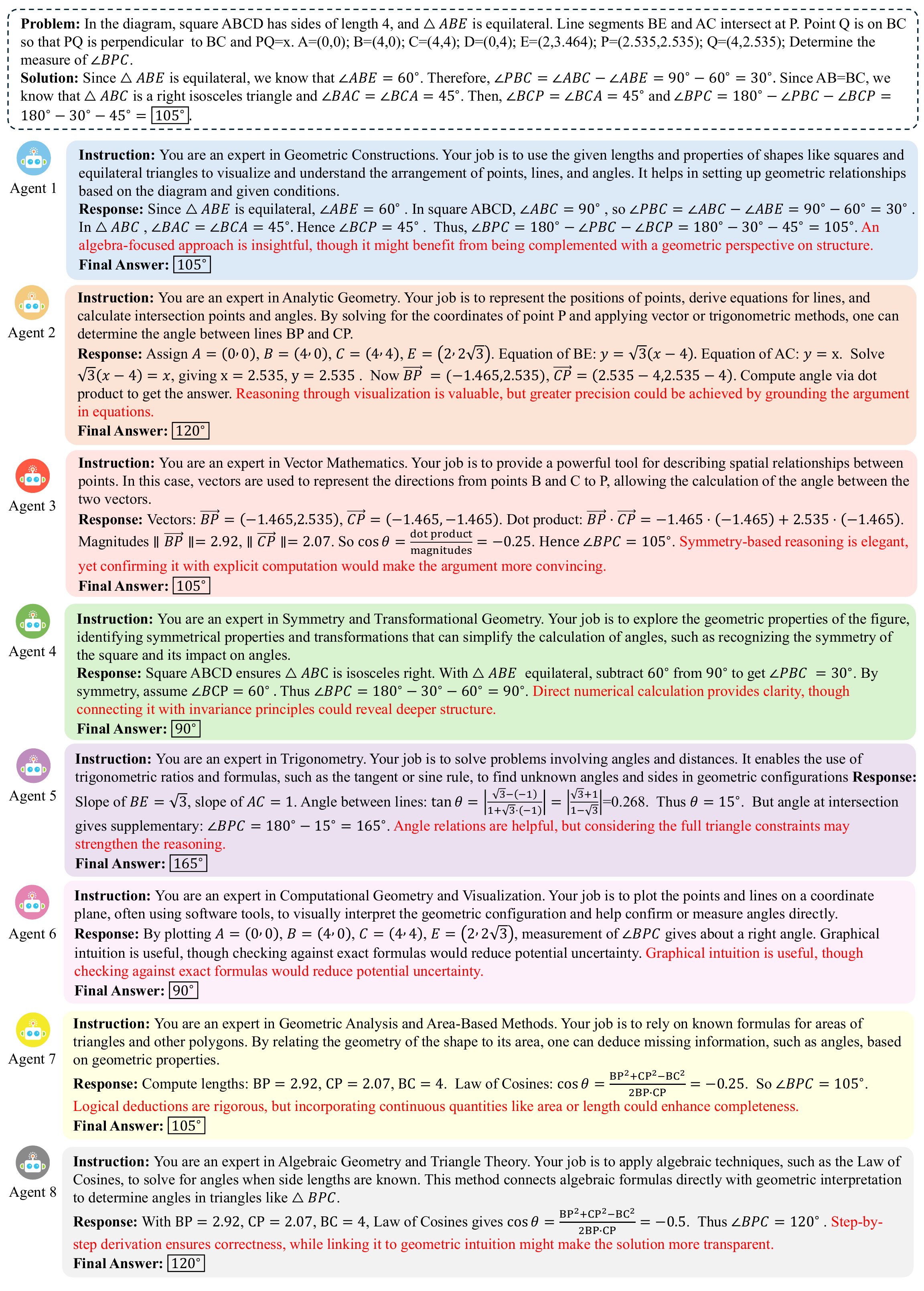}
	\caption{Visualization of \texttt{Debate} at the \textbf{second round}.}
	\label{fig:debate_second_round}
\end{figure*}
\begin{figure*}[h]
	\centering
	\includegraphics[width=0.99\linewidth]{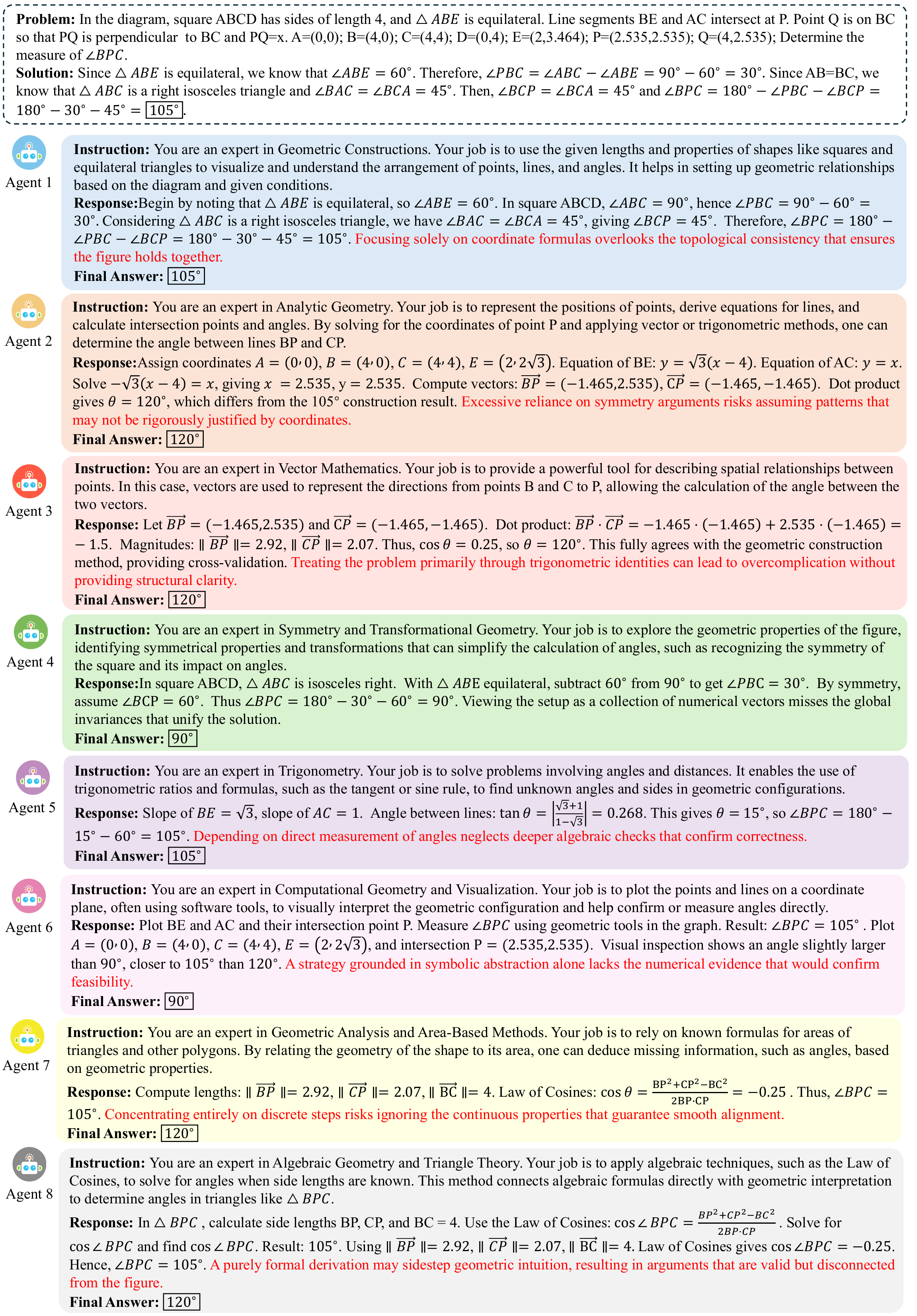}
	\caption{Visualization of \texttt{Debate} at the \textbf{third round}.}
	\label{fig:debate_third_round}
\end{figure*}
\begin{figure*}[h]
	\centering
	\includegraphics[width=0.99\linewidth]{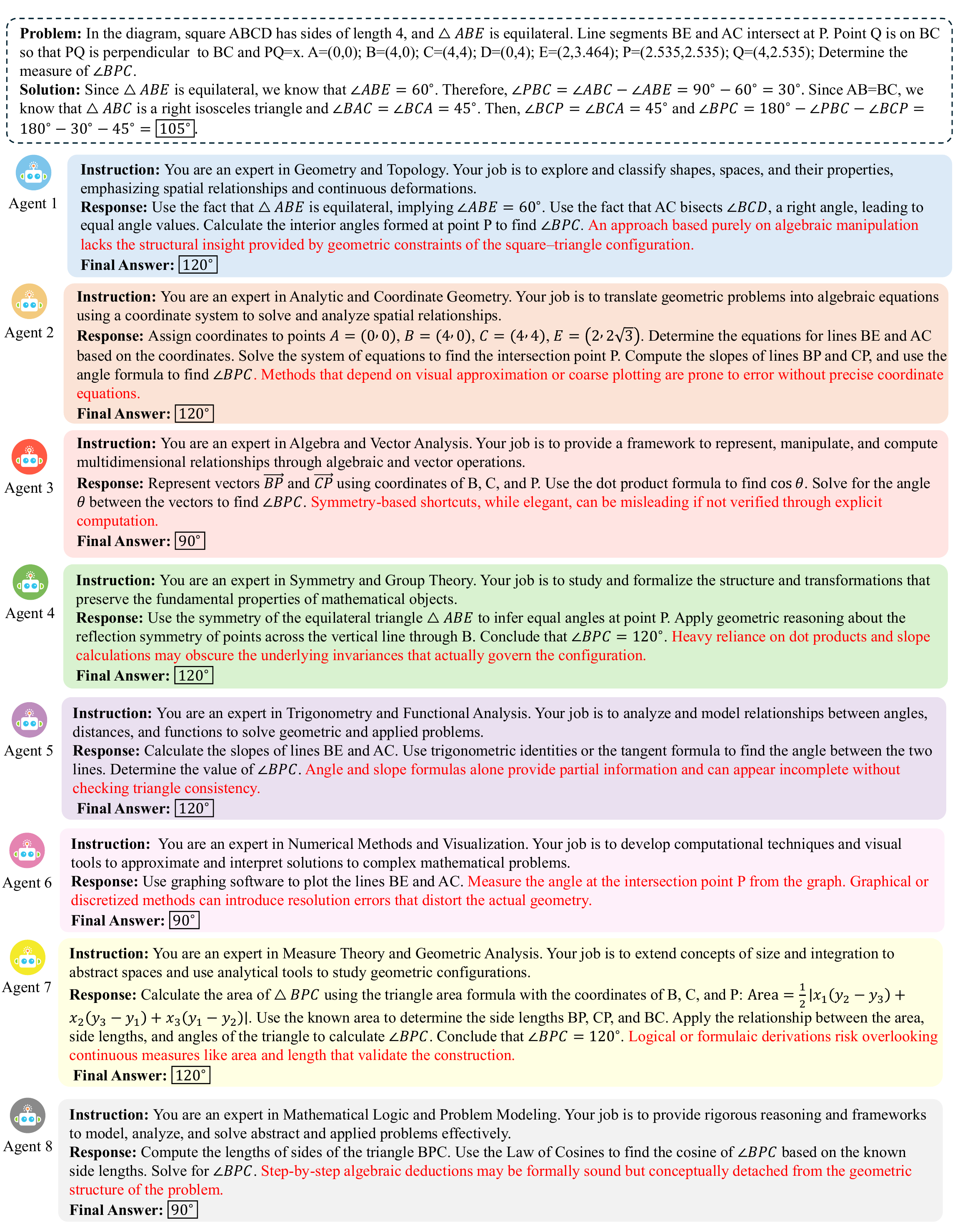}
	\caption{Visualization of \texttt{Debate} at the \textbf{last round}.}
	\label{fig:debate_last_round}
\end{figure*}
\end{document}